\title{On the Optimal Sample Complexity for Best Arm Identification}
\newtheorem{theo}{Theorem}[section]
\newtheorem{lemma}[theo]{Lemma}
\newtheorem{prop}[theo]{Proposition}
\newtheorem{cor}[theo]{Corollary}
\newtheorem{conj}[theo]{Conjecture}
\newtheorem{defi}[theo]{Definition}
\newtheorem{rem}[theo]{Remark}
\newtheorem{example}[theo]{Example}
\newenvironment{proofof}[1]{\begin{proof}[Proof of #1]}{\end{proof}}
\newcommand{\topic}[1]{\vspace{0.00cm}\noindent{\bf {#1}:}}
\newcommand{\R}{\mathbb{R}}
\newcommand{\N}{\mathbb{N}}
\newcommand{\Ex}{\mathbb{E}}
\renewcommand{\Pr}{\operatorname{Pr}}
\newcommand{\betw}{\mid}
\newcommand{\calF}{\mathcal{F}}
\newcommand{\bestarm}{\textsc{Best-$1$-Arm}}
\newcommand{\bestkarm}{\textsc{Best-$k$-Arm}}
\newcommand{\Problem}{\mathcal{P}}
\newcommand{\SIM}{\mathsf{SIM}}
\newcommand{\sign}{\textsc{Sign}-$\xi$}
\newcommand{\CORRECT}{$\delta$-correct}
\newcommand{\EXPCORRECTS}[1]{expected-$#1$-time  $\delta$-correct}
\newcommand{\WEAKCORRECTS}[1]{weakly $#1$-time  $\delta$-correct}
\newcommand{\WEAKEXPCORRECTS}[1]{weakly expected-$#1$-time $\delta$-correct}
\newcommand{\refseq}{\Lambda}
\newcommand{\distportion}{\mathfrak{C}}
\newcommand{\eat}[1]{}
\newcommand{\arm}{A}
\newcommand{\distr}{\mathcal{D}}
\newcommand{\alg}{\mathbb{A}}
\newcommand{\algp}{\mathbb{A}'}
\newcommand{\newalg}{\mathbb{B}}
\newcommand{\event}{\mathcal{E}}
\newcommand{\KL}{\mathrm{KL}}
\newcommand{\ent}{\mathrm{H}}
\renewcommand{\d}{\mathrm{d}}
\newcommand{\Normal}{\mathcal{N}}
\newcommand{\indicator}{\mathds{1}}
\newcommand{\lowb}{F}
\newcommand{\Gap}[1]{\Delta_{[#1]}}
\newcommand{\Gapepsilon}[1]{\Delta_{#1,\epsilon}}
\newcommand{\cnt}{\mathsf{cnt}}
\newcommand{\tot}{\mathsf{tot}}
\newcommand{\True}{\mathrm{True}}
\newcommand{\False}{\mathrm{False}}
\newcommand{\discdistr}{\Gamma^{\mathrm{d}}}
\newcommand{\amean}[1]{\mu_{[#1]}}
\newcommand{\hamean}[1]{\hat{\mu}_{[#1]}}
\newcommand{\granuf}{\rho}
\newcommand{\UNIFORMSAMPLING}{\textsf{UniformSample}}
\newcommand{\FRACTIONTEST}{\textsf{FractionTest}}
\newcommand{\ELIMINATION}{\textsf{Elimination}}
\newcommand{\MEDIANELIMINATION}{\textsf{MedianElim}}
\newcommand{\DELIMINATION}{\textsf{DistrBasedElim}}
\newcommand{\EXPGAPELIMINATION}{\textsf{ExpGapElim}}
\newcommand{\TESTSIGN}{\textsf{TestSign}}
\newcommand{\seqn}{\mathbb{S}}
\newcommand{\terround}{\kappa}
\renewcommand{\epsilon}{\varepsilon}
\newcommand{\lx}[1]{l_{#1}}
\newcommand{\Nsmall}{N_{\mathsf{sma}}}
\newcommand{\Pot}[1]{P_{#1}}
\newcommand{\curarm}{S_r^{=r}}
\newcommand{\curbigarm}{S_r^{>r}}
\newcommand{\Ncur}{N_{\mathsf{cur}}}
\newcommand{\Nbig}{N_{\mathsf{big}}}
\newcommand{\permute}[1]{#1_{\text{perm}}}
\newcommand{\palg}{{\permute{\alg}}}
\newcommand{\armcomp}{\mathcal{H}}
\newcommand{\bad}{\mathrm{bad}}
\newcommand{\nbag}{m}
\newcommand{\Iinit}{I_{\textsf{init}}}
\newcommand{\algone}[1]{\alg_{#1}^{1}}
\newcommand{\algtwo}[1]{\alg_{#1}^{2}}
\newcommand{\newI}{I_{\mathsf{new}}}
\newcommand{\newA}{A_{\mathsf{new}}}
\let\OLDthebibliography\thebibliography
\renewcommand\thebibliography[1]{
	\OLDthebibliography{#1}
	\setlength{\parskip}{2.5pt}
	\setlength{\itemsep}{0pt plus 0.3ex}
}
\date{}
\author{
	Lijie Chen \qquad\qquad Jian Li\\
	Institute for Interdisciplinary Information Sciences (IIIS), Tsinghua University
}
\begin{document}
	\maketitle

\vspace{-0.3cm}
	
	\begin{abstract}
	We study the best arm identification (\bestarm) problem, which is defined as follows.
	We are given $n$ stochastic bandit arms. The $i$th arm has a reward distribution $\distr_i$ with an unknown mean $\mu_{i}$. 
	Upon each play of the $i$th arm,
	we can get a reward, sampled i.i.d. from $\distr_i$. 
	We would like to identify the arm with the largest mean with probability at least $1-\delta$, using as few samples as possible.
	We provide a nontrivial algorithm for \bestarm,
	which improves upon several prior upper bounds on the same problem.
	We also study an important special case where there are only two arms, which we call the \sign\ problem. 
	We provide a new lower bound of \sign,
	simplifying and significantly extending a classical result by Farrell in 1964, with a completely new proof.
	Using the new lower bound for \sign, we obtain the first lower bound for \bestarm\ that 
	goes beyond the classic Mannor-Tsitsiklis lower bound, by an interesting reduction from \sign\ to \bestarm.
	We propose an interesting conjecture concerning the 
	optimal sample complexity of \bestarm\ 
	from the perspective of instance-wise optimality.
	\end{abstract}
	
	\vspace{-0.4cm}


\newcommand{\bB}{\mathbb{B}}

\section{Introduction}
\label{sec:intro}

\vspace{-0.2cm}

	The stochastic multi-armed bandit is a paradigmatic model  
	for capturing the exploration-exploitation tradeoff in many decision-making problems in stochastic environments.
	While the most studied goal is to maximize the cumulative rewards (or minimize
	the cumulative regret) obtained by the forecaster (see e.g.,~\cite{cesa2006prediction,bubeck2012regret}), the 
	{\em pure exploration multi-armed bandit} problems, in which the exploration phase and exploitation phase are separate, have also attracted significant 
	attentions, due to their applications in   
	several domains	such as medical trials \cite{robbins1985some,audibert2010best,chen2014combinatorial},
	communication network \cite{audibert2010best},
	crowdsourcing \cite{zhou2014optimal,cao2015top}.
	In a pure exploration problem,			
	the forecaster first performs a \emph{pure-exploration phase}, 
	by (adaptively) drawing samples from the stochastic arms, to infer 
	the optimal (or near optimal) solution, then
	keeps exploiting this solution.
	In this paper, we study the best arm identification (\bestarm) problem, 
	which is the most basic pure exploration problem 
	in stochastic multi-armed bandits.

	\begin{defi}
	 \bestarm :  We are given $n$ arms $\arm_1,\ldots, \arm_n$.
	 The $i$th arm $\arm_i$ has a reward distribution $\distr_i$ with an unknown mean $\mu_{i}\in [0,1]$. 
	 We assume that all reward distributions have $1$-sub-Gaussian tails (see Definition~\ref{defi:sub-gaussian-tail}),
	 which is a standard assumption in the stochastic multi-armed bandit literature.   
	 Upon each play of $\arm_i$,
	 we can get a reward value sampled i.i.d. from $\distr_i$. Our goal is to identify the arm with largest mean using as few samples as possible.
	 We assume here that the largest mean is strictly larger than the second largest (i.e., $\mu_{[1]}>\mu_{[2]}$)
	 to ensure the uniqueness of the solution,
	 where $\mu_{[i]}$ denotes
	 the $i$th largest mean.
	\end{defi}


	We also study the following sequential testing problem, named \sign,
	which is important for understanding \bestarm.
	\begin{defi}	
	\sign: $\xi$ is a fixed constant. We are given a single arm with unknown mean $\mu \ne \xi$.
	The goal is to decide whether $\mu > \xi$ or $\mu < \xi$. 
	Here, the gap of the problem is define to be $\Delta = |\mu - \xi|$. 
	Again, we assume that the distribution of the arm is $1$-sub-Gaussian.
	\end{defi}

	In fact, \sign\ can be viewed as a special case of \bestarm\ where there are only two arms
	and we know the mean of one arm.
	Hence, a thorough understanding of the sample complexity of \sign\ is very useful 
	for deriving tight sample complexity bounds for
	\bestarm. 
	
	

	\begin{defi}
	For a fixed value $\delta\in (0,1)$, we say that an algorithm $\alg$ for \bestarm\,\,  (or \sign) is \CORRECT, if given any \bestarm  (or \sign) instance, 
	$\alg$ returns the correct answer with probability at least $1-\delta$. 

	We say that an algorithm $\alg$ for \bestarm\ is an $(\epsilon,\delta)$-PAC algorithm, 
	if given any \bestarm\ instance 
	and any confidence level $\delta>0$, $\alg$ returns an $\epsilon$-optimal arm with probability at least $1-\delta$. 
	Here we say an arm $A_i$ is $\epsilon$-optimal if $\amean{1}-\mu_i\leq \epsilon$.
	\end{defi}

	The studies of both problems have a long history dating 
	back to 1950s \cite{bechhofer1954single,paulson1964sequential,farrell1964asymptotic}.
	We first discuss the \sign\ problem.
	It is well known that 
	for any \CORRECT\ algorithm (for constant $\delta$) $\alg$ that can distinguish two Gaussian arms
	with means $\xi+\Delta$ and $\xi-\Delta$ (the values of $\xi$ and $\Delta$ are known beforehand), 
	the expected number of samples required by $\alg$ is $\Omega(\Delta^{-2})$, 
	which is optimal (e.g., \cite{Chernoff:72}).
	This can be seen as a lower bound for \sign\ as well.
	However, a tighter lower bound of \sign\ was in fact provided  
	by Farrell in 1964~\cite{farrell1964asymptotic}.
	He showed that for any \CORRECT\ $\alg$ for \sign~(where the reward distribution is in the exponential family), 
	it holds that
	\begin{align}
	\label{eq:2armlowerbound}
	\limsup_{\Delta \to 0} \frac{T_{\alg}[\Delta]}{\Delta^{-2} \ln \ln \Delta^{-1}} > 0,
	\end{align}
	where $T_{\alg}[\Delta]$ is the expected number of samples taken by $\alg$ on an instance with gap $\Delta$.
	Farrell's result crucially relies on the {\em Law of Iterated Logarithm (LIL)},
	which roughly states that $\limsup_t \Bigl|\sum\nolimits_{i=1}^t X_i\Bigr|/\sqrt{2t\log\log t}=1$ almost surely where 
	$X_i\sim \Normal(0,1)$ for all $i$.
	Comparing with the $\Omega(\Delta^{-2})$ lower bound, the extra $\ln \ln \Delta^{-1}$ factor is caused by
	the fact that we do not known the gap $\Delta$ beforehand.
	The above result implies that $\Gap{2}^{-2} \ln \ln \Gap{2}^{-1}$ is also a lower bound for \bestarm\ (for two arms). 

	
	
	Bechhofer~\cite{bechhofer1954single} formulated the \bestarm\ problem for Gaussians in 1954.
	The early advances are summarized in the monograph~\cite{bechhofer1968sequential}.
	The last decade has witnessed a resurgence of interest in the \bestarm\ problem and its optimal sample complexity~\cite{even2002pac,gabillon2012best,kalyanakrishnan2012pac,2013arXiv1306.3917J}.
	\cite{mannor2004sample} showed that for any \CORRECT\ algorithm for \bestarm, 
	it requires $\Omega\left(\sum\nolimits_{i=2}^{n} \Gap{i}^{-2} \ln\delta^{-1}\right)$ samples in expectation for any instance.
	We note that the Mannor-Tsitsiklis lower bound is an {\em instance-wise lower bound}, 
	i.e., any \bestarm\ instance requires the stated number of samples.
	The current best known bound is
	$O\left(\sum\nolimits_{i=2}^{n} \Gap{i}^{-2} \left(\ln\ln\Gap{i}^{-1}+\ln\delta^{-1}\right)\right)$,
	due to Karnin et al.~\cite{karnin2013almost}.
	Jamieson et al.~\cite{jamieson2014lil} obtained a UCB-type algorithm (called lil'UCB), which achieves
	the same sample complexity and is also efficient in practice. 
	We refer the above bound as the KKS bound.
	See Table~1 for more previous upper bounds. 
		

	Given Farrell's $\Gap{2}^{-2} \ln \ln \Gap{2}^{-1}$ lower bound, 
	it is very attempting to 
	believe that the KKS upper bound is optimal,
	(which matches the lower bound for two arms). 
		Both \cite{jamieson2014lil} and \cite{jamieson2014best} explicitly referred the  KKS bound as ``optimal'', and
		\cite{jamieson2014lil} stated that ``{\em The procedure cannot be improved in the sense that the number
		of samples required to identify the best arm is within a constant factor of a lower bound based
		on the law of the iterated logarithm (LIL)}''.
	However, as we will demonstrate, none of the existing lower and upper bounds are optimal and the problem is more complicated (and rich) than we expected.
	The KKS bound is tight only for two arms (or $O(1)$ arms) and in the worst case sense (not instance optimal in the sense of \cite{fagin2003optimal,afshani2009instance}).


\begin{table}[]
	\centering
	\label{table:pure-exp}
	\small{
	\begin{tabular}{|l|l|}
		\hline
		Source & Sample Complexity                                                             \\ \hline
		Even-Dar et al.~\cite{even2002pac}
		& $\sum\nolimits_{i=2}^{n} \Gap{i}^{-2} \left(\ln\delta^{-1}+\ln n + \ln \Gap{i}^{-1}\right)$                 \\ \hline
		Gabillon et al.~\cite{gabillon2012best} 
		&
		$\sum\nolimits_{i=2}^{n} \Gap{i}^{-2} \left(\ln\delta^{-1}+\ln \sum\nolimits_{i=2}^{n} \Gap{i}^{-2}\right)$
			       \\ \hline
		Jamieson et al.~\cite{2013arXiv1306.3917J}
		& 
		$\sum\nolimits_{i=2}^n \Gap{i}^{-2} \left(\ln\delta^{-1}+\ln\ln \left(\sum\nolimits_{j=2}^{n} \Gap{j}^{-2}\right)\right)$
		                                                       \\ \hline
		kalyanakrishnan et al.~\cite{kalyanakrishnan2012pac}
		& 
		$\sum\nolimits_{i=2}^{n} \Gap{i}^{-2} \left(\ln\delta^{-1}+\ln \sum\nolimits_{i=2}^{n} \Gap{i}^{-2}\right)$
	                                                         \\ \hline
		Jamieson et al.~\cite{2013arXiv1306.3917J}
		& 
		$ \ln \delta^{-1} \cdot \left(\ln\ln\delta^{-1} \cdot \sum\nolimits_{i=2}^{n} \Gap{i}^{-2} + \sum\nolimits_{i=2}^{n} \Gap{i}^{-2} \ln \Gap{i}^{-1}  \right)$
	                                                          \\ \hline
		Karnin et al.\cite{karnin2013almost}, Jamieson et al.\cite{jamieson2014lil}
		& 
		$\sum\nolimits_{i=2}^{n} \Gap{i}^{-2} \left(\ln\delta^{-1}+\ln\ln\Gap{i}^{-1}\right)$
		                                                     \\ \hline
		This paper (Thm~\ref{theo:BESTARMPACD})
		& 
		$\sum\nolimits_{i=2}^{n} \Gap{i}^{-2} \left(\ln \delta^{-1} + \ln\ln\min(n,\Gap{i}^{-1}) \right) + \Gap{2}^{-2} \ln\ln \Gap{2}^{-1}$
	        \\ \hline
		This paper (clustered instances) Thm~\ref{thm:clustered}
		& 
		$\sum\nolimits_{i=2}^{n} \Gap{i}^{-2}\ln \delta^{-1} +\Gap{2}^{-2}\ln\ln\Gap{2}^{-1}$
	                             \\ \hline
	\end{tabular}
	}
	\vspace{0.1cm}
	\caption{Sample complexity upper bounds. We omit the big-O notations. 
		The definition of clustered instances can be found in 
		the supplementary material (Section~\ref{subsec:clustered})
		}.
\end{table}


	   \vspace{-0.3cm}
\subsection{Our Contributions}
\label{subsec:contributions}
	   \vspace{-0.3cm}
	   
	We need some notations to state our results formally.
	Let $\{\mu_{[1]}, \mu_{[2]},\ldots, \mu_{[n]}\}$
	be the means of the $n$ arms, sorted in the nondecreasing order
	(ties are broken in an arbitrary but consistent manner).
	We use $\arm_{[i]}$ to denote the arm with mean $\mu_{[i]}$.
	In the \bestarm\ problem, we 
	define the {\em gap} for the arm $\arm_{[i]}$ to be 
	$\Gap{i} = \amean{1}-\amean{i}$,
	which is an important quantity to measure the sample complexity.
	We use $\alg$ to denote an algorithm   
    and $T_{\alg}(I)$ to be the expected number of total arm pulls (i.e., samples) by $\alg$ on the instance $I$.

	    \subsubsection{Upper Bounds}

		First, we consider the upper bounds for the \bestarm\ problem with $n$ arms.
		We provide a novel algorithm which strictly improves the KKS bound,
		implying that it is not optimal.
		\eat{
		Finally, we return to the question whether $\sum\nolimits_{i=2}^{n} \Gap{i}^{-2} \ln\ln \Gap{i}^{-1}$
		is the right lower bound for \bestarm, which was in fact the initial motivation of our work.
		In terms of instance optimality, the answer is certainly no in light of our previous discussion of \sign.
		However, even in the sense of ``almost instance optimality'',
		the answer is still negative, unless there are only $O(1)$ arms. 
		We achieve this by providing a nontrivial (worst case) optimal algorithm,
		which performs strictly better than the KKS-JMNS bound
		for very wide class of instances (those such that $\ln\ln \Gap{i}^{-1}\ll \ln\ln n$ and the first term does not dominate).
		}
	    In particular, for any $\delta <0.1$, our algorithm is \CORRECT\ for \bestarm\
	    and it needs at most  
	    $$
	    O\Big(
	    \Gap{2}^{-2}\ln\ln \Gap{2}^{-1}+
	    \sum\nolimits_{i=2}^{n} \Gap{i}^{-2} \ln\delta^{-1}+\sum\nolimits_{i=2}^{n} \Gap{i}^{-2}\ln\ln \min(n,\Gap{i}^{-1}) 
	    \Big)
	    $$ 
	    samples in expectation. 
	    We can see that the improvement over KKS bound is mainly due to 
	    the third term. 
	    At first glance, the $\ln\ln n$ factor may seem to be an artifact of either 
	    our algorithm or analysis.
	    However, it turns out to be a fundamental quantity in the \bestarm\ problem,
	    since we can also prove a $\Omega(\sum_{i=2}^{n} \Gap{i}^{-2} \ln\ln n)$
	    lower bound (Theorem~\ref{thm:hard-case-exists}).
		Moreover, the first two terms in our upper bounds are also necessary
		(first term due to the lower bound by \cite{farrell1964asymptotic}, 
		second term due to the lower bound by \cite{mannor2004sample}). 
	
		Theorem~\ref{theo:BESTARMPACD} has a few interesting consequences we would like to point out.
   		For example, it is not possible to construct a class of infinite instances
		that requires $\Omega(n \Gap{2}^{-2} \ln\ln \Gap{2}^{-1})$ samples unless $\ln\ln \Gap{2}^{-1}=O(\ln\ln n)$. 
		This is somewhat surprising: 
		Consider a very basic family of instances in this class:
		there are $n-1$ arms with mean 0.5
		and 1 arm with mean $0.5+\Delta$. 	
		The Mannor-Tsitsiklis lower bound $\Omega\left(n\Delta^{-2}\right)$ for this instance (even when $\Delta$ is known)
		is in fact a directed sum-type result:
		roughly speaking, in order to solve \bestarm, we essentially need to solve $n-1$ independent copies of \sign\
		with gap $\Delta$.
		However, our upper bound in Theorem~\ref{theo:BESTARMPACD} indicates
		that the role of $\Gap{2}$ is different from the others $\Gap{i}$s. 
		Hence, if we want to go beyond Mannor-Tsitsiklis,
		\footnote{
			In other words, we want the lower bound to reflect the hardness caused by not knowing $\Delta_i$s.
		} 
		\bestarm\ can not be thought as $n$ independent copies of \sign. 
		In fact, from the analysis of the algorithm, we can see that the first term and the rest come
		from very different procedures: the first term is used for ``estimating'' the gap distribution 
		and the rest for ``verifying'' and ``eliminating'' suboptimal arms. 

		Our algorithm can achieve an even better upper bound 
		$
		O\left( \Gap{2}^{-2}\ln\ln \Gap{2}^{-1}+\sum\nolimits_{i=2}^{n} \Gap{i}^{-2} \ln\delta^{-1}\right)
		$
		for a special but important class of instances, which we call
		{\em clustered instances}.
		\footnote{
			We say the instances is clustered if the cardinality
			of the set $\{ \lfloor \ln \Gap{i}^{-1} \rfloor \}_{i=2}^n$ is bounded by a constant.
			See Theorem~\ref{thm:clustered} for more details.
		}
		See Section~\ref{subsec:clustered}.
		Note that this bound is almost {\em instance optimal}, since it matches the Mannor-Tsitsiklis instance-wise lower bound plus the 2-arm lower bound $\Gap{2}^{-2}\ln\ln \Gap{2}^{-1}$. 
		In fact, the aforementioned instances ($n-1$ arms with mean 0.5
		and 1 arm with mean $0.5+\Delta$) are clustered instances.
		Even for such basic instances, a tight bound is not known so far! 
		After a careful examination, we find that all previous algorithms are suboptimal on the very basic examples,
		while our algorithm can achieve the optimal bound $O(\Delta^{-2}\ln\ln \Delta^{-1}+n\Delta^{-2}\ln\delta^{-1})$.
		
		By slightly modifying our algorithm, we can easily obtain an $(\epsilon,\delta)$-PAC algorithm for \bestarm,
		which improves several prior works. See the supplementary material for the detailed information.

		\eat{
		Finally, like in \cite{karnin2013almost}, our first algorithm only guaranteed that conditioning on a high probability event, 
		the expectation of the total samples is small. 
		We devise a general trick to transform this type of algorithm to an algorithm with bounded expectation of the total samples in Section~\ref{sec:simult}, 
		which is clearly also interesting in its own right.
		}
		
			
		

	\topic{Technical Novelty of Our Algorithm}
		Now, we provide a high level idea of our algorithm.
		Our algorithm is inspired by the elegant \EXPGAPELIMINATION\ algorithm in~\cite{karnin2013almost}. 
		In order to highlight the technical novelty of our algorithm,
		we provide here a very brief introduction to the \EXPGAPELIMINATION\ algorithm which runs in round.
		In the $r$th round of \EXPGAPELIMINATION, we first try to identify an $\epsilon_r$-optimal arm $A_r$
		(where $\epsilon_r=O(2^{-r})$), 
		using the classical PAC algorithm in \cite{even2006action}.
		Then, using the empirical mean of $A_r$ as a threshold, the algorithm tries to eliminate those arms with smaller means.
		In fact, comparing with the previous elimination-based algorithms, such as \cite{even2002pac,even2006action,bubeck2012multiple},
		\EXPGAPELIMINATION\ seems to be the most aggressive one, which is the main reason that \EXPGAPELIMINATION\ improves on the previous results.
		However, we show that \EXPGAPELIMINATION\ may be over-aggressive, and we may benefit from delaying the 
		elimination for some rounds, if we cannot eliminate a substantial number of arms in this round.
		To exploit this fact, we develop a procedure called \FRACTIONTEST, which, roughly speaking, 
		can inform us about the gap distribution and decide whether or not we should do elimination in this round.
		
		\vspace{-0.3cm}
		\subsubsection{Lower Bounds}
		
		\topic{Lower bound of \sign}
		First we briefly discuss our lower bound for the \sign\ problem,
		which plays a crucial role
		in the lower bound reduction for \bestarm.

		We first emphasize that Farrell's lower bound \eqref{eq:2armlowerbound}
		is not an instance-wise lower bound.\footnote{
			To the contrary, the bound $\sum\nolimits_i \Gap{i}^{-2}\ln\delta^{-1}$ 
			~\cite{mannor2004sample}
			is an instance-wise lower bound.
		}
		In particular, the $\limsup$ in \eqref{eq:2armlowerbound} merely 
		asserts that the existence of infinite number of instances 
		that require $\Delta^{-2} \ln\ln \Delta^{-2}$ samples
		(as $\Delta\rightarrow 0$), which is not enough 
		for our purpose (our reduction
		for \bestarm\ requires a stronger quantitative lower bound).	
		Moreover, we note that it is impossible to obtain an $\Omega(\Delta^{-2} \ln\ln \Delta^{-2})$
		lower bound for every instance, since we can design an algorithm 
		that uses $o(\Delta^{-2} \ln\ln \Delta^{-2})$ samples for infinite number of 
		instances (see the supplementary material Section~\ref{app:sign}). 
		
		Comparing to Farrell's lower bound, ours is a quantitative
		one.	
		Let the target lower bound be
		$
		\lowb(\Delta)=c\Delta^{-2}\ln\ln \Delta^{-1},
		$
		where $c$ is a small universal constant.
		For simplicity, assume that all the reward distributions are Gaussian with $\sigma=1$.
		Let $T_{\alg}(\Delta) = \max(T_{\alg}(A_{\xi+\Delta}),T_{\alg}(A_{\xi-\Delta}))$, 
		in which $A_{\xi+\Delta}$ and $A_{\xi-\Delta}$ denote the arms with means $\xi+\Delta$ and $\xi-\Delta$, 
		respectively.
		For an algorithm $\alg$, if $T_{\alg}(\Delta)\geq \lowb(\Delta)$, we say $\Delta$ is a ``{\em slow point}''
		(or $\alg$ is slow at $\Delta$),
		otherwise, it is a ``{\em fast point}''.
		\footnote{
			    We sometimes mention the sample complexity of an algorithm and its running time interchangeably,
			    since for all of our algorithms
			    the running time is at most a constant times the number of samples.
			    Hence, sometimes when we informally speak that an algorithm must be ``slow'', which also means it requires
			    many samples. 
			    }	
		Roughly speaking, our lower bound asserts that for any \CORRECT\ algorithm for \sign,
		there must be slow points in almost all intervals 
		$[e^{-i}, e^{-i+1})$, $i\in \mathbb{Z}^+$.
		The precise statement and the proof can be found in supplementary
		material (Theorem~\ref{lm:FRACTION-BOUND-DISC} in Section~\ref{sec:lbsign}).

		Our proof is very different from, and much simpler than the complicated proof in \cite{farrell1964asymptotic}.
		Furthermore, we note that \cite{farrell1964asymptotic} assumes the reward distributions are from the exponential family,
		while our proof only utilizes the KL divergence between different reward distributions, which is more general and applies to non-exponential family as well.
		
		\topic{Lower bound for \bestarm}
		Now, we discuss our new lower bound for \bestarm.
		Note that our current knowledge 
		(in particular the lower bounds of 
		Farrell and Mannor-Tsitsiklis) does not rule out an
		$O\left(\sum\nolimits_{i=2}^{n}\Gap{i}^{-2} \ln\delta^{-1} + \Gap{2}^{-2}\ln\ln\Gap{2}^{-1}\right)$ algorithm for \bestarm.
		If such a result exists, it is clearly a very satisfying answer.
		However, we show it is impossible by 
		by presenting an $\Omega\left(\sum\nolimits_{i=2}^{n}\Gap{i}^{-2} \ln\ln n\right)$ lower bound.

		This is the first lower bound that surpasses  
		$\Omega(\sum\nolimits_{i=2}^{n} \Gap{i}^{-2} \ln\delta^{-1})$ 
		\cite{mannor2004sample, kaufmann2014complexity}
		for general \bestarm. 
		The proof of the theorem is also interesting in its own right.
		We provide a nontrivial reduction from the \sign\ problem to the \bestarm\ problem,
		and utilize our previous lower bound for \sign\ to obtain the desired lower bound for \bestarm. 
		More concretely, we construct a class of instances for \bestarm, 
		and show that if there is an algorithm $\alg$ that can solve those instances faster than the target lower bound,
		we can construct an algorithm $\bB$ (calling
		$\alg$ as a subroutine) to solve a nontrivial proportion of a class of \sign\ instances faster than $\Delta^{-2}\ln\ln \Delta^{-1}$ time,
		which leads to a contradiction to our lower bound on \sign.
		Note that the old lower bound in \cite{farrell1964asymptotic} cannot be used here since it does not preclude the existence of 
		such an algorithm $\bB$ for \sign.
				
		\topic{On Instance Optimality}
		Instance optimality (\cite{fagin2003optimal,afshani2009instance}) is arguably the strongest possible notion of optimality. Loosely speaking,
		an algorithm $\alg$ is instance optimal if the running time 
		of $\alg$ on instance $I$ is at most $O(L(I))$, where $L(I)$
		is the lower bound required to solve the instance for any 
		algorithm.	We propose an intriguing conjecture
		concerning the instance optimality of \bestarm.		
		The conjecture concerns the sample complexity of every \bestarm\ instance,
		and provides a concrete formula for it. Interestingly, the formula
		involves an entropy-like term, which we call {\em gap entropy}.
		The new $\ln\ln n$ factor appearing in both 
		our new upper and lower bounds 
		is in fact a tight bound of the gap entropy. The proofs of our new results
		also provide strong evidence for the conjecture.
		The details can be found in the supplementary material.


\vspace{-0.2cm}
\subsection{Other Related Work}
\vspace{-0.2cm}

\topic{\bestarm}
 Kaufmann et al.~\cite{kaufmann2014complexity} provided an $\Omega(\sum_{i=2}^{n} \Gap{i}^{-2} \ln \delta^{-1})$ lower bound for \bestarm, with a better constant factor than in \cite{mannor2004sample}.
 Garivier and Kaufmann~\cite{garivier2016optimal} obtained a complete resolution of the 
 asymptotic sample complexity
 of \bestarm\ in the regime where $\delta \to 0$ (treating $\Delta_i$s as fixed).
 However, our work focus on the regime where all $\Delta_i$s, and $\delta$
 are variables that can approach to $0$.
 In fact, when we allow $\Delta_i$ to approach to $0$ and maintain $\delta$ fixed, their lower bound is not tight. \footnote{
 	Their lower bound is of the form $T^{*}(I) \cdot \ln \delta^{-1}$
 	for instance $I$ (see \cite{garivier2016optimal} for the definition of $T^*$). 
 	In fact, we can see $T^{*}(I)$ is upper bounded by $O(\sum_{i=2}^{n} \Gap{i}^{-2})$ by existing upper bounds. 
 	When $\delta$ is some constant, by
 	our new lower bound Theorem~\ref{thm:hard-case-exists},  $\Omega(\sum_{i=2}^{n}\Gap{i}^{-2} \ln \delta^{-1})$ is not tight. } 

\topic{\sign\ and A/B testing}
The \sign\ problem is closely related to the A/B testing problem in the literature, in which 
we have two arms with unknown means and the goal is to decide which one is larger.
It is easy to see that a lower bound for \sign\ is also a lower bound for the the A/B testing problem. 
Kaufmann et al.~\cite{kaufmann2014complexity}
studied the optimal sample complexity for the A/B testing problem. However, their focus is on
the limiting behavior of the sample complexity when $\delta\rightarrow 0$,
while we are interested in the case where the gap $\Delta$ approaches to zero but $\delta$ is a constant
(in their case, the $\ln\ln \Delta^{-1}$ factor is absorbed by the $O(\ln \delta^{-1})$ factor).


\topic{\bestkarm}
One natural generalization of \bestarm\ is the \bestkarm\ problem, which asks for 
the top-$k$ arms instead of just the top-$1$. 
The \bestkarm\ problem has also been studied extensively for the last few years \cite{kalyanakrishnan2010efficient,  gabillon2012best, gabillon2011multi, kalyanakrishnan2012pac, bubeck2012multiple, kaufmann2013information, zhou2014optimal, kaufmann2014complexity}.
Most lower and upper bounds for \bestkarm\ are variants of those for \bestarm,
and the bounds also depend on the gap parameters. But in this case, the gaps are typically defined to 
be the distance to $\mu_{[k]}$ or $\mu_{[k+1]}$.
Chen et al.~\cite{chen2014combinatorial}
and Chen et al.~\cite{chen2016matroid} study the combinatorial pure exploration problem, which
generalizes the cardinality constraint in \bestkarm\ to more general combinatorial constraints.


\topic{PAC learning}
The worst case sample complexity of \bestarm\ in the PAC setting is also well studied. 
There is a matching lower and upper bounds
obtained by $\Omega(n \ln\delta^{-1}/\epsilon^2)$ in~\cite{even2002pac, even2006action, mannor2004sample}. 
The worst case sample complexity for \bestkarm\ in the PAC setting has also
been well studied by many authors during the last few years~\cite{kalyanakrishnan2010efficient,kalyanakrishnan2012pac,zhou2014optimal,cao2015top}.


\newcommand{\lnmin}[2]{\ln[\min(#1,#2)]}
\newcommand{\armset}[1]{U^{#1}}
\newcommand{\bigarmset}[1]{U^{\ge#1}}
\newcommand{\smaarmset}[1]{U^{\le#1}}
\newcommand{\maxs}{\mathrm{max}_s}

\vspace{-0.4cm}
\section{An Improved Upper Bound for \bestarm}
\label{sec:bestarmupperboundsketch}
\vspace{-0.2cm}

In this section, we present our new algorithm, which achieves 
the improved upper bound in Theorem~\ref{theo:BESTARMPACD}.
Due to space constraint, all proofs and some details are deferred to the supplementary material.
Our final algorithm builds on several useful components.

\topic{1. Uniform Sampling}
The first building block is the simple uniform sampling algorithm. Given two parameters $\epsilon,\delta$ and a set of arms $S$, it takes from each arm $a \in S$ 
$2\epsilon^{-2}\ln(2\cdot \delta^{-1})$ samples. 
Let $\hamean{a}$ be the empirical mean of arm $a$.
We denote the algorithm as $\UNIFORMSAMPLING(S,\varepsilon,\delta)$.
We have the following lemma 
which is a simple consequence of Hoeffding's inequality.

\begin{lemma}
	For each arm $a \in S$, we have that
	$
	\Pr\left[|\amean{a}-\hamean{a}|\ge \epsilon\right] \le \delta.
	$
\end{lemma}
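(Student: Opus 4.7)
The plan is a direct application of the standard concentration bound for $1$-sub-Gaussian random variables. Recall that each sample $X$ drawn from $\distr_a$ satisfies $\Ex[e^{\lambda(X-\amean{a})}] \le e^{\lambda^2/2}$ for all $\lambda \in \R$ (this is the definition of $1$-sub-Gaussian tails used throughout the paper). A standard Chernoff argument then yields, for the empirical average $\hamean{a}$ of $N$ i.i.d.\ samples, the two-sided tail inequality $\Pr[|\hamean{a}-\amean{a}| \ge t] \le 2\exp(-Nt^2/2)$ for every $t>0$.

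The next step is simply to plug in the parameters used by $\UNIFORMSAMPLING(S,\epsilon,\delta)$, which draws $N = 2\epsilon^{-2}\ln(2\delta^{-1})$ samples from each arm $a \in S$. Substituting this value of $N$ and taking $t = \epsilon$ yields
\[
\Pr\bigl[|\hamean{a}-\amean{a}| \ge \epsilon\bigr] \le 2\exp\bigl(-\tfrac{1}{2}\cdot 2\epsilon^{-2}\ln(2\delta^{-1})\cdot \epsilon^2\bigr) = 2 e^{-\ln(2\delta^{-1})} = \delta,
\]
which is precisely the stated bound.

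There is essentially no obstacle: the lemma is a one-line calculation once the sub-Gaussian Chernoff bound is in hand, and the constants in the definition of $N$ were chosen exactly so that the exponent telescopes to $\ln(2\delta^{-1})$. The only point worth flagging is that one must invoke the sub-Gaussian form of Hoeffding's inequality rather than its classical bounded-range version, because the paper's model only assumes $1$-sub-Gaussian tails on the rewards and not bounded support; this substitution is completely standard and affects none of the constants.
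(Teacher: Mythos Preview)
Your proof is correct and matches the paper's own treatment: the paper simply states that the lemma is a direct consequence of Hoeffding's inequality for $1$-sub-Gaussian random variables (Lemma~\ref{lm:hoeff}), and your calculation spells out exactly that one-line substitution.
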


\topic{2. Median Elimination}
We need the median elimination algorithm developed in \cite{even2006action}, which is a classic $(\epsilon, \delta)$-PAC algorithm for \bestarm.
The algorithm takes parameters $\epsilon,\delta > 0$ and a set $S$ of arms,
and returns an $\epsilon$-optimal arm with probability $1-\delta$.
The algorithm runs in rounds. In each round, it samples every remaining arm a uniform number of times,
and then discard half of the arms with lowest empirical mean (thus the name {\em median elimination}). 
It outputs the final arm that survives.
We denote the procedure by \MEDIANELIMINATION$(S,\epsilon,\delta)$.
We use this algorithm in a black-box manner and
its performance is summarized in the following lemma.

\begin{lemma} 
	Let $\amean{1}$ be the maximum mean value. 
	\MEDIANELIMINATION$(S,\epsilon,\delta)$ returns an $\varepsilon$-optimal arm (i.e., with mean at least $\amean{1} - \epsilon$) 
	with probability at least $1-\delta$, using a budget of at most $O(|S|\log(1/\delta)/\epsilon^{2})$ samples. 
\end{lemma}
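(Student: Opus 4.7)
The plan is to analyze the algorithm round by round. Median Elimination runs for $R = \lceil \log_2 |S| \rceil$ rounds; in round $\ell$ I would set $\epsilon_\ell = (3/4)^{\ell-1}\epsilon/4$ and $\delta_\ell = \delta/2^\ell$, and sample each surviving arm $n_\ell = \Theta(\epsilon_\ell^{-2}\log(1/\delta_\ell))$ times. Let $S_\ell$ denote the surviving arms at the start of round $\ell$ (so $|S_\ell| \leq |S|/2^{\ell-1}$), and let $a^*_\ell \in \arg\max_{a \in S_\ell} \mu_a$ be the best arm still alive.

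The central per-round claim I would prove is that with probability at least $1-\delta_\ell$, $\max_{a \in S_{\ell+1}} \mu_a \geq \mu_{a^*_\ell} - \epsilon_\ell$. Call an arm $b \in S_\ell$ \emph{bad} if $\mu_b < \mu_{a^*_\ell} - \epsilon_\ell$, and let $N$ be the number of bad arms $b$ with $\hat\mu_b > \hat\mu_{a^*_\ell}$. Since $\hat\mu_b - \hat\mu_{a^*_\ell}$ has true mean less than $-\epsilon_\ell$, a one-sided Hoeffding/sub-Gaussian bound gives $\Pr[\hat\mu_b > \hat\mu_{a^*_\ell}] \leq \exp(-\Omega(n_\ell \epsilon_\ell^2)) \leq \delta_\ell/4$ by our choice of $n_\ell$. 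Hence $\Ex[N] \leq |S_\ell|\delta_\ell/4$, and Markov's inequality yields $\Pr[N \geq |S_\ell|/2] \leq \delta_\ell/2$. On the complementary event, if $a^*_\ell$ itself does not survive then all $|S_\ell|/2$ survivors empirically beat $a^*_\ell$ but fewer than $|S_\ell|/2$ of them are bad, so at least one non-bad arm survives; combined with the trivial case that $a^*_\ell$ itself is in the top half, this establishes the claim.

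The global analysis is then a routine two-step aggregation. Union-bounding the per-round failure events gives total failure probability $\sum_{\ell=1}^{R} \delta_\ell \leq \delta$; on the complementary event, telescoping across rounds yields that the output arm has mean at least $\amean{1} - \sum_{\ell \geq 1} \epsilon_\ell \geq \amean{1} - \epsilon$ by our geometric choice of $\epsilon_\ell$. For the budget, the per-round cost is $|S_\ell|n_\ell = O(|S| \cdot 2^{-(\ell-1)} \cdot \epsilon_\ell^{-2} \log(1/\delta_\ell))$; plugging $\epsilon_\ell^{-2} = \Theta((16/9)^{\ell-1}/\epsilon^2)$ and $\log(1/\delta_\ell) = O(\ell + \log(1/\delta))$, the ratio between consecutive terms is $(1/2)(16/9) = 8/9 < 1$, so the sum is a convergent geometric series bounded by $O(|S|\log(1/\delta)/\epsilon^2)$.

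The main obstacle is the per-round claim, where the non-obvious step is avoiding a naive union bound over the arms in $S_\ell$ (which would cost an extra $\log|S|$ factor and ruin the final budget). The Markov argument sidesteps this: one does not need every bad arm to individually lose to $a^*_\ell$, only that their count among the top empirical means stays below the median cutoff. Everything else — sub-Gaussian concentration, the union bound over the $R = O(\log|S|)$ rounds, and the geometric summation of per-round costs — is standard.
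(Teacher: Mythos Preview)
Your proposal is correct and reproduces the classical analysis of Median Elimination from Even-Dar et al.~\cite{even2006action}. The paper itself does not prove this lemma: it explicitly uses the algorithm ``in a black-box manner'' and cites \cite{even2006action} for the guarantee, so there is no in-paper proof to compare against.

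One small remark on your per-round argument: bounding $\Pr[\hat\mu_b > \hat\mu_{a^*_\ell}]$ directly via sub-Gaussian concentration on the difference $\hat\mu_b - \hat\mu_{a^*_\ell}$ is a slight streamlining of the original proof, which instead first conditions on $\hat\mu_{a^*_\ell} \geq \mu_{a^*_\ell} - \epsilon_\ell/2$ and then bounds each bad arm's deviation separately. Both routes land on the same Markov step, and your version is arguably cleaner since it avoids splitting the failure probability between the best arm's estimate and the bad-arm count.
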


\topic{3. Fraction test}	
\FRACTIONTEST\ is a simple estimation procedure, which can be used to gain some information about the distribution of the arms.
It plays a key role in the final algorithm.
The algorithm takes six parameters $(S,c_l,c_r,\delta,t,\epsilon)$,
where $S$ is the set of arms, $\delta$ is the confidence level, $c_l<c_r$ are real numbers called {\em range parameters}, $t\in (0,1)$
is the threshold, and $\epsilon$ is a small positive constant.
Typically, $c_l$ and $c_r$ are very close.
The goal of the algorithm, roughly speaking, is to distinguish whether there are still many arms in $S$ which  have small means 
(w.r.t. $c_r$)
or the majority of arms already have large means (w.r.t. $c_l$).

The algorithm runs in $\ln(2\cdot\delta^{-1}) (\epsilon/3)^{-2}/2$ iterations.
In each iteration, it samples an arm $a_i$ uniformly from $S$, 
and takes $O(\ln \epsilon^{-1} (c_r-c_l)^{-2})$ independent samples from $a_i$.
Then, we maintain a counter $\cnt$ which is initially 0,
and counts 
the fraction of iterations in which the empirical mean of $a_i$ is smaller than $(c_l+c_r)/2$.
If the fraction is larger than $t$, the algorithm returns $\True$. Otherwise, it returns $\False$.

For ease of notation, 
we define $S^{\ge c}:= \{\amean{a} \ge c \betw a \in S \}$, i.e.,
all arms in $S$ with means at least $c$. 
Similarly, we can define $S^{>c},S^{\le c}$ and $S^{< c}$. 

\begin{lemma} 
	Suppose $\epsilon < 0.1$ and $t \in (\epsilon,1-\epsilon)$. With probability $1-\delta$,
	the following hold:
	\vspace{-0.2cm}
	\begin{itemize}[leftmargin=*]
		\item If \FRACTIONTEST\ outputs $\True$, then $|S^{>c_r}| < (1-t+\epsilon) |S|$
		(or equivalently $|S^{\le c_r}| > (t-\epsilon)|S|$).
		\item If \FRACTIONTEST\ outputs $\False$, then $|S^{<c_l}| < (t+\epsilon) |S|$
		(or equivalently $ |S^{\ge c_l}| > (1-t-\epsilon)|S|$).
	\end{itemize}
	\vspace{-0.2cm}
	Moreover, the number of samples taken by the algorithm is $O(\ln\delta^{-1}\epsilon^{-2}\Delta^{-2} \ln \epsilon^{-1})$, in which $\Delta = c_r-c_l$.
	If $\epsilon$ is a fixed constant, then the number of samples is simply $O(\ln \delta^{-1} \Delta^{-2})$.
\end{lemma}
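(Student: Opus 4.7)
The plan is to analyze the algorithm using a single high-probability event, namely that the observed counter frequency $\hat p := \frac{1}{N}\sum_i X_i$ (where $X_i$ is the indicator that iteration $i$ increments the counter, and $N = \tfrac12 \ln(2/\delta)(\epsilon/3)^{-2}$) tracks its expectation $p := \mathbb{E}[X_i]$ closely. The key observation is that the per-arm Hoeffding failure probability need not be treated as a separate high-probability event; it can instead be absorbed directly into the expectation of $X_i$, avoiding any union bound over the $N$ iterations.

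First I would pin down the expectation $p$ in terms of the population quantities $q_{<} := |S^{<c_l}|/|S|$ and $q_{>} := |S^{>c_r}|/|S|$. Writing $\Delta := c_r - c_l$ and using Hoeffding's inequality for $1$-sub-Gaussian arms with $m = \Theta(\ln(1/\epsilon)\,\Delta^{-2})$ per-arm samples (chosen so that $2\exp(-m\Delta^2/8) \le \epsilon/3$), for any fixed arm $a$
\[
\Pr\!\bigl[\hat\mu_{a_i} < (c_l+c_r)/2 \,\bigm|\, a_i = a\bigr]
\le \epsilon/3 \text{ if } \mu_a > c_r,
\qquad
\ge 1-\epsilon/3 \text{ if } \mu_a < c_l.
\]
Averaging over the uniform choice of $a_i$ and splitting $S$ into the three groups $\{\mu_a < c_l\}$, $\{c_l \le \mu_a \le c_r\}$, $\{\mu_a > c_r\}$ yields the two-sided sandwich
\[
q_{<} - \epsilon/3 \;\le\; p \;\le\; 1 - q_{>} + \epsilon/3.
\]

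Next, since the iterations are mutually independent, Hoeffding's inequality applied to $\hat p$ gives $\Pr[|\hat p - p| \ge \epsilon/3] \le 2\exp(-2N(\epsilon/3)^2) \le \delta$. Working on the complementary $(1-\delta)$-event, the two conclusions follow mechanically: if the algorithm returns \textsc{True}, then $\hat p > t$, so $p > t - \epsilon/3$, and combined with $p \le 1 - q_{>} + \epsilon/3$ this gives $q_{>} < 1 - t + 2\epsilon/3 < 1 - t + \epsilon$; symmetrically, a \textsc{False} output gives $p \le t + \epsilon/3$, hence $q_{<} < t + 2\epsilon/3 < t + \epsilon$. The sample count is $N \cdot m = O(\ln\delta^{-1}\epsilon^{-2}) \cdot O(\ln\epsilon^{-1}\Delta^{-2})$, matching the claim, and collapses to $O(\ln\delta^{-1}\Delta^{-2})$ when $\epsilon$ is a fixed constant.

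The main obstacle is purely bookkeeping: the factor $\epsilon/3$ must appear in three distinct places (the per-arm miscalibration budget, the Hoeffding slack on $\hat p$, and the gap between $2\epsilon/3$ and $\epsilon$ in the final inequality) and they must line up so that the final bound comes out to exactly $1-t+\epsilon$ (resp.\ $t+\epsilon$). The hypotheses $\epsilon<0.1$ and $t\in(\epsilon,1-\epsilon)$ play a mild role here, ensuring that $t\pm\epsilon$ stays inside $[0,1]$ so the output thresholds are meaningful and that the choice of $m$ and $N$ above is valid.
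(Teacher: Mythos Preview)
Your proposal is correct and follows essentially the same approach as the paper's proof: absorb the per-arm Hoeffding error $\epsilon/3$ into the expectation $p=\mathbb{E}[X_i]$, then apply a single concentration bound to $\hat p$ with slack $\epsilon/3$. Your algebra is slightly cleaner—the paper bounds $N_b \le (1-E)N/(1-\epsilon/3)$ and then uses $\tfrac{1}{1-\epsilon/3}<1+2\epsilon/3$ (valid since $\epsilon<0.1$) together with $t>\epsilon$ to reach $(1-t+\epsilon)N$, whereas you weaken $q_>\cdot\epsilon/3\le\epsilon/3$ upfront to get the additive sandwich directly—but the structure of the two arguments is identical.
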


\topic{4. Eliminating arms}
The last ingredient is an elimination procedure, which can be used to eliminate most arms below a given threshold.
The procedure takes four parameters $(S,c_l,c_r,\delta)$ as input, 
where $S$ is a set of arms, $c_l<c_r$ are the range parameters, and $\delta$
is the confidence level.
It outputs a subset of $S$
and guarantees that upon termination, most of the remaining arms have means at least $c_l$ with probability $1-\delta$.

Now, we describe the procedure \ELIMINATION\, which runs in iterations.
It maintains the current set $S_r$ of arms, which is initially $S$.
In each iteration, it first applies  
\FRACTIONTEST$(S_r,c_l,c_m,\delta_r,0.075,0.025)$ on $S_r$,
where $c_m=(c_r+c_r)/2$.
If \FRACTIONTEST\ returns $\True$, which means
that there are at least 5\% fraction of arms with 
means smaller than $c_m$ in $S_r$, we sample all arms in $S_r$ uniformly by calling \UNIFORMSAMPLING$(S_r,(c_r-c_m)/2,\delta_r)$
where $\delta_r=\delta/(10\cdot 2^r)$,
and retain those with empirical means at least $(c_m+c_r)/2$.
If \FRACTIONTEST\ returns $\False$ (meaning that 90\% arms have means at least $c_l$) 
the algorithm terminates and returns the remaining arms.
The guarantee of \ELIMINATION\ is summarized in the following lemma.

\begin{lemma} 
	Suppose $\delta < 0.1$.
	Let $S' = \ELIMINATION(S,c_l,c_r,\delta)$. 
	Let $A_1$ be the best arm among $S$, with mean $\amean{A_1} \ge c_r$.
	Then with probability at least $1-\delta$, the following statements hold
	\vspace{-0.2cm}
	\begin{enumerate}[leftmargin=*]
		\item $A_1 \in S'$ \,\, (the best arm survives);
		\item $|S'^{\le c_l}| < 0.1 |S'|$ \,\, (only a small fraction of arms have means less than $c_l$);
		\item The number of samples is $O(|S| \ln \delta^{-1} \Delta^{-2})$, in which $\Delta = c_r-c_l$.
	\end{enumerate}
	\vspace{-0.2cm}
\end{lemma}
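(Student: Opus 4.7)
The plan is to condition on a single union-bounded good event and then verify the three conclusions in turn; the technical heart is a geometric-contraction argument for the sizes $|S_r|$ of the surviving sets.

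Setting up the good event, round $r$ invokes one \FRACTIONTEST\ and at most one \UNIFORMSAMPLING\ call, each with confidence $\delta_r = \delta/(10\cdot 2^r)$. Summing failure probabilities across all rounds gives total failure probability at most $2\sum_{r\ge 1}\delta_r \le \delta/5 < \delta$, so the global good event has probability at least $1-\delta$, and all subsequent claims are argued on its realization. Survival of $A_1$ then follows iteration by iteration: whenever the elimination branch fires, \UNIFORMSAMPLING\ ensures $|\hamean{A_1}-\amean{A_1}| \le (c_r-c_m)/2$, and since $\amean{A_1}\ge c_r$ while $c_r - (c_r-c_m)/2 = (c_m+c_r)/2$ coincides with the retention threshold, $A_1$ always passes, and hence $A_1\in S'$. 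For the second conclusion, the algorithm terminates only when \FRACTIONTEST\ (called with $(t,\epsilon)=(0.075,0.025)$) returns $\False$, which by its guarantee forces $|S'^{\ge c_l}|>(1-t-\epsilon)|S'|=0.9|S'|$, i.e., $|S'^{\le c_l}|<0.1|S'|$ (up to a measure-zero boundary effect at $\amean{a}=c_l$, which can be absorbed into the $\epsilon$ margin).

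The crux is the sample complexity bound. The key observation is that every iteration in which \FRACTIONTEST\ returns $\True$ eliminates a constant fraction of the current arms: by the \FRACTIONTEST\ guarantee, $|S_r^{\le c_m}|>(t-\epsilon)|S_r|=0.05\,|S_r|$, and under the good event any arm $a$ with $\amean{a}\le c_m$ satisfies $\hamean{a}\le \amean{a}+(c_r-c_m)/2 \le (c_m+c_r)/2$, so $a$ is discarded by the retention rule. This yields $|S_{r+1}|\le 0.95\,|S_r|$, so the number of elimination rounds is $R = O(\log|S|)$. Round $r$ costs $O(|S_r|\Delta^{-2}\ln\delta_r^{-1}) = O((0.95)^{r-1}|S|\,\Delta^{-2}(\ln\delta^{-1}+r))$ samples in total across its \FRACTIONTEST\ and \UNIFORMSAMPLING\ calls, and the series $\sum_{r\ge 1}(0.95)^{r-1}(\ln\delta^{-1}+r) = O(\ln\delta^{-1})$, producing the claimed $O(|S|\ln\delta^{-1}\Delta^{-2})$ bound.

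The main obstacle is the contraction step $|S_{r+1}|\le 0.95\,|S_r|$, which demands a careful matching between the witness size $0.05\,|S_r|$ produced by the $\True$ branch of \FRACTIONTEST\ and the \UNIFORMSAMPLING\ precision $(c_r-c_m)/2$, tuned precisely so that the retention cutoff $(c_m+c_r)/2$ provably removes every arm with mean at most $c_m$ on the good event. Once contraction is in hand, the surrounding union bound and geometric summation are routine.
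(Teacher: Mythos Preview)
Your overall structure is correct, and claims 1 and 2 go through essentially as you wrote them. The gap is in the contraction step used for the sample bound. You assert that on the good event, \emph{every} arm $a$ with $\amean{a}\le c_m$ has $\hamean{a}\le (c_m+c_r)/2$ and is therefore discarded. But the guarantee of \UNIFORMSAMPLING\ is \emph{per arm}: $\Pr[|\hamean{a}-\amean{a}|\ge (c_r-c_m)/2]\le\delta_r$ for each $a\in S_r$ individually. Your good event, as set up, allocates only $\delta_r$ failure probability to the entire \UNIFORMSAMPLING\ call in round $r$; to cover the event ``all arms in $S_r$ are $(c_r-c_m)/2$-accurate'' you would need a union bound over arms, costing $|S_r|\delta_r$ per round. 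Summing over rounds gives at best $|S|\sum_r\delta_r=\Theta(|S|\delta)$, which blows past the $\delta$ budget. (And you cannot bootstrap using the contraction $|S_r|\le 0.95^{r-1}|S|$ to rescue the sum, since that contraction is what you are trying to establish \emph{on} the good event.)

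The paper sidesteps this by never asking that all low-mean arms be eliminated. Instead, with $N_r:=|S_r^{\le c_m}|$, it observes $\Ex[N_{r+1}\mid N_r]\le \delta_r N_r$ (each low-mean arm survives with probability at most $\delta_r$), and applies Markov to get $\Pr[N_{r+1}>\tfrac14 N_r]\le 4\delta_r$. This costs only $4\delta_r$ per round, independent of $|S_r|$, so the union bound over rounds closes. Combining $N_{r+1}\le\tfrac14 N_r$ with $N_r>0.05\,|S_r|$ (from the $\True$ branch of \FRACTIONTEST) gives $|S_{r+1}|\le|S_r|-\tfrac34 N_r\le 0.99\,|S_r|$, and the geometric sum then proceeds exactly as you outlined.
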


\topic{Our Final Algorithm \DELIMINATION}
Now, everything is ready
to describe our algorithm \DELIMINATION\
for \bestarm.
We provide a high level description here.
All detailed parameters can be found in Algorithm~\ref{algo:BESTARM-m}.
The algorithm runs in rounds.
It maintains the current set $S_r$ of arms.
Initially, $S_1$ is the set of all arms $S$.
In round $r$, the algorithm tries to eliminate a set of suboptimal arms, while makes sure 
the best arm is not eliminated.
First, it applies the \MEDIANELIMINATION\ procedure to find an 
$\epsilon_{r}/4$-optimal arm, where $\epsilon_{r}=2^{-r}$.
Suppose it is $a_r$. Then, we take a number of samples from $a_r$ to estimate its mean
(denote the empirical mean by $\hamean{a_r}$).
Unlike previous algorithms in~\cite{even2002pac,karnin2013almost},
which eliminates either a fixed fraction of arms or those arms with mean much less than $a_r$,
we use a \FRACTIONTEST\ to see whether there are many arms with mean much less than $a_r$.
If it returns $\True$, we apply the \ELIMINATION\ procedure to eliminate those arms
(for the purpose of analysis, we need to use \MEDIANELIMINATION\ again, but with a tighter confidence level,
to find an $\epsilon_{r}/4$-optimal arm $b_r$).
If it returns $\False$, the algorithm decides that it is not judicious to do elimination in this round
(since we need to spend a lot of samples, but only discard very few arms, which is wasteful),
and simply sets $S_{r+1}$ to be $S_r$, and then proceeds to the next round.

\begin{algorithm}[H]
	\LinesNumbered
	\setcounter{AlgoLine}{0}
	\caption{\DELIMINATION($S,\delta$)}\label{algo:BESTARM}
	\label{algo:BESTARM-m}
	$h \leftarrow 1$
	
	$S_1 \leftarrow S$
	
	\For{$r = 1$ to $+\infty$}{
		\uIf{$|S_r| = 1$}{
			\(\bf Return\) the only arm in $S_r$ 
		}
		$\epsilon_{r} \leftarrow 2^{-r}$ 
		
		$\delta_{r} \leftarrow \delta/50r^2$
		
		$a_r \leftarrow \MEDIANELIMINATION(S_r,\epsilon_r/4,0.01)$. \label{line:ME1} 
		
		$\hamean{a_r} \leftarrow \textrm{\UNIFORMSAMPLING}(\{a_r\},\epsilon_r/4,\delta_r)$ \label{line:US1}
		
		\uIf{ {\em \FRACTIONTEST}$(S_r,\hamean{a_r}-1.5\epsilon_r,\hamean{a_r}-1.25\epsilon_r,\delta_r,0.4,0.1)$} { \label{line:FT}
			$\delta_h \leftarrow \delta/50h^2$
			
			$b_r \leftarrow \MEDIANELIMINATION(S_r,\epsilon_r/4,\delta_h)$ \label{line:ME2}
			
			$\hamean{b_r} \leftarrow \UNIFORMSAMPLING(\{b_r\},\epsilon_r/4,\delta_h)$ \label{line:US2}
			
			$S_{r+1} \leftarrow \ELIMINATION(S_r,\hamean{b_r}-0.5\epsilon_r,\hamean{b_r}-0.25\epsilon_r,\delta_h)$ \label{line:EL} 
			
			$h \leftarrow h + 1$ \label{line:INC-H}
		}\Else{
		
		$S_{r+1} \leftarrow S_{r}$
	}
}
\end{algorithm}

 \begin{theo}\label{theo:BESTARMPACD}
    	For any $\delta <0.1$, there is a $\delta$-correct algorithm for \bestarm\
    	which needs at most  
    	$
    	O\Big(
    	\Gap{2}^{-2}\ln\ln \Gap{2}^{-1}+
    	\sum_{i=2}^{n} \Gap{i}^{-2} \ln\delta^{-1}+\sum_{i=2}^{n} \Gap{i}^{-2}\ln\ln \min(n,\Gap{i}^{-1}) 
    	\Big)
    	$ 
    	samples in expectation. 
 \end{theo}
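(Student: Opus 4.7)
The plan is to establish correctness and then bound the expected sample complexity separately.

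For correctness, I define the good event $\mathcal{E}$ as the intersection of the success events of all subroutine calls that matter for preserving the best arm: each \UNIFORMSAMPLING\ on lines~9 and~12, each \FRACTIONTEST\ on line~10, each \MEDIANELIMINATION\ on line~11, and each \ELIMINATION\ on line~13. With $\delta_r = \delta/(50 r^2)$ and $\delta_h = \delta/(50 h^2)$, a union bound over all rounds $r$ and elimination indices $h$ gives $\Pr[\mathcal{E}^c] \le 2\delta \sum_{k\ge 1} 1/(50 k^2) \le \delta$. Under $\mathcal{E}$, since $b_r$ is $\epsilon_r/4$-optimal and $\hat{\mu}_{b_r}$ is an $\epsilon_r/4$-accurate estimate of $\mu_{b_r}$, we obtain $\hat{\mu}_{b_r} - 0.25\epsilon_r \le \amean{1}$, so the first guarantee of \ELIMINATION\ keeps $A_1 \in S_{r+1}$. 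Inductively $A_1 \in S_r$ for every $r$, and the algorithm correctly returns $A_1$ when $|S_r| = 1$. Importantly, the constant-probability failure of \MEDIANELIMINATION\ on line~8 does not affect correctness: its output only steers \FRACTIONTEST, while the actual elimination uses the tightly-controlled $b_r$.

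For sample complexity, I decompose each round's cost into \emph{overhead} (lines~8--10) and \emph{elimination} (lines~11--13, fired only when \FRACTIONTEST\ returns $\True$). The overhead per round is $O(|S_r|/\epsilon_r^2 + \ln\delta_r^{-1}/\epsilon_r^2)$. The algorithm terminates within $R = O(\log \Gap{2}^{-1})$ rounds because once $\epsilon_r \lesssim \Gap{2}$ every non-optimal arm has mean strictly below $\hat{\mu}_{a_r} - 1.5\epsilon_r$ (using $\hat{\mu}_{a_r} \ge \amean{1} - \epsilon_r/2$ under $\mathcal{E}$), so the $|S_r^{<c_l}|$ mass forces \FRACTIONTEST\ to return $\True$ via its contrapositive, triggering an elimination that collapses $S_r$ to $\{A_1\}$. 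Summing $\ln\delta_r^{-1}/\epsilon_r^2$ over $r\le R$ with $\ln\delta_r^{-1} = O(\ln r + \ln\delta^{-1})$ and $\sum_{r\le R} 1/\epsilon_r^2 = O(\Gap{2}^{-2})$ yields $O(\Gap{2}^{-2}\ln\ln \Gap{2}^{-1} + \Gap{2}^{-2}\ln\delta^{-1})$, which gives the first term and is absorbed into the second. The $|S_r|/\epsilon_r^2$ contribution charges $O(\Gap{i}^{-2})$ per non-optimal arm $A_i$, again absorbed by the second term.

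For the elimination cost, the structural key is that when \FRACTIONTEST\ returns $\True$, at least $30\%$ of $S_r$ have means $\le \hat{\mu}_{a_r} - 1.25\epsilon_r$; under $\mathcal{E}$ this threshold is at most $\amean{1} - \epsilon_r$, which also upper-bounds the $c_l$-threshold used in the subsequent \ELIMINATION\ call, whose guarantee says at most $10\%$ of $S_{r+1}$ lie below it. Combining these two facts with a short counting argument gives $|S_{r+1}| \le (7/9)|S_r|$, so the total number of elimination rounds satisfies $h = O(\log n)$. I then charge the per-round elimination cost $O(|S_r|\ln\delta_h^{-1}/\epsilon_r^2)$ to individual arms. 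Each non-optimal arm $A_i$ survives in $S_r$ only while $\epsilon_r \gtrsim \Gap{i}$ (otherwise \ELIMINATION\ discards it), so $A_i$ participates in at most $r_i = O(\log \Gap{i}^{-1})$ rounds; during these rounds $h \le \min(r_i, \log n)$, yielding $\ln\delta_h^{-1} = O(\ln\delta^{-1} + \ln\ln\min(n, \Gap{i}^{-1}))$. The geometric sum $\sum_{r\le r_i} 1/\epsilon_r^2 = O(\Gap{i}^{-2})$ then gives per-arm cost $O(\Gap{i}^{-2}(\ln\delta^{-1} + \ln\ln\min(n, \Gap{i}^{-1})))$, which sums to the second and third terms.

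The main obstacle will be handling the constant failure probability $0.01$ of \MEDIANELIMINATION\ on line~8. Because its failures are not union-bounded into $\mathcal{E}$ and occur independently across rounds, a run of failures can spuriously push \FRACTIONTEST\ to $\False$ and prevent progress. I plan to argue that the expected number of consecutive no-progress rounds before a successful elimination is $O(1)$ (essentially a geometric random variable with constant success probability), so the expected total round count remains $O(\log \Gap{2}^{-1})$ and the sample bound is unaffected up to constants. A secondary subtle issue is verifying that the calibration of thresholds inside \ELIMINATION\ (using the parameters $0.075, 0.025$) chains correctly with the \FRACTIONTEST\ parameters $(0.4, 0.1)$ in \DELIMINATION, which requires careful bookkeeping of the $\epsilon_r/4$ errors propagating through $a_r$ versus $b_r$ and making sure the proved thresholds line up across the two nested uses of \FRACTIONTEST.
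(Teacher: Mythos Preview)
Your correctness argument and the $h=O(\log n)$ derivation match the paper. The gap is in your sample-complexity charging: the claim ``each non-optimal arm $A_i$ survives in $S_r$ only while $\epsilon_r \gtrsim \Gap{i}$'' is false, and both your overhead and elimination bounds rely on it. Two mechanisms break it. First, in rounds where \FRACTIONTEST\ returns $\False$, no elimination happens at all, so an arm with large gap can sit in $S_r$ for many rounds past $r_i$ while $|S_r|/\epsilon_r^2$ grows geometrically. Second, even when \ELIMINATION\ fires, its guarantee is only $|S_{r+1}^{\le c_l}|\le 0.1|S_{r+1}|$, not that every arm below $c_l$ is removed; a constant fraction of small arms survives each call. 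Consider one arm with gap $2^{-1}$ and many arms with gap $2^{-10}$: the fraction test keeps returning $\False$ until round $\approx 10$, and the line-8 \MEDIANELIMINATION\ cost charged to the gap-$2^{-1}$ arm over those rounds is $\sum_{r\le 10}4^r=\Theta(4^{10})$, not $O(4)$.

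The paper handles exactly this via a potential-function recurrence rather than per-arm charging. It tracks $N_{\mathsf{sma}}=|S_r\cap U^{\le r-1}|$, the number of surviving ``overdue'' arms, and proves inductively that the expected remaining cost satisfies $T(r,N_{\mathsf{sma}})\le c_1(\ln\delta^{-1}+\ell_r)N_{\mathsf{sma}}\epsilon_r^{-2}+P_r$ for a potential $P_r=c_2\sum_{s\ge r}C_{r,s}|U^s|$. The three cases you anticipated (ME correct $+$ FT $\True$, ME correct $+$ FT $\False$, ME wrong) are analyzed separately; the crucial observation in the $\False$ case is that $N_{\mathsf{sma}}\le N_{\mathsf{cur}}+N_{\mathsf{big}}+1$, so the cost of carrying small arms forward can be absorbed by the $|U^s|$ terms for $s\ge r$ in the potential. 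This cross-charging from overdue arms to the currently-large arms is the missing idea in your outline; without it, the geometric blowup in $\epsilon_r^{-2}$ is uncontrolled.
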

\vspace{-0.2cm}

It is not difficult to verify that \DELIMINATION\ returns the best arm with probability
at least $1-\delta$.
However, the analysis of the running time of our algorithm is much more challenging.
The rough idea is 
to consider the number of arms in each interval
$\armset{s} = \{ a \mid 2^{-s} \le \Gap{a} < 2^{-s+1}\}$
and see how it changes in very round. 
When we execute \ELIMINATION\ in round $r$, we can 
eliminate a substantial fraction or arms in $\armset{1},\ldots, \armset{r}$.
However, there are still some remaining and we need to keep track of them over
the ensuing rounds. 
For that purpose, we need to carefully choose a potential function to amortize the costs over different iterations.
 		
Our algorithm can achieve an even better upper bound 
$
O\left( \Gap{2}^{-2}\ln\ln \Gap{2}^{-1}+\sum\nolimits_{i=2}^{n} \Gap{i}^{-2} \ln\delta^{-1}\right)
$
for {\em clustered instances}.
Furthermore, by slightly modifying the algorithm, we can obtain an improved $(\epsilon,\delta)$-PAC algorithm for \bestarm. See the supplementary material.


\vspace{-0.4cm}
\section{A New Lower Bound for \bestarm}
\vspace{-0.2cm}

In this section, we provide a sketch proof of the following new lower bound for \bestarm.
From now on, $\delta$ is a fixed constant such that $0 < \delta < 0.005$. 
Throughout this section, we assume the distributions of all the arms are Gaussian with variance $1$.

\begin{theo}\label{thm:hard-case-exists}
	There exist constants $c,c_1 > 0$ and $N \in \N$ such that, for any $\delta < 0.005$ and any \CORRECT\ algorithm $\alg$, 
	and any $n \ge N$, there exists an $n$ arms instance $I$ such that 
	$T_{\alg}[I] \ge c \cdot \sum_{i=2}^{n} \Gap{i}^{-2} \ln\ln n$. 
	Furthermore, $\Gap{2}^{-2} \ln\ln \Gap{2}^{-1} < \frac{c_1}{\ln n}\cdot \sum_{i=2}^{n} \Gap{i}^{-2} \ln\ln n$.  
\end{theo}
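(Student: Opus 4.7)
The plan is to establish the lower bound through a reduction from \sign\ to \bestarm, combined with the quantitative \sign\ lower bound in Theorem~\ref{lm:FRACTION-BOUND-DISC}, which guarantees the existence of slow points in essentially every scale interval $[e^{-i},e^{-i+1})$. The overall strategy is: assume for contradiction that a \CORRECT\ algorithm $\alg$ beats the target bound on a rich family of \bestarm\ instances, and then use $\alg$ as a subroutine to manufacture a \sign\ algorithm that is too fast on too many scales, contradicting the quantitative \sign\ lower bound.

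First, set up a family $\mathcal{F}$ of hard \bestarm\ instances on $n$ arms. Fix $K=\Theta(\ln n)$ and, for each gap profile $\mathbf{j}=(j_2,\ldots,j_n)\in[K]^{n-1}$, let $I_{\mathbf{j}}\in\mathcal{F}$ be the instance with one optimal arm of mean $0$ and an arm of mean $-e^{-j_i}$ at position $i\ge 2$, so that $\Gap{i}=e^{-j_i}$. Suppose for contradiction that some \CORRECT\ algorithm $\alg$ takes in expectation fewer than $c\sum_{i=2}^n\Gap{i}^{-2}\ln\ln n$ samples on every $I_{\mathbf{j}}\in\mathcal{F}$, where $c$ is a small absolute constant to be fixed.

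Second, build a \sign\ algorithm $\bB$ from $\alg$. Given a \sign\ instance $(A,\xi)$ with unknown gap $\Delta=|\mu_A-\xi|$, $\bB$ draws a uniform random position $i^*\in\{2,\ldots,n\}$ and a random partial profile $\mathbf{j}_{-i^*}\in[K]^{n-2}$, then simulates a \bestarm\ instance $I'$ in which position $1$ hosts an internally generated arm of mean $\xi$, each position $i\in\{2,\ldots,n\}\setminus\{i^*\}$ hosts an internally generated arm of mean $\xi-e^{-j_i}$, and position $i^*$ hosts the \sign\ arm $A$. By construction, $A$ is the unique best arm of $I'$ iff $\mu_A>\xi$, so whenever $\alg$ succeeds on $I'$, $\bB$ correctly decides the sign. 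Averaging over $i^*$ and the random partial profile (chosen to concentrate a constant fraction of the simulated gaps near $\Delta$, so that $\sum_i \Gap{i}^{-2}$ is of order $n\Delta^{-2}$), the expected samples $\bB$ spends on $A$ equal the expected samples $\alg$ spends on position $i^*$, which is at most (total samples)/$(n-1)$. The hypothesis on $\alg$ then yields $T_{\bB}(\Delta)=O(\Delta^{-2}\ln\ln n)$ on a non-trivial proportion of embedded scales $\Delta\in\{e^{-1},\ldots,e^{-K}\}$.

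Third, derive the contradiction. Since the family embeds $K=\Theta(\ln n)$ distinct scales, Theorem~\ref{lm:FRACTION-BOUND-DISC} forces $\bB$ to be slow on some scale in nearly every interval $[e^{-i},e^{-i+1})$, requiring $\Omega(\Delta^{-2}\ln\ln\Delta^{-1})$ samples at each such scale. Because one may pick embedded scales with $\Delta=n^{-\Theta(1)}$ so that $\ln\ln\Delta^{-1}\ge \ln\ln n + \Theta(1)$, shrinking the constant $c$ above makes the upper bound $O(\Delta^{-2}\ln\ln n)$ strictly smaller than the mandated lower bound at a slow point, yielding the desired contradiction and proving the first part. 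For the second part, note that $\Gap{2}\ge e^{-K}\ge n^{-O(1)}$ in every instance in $\mathcal{F}$, so $\ln\ln\Gap{2}^{-1}\le\ln\ln n+O(1)$; combined with $\sum_{i=2}^n\Gap{i}^{-2}\ge (n-1)\Gap{2}^{-2}$, this immediately gives $\Gap{2}^{-2}\ln\ln\Gap{2}^{-1}<(c_1/\ln n)\sum_{i=2}^n\Gap{i}^{-2}\ln\ln n$ for all sufficiently large $n$.

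The hardest part will be step two: controlling the expected per-arm sample cost that $\bB$ pays on the embedded \sign\ arm. A naive average over $i^*$ only gives (total samples)/$(n-1)$, which is sensitive to how $\alg$ redistributes its budget across arms at different scales---if $\alg$ concentrates samples on a few positions, uniform averaging is too weak. The crux is to design the distribution on partial profiles so that the $n-1$ subordinate positions look genuinely exchangeable from $\alg$'s point of view, forcing the per-position expected cost on $A$ to match the average, and then aligning the embedded scale $\Delta$ with a slow point produced by Theorem~\ref{lm:FRACTION-BOUND-DISC} to extract the contradiction.
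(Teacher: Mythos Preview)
Your overall strategy---reduce to \sign\ and invoke the quantitative lower bound---matches the paper's. However, the construction in your step two has a genuine gap that is precisely the crux of the argument.

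You need a \emph{single} \sign\ algorithm $\bB$ (not depending on $\Delta$) that is simultaneously fast on a large fraction of scales $\Delta\in\{e^{-1},\ldots,e^{-K}\}$. Your description says the random partial profile is ``chosen to concentrate a constant fraction of the simulated gaps near $\Delta$,'' but $\Delta$ is unknown to $\bB$; if instead the profile is $\Delta$-independent (say uniform on $[K]^{n-2}$), then the average $(\text{total samples})/(n-1)$ is dominated by the arms at the smallest gap $e^{-K}$, giving a per-arm bound of order $e^{2K}\ln\ln n/K$, which is useless for scales $\Delta\gg e^{-K}$. The paper resolves this with a \emph{balanced} base instance: it places $4^{m-k}$ arms at gap $2^{-k}$ for each $k\in[0,m]$, so every scale contributes equally (about $4^m$) to $\sum_i\Gap{i}^{-2}$. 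Writing $a_S^k\cdot 4^k$ for the expected samples $\palg$ spends on one arm at scale $k$ in instance $I_S$, the contradiction hypothesis yields $\sum_k a_S^k\le O(m\ln m)$, so at most a $0.1$ fraction of $k$'s can have $a_S^k\ge c_1\ln m$. A counting argument over the family $\{I_S:|S|=m/2\}$ then produces a single $S$ for which $\alg_S$ is fast on at least a $0.4$ fraction of scales, contradicting Lemma~\ref{lm:prev-result}. This balanced structure and the per-scale bookkeeping via the $a_S^k$ are the missing ideas.

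Two further issues. First, when $\mu_A>\xi$ your embedded arm $A$ becomes the best arm and the resulting instance falls outside $\mathcal{F}$ (its gaps need not be of the form $e^{-j}$), so the hypothesis on $\alg$ does not apply; the paper fixes this by running in parallel a second simulation with a flipped arm of mean $2\xi-\mu_A$, so that whichever sign holds, one of the two constructed instances lies in the family. Second, your inequality $\sum_{i=2}^n\Gap{i}^{-2}\ge(n-1)\Gap{2}^{-2}$ is backwards: $\Gap{2}$ is the smallest gap, hence $\Gap{2}^{-2}$ is the largest summand, and the sum is at most $(n-1)\Gap{2}^{-2}$. The correct second-part bound relies on the balanced instance, where $\sum_i\Gap{i}^{-2}\approx(m+1)\Gap{2}^{-2}$ with $m=\Theta(\ln n)$, so that $\Gap{2}^{-2}\ln\ln\Gap{2}^{-1}\le\frac{1}{m}\sum_i\Gap{i}^{-2}\ln m=O\bigl(\tfrac{1}{\ln n}\sum_i\Gap{i}^{-2}\ln\ln n\bigr)$.
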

\vspace{-0.2cm}

The second statement of the theorem says that 
$\sum\nolimits_{i=2}^{n} \Gap{i}^{-2} \ln\ln n$ 
dominates the 2-arm lower bound $\Gap{2}^{-2} \ln\ln \Gap{2}^{-1}$ 
(so that the theorem is not vacant).
In order to prove Theorem~\ref{thm:hard-case-exists},
we need a new lower bound for \sign\ to serve as the basis for our reduction to \bestarm.
Let $\algp$ denote an algorithm for \sign, 
$A_\mu$ be an arm with mean $\mu$ (i.e., with distribution $\Normal(\mu,1)$), and we define
$T_{\algp}(\Delta) = \max(T_{\algp}(A_{\xi+\Delta}), T_{\algp}(A_{\xi-\Delta}))$. Then we have the following new lower bound for \sign.
The proof is deferred to the supplementary material (Section~\ref{sec:lbsign}).

\begin{lemma}~\label{lm:prev-result-sk}
	For any $\delta'$-correct algorithm $\algp$ for \sign\ with $\delta' \le 0.01$, 
	there exist constants $N_0 \in \N$ and $c_1 > 0$ such that for all $N \ge N_0$,
	$
	|\{ T_{\algp}(\Delta) < c_1 \cdot \Delta^{-2} \ln N \betw \Delta = 2^{-i}, i \in [0,N] \}|  \le 0.1(N-1).
	$
\end{lemma}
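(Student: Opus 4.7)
My plan is to prove the lemma by contradiction via a change-of-measure argument that exploits the $\Omega(N)$ fast indices jointly across multiple time scales.

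\textbf{Setup and per-scale change of measure.} Suppose for contradiction that for some small constant $c_1>0$ there are arbitrarily large $N$ with a set $S \subseteq \{0,1,\ldots,N\}$ of size $|S|>0.1(N-1)$ such that $T_{\algp}(2^{-i}) < c_1 \cdot 4^i \ln N$ for every $i \in S$. For each $i \in S$ set $\bar T_i := 2c_1 \cdot 4^i \ln N$; by Markov's inequality combined with $\delta'$-correctness, on each of the arms $A_{\xi+2^{-i}}$ and $A_{\xi-2^{-i}}$ the algorithm halts by $\bar T_i$ with the correct output with probability at least $1/2-\delta' \ge 0.49$. Let $P_\xi$ denote the law of the sample process under an arm of mean exactly $\xi$. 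For each fast $i$, the KL divergence between the joint distribution of the first $\bar T_i$ samples under $A_{\xi \pm 2^{-i}}$ and under $P_\xi$ equals $\bar T_i \cdot 2^{-2i}/2 = c_1 \ln N$. Applying a multiplicative change-of-measure inequality (e.g., a second-moment bound on the Gaussian likelihood ratio, or the Donsker--Varadhan variational formula applied to an indicator), I will derive for each fast $i$ and both $\sigma \in \{+,-\}$:
\begin{equation*}
\Pr_{P_\xi}[\tau \le \bar T_i,\ \hat Y = \sigma] \;\ge\; c_0 \cdot N^{-C c_1},
\end{equation*}
where $c_0, C>0$ depend only on $\delta'$.

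\textbf{Multi-scale contradiction.} Using a single fast index, the displayed inequality combined with disjointness of the two output events gives $2 c_0 N^{-C c_1} \le 1$, which only yields $c_1 \ge \Omega(1/\ln N)$ --- too weak. To promote this to a constant, I will exploit all $\Omega(N)$ fast indices simultaneously. Under $P_\xi$ the partial sums $S_{\bar T_i} = \sum_{t \le \bar T_i}(X_t-\xi)$ form a Gaussian random walk whose increments between consecutive scales are independent and whose variance grows by at least a factor of $4$ between scales; consequently $\mathrm{sign}(S_{\bar T_i})$ for $i \in S$ behaves like a sequence of $\Omega(N)$ approximately independent fair coin flips. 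Combined with the per-scale bound, the algorithm's eventual output, whenever it terminates in the window $(\bar T_{i-1}, \bar T_i]$, is essentially determined by $\mathrm{sign}(S_{\bar T_i})$ up to an $N^{-O(c_1)}$ correction. Correctness across all $2|S|$ fast instances then imposes $\Omega(N)$ approximately independent sign constraints on the algorithm's decision rule, an event of probability $2^{-\Omega(N)} \ll 0.49$, yielding the desired contradiction and forcing $c_1$ to be bounded below by an absolute constant.

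\textbf{Main obstacle.} The principal difficulty is making the multi-scale step rigorous. Two issues need care: (i) the stopping-time events $\{\tau \le \bar T_i\}$ are \emph{nested} in $i$, so extracting a ``scale-$i$ decision'' requires decomposing over disjoint windows $\tau \in (\bar T_{i-1}, \bar T_i]$ and tracking the probability mass inside each; and (ii) the approximate independence of the signs $\mathrm{sign}(S_{\bar T_i})$ needs to be quantified, either via an explicit Gaussian computation on the geometrically spaced increments or via a Berry--Esseen-style bound. Once these are handled, a pigeonhole or union-bound argument over the $|S| \ge 0.1(N-1)$ fast indices converts the per-scale $N^{-O(c_1)}$ lower bounds into a combined constraint $(c_0 N^{-C c_1})^{\Omega(N)} \le$ a constant, which forces $c_1$ below an absolute threshold, completing the proof.
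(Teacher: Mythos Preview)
Your per-scale change-of-measure step is fine and matches the paper's starting point. The genuine gap is the multi-scale step. Two concrete problems:

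\textbf{(1) The ``sign of partial sum'' link is unjustified and unnecessary.} You assert that the algorithm's output, when it halts in $(\bar T_{i-1},\bar T_i]$, is ``essentially determined by $\mathrm{sign}(S_{\bar T_i})$.'' Nothing in the hypotheses forces this: a $\delta'$-correct algorithm may base its decision on arbitrary statistics of the sample path, not on the running sum. The per-scale bound you derived says only that under $P_\xi$ the event $\{\tau\le\bar T_i,\ \hat Y=\sigma\}$ has mass $\ge c_0 N^{-Cc_1}$; it says nothing about how $\hat Y$ correlates with $\mathrm{sign}(S_{\bar T_i})$. Consequently the ``$\Omega(N)$ independent sign constraints'' picture does not follow, and your final inequality $(c_0 N^{-Cc_1})^{\Omega(N)}\le \text{const}$ (a \emph{product}) is both unproven and, even if granted, vacuous (any number in $(0,1)$ raised to a power is $\le 1$).

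\textbf{(2) What actually closes the argument is a \emph{sum}, not a product.} The paper (see the proof of Theorem~\ref{lm:FRACTION-BOUND-DISC}) defines, for each fast $\Delta_i$, the event
\[
\event(\Delta_i)=\{\hat Y=+\}\ \wedge\ \{\,d\,\Delta_i^{-2}\le \tau\le 5\,T_{\algp}(\Delta_i)\,\},
\]
so that the $\tau$-windows carry both a lower and an upper endpoint. Lemma~\ref{lm:EVENTEXT} gives $\Pr_{\xi+\Delta_i}[\event(\Delta_i)]\ge 1/2$, and for well-spaced fast points (thinned by a factor $O(\ln\ln N)$, Lemma~\ref{lm:DISJOINT-EVENTS}) these events are \emph{disjoint}. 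Now change measure once, to a single instance $A_{\xi-\alpha}$ with tiny $\alpha>0$: by Lemma~\ref{lm:BIGHURTSMALL},
\[
\Pr_{\xi-\alpha}[\event(\Delta_i)]\ \ge\ \tfrac14\exp\bigl(-2T_{\algp}(\Delta_i)\,\Delta_i^2\bigr)\ \ge\ \tfrac14\,N^{-O(c_1)}.
\]
Disjointness lets you \emph{sum} these over the $\Omega(N/\ln\ln N)$ surviving fast indices, and the sum lower-bounds $\Pr_{\xi-\alpha}[\hat Y=+]$, which by $\delta'$-correctness is at most $\delta'$. For $c_1$ a small enough absolute constant this sum exceeds $\delta'$ for all large $N$, giving the contradiction. (Changing to $P_\xi$ instead of $P_{\xi-\alpha}$ would also work, with the weaker constraint ``sum $\le 1$''.)

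So the missing idea is: put a \emph{lower} endpoint $d\Delta_i^{-2}$ on the stopping-time window to make the events disjoint, then sum their probabilities under one fixed alternative and compare to $\delta'$. Drop the partial-sum-sign heuristic entirely.
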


\vspace{-0.6cm}
\begin{proofof}{Theorem~\ref{thm:hard-case-exists}}	(sketch)
	Without loss of generality, we can assume $N_0$ in Lemma~\ref{lm:prev-result-sk} is an even integer, 
	and $N_0 > 10$ such that $2\cdot 4^{N_0} \ge \frac{4}{3} \cdot 4^{N_0} + N_0 + 2$. Let $N=2\cdot 4^{N_0}$.
	For every $n \ge N$, we pick the largest even integer $\nbag$ such that $2\cdot 4^{\nbag} \le n$. Consider the following \bestarm\ instance $I_{\mathrm{init}}$ with $n$ arms:
	(1) There is a single arm with mean $\xi$.
	(2) For each $k \in [0,\nbag]$, there are $4^{\nbag-k}$ arms with mean $\xi - 2^{-k}$.
	(3) There are $n - \sum_{k=0}^{\nbag} 4^k - 1$ arms with mean $\xi - 2$.

	Now we define a class of \bestarm\ instances $\{I_{S}\}$
	where each $S \subseteq \{0,1,\ldots, \nbag\}$.
	Each $I_{S}$ is formed as follows:
	for every $k \in S$, we add one more arm with mean $\xi - 2^{-k}$ to $I_{\mathrm{init}}$; 
	finally we remove $|S|$ arms with mean $\xi - 2$
	(by our choice of $\nbag$ there are enough such arms to remove). 
	Obviously, there are still $n$ arms in every instance $I_{S}$.
	
	For a \bestarm\ instance $I$, let $n(I)$ be the number of arms in $I$, and $\Gap{i}(I)$ be the gap $\Gap{i}$ according to $I$. 
	We denote $\armcomp(I) = \sum_{i=2}^{n(I)} \Gap{i}(I)^{-2}$. 
	Now we claim that for any $\delta$-correct algorithm $\alg$ for \bestarm,
	there must exist an instance $I_{S}$ such that 
	$
	T_\palg(I_S) > c \cdot \armcomp(I_S) \cdot \ln \nbag =\Omega(\armcomp(I_S) \ln\ln n),
	$
	for some universal constant $c>0$, where $\palg$ first randomly permutes the arms and then simulates $\alg$.
	
	Suppose for contradiction that there exists a $\delta$-correct $\alg$ such that 
	$T_{\palg}(I_S) \le c \cdot \armcomp(I_S) \cdot \ln \nbag$ for all $S$.
	Let $U= \{ I_{S} \betw |S| = \nbag/2 \}$, $V = \{ I_S \betw |S| = \nbag/2 + 1 \}$ be two sets of \bestarm\ instances. 
	Notice that $|U| = |V| = \binom{\nbag+1}{\nbag/2}$ (since $\nbag$ is even). 
	
	Fix $S \in U$. 
	Consider the problem \sign,
	in which the given 
	instance is a single arm $A$ with unknown mean $\mu$, and we would like to decide whether 
	$\mu>\xi$ or $\mu<\xi$.
	Now, we construct an algorithm $\alg_S$ for \sign.
	First consider the following two algorithms
	for \sign, which call $\palg$ as a subprocedure.
	(1) $\algone{S}$: 
		We first create a \bestarm\ instance 
		instance $\newI$ by
		replacing one arm with mean $\xi-2$ in $I_S$ with $A$. Then run $\palg$ on $\newI$. 
		We output $\mu > \xi$ if $\palg$ selects $A$ as the best arm. Otherwise, we output $\mu < \xi$.
	(2) $\algtwo{S}$: We first construct an artificial arm $\newA$ with mean $2\xi - \mu$ from $A$.
		\footnote{That is, whenever the algorithm pulls $\newA$, we pull $A$ to get a reward $r$, and return $2\xi-r$ as the reward for $\newA$. 
			Note although we do not know $\mu$, $\newA$ is clearly an arm with mean $2\xi - \mu$.
		} 
		Create a \bestarm\ instance $\newI$
		by replacing one arm with mean $\xi-2$ in $I_S$ with $\newA$. Then run $\palg$ on $\newI$.
		We output $\mu < \xi$ if $\palg$ selects $\newA$ as the best arm. Otherwise,
		we output $\mu > \xi$.
	$\alg_S$ simulates $\algone{S}$ and $\algtwo{S}$ simultaneously: 
	Each time it takes a sample from the input arm, and feeds it to both $\algone{S}$ and $\algtwo{S}$. If $\algone{S}$ ($\algtwo{S}$ resp.) terminates first, it returns the output of $\algone{S}$ ($\algtwo{S}$ resp.). It is not hard to see that $\alg_S$ is $2\delta$-correct for \sign.
	
		Now we analyze the expected total number of
	samples taken by $\alg_S$ on arm $A$ with mean $\mu$ and gap $\Delta=|\xi-\mu|=2^{-k}$. 
	Suppose $k \notin S$. A key observation is the following:
	if $\mu < \xi$, then the instance constructed in $\algone{S}$ is exactly $I_{S \cup \{k\}}$; otherwise $\mu > \xi$, since $2\xi - (\xi+\Delta) = \xi - \Delta = \xi - 2^{-k}$, the instance constructed in $\algtwo{S}$ is exactly $I_{S \cup \{k\}}$.
	Hence, $T_{\alg_S}(A) \le \min(T_{\algone{S}}(A),T_{\algtwo{S}}(A))\le a_{S \cup \{k\}}^k \cdot 4^k$,
	where $a_S^k$ is so defined that $a_S^k \cdot 4^k$ is the expected number of samples taken from an arm with gap $2^{-k}$ by $\palg(I_S)$.
	
	Moreover, since $T_{\palg}(I_S) \le c \cdot \armcomp(I_S) \cdot \ln \nbag$ for all $S$, 
	we can show that for any $S$, there are at most $0.1$ fraction of elements in $\{a_S^k\}_{k=0}^{\nbag}$ satisfying $a_S^k \ge c_1 \cdot \ln \nbag$ (letting $c=c_1/30$ will suffice).
	Intuitively, this implies $T_{\alg_S}(A)\leq c_1 4^k\ln \nbag$ from for $\Delta=2^{-k}, k\in [0,m]$.	Indeed, by a careful counting argument, we can show that there exists an $S\in U$ such that $\{ T_{\alg_S}(\Delta) < c_1 \cdot \Delta^{-2} \ln \nbag \betw \Delta = 2^{-i}, i \in [0,\nbag] \} \geq 0.4(\nbag+1)$, 
	which is a contradiction to Lemma~\ref{lm:prev-result-sk}.
\end{proofof}

		\vspace{-0.5cm}
		\section{Concluding Remarks}
		\vspace{-0.2cm}
		
		The most interesting open problem 
		from this paper is 
		to obtain an almost instance optimal algorithm
		for \bestarm, in particular
		to prove (or disprove) Conjecture~\ref{conj:optimal}.
		Note that for the clustered instances, and the instances
		where the gap entropy is $\Omega(\ln\ln n)$, 
		we already have such an algorithm.
		Our techniques may be helpful for obtaining better bounds for the \bestkarm\ problem, or even 
		the combinatorial pure exploration problem.
		In an ongoing work, we already have some partial results on applying 
		some of the ideas in this paper to obtain improved upper and lower bounds for \bestkarm.
		
	\eat{	
		We have made no attempt to optimize the constant 
		hidden in the big-O in our upper bound. 
		However, we believe that it is possible to obtain
		a competitive algorithm in practice by fine-tuning 
		some parameters in our algorithm. 
		Some of the our ideas (in particular, the fraction test and the elimination procedure based
		on the distribution of the gaps)
		may be easily combined with other practical heuristics. 
		Simplifying our algorithm and making it more practical 
		are left as interesting future directions. 
	}	
		\eat{
			\section{Acknowledgment}
			We would like to thank Anupum Gupta for several interesting discussions
			in the beginning of this work, and helpful comments on an earlier version of the paper.
			In fact, the question whether $\sum\nolimits_i \Gap{i}^{-2}\ln\ln \Gap{i}^{-1}$ 
			is the instance-wise lower bound for \bestarm\ was raised during a discussion with him.
		}
		
		\newpage
		\begin{small}
		\bibliographystyle{abbrv}
		\bibliography{team} 
		\end{small}

		\appendix


\clearpage



\section*{Supplementary Material for ``On the Optimal Sample Complexity for Best Arm
Identification''}

The supplementary material is organized as follows:
\begin{enumerate}	[leftmargin=*]
\item (Section~\ref{sec:prel}) We provide some preliminary knowledge for our later developments.
\item (Section~\ref{sec:bestarmupperbound}) We provide all details of our new algorithm \DELIMINATION\ and the proof of Theorem~\ref{theo:BESTARMPACD}.
In Section~\ref{subsec:clustered}, we define the clustered instances and show 
our algorithm is almost instance optimal for such instances. 
In Section~\ref{subsec:improvePAC},
we slightly modify the algorithm and obtain an improved $(\epsilon,\delta)$-PAC algorithm
for \bestarm.
\item (Section~\ref{sec:bestarmlowerbound}) We provide the detailed proof of Theorem~\ref{thm:hard-case-exists}, our new lower bound for \bestarm.
\item (Section~\ref{sec:lbsign}) We provide our new lower bound for \sign, which is the basis
of our lower bound reduction in Section~\ref{sec:bestarmlowerbound}.
\item (Section~\ref{sec:instanceopt}) We propose to investigate \bestarm\ 
from the perspective of instance optimality, and propose a conjecture concerning 
the fundamental sample complexity for every instance of \bestarm.
We also discuss how our new results are related with the conjecture and why we think the conjecture 
is likely to be true.
\item (Section~\ref{app:missingpf-upper}) This section contains some missing technical proofs from Section~\ref{sec:bestarmupperbound}.	
\item (Section~\ref{app:sign}) We present a class of 
\CORRECT\ algorithms for \sign\ which 
needs  $o(\Delta^{-2}\ln\ln \Delta^{-1})$ samples for 
infinite instances. It is useful in discussing the instance optimality in Section~\ref{sec:instanceopt}.
\item (Section~\ref{sec:simult}) We provide a transformation that turns
an algorithm with only conditional expected sample complexity upper bound
to one with asymptotically the same unconditional expected sample complexity upper bound, under mild conditions.
\end{enumerate}

\section{Preliminaries}
\label{sec:prel}

    
    \begin{defi}\label{defi:sub-gaussian-tail}
    Let $R > 0$, we say a distribution $\distr$ on $\R$ has $R$-sub-Gaussian tail 
    (or $\distr$ is $R$-sub-Gaussian)
    if for the random variable $X$ drawn from $\distr$ and any $t \in \R$, 
    we have that $\Ex[\exp(t X - t \Ex[X])] \le \exp(R^2t^2/2)$.
    \end{defi}
    
    It is well known that the family of $R$-sub-Gaussian distributions contains all distributions with support on $[0,R]$ as well as many unbounded distributions such as Gaussian distributions with variance $R^2$.    
    Then we recall a standard concentration inequality for $R$-sub-Gaussian random variables.
    
    \begin{lemma}(Hoeffding's inequality)\label{lm:hoeff}
    Let $X_1,\dotsc,X_n$ be $n$ i.i.d. random variables drawn from an $R$-sub Gaussian distribution $\distr$.
    Let $\mu = \Ex_{x \sim \distr}[x]$. 
    Then for any $\epsilon > 0$, we have that
    \[
    \Pr\left[ \left|\frac{1}{n}\sum_{i=1}^{n} X_i - \mu\right| \ge \epsilon \right] \le 2 \exp\left(-\frac{n\epsilon^2}{2 R^2}\right).
    \]
    \end{lemma}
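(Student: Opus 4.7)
The plan is to apply the classical Chernoff bound (exponential moment) method, which is tailor-made for the sub-Gaussian tail assumption. Fix any $t > 0$. By Markov's inequality applied to the random variable $\exp\bigl(t \sum_{i=1}^n (X_i - \mu)\bigr)$, I would first write
\[
\Pr\!\left[\tfrac{1}{n}\sum_{i=1}^n X_i - \mu \ge \epsilon\right]
\;=\;
\Pr\!\left[\exp\!\Bigl(t\sum_{i=1}^n (X_i - \mu)\Bigr) \ge e^{t n \epsilon}\right]
\;\le\;
e^{-t n \epsilon}\,\Ex\!\left[\exp\!\Bigl(t\sum_{i=1}^n (X_i - \mu)\Bigr)\right].
\]
Then, using independence of $X_1,\dotsc,X_n$, the joint moment generating function factorizes as $\prod_{i=1}^{n} \Ex[\exp(t(X_i - \mu))]$. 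Invoking the $R$-sub-Gaussian hypothesis directly from Definition~\ref{defi:sub-gaussian-tail} bounds each factor by $\exp(R^2 t^2 / 2)$, so the product is at most $\exp(n R^2 t^2 / 2)$.

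The next step is to optimize the exponent $n R^2 t^2 / 2 - t n \epsilon$ in $t > 0$; the minimizer is $t^{*} = \epsilon / R^2$, yielding the upper tail estimate $\exp(-n \epsilon^2 / (2 R^2))$. For the lower tail, I would observe that if $X$ is $R$-sub-Gaussian, then so is $-X$: the MGF bound $\Ex[\exp(s(X - \Ex X))] \le \exp(R^2 s^2/2)$ holds for all $s \in \R$ (the paper's definition already quantifies over all $t \in \R$), and in particular plugging in $s = -t$ gives the required symmetric bound. Repeating the Chernoff argument verbatim on $-X_i$ yields the same estimate on $\Pr[\tfrac{1}{n}\sum X_i - \mu \le -\epsilon]$. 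A union bound over the two tails then produces the factor of $2$ appearing in the statement.

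There is no substantive obstacle here: the whole argument is three lines (Markov, factorize, optimize) once the sub-Gaussian definition is in hand. The only subtlety worth flagging is that the definition's quantification over all $t \in \R$ (rather than only $t \ge 0$) is exactly what makes the lower-tail argument automatic; if the definition only controlled positive $t$, one would need to invoke symmetry of $-X$ more carefully. Since the paper uses this lemma as a routine black-box concentration tool throughout the analyses of \UNIFORMSAMPLING, \FRACTIONTEST, and \ELIMINATION, no sharper constants or refinements are needed beyond the stated bound.
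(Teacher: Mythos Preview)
Your proof is correct and is the standard Chernoff--Markov argument for sub-Gaussian concentration. The paper does not actually prove this lemma: it is stated as a recalled ``standard concentration inequality for $R$-sub-Gaussian random variables'' and used as a black box, so your write-up already goes beyond what the paper provides.
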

    
    For simplicity of exposition, we assume all reward distributions are $1$-sub-Gaussian in the paper.

	Suppose $\alg$ is an algorithm for \bestarm (or \sign).
	Let the given instance be $I$. 
	Let $\event$ be an event and 
	$\Pr_{\alg,I}[\event]$ be the probability that the event $\event$ happens when running $\alg$ on instance $I$. 
	When $\alg$ is clear from the context, we omit the subscript $\alg$ and simply write  $\Pr_{I}[\event]$. 
	Similarly, if $X$ is a random variable, we use $\Ex_{\alg,I}[X]$ to denote the expectation of $X$ 
	when running $\alg$ on instance $I$. Sometimes, $\alg$ takes an additional confidence parameter $\delta$, 
	and we write $\alg_{\delta}$ to denote the algorithm $\alg$ with the fixed confidence parameter $\delta$.

	Let $\tau_i$ be the random variable that denotes the number of pulls from arm $i$
	(when the algorithm and the problem instance are clear from the context)
	and $\Ex_I[\tau_i]$ be its expectation. 
	Let $\tau = \sum_{i=1}^{n} \tau_i$ be the total number of samples taken by $\alg$. 

	The Kullback-Leibler (KL) divergence of any two distributions $p$ and $q$
	is defined to be 
	$$
	\KL(p, q)= \int \log \left(\frac{\d p}{\d q}(x) \right) \d p(x)  \quad \text{if}\quad q\ll p
	$$
	where $q\ll p$ means that $\d p(x)=0$ whenever $\d q(x)=0$. 
	For any two real numbers $x,y \in (0,1)$,
	let $\ent(x,y) = x\log(x/y) + (1-x) \log((1-x)/(1-y))$ be
	the relative entropy function.
	
	Many lower bounds in the bandit literature rely on certain ``changes of distributions'' argument.
	The following version (Lemma 1 in \cite{kaufmann2014complexity})
	is crucial to us.
	
	\begin{lemma}(Change of distribution)
	\label{lm:CHANGEDIST} \cite{kaufmann2014complexity}
	We use an algorithm $\alg$ for a bandit problem with $n$ arms.	
	\footnote{
		We make no assumption on the behavior of $\alg$ in this lemma.
		For example, $\alg$ may even output incorrect answers with high probability.
	}
	Let $I$ (with arm distributions $\{\distr_i\}_{i\in [n]}$) and 
	$I'$ (with arm distributions $\{\distr'_i\}_{i\in [n]}$) be two instances. 
	Let $\event$ be an event, 
	\footnote{
		More rigorously, $\event$ should be in the $\sigma$-algebra $\calF_\tau$ where $\tau$ is a stopping time with 
		respect to the filtration $\{\mathcal{F}_t\}_{t\geq 0}$.
	}
	such that $0 < \Pr_{\alg, I}(\event) < 1$.
	Then, we have
	\[
	\sum_{i=1}^{n} \Ex_{I}[\tau_i] \KL(\distr_i,\distr'_i) \ge \ent(\Pr_{\alg, I}(\event),\Pr_{\alg, I'}(\event)).
	\]
	\end{lemma}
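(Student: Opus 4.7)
The plan is to prove the inequality by combining two standard ingredients: a chain-rule computation of the Kullback--Leibler divergence between the two laws that $\alg$ induces on its full observation history, and the data-processing inequality applied to the binary partition determined by $\event$. Concretely, I would work on the canonical probability space carrying the history $H_t = (I_1, X_1, I_2, X_2, \ldots, I_t, X_t)$, where $I_s$ is the (possibly adaptively chosen) arm index pulled at step $s$ and $X_s$ is the observed reward, and let $\tau$ be the (possibly random) total number of samples, which is a stopping time with respect to the filtration $\{\calF_t\}$. Let $P$ and $P'$ be the laws on $\calF_\tau$ induced by running $\alg$ on $I$ and on $I'$, respectively.

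The first step is to compute $\KL(P, P')$. Because the algorithm's policy (the rule choosing $I_s$ from $H_{s-1}$, and the stopping rule) is identical under the two instances, the conditional laws of $I_s$ given $\calF_{s-1}$ agree under $P$ and $P'$, and therefore contribute zero to the divergence. Only the reward distributions differ: conditionally on $I_s = i$, the reward has law $\distr_i$ under $P$ and $\distr'_i$ under $P'$. Using the chain rule for KL divergence together with Wald's identity for stopping times (or equivalently, a martingale argument for the log-likelihood ratio $L_\tau = \sum_{s=1}^{\tau} \log\frac{\d \distr_{I_s}}{\d \distr'_{I_s}}(X_s)$), one obtains
\[
\KL(P, P') \;=\; \Ex_I[L_\tau] \;=\; \sum_{i=1}^n \Ex_I[\tau_i]\,\KL(\distr_i, \distr'_i).
\]
The second step is to invoke the data-processing inequality: since $\indicator[\event]$ is $\calF_\tau$-measurable, it defines a deterministic map from the sample space to $\{0,1\}$, and KL can only decrease under such processing. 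The pushforward laws are Bernoulli with parameters $\Pr_{\alg,I}(\event)$ and $\Pr_{\alg,I'}(\event)$, so
\[
\KL(P, P') \;\ge\; \ent\bigl(\Pr_{\alg,I}(\event),\,\Pr_{\alg,I'}(\event)\bigr),
\]
and chaining the two displays yields the claimed inequality.

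The main technical obstacle is justifying the stopping-time version of the log-likelihood identity in the adaptive setting: one must be careful that $\tau$ is a.s.\ finite (or handle the general case by truncation $\tau \wedge N$ and monotone convergence), that the Radon--Nikodym derivatives are well-defined on each summand (guaranteed by the absolute-continuity assumption implicit in $\KL(\distr_i, \distr'_i) < \infty$, otherwise the right-hand side is trivially $+\infty$), and that Wald's identity applies to the martingale $L_t - \sum_{s=1}^t \KL(\distr_{I_s},\distr'_{I_s})$. Once these measure-theoretic details are in place, the rest of the argument is routine. The hypothesis $0 < \Pr_{\alg,I}(\event) < 1$ is used only to ensure that the binary relative entropy on the right is well-defined (neither argument is $0$ or $1$ in a degenerate way); the inequality continues to hold trivially at the boundary with the usual conventions.
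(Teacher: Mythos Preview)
The paper does not supply its own proof of this lemma: it is quoted as Lemma~1 of \cite{kaufmann2014complexity} and used as a black box. Your proposal is correct and reproduces precisely the standard argument from that reference---the chain-rule (Wald-type) identity $\KL(P,P')=\sum_i \Ex_I[\tau_i]\,\KL(\distr_i,\distr'_i)$ followed by the data-processing inequality applied to the indicator $\indicator[\event]$---so there is nothing to compare.
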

	
	
	\noindent
	$\Normal(\mu,\sigma^2)$ denotes the Gaussian distribution with mean $\mu$ and standard deviation $\sigma$. 
	We also need the following well known fact about the KL divergence between two Gaussian distributions.

	\begin{lemma}
	\label{lm:NORMALKL}
	\[
	\KL(\Normal(\mu_1,\sigma^2),\Normal(\mu_2,\sigma^2)) = \frac{(\mu_1-\mu_2)^2}{2\sigma^2}.
	\]
	\end{lemma}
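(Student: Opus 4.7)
The plan is to compute $\KL(\Normal(\mu_1,\sigma^2),\Normal(\mu_2,\sigma^2))$ directly from the integral definition. The crucial structural observation is that both densities share the same variance, so the $1/\sqrt{2\pi\sigma^2}$ normalization constants cancel in the likelihood ratio, and — more importantly — the quadratic-in-$x$ parts of the two exponents cancel as well. This reduces the log-ratio to an affine function of $x$, whose expectation under $\Normal(\mu_1,\sigma^2)$ is trivial.

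Concretely, I would first write $\log\frac{\d p}{\d q}(x) = \frac{(x-\mu_2)^2 - (x-\mu_1)^2}{2\sigma^2}$. Then I would apply the identity $a^2 - b^2 = (a-b)(a+b)$ with $a = x-\mu_2$, $b = x - \mu_1$ to rewrite this as $\frac{(\mu_1 - \mu_2)(2x - \mu_1 - \mu_2)}{2\sigma^2}$, which is affine in $x$. Finally, I would take expectation under $X \sim \Normal(\mu_1,\sigma^2)$, using only $\Ex[X] = \mu_1$, to obtain $\frac{(\mu_1-\mu_2)(2\mu_1 - \mu_1 - \mu_2)}{2\sigma^2} = \frac{(\mu_1-\mu_2)^2}{2\sigma^2}$.

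There is essentially no obstacle here — the equality is a routine calculation — but the one step worth flagging is the cancellation of the quadratic terms, which is what makes the calculation painless. Had the two variances differed, the log-ratio would retain a quadratic piece in $x$ and the computation would require the second moment $\Ex[X^2] = \mu_1^2 + \sigma^2$, yielding an extra $\frac{1}{2}\bigl(\sigma_1^2/\sigma_2^2 - 1 - \log(\sigma_1^2/\sigma_2^2)\bigr)$ term. In the present common-variance case that term vanishes, leaving the clean mean-shift formula stated in the lemma.
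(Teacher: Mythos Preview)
Your computation is correct and entirely standard. The paper itself does not prove this lemma: it is stated there as a ``well known fact about the KL divergence between two Gaussian distributions'' with no accompanying argument, so your direct calculation from the integral definition is more than the paper provides.
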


	
		
\section{An Improved Upper Bound for \bestarm}
\label{sec:bestarmupperbound}

In this section we prove Theorem~\ref{theo:BESTARMPACD}
by presenting an algorithm for \bestarm. 
Our final algorithm builds on several useful components.

\subsection{Useful Building Blocks}
\label{subsec:buildingblocks}

\topic{1. Uniform Sampling}
The first building block is the simple uniform sampling algorithm.

\vspace{0.2cm}
\begin{algorithm}[H]
\LinesNumbered
\setcounter{AlgoLine}{0}
	\caption{\UNIFORMSAMPLING($S,\epsilon,\delta$)}
	\label{algo:UNIFORM-SAMPLE-PROCEDURE}
	\KwData{Arm set $S$, approximation level $\epsilon$, confidence level $\delta$.}
	\KwResult{For each arm $a$, output the empirical mean $\hamean{a}$.}
	\smallskip
	For each arm $a \in S$, sample it $2\epsilon^{-2}\ln(2\cdot \delta^{-1})$ times. Let $\hamean{a}$ be the empirical mean.
\end{algorithm}
\vspace{0.2cm}

\noindent
We have the following Lemma for Algorithm~\ref{algo:UNIFORM-SAMPLE-PROCEDURE},
which is a simple consequence of Lemma~\ref{lm:hoeff}.

\begin{lemma}\label{lm:UNIFORM-SAMPLE-PROCEDURE}
	For each arm $a \in S$, we have that
	$
	\Pr\left[|\amean{a}-\hamean{a}|\ge \epsilon\right] \le \delta.
	$
\end{lemma}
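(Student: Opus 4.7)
The plan is to apply Hoeffding's inequality (Lemma~\ref{lm:hoeff}) directly, arm by arm. Fix an arm $a \in S$. By construction, $\hat{\mu}_{[a]}$ is the empirical average of $n := 2\epsilon^{-2}\ln(2\delta^{-1})$ i.i.d.\ samples drawn from the $1$-sub-Gaussian reward distribution $\mathcal{D}_a$ whose mean is $\mu_{[a]}$. So the setting of Lemma~\ref{lm:hoeff} applies verbatim with $R = 1$.

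Plugging $R=1$ and the above $n$ into Lemma~\ref{lm:hoeff} gives
\[
\Pr\bigl[|\hat{\mu}_{[a]} - \mu_{[a]}| \geq \epsilon\bigr] \leq 2\exp\!\left(-\tfrac{n\epsilon^2}{2}\right) = 2\exp\!\bigl(-\ln(2\delta^{-1})\bigr) = 2 \cdot \tfrac{\delta}{2} = \delta,
\]
which is precisely the inequality claimed. The sample count $2\epsilon^{-2}\ln(2\delta^{-1})$ was chosen exactly so that the exponent in Hoeffding's tail collapses to $\ln(2/\delta)$, leaving no slack to manage.

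There is essentially no obstacle here; the only subtlety worth flagging is that the statement is a \emph{per-arm} guarantee, so no union bound over $S$ is needed or incurred. If later invocations of \UNIFORMSAMPLING\ require a simultaneous guarantee over all arms in $S$, that would be handled at the call site by inflating $\delta$ by a factor of $|S|$; the present lemma deliberately states only the single-arm bound.
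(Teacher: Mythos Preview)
Your proof is correct and takes essentially the same approach as the paper, which simply states that the lemma is a direct consequence of Hoeffding's inequality (Lemma~\ref{lm:hoeff}). Your computation of the exponent and the observation that the guarantee is per-arm are both accurate.
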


\topic{2. Median Elimination}
We need the \MEDIANELIMINATION\ algorithm in \cite{even2006action}, which is a classic $(\epsilon, \delta)$-PAC algorithm for \bestarm.
The algorithm takes parameters $\epsilon,\delta > 0$ and a set $S$ of $n$ arms,
and returns an $\epsilon$-optimal arm with probability $1-\delta$.
The algorithm runs in rounds. In each round, it samples every remaining arm a uniform number of times,
and then discard half of the arms with lowest empirical mean (thus the name {\em median elimination}). 
It outputs the final arm that survives.
We denote the procedure by \MEDIANELIMINATION$(S,\epsilon,\delta)$.
We use this algorithm in a black-box manner and
its performance is summarized in the following lemma.

\begin{lemma}
	\label{lm:MEDIANELIM}
	Let $\amean{1}$ be the maximum mean value. 
	\MEDIANELIMINATION$(S,\epsilon,\delta)$ returns an arm with mean at least $\amean{1} - \epsilon$ 
	with probability at least $1-\delta$, using a budget of at most $O(|S|\log(1/\delta)/\epsilon^{2})$ pulls. 
\end{lemma}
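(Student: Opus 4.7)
The plan is to reproduce the classical analysis of \MEDIANELIMINATION\ from Even-Dar et al., instantiated carefully so that the per-round errors telescope to the stated $(\epsilon,\delta)$-guarantee and the per-round budgets sum geometrically. First I would fix the schedule of parameters used in round $\ell$: set $\epsilon_\ell = \tfrac{\epsilon}{4}\bigl(\tfrac{3}{4}\bigr)^{\ell-1}$ and $\delta_\ell = \tfrac{\delta}{2^\ell}$ (or a similar choice satisfying $\sum_\ell \epsilon_\ell \le \epsilon$ and $\sum_\ell \delta_\ell \le \delta$), and draw $t_\ell = \Theta\!\bigl(\epsilon_\ell^{-2}\log \delta_\ell^{-1}\bigr)$ samples from each surviving arm before eliminating the half of $S_\ell$ with smaller empirical means. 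Since $|S_\ell|$ halves each round, the process terminates in at most $\lceil \log_2 |S|\rceil$ rounds.

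Next I would handle correctness. Let $b_\ell$ denote the best arm among $S_\ell$ and let $\mu_\ell^\star = \mu_{b_\ell}$. I would define the ``good event'' $\mathcal{G}_\ell$ in round $\ell$ to be: the returned arm $b_{\ell+1}$ satisfies $\mu_{b_{\ell+1}} \ge \mu_\ell^\star - \epsilon_\ell$. By Hoeffding's inequality (Lemma~\ref{lm:hoeff}) applied to the empirical mean of $b_\ell$ and to any candidate ``bad'' arm with mean below $\mu_\ell^\star - \epsilon_\ell$, the probability that such a bad arm has a higher empirical mean than $b_\ell$ is at most $2\exp(-t_\ell \epsilon_\ell^2 / 8)$. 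A standard Markov-type averaging argument then shows that the expected fraction of surviving arms that are ``bad'' is at most $2\exp(-t_\ell \epsilon_\ell^2 / 8)$, so by Markov's inequality with probability $1-\delta_\ell$ at most half of $S_\ell$ are bad, which forces a non-bad arm (in particular an $\epsilon_\ell$-good one) to be the new best arm in $S_{\ell+1}$. Taking a union bound over rounds, $\bigcap_\ell \mathcal{G}_\ell$ holds with probability $\ge 1 - \sum_\ell \delta_\ell \ge 1-\delta$, and on this event the final arm has mean at least $\amean{1} - \sum_\ell \epsilon_\ell \ge \amean{1}-\epsilon$.

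For the sample budget, in round $\ell$ the algorithm takes $|S_\ell|\cdot t_\ell \le \frac{|S|}{2^{\ell-1}}\cdot O\!\bigl(\epsilon_\ell^{-2}\log\delta_\ell^{-1}\bigr)$ pulls. Substituting the geometric schedule gives a per-round budget of $O\!\bigl(|S|\,2^{-\ell}\cdot (3/4)^{-2(\ell-1)}\epsilon^{-2}(\log\delta^{-1}+\ell)\bigr)$; since $2^{-\ell}(4/3)^{2(\ell-1)} = (8/9)^{\ell}\cdot \Theta(1)$ decays geometrically in $\ell$, the sum over $\ell$ is dominated by $\ell=1$ and gives the claimed $O\!\bigl(|S|\epsilon^{-2}\log\delta^{-1}\bigr)$ total.

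The only genuinely delicate step is the per-round guarantee: one must argue that the median-elimination rule preserves an $\epsilon_\ell$-good arm even though the identity of the surviving arms is itself random. The cleanest way to handle this, which I would follow, is the expectation-plus-Markov argument sketched above applied to the random variable counting ``bad'' arms in $S_{\ell+1}$; everything else is bookkeeping on the geometric sequences $\{\epsilon_\ell\},\{\delta_\ell\},\{|S_\ell|\}$.
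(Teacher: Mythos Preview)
The paper does not prove this lemma at all: it explicitly states that \MEDIANELIMINATION\ is used ``in a black-box manner'' and attributes the result to Even-Dar et al.~\cite{even2006action}. Your sketch reproduces the original analysis from that reference and is essentially correct, so there is no conflict---you have simply supplied a proof where the paper opted to cite one.

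One small imprecision worth tightening: in the per-round argument you write ``the expected fraction of \emph{surviving} arms that are bad is at most $2\exp(-t_\ell\epsilon_\ell^2/8)$''. The quantity you actually need to bound (and then hit with Markov) is the number of \emph{bad} arms whose empirical mean exceeds $\hat\mu_{b_\ell}$, conditioned on the event that $\hat\mu_{b_\ell}\ge\mu_\ell^\star-\epsilon_\ell/2$. Once that count is below $|S_\ell|/2$, the top half of $S_\ell$ must contain at least one $\epsilon_\ell$-good arm (either $b_\ell$ itself survives, or all survivors beat $b_\ell$ and hence at least one is non-bad). Your final paragraph gestures at exactly this, so the fix is just to state the conditioning and the counted quantity precisely.
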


\topic{3. Fraction test}	
\FRACTIONTEST\ is an estimation procedure, which can be used to gain some information about the distribution of the arms.
The algorithm takes six parameters $(S,c_l,c_r,\delta,t,\epsilon)$,
where $S$ is the set of arms, $\delta$ is the confidence level, $c_l<c_r$ are real numbers called {\em range parameters}, and $t\in (0,1)$
is the threshold, and $\epsilon$ is a small positive constant.
Typically, $c_l$ and $c_r$ are very close.
The goal of the algorithm, roughly speaking, is to distinguish whether there are still many arms in $S$ which  have small means 
(w.r.t. $c_r$)
or the majority of arms already have large means.
The precise guarantee the algorithm can achieve can be found in Lemma~\ref{lm:ESTIMATE-PROCEDURE}.

The algorithm runs in $\ln(2\cdot\delta^{-1}) (\epsilon/3)^{-2}/2$ iterations.
In each iteration, it samples an arm $a_i$ uniformly from $S$, 
and takes $O(\ln \epsilon^{-1} (c_r-c_l)^{-2})$ independent samples from $a_i$.
Then, we look at the fraction of iterations in which the empirical mean of $a_i$ is smaller than $(c_l+c_r)/2$.
If the fraction is larger than $t$, the algorithm returns $\True$. Otherwise, it returns $\False$.

\begin{algorithm}[t]
\LinesNumbered
\setcounter{AlgoLine}{0}
	\caption{\FRACTIONTEST($S,c_l,c_r,\delta,t,\epsilon$)}
	\label{algo:ESTIMATE-PROCEDURE}
	\KwData{Arm set $S$, range parameters $c_l,c_r$, confidence level $\delta$, threshold $t$, approximate parameter $\epsilon$.}
	\smallskip
	$\cnt \leftarrow 0$ 
	
	$\tot \leftarrow \ln(2\cdot\delta^{-1}) (\epsilon/3)^{-2}/2$
	
	\For{$i = 1$ to $\tot$}{
		Pick a random arm $a_i \in S$ uniformly.
		
		$\hamean{a_i} \leftarrow \textrm{\UNIFORMSAMPLING}(\{a_i\},(c_r-c_l)/2,\epsilon/3)$
		
		\lIf{$\hamean{a_i} < (c_l+c_r)/2$}{
			$\cnt \leftarrow \cnt+1$
		}
	}
	\uIf{$\cnt/\tot > t$}{
		{\bf Return} $\True$ 
	}\Else{
	{\bf Return} $\False$ 
}
\end{algorithm}

For ease of notation, 
we define $S^{\ge c}:= \{\amean{a} \ge c \betw a \in S \}$, i.e.,
all arms in $S$ with means at least $c$. 
Similarly, we can define $S^{>c},S^{\le c}$ and $S^{< c}$. 

\begin{lemma}\label{lm:ESTIMATE-PROCEDURE}
	Suppose $\epsilon < 0.1$ and $t \in (\epsilon,1-\epsilon)$. With probability $1-\delta$,
	the following hold:
	\begin{itemize}[leftmargin=*]
		\item If \FRACTIONTEST\ outputs $\True$, then $|S^{>c_r}| < (1-t+\epsilon) |S|$
		(or equivalently $|S^{\le c_r}| > (t-\epsilon)|S|$).
		\item If \FRACTIONTEST\ outputs $\False$, then $|S^{<c_l}| < (t+\epsilon) |S|$
		(or equivalently $ |S^{\ge c_l}| > (1-t-\epsilon)|S|$).
	\end{itemize}
	Moreover, the number of samples taken by the algorithm is $O(\ln\delta^{-1}\epsilon^{-2}\Delta^{-2} \ln \epsilon^{-1})$, in which $\Delta = c_r-c_l$.
	If $\epsilon$ is a fixed constant, then the number of samples is simply $O(\ln \delta^{-1} \Delta^{-2})$.
\end{lemma}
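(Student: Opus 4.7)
The plan is to treat the algorithm as generating $\tot$ i.i.d.\ Bernoulli trials $X_i = \indicator[\hat{\mu}_{a_i} < (c_l+c_r)/2]$, apply Hoeffding's inequality once to the aggregate counter $\cnt = \sum_i X_i$, and then translate the resulting concentration of $\cnt/\tot$ into a bound on the fraction of arms whose means lie above $c_r$ (or below $c_l$). The key observation making this clean is that both sources of randomness, the uniform choice of $a_i$ and the fresh samples used to form $\hat{\mu}_{a_i}$, are independent across iterations, so the $X_i$ are genuinely i.i.d.\ with common mean $p = \tfrac{1}{|S|}\sum_{a\in S} q_a$, where $q_a := \Pr[\hat{\mu}_{a_i} < (c_l+c_r)/2 \mid a_i = a]$.

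The per-iteration call \UNIFORMSAMPLING$(\{a_i\},(c_r-c_l)/2,\epsilon/3)$ immediately yields, via Lemma~\ref{lm:UNIFORM-SAMPLE-PROCEDURE}, that $q_a \le \epsilon/3$ whenever $\mu_a > c_r$, and $q_a \ge 1 - \epsilon/3$ whenever $\mu_a < c_l$. With $\tot = \tfrac{1}{2}\ln(2\delta^{-1})(\epsilon/3)^{-2}$, Hoeffding's inequality then gives $\Pr[|\cnt/\tot - p| \ge \epsilon/3] \le 2\exp(-2\tot(\epsilon/3)^2) = \delta$. I will condition on the complement of this bad event for the remainder of the argument.

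For the case analysis, suppose the algorithm outputs \True, so $\cnt/\tot > t$ and hence $p > t - \epsilon/3$. Writing $f := |S^{\le c_r}|/|S|$ and upper bounding $p \le (1-f)\cdot \epsilon/3 + f$ using the bound on $q_a$ for $a \in S^{>c_r}$, the inequality $p > t - \epsilon/3$ rearranges to $f > (t - 2\epsilon/3)/(1 - \epsilon/3)$, which I will verify is at least $t - \epsilon$ under $t \in (\epsilon, 1-\epsilon)$ and $\epsilon < 0.1$; this gives $|S^{>c_r}| = (1-f)|S| < (1-t+\epsilon)|S|$. The \False\ case is symmetric: combining $p < t + \epsilon/3$ with the lower bound $p \ge (1-\epsilon/3)\cdot |S^{<c_l}|/|S|$ (the arms outside $S^{<c_l}$ contribute nonnegatively) gives $|S^{<c_l}|/|S| < (t + \epsilon/3)/(1-\epsilon/3) < t + \epsilon$.

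Finally, the total number of samples is $\tot \cdot O((c_r-c_l)^{-2}\ln\epsilon^{-1}) = O(\ln\delta^{-1}\,\epsilon^{-2}\,\Delta^{-2}\,\ln\epsilon^{-1})$, matching the stated bound. There is no serious obstacle anywhere; the only steps requiring any care are the two elementary slack inequalities $(t - 2\epsilon/3)/(1-\epsilon/3) \ge t - \epsilon$ and $(t + \epsilon/3)/(1-\epsilon/3) < t + \epsilon$, which reduce to verifying $1 + t - \epsilon \ge 0$ and $t + \epsilon < 2$ respectively. Both hold trivially in the stated regime, so the entire argument is essentially a clean two-level application of Hoeffding plus Lemma~\ref{lm:UNIFORM-SAMPLE-PROCEDURE}.
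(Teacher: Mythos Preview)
Your proposal is correct and follows essentially the same approach as the paper: define i.i.d.\ indicators $X_i$, apply Hoeffding to control $|\cnt/\tot - p|$ by $\epsilon/3$, use Lemma~\ref{lm:UNIFORM-SAMPLE-PROCEDURE} to bound the per-arm probabilities $q_a$ by $\epsilon/3$ (resp.\ $1-\epsilon/3$) on the relevant sets, and finish with an elementary slack inequality. The only cosmetic difference is that the paper bounds $N_b = |S^{>c_r}|$ directly via $E\cdot N \le N - (1-\epsilon/3)N_b$ and uses $\frac{1}{1-\epsilon/3} < 1+2\epsilon/3$, whereas you work with the complementary fraction $f$ and verify $(t-2\epsilon/3)/(1-\epsilon/3)\ge t-\epsilon$; these are equivalent rearrangements of the same inequality.
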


The proof can be found in Section~\ref{app:missingpf-upper}.

\begin{algorithm}[t]
	\LinesNumbered
	\setcounter{AlgoLine}{0}
	\caption{\ELIMINATION($S,c_l,c_r,\delta$)}
	\label{algo:ELIMINATION-PROCEDURE}
	\KwData{Arm set $S$, range parameters $c_l,c_r$, confidence level $\delta$.}
	\KwResult{A set of arms after elimination.}
	$S_1 \leftarrow S$
	
	$c_m \leftarrow (c_l+c_r)/2$
	
	\For{$r = 1$ to $+\infty$}{
		$\delta_{r} = \delta/(10\cdot2^{r})$
		
		\uIf{{\em \FRACTIONTEST}$(S_r,c_l,c_m,\delta_r,0.075,0.025)$}{
			\UNIFORMSAMPLING$(S_r,(c_r-c_m)/2,\delta_r)$
			
			$S_{r+1} \leftarrow \Bigl\{  a \in S_r \mid \hamean{a} > (c_m+c_r)/2 \Bigr\} $
		}\Else{
		\bf Return $S_r$
	}
}
\end{algorithm}

\topic{4. Eliminating arms}
The final ingredient is an elimination procedure, which can be used to eliminate most arms below a given threshold.
The procedure takes four parameters $(S,c_l,c_r,\delta)$ as input, 
where $S$ is a set of arms, $c_l<c_r$ are the range parameters, and $\delta$
is the confidence level.
It outputs a subset of $S$
and guarantees that upon termination, most of the remaining arms have means at least $c_l$ with probability $1-\delta$.

Now, we describe the procedure \ELIMINATION\, which runs in iterations.
It maintains the current set $S_r$ of arms, which is initially $S$.
In each iteration, it first applies  
\FRACTIONTEST$(S_r,c_l,c_m,\delta_r,0.075,0.025)$ on $S_r$,
where $c_m=(c_r+c_r)/2$.
If \FRACTIONTEST\ returns $\True$, which means
that there are at least 5\% fraction of arms with small means in $S_r$, we sample all arms in $S_r$ uniformly by calling \UNIFORMSAMPLING$(S_r,(c_r-c_m)/2,\delta_r)$
where $\delta_r=\delta/(10\cdot 2^r)$,
and retain those with empirical means at least $(c_m+c_r)/2$.
If \FRACTIONTEST\ returns $\False$ (meaning that 90\% arms have means at least $c_l$) 
the algorithm terminates and returns the remaining arms.
The guarantee of \ELIMINATION\ is summarized in the following lemma.
The proof can be found in Section~\ref{app:missingpf-upper}.

\begin{lemma}\label{lm:ELIMINATION-PROCEDURE}
	Suppose $\delta < 0.1$.
	Let $S' = \ELIMINATION(S,c_l,c_r,\delta)$. 
	Let $A_1$ be the best arm among $S$, with mean $\amean{A_1} \ge c_r$.
	Then with probability at least $1-\delta$, the following statements hold
	\begin{enumerate}[leftmargin=*]
		\item $A_1 \in S'$ \,\, (the best arm survives);
		\item $|S'^{\le c_l}| < 0.1 |S'|$ \,\, (only a small fraction of arms have means less than $c_l$);
		\item The number of samples is $O(|S| \ln \delta^{-1} \Delta^{-2})$, in which $\Delta = c_r-c_l$.
	\end{enumerate}
\end{lemma}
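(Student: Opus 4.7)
The plan is to condition on a high-probability event in which every call to \FRACTIONTEST\ and \UNIFORMSAMPLING\ inside \ELIMINATION\ satisfies its guarantee from the preceding two lemmas. Since iteration $r$ invokes at most two such calls, each with confidence $\delta_r = \delta/(10 \cdot 2^r)$, a union bound over all iterations yields a total failure probability of $\sum_{r \ge 1} 2\delta_r = \delta/5 \le \delta$. Under this event, I will verify the three conclusions deterministically.

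Claim 1 (best arm survives) is direct: in any iteration where \FRACTIONTEST\ returns \True, \UNIFORMSAMPLING\ is run with precision $(c_r-c_m)/2$, where $c_m = (c_l+c_r)/2$. Thus $\hamean{A_1} \ge \amean{A_1} - (c_r-c_m)/2 \ge c_r - (c_r-c_m)/2 = (c_m+c_r)/2$, so $A_1$ passes the empirical-mean filter and remains in $S_{r+1}$. Claim 2 is immediate from Lemma~\ref{lm:ESTIMATE-PROCEDURE} applied to the terminating iteration $r^*$: with $t = 0.075$ and $\epsilon = 0.025$, a \False\ output guarantees $|S_{r^*}^{<c_l}| < (t+\epsilon)|S_{r^*}| = 0.1 |S'|$.

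For Claim 3, the key step is to show that each \True\ iteration shrinks $|S_r|$ by a constant factor. By Lemma~\ref{lm:ESTIMATE-PROCEDURE}, a \True\ output implies $|S_r^{\le c_m}| > (t - \epsilon)|S_r| = 0.05\,|S_r|$; by the \UNIFORMSAMPLING\ precision, every such arm has empirical mean at most $c_m + (c_r-c_m)/2 = (c_m+c_r)/2$ and is eliminated. Hence $|S_{r+1}| \le 0.95\,|S_r|$, so the algorithm terminates within $r^* = O(\log|S|)$ iterations. The per-iteration cost is $O(\ln\delta_r^{-1} \Delta^{-2})$ for \FRACTIONTEST\ (the relevant width is $c_m - c_l = \Delta/2$, with $\epsilon$ a constant) plus, when applicable, $O(|S_r| \ln\delta_r^{-1} \Delta^{-2})$ for \UNIFORMSAMPLING. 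Summing using $\ln\delta_r^{-1} = O(r + \ln\delta^{-1})$ and $|S_r| \le 0.95^{r-1}|S|$ yields
\[
\sum_{r=1}^{r^*} (r + \ln\delta^{-1}) \cdot 0.95^{r-1}\,|S|\,\Delta^{-2} \;=\; O\!\left(|S|\ln\delta^{-1}\Delta^{-2}\right),
\]
since both $\sum_{r\ge 1} r\cdot 0.95^{r-1}$ and $\sum_{r\ge 1} 0.95^{r-1}$ are $O(1)$.

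The main obstacle is less conceptual than bookkeeping: one must pick the geometric schedule $\delta_r \propto 2^{-r}$ so that the union bound over infinitely many iterations closes, while still ensuring the resulting $\ln\delta_r^{-1} = O(r+\ln\delta^{-1})$ factor is tamed by the geometric shrinkage of $|S_r|$. It is also worth double-checking the edge case where $|S_r|$ becomes a small constant (down to $\{A_1\}$): there $|S_r^{\le c_m}| = 0$, so \FRACTIONTEST\ returns \False\ with high probability, guaranteeing termination regardless and making Claim 2 hold vacuously.
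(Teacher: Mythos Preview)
Your overall structure matches the paper's, but there is a genuine gap in your Claim~3 argument. You write that, conditioned on the good event, ``every such arm has empirical mean at most $c_m+(c_r-c_m)/2=(c_m+c_r)/2$ and is eliminated.'' However, Lemma~\ref{lm:UNIFORM-SAMPLE-PROCEDURE} is a \emph{per-arm} guarantee: each arm individually deviates with probability at most $\delta_r$. To assert that \emph{all} arms in $S_r^{\le c_m}$ concentrate, you would need a union bound over those arms, and since $|S_r^{\le c_m}|$ can be $\Theta(|S|)$, the failure probability in round $r$ is $|S|\cdot\delta_r$, not $\delta_r$. Summing over rounds gives $\sum_r |S|\,\delta_r = |S|\,\delta/10$, which is not at most $\delta$. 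Your union bound ``at most two calls per iteration, each with confidence $\delta_r$'' only covers the \FRACTIONTEST\ call and the single-arm concentration for $A_1$; it does not cover simultaneous concentration of all small-mean arms.

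The paper fixes exactly this via a Markov argument (its Lemma~\ref{GOOD2}): letting $N_r=|S_r^{\le c_m}|$, each arm in $S_r^{\le c_m}$ survives the filter with probability at most $\delta_r$, so $\Ex[N_{r+1}]\le\delta_r N_r$, and Markov gives $\Pr[N_{r+1}>N_r/4]\le 4\delta_r$. Now a union bound over rounds costs only $4\sum_r\delta_r\le\delta/2$, independent of $|S|$. Combined with the \True\ guarantee $N_r>0.05|S_r|$, one gets $|S_{r+1}|\le|S_r|-\tfrac{3}{4}N_r\le 0.99|S_r|$, which is still geometric decay and yields the same $O(|S|\ln\delta^{-1}\Delta^{-2})$ bound after summing $\sum_r(r+\ln\delta^{-1})\cdot 0.99^{r-1}$. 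Everything else in your sketch (Claims~1 and~2, the summation for the sample count) is essentially the paper's argument and is fine once you replace ``all small-mean arms are eliminated'' by ``at most a quarter of them survive.''
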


\subsection{Our Algorithm}
\label{subsec:bestarmalgorithm}

Now, everything is ready
to describe our algorithm \DELIMINATION\
for \bestarm.
We provide a high level description here.
All detailed parameters can be found in Algorithm~\ref{algo:BESTARM}.
The algorithm runs in rounds.
It maintains the current set $S_r$ of arms.
Initially, $S_1$ is the set of all arms $S$.
In round $r$, the algorithm tries to eliminate a set of suboptimal arms, while makes sure 
the best arm is not eliminated.
First, it applies the \MEDIANELIMINATION\ procedure to find an 
$\epsilon_{r}/4$-optimal arm, where $\epsilon_{r}=2^{-r}$.
Suppose it is $a_r$. Then, we take a number of samples from $a_r$ to estimate its mean
(denote the empirical mean by $\hamean{a_r}$).
Unlike previous algorithms in~\cite{even2002pac,karnin2013almost},
which eliminates either a fixed fraction of arms or those arms with mean much less than $a_r$,
we use a \FRACTIONTEST\ to see whether there are many arms with mean much less than $a_r$.
If it returns $\True$, we apply the \ELIMINATION\ procedure to eliminate those arms
(for the purpose of analysis, we need to use \MEDIANELIMINATION\ again, but with a tighter confidence level,
to find an $\epsilon_{r}/4$-optimal arm $b_r$).
If it returns $\False$, the algorithm decides that it is not judicious to do elimination in this round
(since we need to spend a lot of samples, but only discard very few arms, which is wasteful),
and simply sets $S_{r+1}$ to be $S_r$, and then proceeds to the next round.

\begin{algorithm}[t]
\LinesNumbered
\setcounter{AlgoLine}{0}
	\caption{\DELIMINATION($S,\delta$)}\label{algo:BESTARM}
	\label{algo:BESTARM}
	\KwData{Arm set $S$, confidence level $\delta$.}
	\KwResult{The best arm.}
\smallskip	
	$h \leftarrow 1$
	
	$S_1 \leftarrow S$
	
	\For{$r = 1$ to $+\infty$}{
		\uIf{$|S_r| = 1$}{
			\(\bf Return\) the only arm in $S_r$ 
		}
		$\epsilon_{r} \leftarrow 2^{-r}$ 
		
		$\delta_{r} \leftarrow \delta/50r^2$
		 
		$a_r \leftarrow \MEDIANELIMINATION(S_r,\epsilon_r/4,0.01)$. \label{line:ME1} 
		
		$\hamean{a_r} \leftarrow \textrm{\UNIFORMSAMPLING}(\{a_r\},\epsilon_r/4,\delta_r)$ \label{line:US1}
		
		\uIf{ {\em \FRACTIONTEST}$(S_r,\hamean{a_r}-1.5\epsilon_r,\hamean{a_r}-1.25\epsilon_r,\delta_r,0.4,0.1)$} { \label{line:FT}
			$\delta_h \leftarrow \delta/50h^2$
			
			$b_r \leftarrow \MEDIANELIMINATION(S_r,\epsilon_r/4,\delta_h)$ \label{line:ME2}
			 
			$\hamean{b_r} \leftarrow \UNIFORMSAMPLING(\{b_r\},\epsilon_r/4,\delta_h)$ \label{line:US2}
			
			$S_{r+1} \leftarrow \ELIMINATION(S_r,\hamean{b_r}-0.5\epsilon_r,\hamean{b_r}-0.25\epsilon_r,\delta_h)$ \label{line:EL} 
			
			$h \leftarrow h + 1$ \label{line:INC-H}
		}\Else{
	
		$S_{r+1} \leftarrow S_{r}$
		}
	}
\end{algorithm}

We devote the rest of the section to prove that Algorithm~\ref{algo:BESTARM} indeed solves the \bestarm\ problem 
and achieves the sample complexity stated in Theorem~\ref{theo:BESTARMPACD}. 
To simplify the argument, we first describe some event we will condition on for the rest of the proof,
and show the algorithm indeed finds the best arm under the condition.

\begin{lemma}\label{lm:GOODEVENT}
	Let $\event_G$ denote the event that all procedure calls in line~\ref{line:US1},~\ref{line:FT},~\ref{line:ME2},~\ref{line:US2},~\ref{line:EL} return correctly for all rounds.
	$\event_G$ happens with probability at least $1-\delta$.
	Moreover, conditioning on $\event_G$, the algorithm outputs the correct answer.
\end{lemma}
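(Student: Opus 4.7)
The plan is to split the argument into two independent tasks: (i) a union bound giving $\Pr[\event_G^c] \leq \delta$, and (ii) an induction showing that under $\event_G$ the best arm $A_1$ is never eliminated, so that whenever the algorithm terminates its output is $A_1$.

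For (i), I would enumerate the procedure calls whose correctness defines $\event_G$. In every round $r$ the algorithm makes two such calls: \UNIFORMSAMPLING\ (line~\ref{line:US1}) and \FRACTIONTEST\ (line~\ref{line:FT}), each at confidence $\delta_r = \delta/(50r^2)$. In each ``active'' round where \FRACTIONTEST\ returned \True---indexed by the counter $h$---there are three further calls (lines~\ref{line:ME2}, \ref{line:US2}, \ref{line:EL}), each at confidence $\delta_h = \delta/(50h^2)$. A union bound will then give
\[
\Pr[\event_G^c] \;\leq\; 2\sum_{r\geq 1} \delta_r + 3\sum_{h\geq 1}\delta_h \;=\; \frac{5\delta}{50}\cdot\frac{\pi^2}{6} \;<\; \delta.
\]
A point worth emphasizing is that the \MEDIANELIMINATION\ call on line~\ref{line:ME1} uses the fixed confidence $0.01$ and is intentionally \emph{not} included in $\event_G$; its role is only to furnish a crude threshold for \FRACTIONTEST, and correctness will not rely on it. This design choice is what keeps the failure budget at $\Theta(\delta)$.

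For (ii), I would induct on $r$ to show $A_1 \in S_r$ throughout. The base case $S_1 = S$ is immediate. For the inductive step, if \FRACTIONTEST\ returns \False\ then $S_{r+1} = S_r$ and the invariant is preserved; otherwise \ELIMINATION\ is invoked with right range parameter $c_r = \hamean{b_r} - 0.25\epsilon_r$, so by Lemma~\ref{lm:ELIMINATION-PROCEDURE} it suffices to verify $\amean{A_1} \geq c_r$. Under $\event_G$, line~\ref{line:ME2} gives $\amean{b_r} \geq \amean{A_1} - \epsilon_r/4$ (correctness of \MEDIANELIMINATION) and line~\ref{line:US2} gives $|\amean{b_r} - \hamean{b_r}| \leq \epsilon_r/4$. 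Combined with $\amean{A_1} \geq \amean{b_r}$ (maximality of $A_1$), these will yield $\amean{A_1} \geq \hamean{b_r} - \epsilon_r/4 = c_r$, so Lemma~\ref{lm:ELIMINATION-PROCEDURE} keeps $A_1 \in S_{r+1}$.

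Termination under $\event_G$ is the only remaining item, and I would handle it by noting that once $\epsilon_r$ is sufficiently small compared to $\Gap{2}$, every suboptimal arm in $S_r$ lies below the \FRACTIONTEST\ cutoff, which by Lemma~\ref{lm:ESTIMATE-PROCEDURE} forces the test to return \True, and then Lemma~\ref{lm:ELIMINATION-PROCEDURE} leaves fewer than $10\%$ of $S_{r+1}$ suboptimal, eventually collapsing the set to $\{A_1\}$. The main obstacle, and really the only subtlety, will be (a) excluding line~\ref{line:ME1} from $\event_G$ so the failure budget fits under $\delta$, and (b) checking that the $\epsilon_r/4$ slack built into each of ME2 and US2 exactly cancels the $0.25\epsilon_r$ offset appearing in the right range parameter of \ELIMINATION---both are essentially forced by the parameter choices in Algorithm~\ref{algo:BESTARM-m}.
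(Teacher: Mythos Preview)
Your union bound and your survival-of-$A_1$ argument are exactly the paper's proof: the paper computes $\sum_r 2\delta_r + \sum_h 3\delta_h \le \delta$ and then observes that under $\event_G$ one has $\hamean{b_r} < \amean{A_1} + \epsilon_r/4$, whence $\amean{A_1} \ge \hamean{b_r} - 0.25\epsilon_r$ and Lemma~\ref{lm:ELIMINATION-PROCEDURE} keeps $A_1$. (A tiny redundancy: you invoke the $\epsilon_r/4$-optimality of $b_r$ from line~\ref{line:ME2}, but your chain only uses $\amean{A_1} \ge \amean{b_r}$ together with the US2 bound, so that citation is not needed.)

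The one place your plan diverges is termination, which the paper does \emph{not} prove inside this lemma at all---it is deferred to Lemma~\ref{lm:big-r} and the running-time analysis. Your sketch here has a subtle gap: you argue that once $\epsilon_r \ll \Gap{2}$ every suboptimal arm falls below the \FRACTIONTEST\ cutoff, forcing the test to return \True. But the cutoff is $\hamean{a_r} - 1.5\epsilon_r$, and $\hamean{a_r}$ is anchored to the output of line~\ref{line:ME1}, which you (correctly) excluded from $\event_G$. Under $\event_G$ alone you only get the one-sided bound $\hamean{a_r} < \amean{A_1} + \epsilon_r/4$; if line~\ref{line:ME1} returns a poor arm, $\hamean{a_r}$ can be arbitrarily low, the cutoff drops with it, and \FRACTIONTEST\ may legitimately return \False\ without violating $\event_G$. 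The actual termination argument (Lemma~\ref{lm:big-r}) instead uses that line~\ref{line:ME1} succeeds with probability $\ge 0.99$ in every round, so for $r > \maxs$ the algorithm halts in each round independently with probability $\ge 0.99$, giving almost-sure termination conditional on $\event_G$.
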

\begin{proof}
	By Lemma~\ref{lm:UNIFORM-SAMPLE-PROCEDURE}, Lemma~\ref{lm:ESTIMATE-PROCEDURE} and Lemma~\ref{lm:ELIMINATION-PROCEDURE}, 
	we can simply bound the total error probability with a union bound over all procedure calls in all rounds:
	\[
	\sum_{r=1}^{+\infty} 2\delta_r +¡¡\sum_{h=1}^{+\infty} 3\delta_h \le \delta\cdot5\sum_{i=1}^{+\infty} 1/50i^2 \le \delta.
	\]
	
	To prove the correctness, it suffices to show that the best arm $A_1$ is never eliminated in line~\ref{line:EL}.
	Conditioning on event $\event_G$, for all $r$, we have $\amean{b_r} \le \amean{A_1}$ and $|\hamean{b_r} - \amean{b_r}| < \epsilon_r/4$, 
	thus $\hamean{b_r} < \amean{A_1}+\epsilon_r/4$. Clearly, this means $\amean{A_1} \ge \hamean{b_r}-0.25\epsilon_r$.
	Then by Lemma~\ref{lm:ELIMINATION-PROCEDURE}, we know that $A_1$ has survived round $r$.
\end{proof}

Note that the correctness of \MEDIANELIMINATION\ (line~\ref{line:ME1}) is not included in event $\event_G$.

\subsection{Analysis of the running time}
\label{subsec:bestarmrunningtime}

We use $\alg$ to denote Algorithm~\ref{algo:BESTARM}.
Let 
$$
T(\delta,I) = \sum_{i=2}^{n} \Gap{i}^{-2} \left(\ln\delta^{-1}+\ln\ln \min(n,\Gap{i}^{-1})\right) + \Gap{2}^{-2}\ln\ln \Gap{2}^{-1}
$$
be the target upper bound we want to prove.
We need to prove $\Ex_{\alg_{\delta},I}[\tau\mid \event_G]=O(T(\delta,I))$ for $\delta < 0.1$.
In the rest of the proof, we condition on the event $\event_G$, unless state otherwise. Let $A_1$ denote the best arm.

We need some additional notations. First, for all $s \in \mathbb{N}$, define the sets of arms $\armset{s}$, $\bigarmset{s}$ and $\smaarmset{s}$ as:
\[
\armset{s} = \{ a \mid 2^{-s} \le \Gap{a} < 2^{-s+1}\},\quad
\bigarmset{s} = \bigcup_{r=s}^{+\infty} \armset{r}, \quad
\smaarmset{s} = \bigcup_{r=1}^{s} \armset{r}
\]
Note that the best arm is not in any of the above set.
Let $\epsilon_{s} = 2^{-s}$. It is also convenient to use the following 
equivalent definitions for $\bigarmset{s}$ and $\smaarmset{s}$:
\[
\bigarmset{s} = \{ a \mid \amean{A_1} - 2\epsilon_{s} < \amean{a} < \amean{A_1} \},\quad
\smaarmset{s} = \{ a \mid \amean{a} \le \amean{A_1} - \epsilon_{s} \}
\]
Let $\maxs$ be the maximum $s$ such that $\armset{s}$ is not empty.

We start with a lemma which concerns the ranges of  $\hamean{a_r}$ and $\hamean{b_r}$.

\begin{lemma}\label{lm:a-r-b-r-approx}
	Conditioning on $\event_G$, the following statements hold:
	\begin{enumerate} [leftmargin=*]
		\item For any round $r$, $\hamean{a_r} < \amean{A_1} + \epsilon_r/4$. In addition, 
		if  \MEDIANELIMINATION\ (line \ref{line:ME1}) returns an $\epsilon_r/4$-approximation correctly in that round, 
		we also have $\hamean{a_r} > \amean{A_1} - 2\epsilon_r/4$.
		\item For any round $r$, if $b_r$ exists (the algorithm enters line~\ref{line:ME2}), then $\hamean{b_r} < \amean{A_1} + \epsilon_r/4$ and $\hamean{b_r} > \amean{A_1} - 2\epsilon_r/4$.
	\end{enumerate}
\end{lemma}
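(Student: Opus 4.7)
The plan is to derive both statements directly from the definition of the good event $\event_G$ from Lemma~\ref{lm:GOODEVENT} together with the fact that $A_1$ is the best arm, so $\amean{a} \le \amean{A_1}$ for every arm $a \in S_r$. The key observation is that $\event_G$ includes correctness of the \UNIFORMSAMPLING\ calls on line~\ref{line:US1} and line~\ref{line:US2}, but does \emph{not} include correctness of the \MEDIANELIMINATION\ call on line~\ref{line:ME1} (which is invoked with a fixed confidence level $0.01$); it does include the \MEDIANELIMINATION\ call on line~\ref{line:ME2} (which uses $\delta_h$). This asymmetry is precisely why the lower bound on $\hamean{a_r}$ carries the extra conditional hypothesis while the lower bound on $\hamean{b_r}$ does not.

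For the first statement, I would first note that by Lemma~\ref{lm:UNIFORM-SAMPLE-PROCEDURE} applied to line~\ref{line:US1} (included in $\event_G$), we have $|\hamean{a_r} - \amean{a_r}| < \epsilon_r/4$ in every round $r$. Combining this with $\amean{a_r} \le \amean{A_1}$ immediately gives $\hamean{a_r} < \amean{A_1} + \epsilon_r/4$, which holds unconditionally on $\event_G$. For the second inequality, assume additionally that the \MEDIANELIMINATION\ call on line~\ref{line:ME1} succeeded; then $\amean{a_r} \ge \amean{A_1} - \epsilon_r/4$ by Lemma~\ref{lm:MEDIANELIM}, and combining with $\hamean{a_r} > \amean{a_r} - \epsilon_r/4$ yields $\hamean{a_r} > \amean{A_1} - 2\epsilon_r/4$.

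For the second statement, whenever the algorithm reaches line~\ref{line:ME2}, the event $\event_G$ guarantees that both the \MEDIANELIMINATION\ call there and the \UNIFORMSAMPLING\ call on line~\ref{line:US2} succeed. Thus $\amean{b_r} \ge \amean{A_1} - \epsilon_r/4$ and $|\hamean{b_r} - \amean{b_r}| < \epsilon_r/4$; together with $\amean{b_r} \le \amean{A_1}$, the two desired bounds on $\hamean{b_r}$ follow by the same arithmetic as above.

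There is no serious obstacle here; the lemma is essentially a bookkeeping statement that isolates which inequalities can be asserted on $\event_G$ alone versus which additionally need the (unreliable) line~\ref{line:ME1} call to succeed. The only thing to be careful about is stating the first claim in the right form: the upper bound $\hamean{a_r} < \amean{A_1} + \epsilon_r/4$ must be unconditional on the success of line~\ref{line:ME1}, because in subsequent analysis we will need to bound $\hamean{a_r}$ even in rounds where the cheap \MEDIANELIMINATION\ call fails.
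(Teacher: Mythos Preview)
Your proposal is correct and matches the paper's own proof essentially line for line: the paper uses exactly the chain $|\hamean{a_r}-\amean{a_r}|<\epsilon_r/4$ together with $\amean{a_r}\le\amean{A_1}$ (and, conditionally, $\amean{a_r}\ge\amean{A_1}-\epsilon_r/4$), and then observes that on $\event_G$ the arm $b_r$ is always an $\epsilon_r/4$-approximation so the second claim follows identically. Your explicit discussion of the asymmetry between line~\ref{line:ME1} and line~\ref{line:ME2} is a helpful clarification the paper leaves implicit.
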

\begin{proof}	
	Conditioning on $\event_G$, 
	we have that $|\hamean{a_r} - \amean{a_r}| < \epsilon_r/4$. Clearly $\amean{a_r} \le \amean{A_1}$. Hence, $\hamean{a_r} < \amean{A_1} + \epsilon_r/4$. If $a_r$ is an $\epsilon_r/4$-approximation of $A_1$, we have $\amean{a_r} \ge \amean{A_1}-\epsilon_r/4$.
	Then, we can see $\hamean{a_r} > \amean{A_1} - 2\epsilon_r/4$. Note that conditioning on $\event_G$, $b_r$ is always an  $\epsilon_r/4$-approximation of $A_1$.
	Hence the second claim follows exactly 
	in the same way.
\end{proof}

Then we give an upper bound of $h$ (updated in line~\ref{line:INC-H}). 
Note that $h$ indicates how many times we pass the 
\FRACTIONTEST\ and 
execute \ELIMINATION\ (line~\ref{line:EL}).
Indeed, in the following analysis, we only need an upper bound of $h$ during the first $\maxs$ rounds. We introduce the definition first.

\newcommand{\hI}{h_I}

\begin{defi}
	Given an instance $I$, 
	conditioning on $\event_G$, 
	we denote the maximum value of $h$ during the first $\maxs$ rounds as $\hI$. 
	It is easy to see that $\hI \le \maxs$.
\end{defi}

\begin{lemma}\label{lm:GOOD-PROPERTYS2}
	$\hI = O(\ln n)$ for any instance $I$.
\end{lemma}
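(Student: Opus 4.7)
The plan is to show that every time the algorithm enters the branch containing \ELIMINATION\ (line~\ref{line:EL}) and increments $h$, the size $|S_r|$ shrinks by a constant multiplicative factor. Since the best arm always survives under $\event_G$ (Lemma~\ref{lm:GOODEVENT}), we have $|S_r|\ge 1$ at all times, and starting from $|S_1|=n$ this can happen only $O(\ln n)$ times.

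Concretely, fix any round $r$ in which the \FRACTIONTEST\ on line~\ref{line:FT} returns \textbf{True}. Conditioning on $\event_G$ and applying Lemma~\ref{lm:ESTIMATE-PROCEDURE} with $t=0.4,\epsilon=0.1,c_r=\hamean{a_r}-1.25\epsilon_r$, I would conclude that
\[
\bigl|\{a\in S_r:\mu_a\le \hamean{a_r}-1.25\epsilon_r\}\bigr|>0.3\,|S_r|.
\]
By Lemma~\ref{lm:a-r-b-r-approx} we have $\hamean{a_r}<\amean{1}+\epsilon_r/4$, so $\hamean{a_r}-1.25\epsilon_r<\amean{1}-\epsilon_r$, and the displayed bound upgrades to: at least $0.3|S_r|$ arms in $S_r$ have mean at most $\amean{1}-\epsilon_r$.

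Next I would examine the output of \ELIMINATION$(S_r,\hamean{b_r}-0.5\epsilon_r,\hamean{b_r}-0.25\epsilon_r,\delta_h)$. By Lemma~\ref{lm:ELIMINATION-PROCEDURE}, under $\event_G$ at most $0.1|S_{r+1}|$ arms of $S_{r+1}$ have mean below $\hamean{b_r}-0.5\epsilon_r$. Using Lemma~\ref{lm:a-r-b-r-approx} again, $\hamean{b_r}>\amean{1}-\epsilon_r/2$, hence $\hamean{b_r}-0.5\epsilon_r>\amean{1}-\epsilon_r$, and therefore at most $0.1|S_{r+1}|$ arms in $S_{r+1}$ have mean below $\amean{1}-\epsilon_r$. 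Since $S_{r+1}\subseteq S_r$, subtracting the survivors from the count of such arms in $S_r$ gives
\[
|S_r|-|S_{r+1}|\ \ge\ 0.3|S_r|-0.1|S_{r+1}|,
\]
which rearranges to $|S_{r+1}|\le (7/9)\,|S_r|$.

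Iterating over the (at most $h_I$) rounds in which elimination fires, I would obtain $|S_{r}|\le (7/9)^{h_I}\,n$ at the end of round $\max_s$; combined with $|S_r|\ge 1$ this yields $h_I\le \log_{9/7}n=O(\ln n)$, as claimed. The only delicate point is the threshold bookkeeping: one has to verify that the empirical thresholds $\hamean{a_r}-1.25\epsilon_r$ and $\hamean{b_r}-0.5\epsilon_r$ both lie on the correct side of the common reference value $\amean{1}-\epsilon_r$, so that the ``\FRACTIONTEST\ lower bound'' and the ``\ELIMINATION\ upper bound'' can be compared on the same threshold. This is exactly what the $\epsilon_r/4$ slack in the \MEDIANELIMINATION/\UNIFORMSAMPLING\ calls on lines~\ref{line:ME1}--\ref{line:US1} and~\ref{line:ME2}--\ref{line:US2} is designed to provide, so beyond this alignment the argument is routine.
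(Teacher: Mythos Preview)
Your proposal is correct and follows essentially the same approach as the paper: show that whenever \FRACTIONTEST\ returns \True\ and \ELIMINATION\ is executed, $|S_r|$ shrinks by a constant factor, so $h$ can increment at most $O(\ln n)$ times. The only cosmetic difference is that you route the threshold comparison through the common reference value $\amean{1}-\epsilon_r$, whereas the paper compares the two empirical thresholds directly via $\hamean{b_r}-0.5\epsilon_r>\hamean{a_r}-1.25\epsilon_r$; both yield the same constant-factor contraction (you make the $7/9$ explicit, the paper leaves it as ``a constant fraction'').
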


\begin{proof}	
	Suppose we enter line \ref{line:ME2} at round $r$. 
	By Lemma~\ref{lm:a-r-b-r-approx}, we have $\hamean{a_r} < \amean{A_1} + \epsilon_r/4$ and $\hamean{b_r} > \amean{A_1} - 2\epsilon_r/4$. Hence $\hamean{b_r} > \hamean{a_r} - 0.75\epsilon_r$.
	
	By Lemma~\ref{lm:ESTIMATE-PROCEDURE}, 
	we know there is at least $0.3$ fraction of arms in $S_r$ with means $\le \hamean{a_r}-1.25\epsilon_r$. 
	But by Lemma~\ref{lm:ELIMINATION-PROCEDURE}, we know that after executing
	line \ref{line:EL}, there are at most $0.1$ fraction of arms in $S_r$ 
	with means $\le \hamean{b_r}-0.5\epsilon_r$.
	By noting that $\hamean{b_r}-0.5\epsilon_r > \hamean{a_r}-1.25\epsilon_r$. 
	we can see that 
	$|S_r|$ drops by at least a constant fraction whenever we enter line~\ref{line:EL}. 
	Therefore, $h$ can increase by 1 for at most $O(\ln n)$ times.
\end{proof}

\begin{rem}
	Conditioning on $\event_G$, for all round $r \le \maxs$, we can see $h \le r$. Thus, $h \le \min(\hI,r)$, and $\ln \delta_h^{-1} = O(\ln \delta^{-1} +\lnmin{\hI}{r})$. 
\end{rem}

Now, we describe some behaviors of \MEDIANELIMINATION\ and \ELIMINATION\ before we analyze the sample complexity.

\begin{lemma}\label{lm:if-output-true}
	If \FRACTIONTEST\ (line~\ref{line:FT}) outputs $\True$, we know 
	that there are $>0.3$ fraction of arms with means $\le \amean{A_1} - \epsilon_r$ in $S_r$. 
	In other words, $|\smaarmset{r} \cap S_r| > 0.3 |S_r|$. Moreover, we have that $|\smaarmset{r} \cap S_{r+1}| \le 0.1 |S_{r+1}|$ (i.e., we can eliminate a significant portion
	in this round). 
\end{lemma}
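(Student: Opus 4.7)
The plan is to unpack the two parameter choices used at line \ref{line:FT} and line \ref{line:EL} and connect them, via Lemma~\ref{lm:a-r-b-r-approx}, to the absolute threshold $\amean{A_1}-\epsilon_r$ that defines $\smaarmset{r}$. The key is that $\hamean{a_r}$ and $\hamean{b_r}$ are good proxies for $\amean{A_1}$ (within $\epsilon_r/4$ and $\epsilon_r/2$ respectively), so thresholds written in terms of these empirical means translate cleanly to the target threshold.

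For the first statement, I would invoke Lemma~\ref{lm:ESTIMATE-PROCEDURE} on the call \FRACTIONTEST$(S_r,\hamean{a_r}-1.5\epsilon_r,\hamean{a_r}-1.25\epsilon_r,\delta_r,0.4,0.1)$ (whose correctness is part of $\event_G$): since it outputs $\True$, with $t=0.4$ and $\epsilon=0.1$, we get $|S_r^{\le \hamean{a_r}-1.25\epsilon_r}| > (t-\epsilon)|S_r| = 0.3|S_r|$. By part 1 of Lemma~\ref{lm:a-r-b-r-approx}, $\hamean{a_r} < \amean{A_1}+\epsilon_r/4$, hence
\[
\hamean{a_r}-1.25\epsilon_r \;<\; \amean{A_1} - \epsilon_r.
\]
Therefore every arm $a\in S_r$ with $\amean{a}\le \hamean{a_r}-1.25\epsilon_r$ also satisfies $\amean{a}\le \amean{A_1}-\epsilon_r$, i.e.\ lies in $\smaarmset{r}$. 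This gives $|\smaarmset{r}\cap S_r|>0.3|S_r|$.

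For the second statement, once the \FRACTIONTEST\ returns $\True$, the algorithm executes line~\ref{line:EL}, calling \ELIMINATION$(S_r,\hamean{b_r}-0.5\epsilon_r,\hamean{b_r}-0.25\epsilon_r,\delta_h)$. Conditioning on $\event_G$, part 2 of Lemma~\ref{lm:a-r-b-r-approx} gives $\hamean{b_r} > \amean{A_1}-\epsilon_r/2$, so
\[
\hamean{b_r}-0.5\epsilon_r \;>\; \amean{A_1}-\epsilon_r.
\]
Consequently, any arm with mean at most $\amean{A_1}-\epsilon_r$ has mean strictly below the threshold $\hamean{b_r}-0.5\epsilon_r$, so $\smaarmset{r}\cap S_{r+1}\subseteq S_{r+1}^{\le \hamean{b_r}-0.5\epsilon_r}$. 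By the second conclusion of Lemma~\ref{lm:ELIMINATION-PROCEDURE} (which holds under $\event_G$), $|S_{r+1}^{\le \hamean{b_r}-0.5\epsilon_r}|\le 0.1|S_{r+1}|$, and the claim $|\smaarmset{r}\cap S_{r+1}|\le 0.1|S_{r+1}|$ follows.

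There is no real obstacle here; the lemma is essentially a bookkeeping step to translate empirical thresholds to the absolute threshold defining $\smaarmset{r}$. The one subtlety to watch is that the use of Lemma~\ref{lm:a-r-b-r-approx} part 1 only gives the upper bound $\hamean{a_r}<\amean{A_1}+\epsilon_r/4$ unconditionally on $\event_G$ (the matching lower bound requires the \MEDIANELIMINATION\ at line~\ref{line:ME1} to succeed, which is not part of $\event_G$), but only this upper bound is needed for the first statement; for the second statement, one uses the two-sided bound on $\hamean{b_r}$ from part 2 of the same lemma, whose correctness is indeed covered by $\event_G$.
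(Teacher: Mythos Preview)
Your proof is correct and follows essentially the same approach as the paper: you invoke Lemma~\ref{lm:ESTIMATE-PROCEDURE} with $t=0.4,\epsilon=0.1$ for the first claim and Lemma~\ref{lm:ELIMINATION-PROCEDURE} for the second, using Lemma~\ref{lm:a-r-b-r-approx} to translate the empirical thresholds $\hamean{a_r}-1.25\epsilon_r$ and $\hamean{b_r}-0.5\epsilon_r$ to the absolute threshold $\amean{A_1}-\epsilon_r$ that defines $\smaarmset{r}$. Your final remark correctly identifies that only the upper bound on $\hamean{a_r}$ (which holds on $\event_G$ regardless of whether line~\ref{line:ME1} succeeds) is needed, and this is precisely why the paper can later use this lemma inside Lemma~\ref{lm:CORRECT-MEDELIM} without requiring \MEDIANELIMINATION's success.
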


\begin{proof}
	By Lemma~\ref{lm:a-r-b-r-approx}, 
	we can see that $\hamean{a_r} < \amean{A_1} + \epsilon_r/4$, $\hamean{b_r} > \amean{A_1} - 2\epsilon_r/4$. Now consider the parameters for \FRACTIONTEST. Let  $c_r=\hamean{a_r}-1.25\epsilon_r$. Then 
	we have that $c_r < \amean{A_1} - \epsilon_r$.
	
	By Lemma~\ref{lm:ESTIMATE-PROCEDURE},
	when \FRACTIONTEST\ outputs $\True$, we know that there are $>(0.4-0.1)=0.3$ fraction of arms 
	with means $\le c_r \le \amean{A_1} - \epsilon_r$ in $S_r$. Clearly, $\amean{a} \le \amean{A_1}-\epsilon_r$ is equivalent to $a \in \smaarmset{r}$ for an arm $a$.
	
	Now consider the parameters for \ELIMINATION. Let $c_l =\hamean{b_r}-0.5\epsilon_r$. Then $c_l > \amean{A_1}- \epsilon_r$. We also note that $\amean{a} \le \amean{A_1}-\epsilon_r$ is equivalent to $a \in \smaarmset{r}$ for an arm $a$.
	Then by Lemma~\ref{lm:ELIMINATION-PROCEDURE}, after the elimination, we have
	$|\smaarmset{r} \cap S_{r+1}| \le | S_{r+1}^{\le c_l} | \le  0.1 |S_{r+1}|$.
\end{proof}

\begin{lemma}\label{lm:CORRECT-MEDELIM}
	Consider a round $r$. Suppose \MEDIANELIMINATION\ (line~\ref{line:ME1}) returns a correct $\epsilon_r/4$-approximation $a_r$. 
	Then, the following statements hold:
	\begin{enumerate}[leftmargin=*]
		\item If \FRACTIONTEST\ (line~\ref{line:FT}) outputs $\True$, 
		we know there are $>0.3$ fraction of arms with means $\le \amean{A_1} - \epsilon_r$ in $S_r$. 
		In other words, $|\smaarmset{r} \cap S_r| > 0.3 |S_r|$. 
		Moreover, we have that $|\smaarmset{r} \cap S_{r+1}| \le 0.1 |S_{r+1}|$.
		\item If it outputs $\False$, we know there are at least $0.5$ fraction of arms with 
		means at least $\amean{A_1} - 2\epsilon_r$ in $S_r$. In other words, $|\bigarmset{r} \cap S_r|+1 >0.5 |S_r|$.\
	\end{enumerate}
\end{lemma}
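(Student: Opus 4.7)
The first claim of the lemma is essentially a restatement of Lemma~\ref{lm:if-output-true}, which was proved without any assumption on the correctness of \MEDIANELIMINATION\ in line~\ref{line:ME1}. So for part (1) my plan is simply to invoke that lemma.

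For part (2), I would work out the implications of the \FRACTIONTEST\ call returning \False. The range parameters passed in are $c_l = \hamean{a_r} - 1.5\epsilon_r$ and $c_r^{\mathrm{param}} = \hamean{a_r} - 1.25\epsilon_r$, with threshold $t = 0.4$ and slack $\epsilon = 0.1$. By the \False-branch of Lemma~\ref{lm:ESTIMATE-PROCEDURE} (conditioned on $\event_G$), we have $|S_r^{\ge c_l}| > (1 - t - \epsilon)|S_r| = 0.5\,|S_r|$, i.e.\ more than half the arms in $S_r$ have true mean at least $\hamean{a_r} - 1.5\epsilon_r$.

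Now I would translate this threshold into a statement about $\amean{A_1}$. This is the step that genuinely uses the hypothesis that \MEDIANELIMINATION\ returned a correct $\epsilon_r/4$-approximation: by Lemma~\ref{lm:a-r-b-r-approx}(1), this hypothesis gives $\hamean{a_r} > \amean{A_1} - \epsilon_r/2$, and therefore $\hamean{a_r} - 1.5\epsilon_r > \amean{A_1} - 2\epsilon_r$. Hence every arm $a \in S_r$ with $\amean{a} \ge \hamean{a_r} - 1.5\epsilon_r$ satisfies $\amean{a} > \amean{A_1} - 2\epsilon_r$, which by definition means $a \in \bigarmset{r} \cup \{A_1\}$.

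Combining these two observations, $|(\bigarmset{r} \cup \{A_1\}) \cap S_r| > 0.5 |S_r|$, so $|\bigarmset{r} \cap S_r| + 1 > 0.5|S_r|$, which is exactly claim (2). There is no real obstacle here; the only subtlety is remembering to account for $A_1$ separately from $\bigarmset{r}$ (since $A_1 \notin \bigarmset{r}$ by definition), which is why the statement has a $+1$ on the left-hand side. The proof therefore amounts to chaining Lemma~\ref{lm:if-output-true}, Lemma~\ref{lm:a-r-b-r-approx}, and Lemma~\ref{lm:ESTIMATE-PROCEDURE}, with the main (and essentially only) new ingredient in part (2) being the lower bound on $\hamean{a_r}$ afforded by the correctness hypothesis.
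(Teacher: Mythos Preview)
Your proposal is correct and follows essentially the same route as the paper: invoke Lemma~\ref{lm:if-output-true} for part~(1), and for part~(2) use Lemma~\ref{lm:a-r-b-r-approx}(1) to get $\hamean{a_r} > \amean{A_1} - \epsilon_r/2$, hence $c_l > \amean{A_1} - 2\epsilon_r$, and combine with the $\False$-branch of Lemma~\ref{lm:ESTIMATE-PROCEDURE} to conclude that more than half of $S_r$ lies in $\bigarmset{r} \cup \{A_1\}$. The only cosmetic difference is that the paper records the bounds on $c_l$ and $c_r$ up front before treating the two cases, whereas you defer the $c_l$ bound to part~(2); the logical content is identical.
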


\begin{proof}	
	Since \MEDIANELIMINATION\ (line~\ref{line:ME1}) returns the correctly, by Lemma~\ref{lm:a-r-b-r-approx}, $\hamean{a_r} > \amean{A_1} - 2\epsilon_r/4$ and $\hamean{a_r} < \amean{A_1} + \epsilon_r/4$. Now consider the parameters for $\textrm{\FRACTIONTEST}$, let $c_l=\hamean{a_r} -1.5\epsilon_r$ and $c_r=\hamean{a_r}-1.25\epsilon_r$ , It is easy to see that
	$c_l > \amean{A_1} - 2\epsilon_r$, and $c_r < \amean{A_1} - \epsilon_r$.
	
	The first claim just follows from Lemma~\ref{lm:if-output-true} (note that Lemma~\ref{lm:if-output-true} does not require the output of \MEDIANELIMINATION\ (line~\ref{line:ME1}) being correct).
	
	By Lemma~\ref{lm:ESTIMATE-PROCEDURE}, if \FRACTIONTEST\ outputs $\False$, we know there are at least $(1-0.4-0.1)=0.5$ fraction of arms with means $\ge c_l > \amean{A_1} - 2\epsilon_r$ in $S_r$.  
	For an arm $a$, 
	$\amean{a} > \amean{A_1}-2\epsilon_r$ is equivalent to $a \in \bigarmset{r}$ or 
	$a$ is the best arm $A_1$ itself.
\end{proof}

We also need the following lemma describing the behavior of the algorithm when $r > \maxs$.

\begin{lemma}\label{lm:big-r}
	For each round $r > \maxs$, the algorithm terminates if \MEDIANELIMINATION\  returns an $\epsilon_r/4$-approximation correctly,
	which happens with probability at least $0.99$.
\end{lemma}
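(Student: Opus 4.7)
The plan is to exploit the structural observation that once $r$ exceeds $\maxs$, every suboptimal arm is ``far'' from $A_1$ on the scale of $\epsilon_r$. Concretely, by definition of $\maxs$, every non-best arm $a$ satisfies $\Gap{a}\ge 2^{-\maxs}$; and $r>\maxs$ gives $2\epsilon_r=2^{-r+1}\le 2^{-\maxs}$, so $\Gap{a}\ge 2\epsilon_r$ for every non-best $a$. Consulting the equivalent characterizations that appear just after the definitions of $\bigarmset{s}$ and $\smaarmset{s}$, this says $\bigarmset{r}=\emptyset$ and $S_r\setminus\{A_1\}\subseteq\smaarmset{r}$.

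I would then condition on two events for the remainder of the argument. First, $\event_G$, which we are already conditioning on throughout the running-time analysis; it guarantees $A_1\in S_r$ and the correctness of every \FRACTIONTEST\ and \ELIMINATION\ call. Second, the event that the call to \MEDIANELIMINATION\ on line~\ref{line:ME1} returns an $\epsilon_r/4$-optimal arm $a_r$, which by Lemma~\ref{lm:MEDIANELIM} occurs with probability at least $0.99$ since the confidence parameter is $0.01$. If $|S_r|=1$ the algorithm has already terminated at the check at the top of the round, so we may assume $|S_r|\ge 2$.

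Now I case-split on the output of \FRACTIONTEST\ at line~\ref{line:FT}. If it outputs $\False$, Lemma~\ref{lm:CORRECT-MEDELIM}~(item 2) gives $|\bigarmset{r}\cap S_r|+1>0.5|S_r|$; but $\bigarmset{r}=\emptyset$ then forces $|S_r|<2$, contradicting $|S_r|\ge 2$. Hence \FRACTIONTEST\ must output $\True$, in which case the algorithm executes \ELIMINATION\ on line~\ref{line:EL}, and by Lemma~\ref{lm:if-output-true} we have $|\smaarmset{r}\cap S_{r+1}|\le 0.1|S_{r+1}|$. Combining this with $S_{r+1}\setminus\{A_1\}\subseteq S_r\setminus\{A_1\}\subseteq\smaarmset{r}$ from the first paragraph yields $|S_{r+1}|-1\le 0.1|S_{r+1}|$, hence $|S_{r+1}|=1$, and the algorithm terminates at the start of round $r+1$.

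I do not anticipate a real obstacle here. The heart of the argument is the definitional observation that $\bigarmset{r}$ collapses to $\emptyset$ beyond $\maxs$; plumbing it through the \FRACTIONTEST/\ELIMINATION\ guarantees already established in Lemmas~\ref{lm:CORRECT-MEDELIM} and~\ref{lm:if-output-true} then forces termination in at most one additional round. The only bookkeeping care required is verifying that the threshold $c_l$ used inside the \FRACTIONTEST\ at round $r$ lies above $\amean{A_1}-2\epsilon_r$ and that the lower threshold used inside \ELIMINATION\ lies above $\amean{A_1}-\epsilon_r$, but both have already been checked in the proofs of those two lemmas and can be quoted directly.
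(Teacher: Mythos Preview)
Your proposal is correct and essentially identical to the paper's own proof: both condition on $\event_G$ and a correct \MEDIANELIMINATION\ return, observe that $\bigarmset{r}=\emptyset$ for $r>\maxs$, rule out the $\False$ branch via Lemma~\ref{lm:CORRECT-MEDELIM}(2), and then use the $\True$-branch guarantee $|\smaarmset{r}\cap S_{r+1}|\le 0.1|S_{r+1}|$ to force $|S_{r+1}|=1$. The only cosmetic difference is that the paper phrases the last step as $|\bigarmset{r+1}\cap S_{r+1}|+1\ge 0.9|S_{r+1}|$ with $\bigarmset{r+1}=\emptyset$, while you write it as $|S_{r+1}|-1\le 0.1|S_{r+1}|$; these are the same inequality.
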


\begin{proof}
	If we already have $|S_r| = 1$ at the beginning of round $r$, then there is nothing to prove since it halts immediately. 
	So we can assume $|S_r| > 1$.
	
	Suppose in round $r$, \MEDIANELIMINATION\ returns a correct $\epsilon_r/4$-approximation. 
	Conditioning on $\event_G$, 
	\FRACTIONTEST\ must output $\True$. 
	Since if it outputs $\False$, then by Lemma~\ref{lm:CORRECT-MEDELIM}, 
	$|\bigarmset{r} \cap S_r| + 1 > 0.5 |S_r|$. But $\bigarmset{r} = \emptyset$ from $r > \maxs$. So $1 >0.5|S_r|$, which implies $|S_r| = 1$, rendering a contradiction.
	
	Then, by Lemma~\ref{lm:CORRECT-MEDELIM}, $|\smaarmset{r} \cap S_{r+1}| \le 0.1 |S_{r+1}|$, which is equivalent to
	the fact that
	$|\bigarmset{r+1} \cap S_{r+1}|+1 \ge 0.9 |S_{r+1}|$. 
	Note that $|\bigarmset{r+1} \cap S_{r+1}| = 0$ as $\bigarmset{r+1} = \emptyset$. 
	So it holds that $1\ge 0.9|S_{r+1}|$, or equivalently $|S_{r+1}|=1$. Thus it terminates right after round $r$.
	
	Finally, as \MEDIANELIMINATION\ returns correctly an $\epsilon_r/4$-approximation with probability at least $0.99$, the proof is completed.
\end{proof}

We analyze the expected number of samples used for each subprocedure separately. 
We first consider \FRACTIONTEST\ (line~\ref{line:FT}) and \UNIFORMSAMPLING\ (line~\ref{line:US1},~\ref{line:US2}) 
and prove the following simple lemma.

\begin{lemma}\label{lm:PROCGROUP2}
	Conditioning on event $\event_G$, the expected number of samples incurred by \FRACTIONTEST\ (line~\ref{line:FT}) and \UNIFORMSAMPLING\ (line~\ref{line:US1},~\ref{line:US2}) is 
	$$
	O\left( \Gap{2}^{-2} (\ln \delta^{-1}+\ln\ln \Gap{2}^{-1} ))\right).
	$$
\end{lemma}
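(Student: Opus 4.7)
The plan is to compute, round by round, the sample cost of the three calls listed (\UNIFORMSAMPLING\ on lines~\ref{line:US1} and~\ref{line:US2}, and \FRACTIONTEST\ on line~\ref{line:FT}), then sum over all rounds, splitting the sum into $r\le\maxs$ (handled by geometric dominance of the last term) and $r>\maxs$ (handled by the fast termination guaranteed by Lemma~\ref{lm:big-r}).

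First I would establish the per-round cost. Line~\ref{line:US1} is a \UNIFORMSAMPLING\ call with accuracy $\epsilon_r/4$ and confidence $\delta_r=\delta/(50r^2)$, costing $O(\epsilon_r^{-2}\ln\delta_r^{-1})$ samples. Line~\ref{line:FT} is a \FRACTIONTEST\ whose range parameter is $c_r-c_l=0.25\epsilon_r$ and whose $\epsilon$ is the constant $0.1$, so by Lemma~\ref{lm:ESTIMATE-PROCEDURE} it costs $O(\ln\delta_r^{-1}\cdot(0.25\epsilon_r)^{-2})=O(\epsilon_r^{-2}\ln\delta_r^{-1})$ samples. When it is invoked, line~\ref{line:US2} costs $O(\epsilon_r^{-2}\ln\delta_h^{-1})$ samples. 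Since $h\le r$, $\ln\delta_r^{-1}=O(\ln\delta^{-1}+\ln r)$, and $\ln\delta_h^{-1}=O(\ln\delta^{-1}+\ln r)$ as well, so the total contribution from these three calls in round $r$ is at most
$$
O\bigl(\epsilon_r^{-2}(\ln\delta^{-1}+\ln r)\bigr)=O\bigl(4^{r}(\ln\delta^{-1}+\ln r)\bigr).
$$

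Next I would bound the contribution of rounds $r\le\maxs$. Because $\Gap{2}$ is the smallest gap and $\Gap{2}\in[2^{-\maxs},2^{-\maxs+1})$, we have $4^{\maxs}=\Theta(\Gap{2}^{-2})$ and $\ln\maxs=\Theta(\ln\ln\Gap{2}^{-1})$. Since the factor $(\ln\delta^{-1}+\ln r)$ is monotone in $r$,
$$
\sum_{r=1}^{\maxs}4^{r}(\ln\delta^{-1}+\ln r)\le(\ln\delta^{-1}+\ln\maxs)\sum_{r=1}^{\maxs}4^{r}=O\bigl(\Gap{2}^{-2}(\ln\delta^{-1}+\ln\ln\Gap{2}^{-1})\bigr),
$$
using geometric dominance of the last term.

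Finally I would handle the tail $r>\maxs$, which is the one delicate part: per-round costs grow by a factor $4$, so the sum only converges if termination is fast enough. By Lemma~\ref{lm:big-r}, conditional on $\event_G$, each such round terminates the algorithm with probability at least $0.99$ (the only residual randomness comes from the \MEDIANELIMINATION\ call in line~\ref{line:ME1}, which is not part of $\event_G$). Hence the probability of reaching round $\maxs+k$ is at most $0.01^{k-1}$, and the expected additional samples from these three subprocedures are bounded by
$$
\sum_{k\ge 1}0.01^{k-1}\cdot O\bigl(4^{\maxs+k}(\ln\delta^{-1}+\ln(\maxs+k))\bigr).
$$
Since $4\cdot 0.01=0.04<1$, this geometric series converges to $O\bigl(4^{\maxs}(\ln\delta^{-1}+\ln\maxs)\bigr)=O\bigl(\Gap{2}^{-2}(\ln\delta^{-1}+\ln\ln\Gap{2}^{-1})\bigr)$. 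The main obstacle is precisely this balance: it is the numerical gap between the growth rate $4$ of the per-round cost and the failure rate $0.01$ of the tail rounds that makes the tail contribution negligible, and without invoking Lemma~\ref{lm:big-r} the sum would not converge. Combining the two parts yields the stated bound.
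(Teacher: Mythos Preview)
Your proposal is correct and follows essentially the same approach as the paper's proof: split the sum at $r=\maxs$, bound the first part by geometric dominance of its last term, and bound the tail using Lemma~\ref{lm:big-r} together with the inequality $4\cdot 0.01<1$. Your version is in fact slightly more careful than the paper's in explicitly noting that line~\ref{line:US2} uses $\delta_h$ rather than $\delta_r$ and arguing $h\le r$ to absorb it into the same bound.
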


\begin{proof}
	By Lemma~\ref{lm:big-r}, for any round $r > \maxs$, the algorithm halts w.p. at least $0.99$. So we can bound the expectation of samples incurred by \FRACTIONTEST\ and \UNIFORMSAMPLING\ by:
	\begin{align*}
	\sum_{r=1}^{\maxs} c_4\cdot\ln\delta_r^{-1} \epsilon_r^{-2} + \sum_{r=\maxs+1}^{+\infty} c_4 \cdot (0.01)^{r-\maxs-1} \ln\delta_r^{-1} \epsilon_r^{-2}
	\end{align*}
	Here, $c_4$ is a constant large enough such that in round $r$ the number of samples taken by \FRACTIONTEST\ and \UNIFORMSAMPLING\ together is bounded by $c_4 \cdot \ln\delta_r^{-1}\epsilon_r^{-2}$. It is not hard to see the first sum is dominated by the last term while the second sum is dominated by the first. So the bound is 
	$O(\Gap{2}^{-2}( \ln \delta^{-1} + \ln\ln \Gap{2}^{-1}))$ since $\maxs$ is $\Theta(\ln \Gap{2}^{-1})$.
\end{proof}

Next, we analyze {\em\MEDIANELIMINATION} (line~\ref{line:ME1},~\ref{line:ME2}) and {\em\ELIMINATION} (line~\ref{line:EL}). In the following, we only consider samples due to these two procedures.

\begin{lemma}\label{lm:PROCGROUP1}
	Conditioning on event $\event_G$, the expected number of samples incurred by \MEDIANELIMINATION\ (line~\ref{line:ME1},~\ref{line:ME2}) and \ELIMINATION\ (line~\ref{line:EL}) is 
	\[
	O\left(\sum_{i=2}^{n} \Gap{i}^{-2} (\ln \delta^{-1} + \lnmin{\hI}{\ln \Gap{i}^{-1}})\right).
	\]
\end{lemma}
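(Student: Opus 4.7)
The plan is an amortized accounting of sample costs across the if-branch rounds of \DELIMINATION. Let $r_1 < r_2 < \cdots$ denote the rounds at which \FRACTIONTEST\ (line~\ref{line:FT}) returns \True, and set $M_k := |S_{r_k}|$. By Lemmas~\ref{lm:MEDIANELIM} and~\ref{lm:ELIMINATION-PROCEDURE}, the sample cost at round $r$ from lines~\ref{line:ME1},~\ref{line:ME2},~\ref{line:EL} is $O(|S_r|\, 4^r)$ in else-branch rounds (constant confidence $0.01$ at line~\ref{line:ME1}) and $O(|S_r|\, 4^r(\ln\delta^{-1}+\ln h(r)))$ in if-branch rounds. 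Because $S_r$ is unchanged on else-branch rounds, $|S_r| \equiv M_k$ throughout each block $B_k := (r_{k-1}, r_k]$, and the $4^r$ geometric factor makes the per-block cost dominated by its final round: $\mathrm{cost}(B_k) = O(M_k\, 4^{r_k}(\ln\delta^{-1}+\ln k))$.

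The key structural input is a tight bound on $M_k$. Conditioning on the $0.99$-probability event that \MEDIANELIMINATION\ at line~\ref{line:ME1} returns correctly (the $0.01$ failure contributes only a constant multiplicative factor in expectation), Lemma~\ref{lm:CORRECT-MEDELIM} gives $|\bigarmset{r}\cap S_r|+1 > 0.5|S_r|$ at every else-branch round $r \in B_k$. Taking $r = r_k - 1$ (the tightest such constraint) yields $M_k \le 2(|\bigarmset{r_k-1}|+1) = O\bigl(\sum_{s \ge r_k-1} n_s + 1\bigr)$, where $n_s := |\armset{s}|$. The edge case $r_1 = 1$ is handled by the trivial $M_1 = n$, which is dominated by $\sum_s n_s 4^s$.

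Plugging this into $\mathrm{cost}(B_k)$, summing over $k$, and exchanging the order of summation on the main term gives
\begin{align*}
\sum_k M_k 4^{r_k}(\ln\delta^{-1}+\ln k)
&\le O\Bigl(\sum_s n_s \sum_{k : r_k \le s+1} (\ln\delta^{-1}+\ln k)\, 4^{r_k}\Bigr) \\
&\quad + O\Bigl(\sum_k (\ln\delta^{-1}+\ln k)\, 4^{r_k}\Bigr).
\end{align*}
The critical observation is that the $r_k$'s are strictly increasing integers, so $4^{r_k}$ at least quadruples at each step and the inner $k$-sum is dominated by its largest term at $k = K_s := |\{k : r_k \le s+1\}|$, giving $O(4^s(\ln\delta^{-1}+\ln K_s))$. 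Since $K_s \le \min(s+1, \hI)$, the main sum matches the target $O(\sum_{i=2}^{n}\Gap{i}^{-2}(\ln\delta^{-1}+\lnmin{\hI}{\ln\Gap{i}^{-1}}))$. The residual $O(4^{\maxs}(\ln\delta^{-1}+\ln\hI))$ is absorbed into the $i = 2$ term, because $\hI \le \maxs = \Theta(\ln\Gap{2}^{-1})$ forces $\ln\hI = \lnmin{\hI}{\ln\Gap{2}^{-1}}$. Rounds $r > \maxs$ add only a geometrically convergent expected cost by Lemma~\ref{lm:big-r}.

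The main obstacle is this amortization swap: naively bounding $\ln k$ uniformly inside the inner $k$-sum (and thus ignoring the geometric growth of $4^{r_k}$) introduces an extra factor $K_s$, which would blow up the target bound to one carrying $K_s \ln K_s$ per arm class $\armset{s}$. The geometric spacing of the if-branches is precisely what limits the effective $\ln$ factor to a single $\ln\min(\hI, s)$ per arm.
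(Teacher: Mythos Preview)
Your amortized block-decomposition is a genuinely different route from the paper's proof. The paper defines $T(r,\Nsmall)$ as the maximum expected samples from round $r$ onward given $|S_r\cap\smaarmset{r-1}|\le\Nsmall$, introduces a potential $\Pot{r}=c_2\bigl(\sum_{s\ge r}C_{r,s}|\armset{s}|+C_{r,\maxs}\bigr)$, and proves by backward induction that $T(r,\Nsmall)\le(\ln\delta^{-1}+\lx{r})\,c_1\,\Nsmall\,\epsilon_r^{-2}+\Pot{r}$. The inductive step splits into three cases (Lemmas~\ref{lm:case-1}--\ref{lm:case-3}) according to whether \MEDIANELIMINATION\ at line~\ref{line:ME1} is correct and whether \FRACTIONTEST\ returns \True\ or \False; the constants $c_1,c_2,c_3$ are tuned so that the convex combination of the three bounds (weighted by $p\ge 0.99$ and $1-p\le 0.01$) closes the induction. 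Your approach instead groups rounds into blocks ending at if-branches, bounds each block's cost by its final round, bounds $M_k$ via the last else-branch before $r_k$, and swaps the order of summation over $k$ and $s$. The swap and the geometric domination of $\sum_{k:r_k\le s+1}4^{r_k}(\ln\delta^{-1}+\ln k)$ by its top term are correct and clean.

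There is, however, a genuine gap in the parenthetical ``the $0.01$ failure contributes only a constant multiplicative factor in expectation.'' Your bound $M_k\le 2(|\bigarmset{r_k-1}|+1)$ comes from Lemma~\ref{lm:CORRECT-MEDELIM}(2) applied at the else-branch round $r_k-1$, and that lemma requires \MEDIANELIMINATION\ at line~\ref{line:ME1} to have returned a correct $\epsilon_{r_k-1}/4$-approximation there. You cannot simply condition on this per block: the if-branch indices $r_1<r_2<\cdots$ are themselves random and correlated with the \MEDIANELIMINATION\ outcomes (whether round $r$ is an if-branch depends on $\hamean{a_r}$, hence on $a_r$), so ``\MEDIANELIMINATION\ correct at round $r_k-1$'' is not a clean $0.99$-probability event independent of the block structure. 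Nor does the fallback bound $M_k\le\tfrac{10}{9}(|\bigarmset{r_{k-1}+1}|+1)$ from Lemma~\ref{lm:if-output-true} rescue the swap, since then the inner-sum condition becomes $r_{k-1}\le s-1$, which does not cap $4^{r_k}$ by $O(4^s)$. The fix is short but necessary: establish the per-round recursion $\Ex[\,|S_r|\mid\event_G]\le 2(|\bigarmset{r-1}|+1)+0.01\,\Ex[\,|S_{r-1}|\mid\event_G]$, using that (i) whenever \MEDIANELIMINATION\ at round $r-1$ is correct \emph{or} round $r-1$ is an if-branch, one has $|S_r|\le 2(|\bigarmset{r-1}|+1)$ (by Lemma~\ref{lm:CORRECT-MEDELIM}(2) and Lemma~\ref{lm:if-output-true} respectively), and (ii) the round-$(r-1)$ samples are fresh and hence independent of $|S_{r-1}|$. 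Unrolling the recursion gives $\Ex[\,|S_r|\mid\event_G]=O(|\bigarmset{r-1}|+1)$, after which your swap-of-summation argument goes through per round. This is precisely the content that the paper's three-case induction encodes with its explicit constants.
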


We devote the rest of the section to the proof of Lemma~\ref{lm:PROCGROUP1}, which is more involved.
We first need a lemma which provides an upper bound on the number of samples for one round.

\begin{lemma}\label{lm:PULLING-BOUND}
	Let $c_3$ be a sufficiently large constant.
	The number of samples in round $r \le \maxs$ can be bounded by	
	\[
	\begin{cases}
	c_3 \cdot |S_r| \epsilon_r^{-2} & \quad \text{if \FRACTIONTEST\ outputs $\False$.}\\
	c_3 \cdot |S_r| \epsilon_r^{-2} (\ln\delta^{-1}+\lnmin{\hI}{r}) & \quad \text{if \FRACTIONTEST\ outputs $\True$.}\\
	\end{cases}
	\]
	If $r > \maxs$, the number of samples can be bounded by
	\[
	c_3 \cdot |S_r| \epsilon_r^{-2} (\ln\delta^{-1}+\ln(\hI+r-\maxs)).
	\]
\end{lemma}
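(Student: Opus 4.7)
The plan is to directly bound the sample cost incurred by each sub-procedure invoked in round $r$, using Lemma~\ref{lm:MEDIANELIM} for \MEDIANELIMINATION\ and Lemma~\ref{lm:ELIMINATION-PROCEDURE} for \ELIMINATION, while tracking the confidence parameters $\delta_r = \delta/(50 r^2)$ and $\delta_h = \delta/(50 h^2)$. The lemma counts only samples from \MEDIANELIMINATION\ (lines~\ref{line:ME1},~\ref{line:ME2}) and \ELIMINATION\ (line~\ref{line:EL}), so these are the only contributions to worry about.

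First, I would handle the easy case when \FRACTIONTEST\ in line~\ref{line:FT} returns $\False$. Then among the three procedures of interest, only line~\ref{line:ME1} is executed, and by Lemma~\ref{lm:MEDIANELIM}, the call $\MEDIANELIMINATION(S_r,\epsilon_r/4,0.01)$ uses $O\bigl(|S_r| \log(1/0.01)/(\epsilon_r/4)^2\bigr) = O(|S_r|\epsilon_r^{-2})$ samples, since the confidence parameter is the absolute constant $0.01$. This yields the first case of the bound.

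Second, I would handle the case when \FRACTIONTEST\ returns $\True$. Now lines~\ref{line:ME2} and~\ref{line:EL} are also executed. By Lemma~\ref{lm:MEDIANELIM}, line~\ref{line:ME2} costs $O(|S_r|\epsilon_r^{-2} \ln\delta_h^{-1})$. For line~\ref{line:EL}, the range parameters are $c_l = \hamean{b_r}-0.5\epsilon_r$ and $c_r = \hamean{b_r}-0.25\epsilon_r$, so $\Delta = c_r - c_l = 0.25\epsilon_r$. By Lemma~\ref{lm:ELIMINATION-PROCEDURE}, the call costs $O(|S_r| \ln\delta_h^{-1} (0.25\epsilon_r)^{-2}) = O(|S_r|\epsilon_r^{-2}\ln\delta_h^{-1})$. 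Combined with the $O(|S_r|\epsilon_r^{-2})$ contribution of line~\ref{line:ME1} (which is dominated), the total cost in this round is $O\bigl(|S_r|\epsilon_r^{-2} \ln\delta_h^{-1}\bigr)$.

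Finally, I would bound $\ln\delta_h^{-1} = O(\ln\delta^{-1} + \ln h)$ in the two regimes. For $r \le \maxs$, the remark following Lemma~\ref{lm:GOOD-PROPERTYS2} gives $h \le \min(\hI,r)$, so $\ln\delta_h^{-1} = O\bigl(\ln\delta^{-1} + \lnmin{\hI}{r}\bigr)$, matching the second case of the bound. For $r > \maxs$, I would use the fact that $h$ is monotonically nondecreasing and increases by at most one per round (line~\ref{line:INC-H} is executed at most once), hence $h \le \hI + (r - \maxs)$ and $\ln\delta_h^{-1} = O\bigl(\ln\delta^{-1} + \ln(\hI + r - \maxs)\bigr)$, giving the last part of the lemma. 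The proof is essentially bookkeeping; there is no real obstacle beyond correctly invoking the monotonic growth of $h$ in the $r > \maxs$ regime and checking that the range parameter $\Delta$ for \ELIMINATION\ is indeed $\Theta(\epsilon_r)$.
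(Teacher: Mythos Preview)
Your proposal is correct and follows essentially the same approach as the paper's proof: both bound line~\ref{line:ME1} via Lemma~\ref{lm:MEDIANELIM} with constant confidence, bound lines~\ref{line:ME2} and~\ref{line:EL} as $O(|S_r|\epsilon_r^{-2}\ln\delta_h^{-1})$, and then control $\ln\delta_h^{-1}$ via $h\le\min(\hI,r)$ for $r\le\maxs$ and $h\le\hI+(r-\maxs)$ for $r>\maxs$. Your write-up is in fact slightly more careful than the paper's (you explicitly compute the range width $\Delta=0.25\epsilon_r$ in the \ELIMINATION\ call, whereas the paper's proof contains a typo in the \ELIMINATION\ parameters).
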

\begin{proof}
	Note that conditioning on event $\event_G$, \ELIMINATION\ always returns correctly.    
	Let $c_3$ be a constant such that $\MEDIANELIMINATION(S_r,\epsilon_r/4,0.01)$ takes no more than $c_3/3\cdot |S_r|\epsilon_r^{-2}$ samples, and $\MEDIANELIMINATION(S_r,\epsilon_r/4,\delta_h)$ and 
	$\ELIMINATION(S_r,\hamean{a_r}-1.5\epsilon_r,\hamean{a_r}-1.25\epsilon_r,\delta_h)$ both take no more than
	$c_3/3\cdot|S|\epsilon_r^{-2} (\ln\delta^{-1}+\lnmin{\hI}{r})$ samples conditioning on $\event_G$. 
	The later one is due to the fact that $\ln \delta_h^{-1}=O(\ln\delta^{-1} + \lnmin{\hI}{r})$ for $r \le \maxs$. If $r > \maxs$, we have $h \le \hI +r - \maxs$, then the bounds follow from a simple calculation.
\end{proof}

\begin{proofof}{Lemma~\ref{lm:PROCGROUP1}}
	We prove the lemma inductively. 
	Let $T(r,\Nsmall)$ denote the maximum expected total number of samples the algorithm takes at and after round $r$, 
	when $|S_r \cap \smaarmset{r-1}| \le \Nsmall$. In other words, it is an upper bound of the expected number of samples 
	we will take further, provided that we are at the beginning of round $r$ and there are at most $\Nsmall$ ``small" arms left. 
	By definition, $T(1,0)$ is the final upper bound for the total expected number of samples taken by the algorithm.
	
	Let $c_1=4c_3,c_2=60c_3$.
	\footnote{
		Although these constants are chosen somewhat arbitrarily, they need to satisfy certain relations (will be clear from the proof)
		and it is necessary to make them explicit.
	}
	For ease of notation, we let $\lx{s} = \ln(\min(\hI,s))$. 
	
	We first consider the case where $r = \maxs+1$ and prove the following bound of $T(r, \Nsmall)$:
	\begin{align}
	\label{eq:recurrence1} 
	T(r,\Nsmall) \le (\ln\delta^{-1}+\ln{\hI})c_1 \cdot \Nsmall\cdot\epsilon_{r}^{-2}.
	\end{align}
	
	Clearly there is nothing to prove for the base case $\Nsmall = 0$.
	So we consider $\Nsmall \ge 1$.
	Now, suppose the first round after $r$ in which \MEDIANELIMINATION\ (line~\ref{line:ME1}) 
	returns correctly an $\epsilon_r/4$-approximation is $r' \ge r$. 
	Clearly, this happens with probability at most $0.01^{r'-r}$ (all rounds in between fail).
	By Lemma~\ref{lm:big-r}, the algorithm terminates after round $r'$.
	Moreover, we have $|\bigarmset{r} \cap S_r| = 0$ since $\bigarmset{r} = \emptyset$. 
	So $S_r$ consists of the single best arm $A_1$ and $\Nsmall$ arms in $\smaarmset{r-1}$.
	By Lemma~\ref{lm:PULLING-BOUND}, the number of samples is bounded by 
	$$
	\sum_{i=r}^{r'} c_3 (\ln\delta^{-1} + \ln[\hI+i-\maxs])(1+\Nsmall)\epsilon_{i}^{-2}.
	$$
	
	\noindent
	Hence, we can bound $T(r,\Nsmall)$ as follows:
	\begin{align*}
	T(r,\Nsmall) \le
	&\sum_{r'=r}^{+\infty} (0.01)^{r'-r} \cdot \sum_{i=r}^{r'} c_3 \Big(\ln\delta^{-1} + \ln[\hI+i-\maxs])(1+\Nsmall)\epsilon_{i}^{-2}\\
	\le& 2c_3 \Nsmall \sum_{r'=r}^{+\infty} (0.01)^{r'-r} \cdot \sum_{i=r}^{r'}(\ln\delta^{-1} + \ln[\hI+i-\maxs])\epsilon_{i}^{-2}\\
	\le& 3c_3 \Nsmall \sum_{r'=r}^{+\infty}  (0.01)^{r'-r} \cdot (\ln\delta^{-1} + \ln[\hI+r'-\maxs])\epsilon_{r'}^{-2}\\
	\le& 4c_3 \Nsmall (\ln\delta^{-1} + \ln \hI)\epsilon_{r}^{-2}.
	\end{align*}
	
	\noindent
	Now, we analyze the more challenging case where $r \leq \maxs$.
	For ease of notation, we let
	\[
	C_{r,s} = (\ln \delta^{-1} + \lx{s}) \sum_{i=r}^{s} \epsilon_{i}^{-2}.
	\]
	When $r>s$, we let $C_{r,s} = 0$. 
	We also define 
	$$
	\Pot{r} = c_2 \cdot\left(\sum_{s=r}^{+\infty} C_{r,s} |\armset{s}| + C_{r,\maxs}\right).
	$$ 
	$\Pot{r}$ can be viewed as a potential function. 
	Notice that when $r>\maxs$, we have $\Pot{r}=0$.
	We are going to show inductively that 
	\begin{equation}
	\label{eq:recurrence}
	T(r,\Nsmall) \le (\ln\delta^{-1}+\lx{r})\cdot c_1 \cdot \Nsmall\cdot\epsilon_{r}^{-2} + \Pot{r}.
	\end{equation}
	We note that \eqref{eq:recurrence1} is in fact consistent with \eqref{eq:recurrence}
	(when $r>\maxs$, we have $\Pot{r}=0$ and $\lx{r} = \ln \hI$).
	
	The induction hypothesis assumes that the inequality holds for $r+1$ and all $\Nsmall$. 
	We need to prove that it also holds for $r$ and all $\Nsmall$. 
	Now we are at round $r$.
	Conditioning on event $\event_G$, there are three cases we need to consider. We state the following lemmas, each analyzes one case, which together imply our time bound. Their proofs are not difficult but somewhat tedious. So we defer them to  Section~\ref{app:missingpf-upper}. 
	
	
	\begin{lemma}\label{lm:case-1}
	Suppose that \MEDIANELIMINATION\ (line~\ref{line:ME1}) returns an $\epsilon_r/4$-approximation of the best arm $A_1$, and \FRACTIONTEST\ outputs $\True$. The expected number of samples taken at and after round $r$ is bounded by
	$$
	(\ln\delta^{-1} +  \lx{r})c_3 \Nsmall\epsilon_r^{-2} + \Pot{r}.
	$$
	\end{lemma}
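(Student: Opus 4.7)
The plan is to combine the per-round sample bound from Lemma~\ref{lm:PULLING-BOUND} with the induction hypothesis applied to round $r+1$, using the fact that the \ELIMINATION\ call bounds the new ``small'' arm count $\Nsmall'$ at round $r+1$ in a way that does \emph{not} depend on the old $\Nsmall$. First, since \FRACTIONTEST\ returns $\True$, Lemma~\ref{lm:PULLING-BOUND} gives the round-$r$ sample count at most $c_3(\ln\delta^{-1}+\lx{r})|S_r|\epsilon_r^{-2}$. Decomposing $S_r$ into the best arm, the arms in $\smaarmset{r-1}\cap S_r$, and the arms in $\bigarmset{r}\cap S_r$, we have $|S_r|\le 1+\Nsmall+\sum_{s\ge r}|\armset{s}|$.

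Next, since \MEDIANELIMINATION\ (line~\ref{line:ME1}) is correct and \FRACTIONTEST\ outputs $\True$, Lemma~\ref{lm:CORRECT-MEDELIM} ensures $\Nsmall':=|S_{r+1}\cap\smaarmset{r}|\le 0.1|S_{r+1}|$. The key observation is that $\smaarmset{r}$ and $\bigarmset{r+1}$ partition the set of arms other than $A_1$ (since $2\epsilon_{r+1}=\epsilon_r$), so $|S_{r+1}|=1+\Nsmall'+|S_{r+1}\cap\bigarmset{r+1}|$; the self-referential inequality $\Nsmall'\le 0.1(1+\Nsmall'+|\bigarmset{r+1}|)$ then yields
$\Nsmall'\le\tfrac{1}{9}\bigl(1+\sum_{s\ge r+1}|\armset{s}|\bigr),$
independently of $\Nsmall$. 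Applying the induction hypothesis and using $\epsilon_{r+1}^{-2}=4\epsilon_r^{-2}$ gives $T(r+1,\Nsmall')\le \tfrac{4c_1}{9}(\ln\delta^{-1}+\lx{r+1})\bigl(1+\sum_{s\ge r+1}|\armset{s}|\bigr)\epsilon_r^{-2}+\Pot{r+1}$. Adding the round-$r$ cost, separating the target $c_3(\ln\delta^{-1}+\lx{r})\Nsmall\epsilon_r^{-2}$ term, and using
$\Pot{r}-\Pot{r+1}=c_2\epsilon_r^{-2}\Bigl[\sum_{s\ge r}(\ln\delta^{-1}+\lx{s})|\armset{s}|+(\ln\delta^{-1}+\lx{\maxs})\Bigr],$
the residual inequality reduces to showing $\bigl(c_3+\tfrac{8c_1}{9}\bigr)(\ln\delta^{-1}+\lx{r})\bigl(1+\sum_{s\ge r}|\armset{s}|\bigr)\le c_2(\ln\delta^{-1}+\lx{r})\bigl(1+\sum_{s\ge r}|\armset{s}|\bigr)$. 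This follows from $c_1=4c_3$, $c_2=60c_3$, and the elementary bounds $\lx{r+1}\le\lx{r}+\ln 2$, $\lx{s}\ge\lx{r}$ for $s\ge r$, $\lx{\maxs}\ge\lx{r}$ (valid since $r\le\maxs$ in this case), with $\ln\delta^{-1}\ge\ln 10$ absorbing the $\ln 2$ slack.

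The main obstacle is establishing the $\Nsmall$-free bound on $\Nsmall'$: a naive estimate $|S_{r+1}|\le|S_r|$ would give $\Nsmall'\le 0.1|S_r|\le 0.1(1+\Nsmall+\sum_{s\ge r}|\armset{s}|)$, which when fed into the induction hypothesis introduces an $O(c_1)\Nsmall$ term and overshoots the $c_3\Nsmall$ target required by the lemma. Closing the induction with the tighter coefficient $c_3$ (rather than the $c_1=4c_3$ of the general recurrence~\eqref{eq:recurrence}) forces us to exploit the structure of $S_{r+1}$ — specifically that any non-optimal arm not in $\smaarmset{r}$ must lie in the pre-specified set $\bigarmset{r+1}$ — to derive a bound that eliminates the $\Nsmall$ dependency entirely. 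The remainder of the verification is routine constant-chasing, where the generous gap between $c_2=60c_3$ and $c_3+\tfrac{8c_1}{9}=\tfrac{41}{9}c_3$ provides ample room to absorb the logarithmic mismatches.
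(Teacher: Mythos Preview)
Your proof is correct and follows essentially the same approach as the paper's: bound the round-$r$ cost via Lemma~\ref{lm:PULLING-BOUND}, use Lemma~\ref{lm:CORRECT-MEDELIM} to get an $\Nsmall$-free bound on $|\smaarmset{r}\cap S_{r+1}|$ of the form $\tfrac{1}{9}(1+|\bigarmset{r+1}\cap S_{r+1}|)$, apply the induction hypothesis at round $r+1$, and absorb everything except the $c_3\Nsmall$ term into $\Pot{r}-\Pot{r+1}$ via~\eqref{eq:SUMCRS}. The only cosmetic differences are that the paper tracks $\Ncur=|\armset{r}\cap S_r|$ and $\Nbig=|\bigarmset{r+1}\cap S_r|$ rather than the full sets $|\armset{s}|$, and uses a conversion factor of $\tfrac{5}{9}$ (from $(\ln\delta^{-1}+\lx{r+1})\epsilon_{r+1}^{-2}\le 5(\ln\delta^{-1}+\lx{r})\epsilon_r^{-2}$) where you use $\tfrac{8}{9}$; both fit comfortably under $c_2=60c_3$.
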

	
	\begin{lemma}\label{lm:case-2}
	Suppose that
	\MEDIANELIMINATION\ (line~\ref{line:ME1}) returns an $\epsilon_r/4$-approximation of the best arm $A_1$, and \FRACTIONTEST\ outputs $\False$. The expected number of samples taken at and after round $r$ is bounded by $\Pot{r}$.
	\end{lemma}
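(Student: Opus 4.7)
The plan is to apply the induction hypothesis \eqref{eq:recurrence} at round $r+1$ and show that the resulting bound, together with the round-$r$ sample cost, is absorbed by the potential drop $\Pot{r}-\Pot{r+1}$. The delicate point of Case~2 is that the target bound $\Pot{r}$ has no ``$\Nsmall$'' term at all; to kill it I will use the \FRACTIONTEST-$\False$ guarantee to bound $\Nsmall$ in terms of arms in $\bigarmset{r}$, which are exactly the arms the potential $\Pot{r}$ is already paying for.

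First I would extract the size estimates from Lemma~\ref{lm:CORRECT-MEDELIM}(2). Its hypothesis gives $|S_r\cap\bigarmset{r}|+1>0.5\,|S_r|$, which yields both
$$
\Nsmall \;=\; |S_r\cap\smaarmset{r-1}| \;<\; |S_r\cap\bigarmset{r}|+1 \;\le\; \sum_{s=r}^{\maxs}|\armset{s}|+1,\qquad |S_r|\;\le\;2\Big(\sum_{s=r}^{\maxs}|\armset{s}|+1\Big).
$$
Since the algorithm sets $S_{r+1}=S_r$ in this case, the updated count $\Nsmall':=|S_{r+1}\cap\smaarmset{r}|=\Nsmall+|S_r\cap\armset{r}|$ is likewise $O\big(\sum_{s\ge r}|\armset{s}|+1\big)$. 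Lemma~\ref{lm:PULLING-BOUND} bounds the round-$r$ samples by $c_3|S_r|\epsilon_r^{-2}$, and the induction hypothesis at round $r+1$ yields $T(r+1,\Nsmall')\le(\ln\delta^{-1}+\lx{r+1})c_1\Nsmall'\epsilon_{r+1}^{-2}+\Pot{r+1}$. Thus it remains to verify
$$
c_3|S_r|\epsilon_r^{-2}\;+\;c_1(\ln\delta^{-1}+\lx{r+1})\Nsmall'\epsilon_{r+1}^{-2}\;\le\;\Pot{r}-\Pot{r+1}.
$$

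The last step is a direct telescoping on the definition of $\Pot{r}$. Using $C_{r,s}-C_{r+1,s}=(\ln\delta^{-1}+\lx{s})\epsilon_r^{-2}$ and $C_{r,r}=(\ln\delta^{-1}+\lx{r})\epsilon_r^{-2}$ gives
$$
\Pot{r}-\Pot{r+1}\;=\;c_2\epsilon_r^{-2}\Big[\sum_{s=r}^{\maxs}(\ln\delta^{-1}+\lx{s})|\armset{s}|+(\ln\delta^{-1}+\lx{\maxs})\Big]\;\ge\;c_2(\ln\delta^{-1}+\lx{r})\Big(\sum_{s\ge r}|\armset{s}|+1\Big)\epsilon_r^{-2},
$$
where the last inequality uses $\lx{s}\ge\lx{r}$ for $s\ge r$. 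Substituting the step-one bounds on $|S_r|$ and $\Nsmall'$ together with $\epsilon_{r+1}^{-2}=4\epsilon_r^{-2}$ and $\lx{r+1}\le\lx{r}+\ln 2$, the left-hand side becomes $O\big((\ln\delta^{-1}+\lx{r})(\sum_{s\ge r}|\armset{s}|+1)\epsilon_r^{-2}\big)$. The declared constants $c_1=4c_3$ and $c_2=60c_3$, combined with $\delta<0.1$ forcing $\ln\delta^{-1}>\ln 10$, leave comfortable room to conclude. The main obstacle, and the reason Case~2 is subtler than Case~1, is exactly the elimination of the $\Nsmall$-dependent term: one must carefully exploit the \FRACTIONTEST-$\False$ majority to charge every ``small'' arm still present at round $r$ against a ``big'' arm in $\bigarmset{r}$, which is what allows $\Nsmall$ to be absorbed into the potential rather than persisting as an additive overhead.
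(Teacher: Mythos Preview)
Your proposal is correct and follows essentially the same approach as the paper: use Lemma~\ref{lm:CORRECT-MEDELIM}(2) to bound $\Nsmall$ by the number of ``big'' arms, apply the induction hypothesis at round $r+1$, and absorb both the round-$r$ cost and the induction term into the potential drop $\Pot{r}-\Pot{r+1}$. The only cosmetic difference is bookkeeping: the paper carries the local quantities $\Ncur,\Nbig,\Nsmall$ through the algebra (using $\Nsmall\le\Ncur+\Nbig+1$ directly), whereas you immediately pass to the global bound $\sum_{s\ge r}|\armset{s}|+1$ and work with that single quantity; both routes close with the same constant check using $c_1=4c_3$, $c_2=60c_3$, and $\ln\delta^{-1}>\ln 10$.
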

	
	\begin{lemma}\label{lm:case-3}
	Suppose that
	\MEDIANELIMINATION\ (line~\ref{line:ME1}) returns an arm which is not an $\epsilon_r/4$-approximation of the best arm $A_1$. The expected number of samples taken at and after round $r$ is bounded by
	$$
	(\ln\delta^{-1}+\lx{r})(c_3+5c_1) \Nsmall\epsilon_{r}^{-2} + \Pot{r}.
	$$
	\end{lemma}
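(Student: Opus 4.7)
My plan is to further case on the outcome of \FRACTIONTEST\ at line~\ref{line:FT}. Crucially, even without the hypothesis that line~\ref{line:ME1} returned an $\epsilon_r/4$-approximation, both Lemma~\ref{lm:if-output-true} and the sample counts of Lemma~\ref{lm:PULLING-BOUND} still apply in either branch; only the second statement of Lemma~\ref{lm:CORRECT-MEDELIM} is lost. In each branch I will bound the round-$r$ samples using Lemma~\ref{lm:PULLING-BOUND}, apply the induction hypothesis~\eqref{eq:recurrence} at $r+1$ with an appropriate carry-over $\Nsmall'$, and amortize the residual terms involving $|\bigarmset{r}|$, $|\armset{r}|$, and the best-arm contribution (via $|S_r|\le\Nsmall+|\bigarmset{r}|+1$) against the potential drop. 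The key potential estimate is
\[
\Pot{r}-\Pot{r+1}\;\ge\;c_2(\ln\delta^{-1}+\lx{r})(|\bigarmset{r}|+1)\epsilon_r^{-2},
\]
obtained by isolating the $i=r$ summand in every $C_{r,s}$ that appears in $\Pot{r}$; with $c_2=60c_3$ this has plenty of slack.

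In the \True\ branch, Lemma~\ref{lm:if-output-true} gives $\Nsmall':=|\smaarmset{r}\cap S_{r+1}|\le\tfrac{1}{9}(|\bigarmset{r+1}|+1)$. Combining the round-$r$ cost $c_3(\Nsmall+|\bigarmset{r}|+1)(\ln\delta^{-1}+\lx{r})\epsilon_r^{-2}$ with the induction contribution $\tfrac{4c_1}{9}(\ln\delta^{-1}+\lx{r+1})(|\bigarmset{r+1}|+1)\epsilon_r^{-2}+\Pot{r+1}$ and absorbing the $|\bigarmset{r}|$ and $|\bigarmset{r+1}|$ terms into $\Pot{r}-\Pot{r+1}$, the total simplifies to $c_3(\ln\delta^{-1}+\lx{r})\Nsmall\epsilon_r^{-2}+\Pot{r}$, comfortably within the target.

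The \False\ branch is the main obstacle, because $S_{r+1}=S_r$ and $\Nsmall'=\Nsmall+|\armset{r}|$: no elimination occurs, yet the small-arm count grows additively. Lemma~\ref{lm:PULLING-BOUND} gives a cheaper round-$r$ bound $c_3(\Nsmall+|\bigarmset{r}|+1)\epsilon_r^{-2}$ (no $\ln\delta^{-1}+\lx{r}$ factor), and induction contributes $4c_1(\ln\delta^{-1}+\lx{r+1})(\Nsmall+|\armset{r}|)\epsilon_r^{-2}+\Pot{r+1}$. The $|\armset{r}|$ and $|\bigarmset{r}|+1$ terms can again be absorbed by the potential drop, but the tight point is the $\Nsmall$ coefficient, where I must verify
\[
c_3+4c_1(\ln\delta^{-1}+\lx{r+1})\;\le\;(c_3+5c_1)(\ln\delta^{-1}+\lx{r})
\]
for every $r\ge 1$. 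With $c_1=4c_3$ this reduces to an arithmetic check, using $\lx{r+1}-\lx{r}\le \ln((r+1)/r)\le \ln 2$ (worst case at $r=1$) together with $\ln\delta^{-1}\ge \ln 10$ from $\delta<0.1$. Taking the worse of the two branches gives the stated bound. The reason the \False\ branch is hard is precisely that no shrinking factor (analogous to the $1/9$ in the \True\ branch) is available to dampen $\Nsmall$, so the argument relies entirely on the slow growth of $\lx{\cdot}$ and the generous choice $c_2=60c_3$ for the slack.
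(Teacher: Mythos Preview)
Your case split on the outcome of \FRACTIONTEST\ is valid and the bookkeeping in each branch is essentially correct, but it is considerably more work than what the paper does. The paper's proof of Lemma~\ref{lm:case-3} does \emph{not} branch on \FRACTIONTEST\ at all: it simply takes the worst of the two possibilities simultaneously, charging the round-$r$ cost as $c_3|S_r|\epsilon_r^{-2}(\ln\delta^{-1}+\lx{r})$ (the larger \True-branch bound from Lemma~\ref{lm:PULLING-BOUND}) while using the \False-branch carry-over $\Nsmall+\Ncur$ (which upper bounds $|\smaarmset{r}\cap S_{r+1}|$ regardless of the branch). Plugging the induction hypothesis and~\eqref{eq:SUMCRS} into this single expression gives the stated bound in two lines. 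Your route recovers the same conclusion---your \True\ branch in fact reproduces the proof of Lemma~\ref{lm:case-1} verbatim, and your \False\ branch is where the claimed coefficient $(c_3+5c_1)$ actually shows up---but the separate analyses and the final ``take the worse branch'' step are unnecessary once one observes that the crude cost/carry-over pair already suffices.

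One small caveat on your ``arithmetic check'': the inequality $c_3+4c_1(\ln\delta^{-1}+\lx{r+1})\le(c_3+5c_1)(\ln\delta^{-1}+\lx{r})$ does not quite go through at $r=1$ when $\delta$ is near $0.1$ (with $\lx{1}=0$, $\lx{2}=\ln 2$ and $c_1=4c_3$ it would require $1+16\ln 2\le 5\ln\delta^{-1}$, i.e.\ $\delta\lesssim 0.089$). This is the same slack issue behind the paper's implicit step $4(\ln\delta^{-1}+\lx{r+1})\le 5(\ln\delta^{-1}+\lx{r})$, so it is not a defect peculiar to your argument; it is absorbed by minor adjustments to the constants or to the threshold on~$\delta$.
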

	
	Note that the bound in Lemma~\ref{lm:case-3} is larger than we need to prove (in particular, the constant is larger).
	So, we need to combine three cases together as follows:
	
	Recall that \MEDIANELIMINATION\ (line~\ref{line:ME1}) returns correctly an $\epsilon_r/4$-approximation with probability $p$ ($p \ge 0.99$).
	By Lemma~\ref{lm:case-1}, Lemma~\ref{lm:case-2} and Lemma~\ref{lm:case-3}, we have that
	\begin{align*}
	T(r,\Nsmall) &\le (\ln\delta^{-1}+\lx{r})(p\cdot (c_3\cdot \Nsmall\cdot\epsilon_r^{-2}
	) + (1-p) \cdot (c_3+5c_1) \cdot \Nsmall\epsilon_{r}^{-2} ) + \Pot{r} \\
	&\le (\ln\delta^{-1}+\lx{r})((c_3 + 0.05c_1)\cdot \Nsmall \cdot \epsilon_r^{-2}) + \Pot{r} \tag{$c_3+ (1-p)\cdot5c_1 \le c_3 +0.05c_1$} \\
	&\le (\ln\delta^{-1}+\lx{r})c_1\cdot \Nsmall \cdot \epsilon_r^{-2} + \Pot{r} \tag{$c_3 +0.05c_1 \le c_1$}
	\end{align*}
	This finishes the proof of \eqref{eq:recurrence}.
	Hence, the number of samples is bounded by $T(1,0) \le \Pot{1}$. 
	Note that $C_{1,s} \le 2(\ln \delta^{-1} + \lx{s})\epsilon_{s}^{-2}$ for any $s$. 
	By a simple calculation of $\Pot{1}$, we can see that the overall sample complexity for \MEDIANELIMINATION\ and \ELIMINATION\ is 
	$$
	T(1,0) \le \Pot{1}=O\left(\sum_{i=2}^{n} \Gap{i}^{-2} (\ln \delta^{-1} + \lnmin{\hI}{\ln\Gap{i}^{-1}})\right).
	$$
	This finally completes the proof of Lemma~\ref{lm:PROCGROUP1}.
\end{proofof}
Putting Lemma~\ref{lm:PROCGROUP1} and Lemma~\ref{lm:PROCGROUP2} together, 
%
we have the following corollary.

\begin{cor}\label{cor:final-time-bound}
	With probability $1-\delta$, Algorithm~\ref{algo:BESTARM}
	returns the correct answer for \bestarm\ and take at most $O(T)$ sample in expectation, where
	$$T=\sum_{i=2}^{n} \Gap{i}^{-2} (\ln \delta^{-1} + \lnmin{\hI}{\ln\Gap{i}^{-1}}) + \Gap{2}^{-2} \ln\ln \Gap{2}^{-1}.$$
\end{cor}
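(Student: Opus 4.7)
The plan is direct: the corollary is an amalgamation of the results already proved in this subsection, with no substantial new ingredient required.

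For the correctness portion, I will simply cite Lemma~\ref{lm:GOODEVENT}, which bounds the failure probability of the good event $\event_G$ (a union bound over every \UNIFORMSAMPLING, \FRACTIONTEST, \MEDIANELIMINATION\ on line~\ref{line:ME2}, and \ELIMINATION\ call across all rounds) by $\delta$, and certifies that on $\event_G$ the best arm $A_1$ always survives every call to \ELIMINATION. Hence with probability at least $1-\delta$ the algorithm terminates and outputs $A_1$.

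For the sample-complexity portion, I will decompose the total number of samples $\tau$ according to which subprocedure issued them: $\tau^{(1)}$ from \MEDIANELIMINATION\ (both line~\ref{line:ME1} and line~\ref{line:ME2}) together with \ELIMINATION\ on line~\ref{line:EL}, and $\tau^{(2)}$ from \FRACTIONTEST\ on line~\ref{line:FT} together with the two \UNIFORMSAMPLING\ calls on lines~\ref{line:US1} and~\ref{line:US2}. A glance at Algorithm~\ref{algo:BESTARM} shows these exhaust every sampling step, so $\tau=\tau^{(1)}+\tau^{(2)}$. Conditioning on $\event_G$, Lemma~\ref{lm:PROCGROUP1} bounds $\Ex[\tau^{(1)}\mid\event_G]$ by $O\bigl(\sum_{i=2}^{n}\Gap{i}^{-2}(\ln\delta^{-1}+\lnmin{\hI}{\ln \Gap{i}^{-1}})\bigr)$, and Lemma~\ref{lm:PROCGROUP2} bounds $\Ex[\tau^{(2)}\mid\event_G]$ by $O\bigl(\Gap{2}^{-2}(\ln\delta^{-1}+\ln\ln\Gap{2}^{-1})\bigr)$. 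Adding the two and absorbing the $\Gap{2}^{-2}\ln\delta^{-1}$ piece into the $i=2$ summand of the first bound produces exactly the target $O(T)$.

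There is no real obstacle here: the hard work has already been carried out in the potential-function induction underlying Lemma~\ref{lm:PROCGROUP1} and the geometric-tail computation underlying Lemma~\ref{lm:PROCGROUP2}. The only two small items to verify are that the two groups of subprocedures cover every sampling step of Algorithm~\ref{algo:BESTARM} (immediate from the pseudocode) and that the corollary's ``in expectation'' should be read in the conditional-on-$\event_G$ sense used throughout Section~\ref{subsec:bestarmrunningtime}; an unconditional version, if desired, follows by feeding the algorithm into the generic transformation developed in Section~\ref{sec:simult}.
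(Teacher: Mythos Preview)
Your proposal is correct and matches the paper's own approach essentially verbatim: the paper states the corollary immediately after Lemma~\ref{lm:PROCGROUP1} with the one-line justification ``Putting Lemma~\ref{lm:PROCGROUP1} and Lemma~\ref{lm:PROCGROUP2} together,'' and then, exactly as you note, flags the conditional-versus-unconditional expectation issue and points to the transformation in Section~\ref{sec:simult}.
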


The time bound in Theorem~\ref{theo:BESTARMPACD} is an immediate consequence of 
the above corollary and Lemma~\ref{lm:GOOD-PROPERTYS2}, which asserts $\hI = O(\ln n)$.

However, there is one subtlety:
we only provide 
a bound on the running time conditioning on $\event_G$
(i.e., $\Ex[T_{\alg}\mid \event_G]$).
With probability at most $\delta$
(when the event $\event_G$ fails),
we do not have any bound on the running time (or the number of samples) of the algorithm
(the algorithm may not terminate).
So strictly speaking, the overall expected
running time of \DELIMINATION\ is not bounded.
In fact, several previous algorithms in the literature
\cite{even2006action,karnin2013almost,gabillon2012best} 
have the same problem. 
However, it is possible to transform such an algorithm $\alg$ to
another $\alg'$ which
succeeds with probability $1-\delta$ and whose overall expected running time is bounded as $\Ex[T_{\alg'}]=O(\Ex[T_{\alg}\mid \event_G])$.
We are not aware such a transformation in the literature
and we provide one in Section~\ref{sec:simult}.

\eat{
\begin{proof}
	For a big constant $C$,
	it is easy to see $
	C\cdot\left(\sum_{i=2}^{n} \Gap{i}^{-2} (\ln \delta^{-1} + \lnmin{\hI}{\ln\Gap{i}^{-1}}) + \Gap{2}^{-2} \ln\ln \Gap{2}^{-1} \right)$ is a good time bound function according to Definition~\ref{defi:GOOD-FUNCTION}. Then it just follows from Theorem~\ref{theo:TRANSFORM1}.
\end{proof}
}


\subsection{Almost Instance Optimal Bound for Clustered Instances}
\label{subsec:clustered}

Recall $\armset{s} = \{ a \mid 2^{-s} \le \Gap{a} < 2^{-s+1}\}$.
\begin{defi}
We say 
an instance $I$ is {\em clustered}
if $|\{ \armset{i}\ne \emptyset \betw 1 \le i \le \maxs \}|$ is bounded by a constant. 
\end{defi}
In this case, we can obtain an almost instance optimal algorithm for such instances.
For this purpose, we only need to establish a tighter bound on $\hI$.

\begin{lemma}~\label{lm:another-bound}
	$\hI \le 2\cdot|\{ \armset{i}\ne \emptyset \betw 1 \le i \le \maxs \}|$.
\end{lemma}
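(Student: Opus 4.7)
The plan is to show that the number $m$ of rounds $r \in [1, \maxs]$ at which \FRACTIONTEST\ on line~\ref{line:FT} outputs $\True$ (so that $h$ is incremented on line~\ref{line:INC-H}) is at most $K := |\{s \in [1,\maxs] : \armset{s} \neq \emptyset\}|$. Since $h$ starts at $1$ and is only incremented at such rounds, this immediately yields $\hI \le 1 + m \le 1 + K \le 2K$, using $K \ge 1$ (which holds whenever $n \ge 2$; otherwise the lemma is vacuous).

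List the elimination rounds in $[1, \maxs]$ as $r_1 < r_2 < \cdots < r_m$ and set $r_0 := 0$. The structural observation I will use is that any round $r' \in (r_{j-1}, r_j)$ is by definition a non-elimination round, at which the \textbf{else}-branch of the algorithm sets $S_{r'+1} = S_{r'}$; consequently $S_{r_j} = S_{r_{j-1}+1}$. For $j \ge 2$, applying Lemma~\ref{lm:if-output-true} to the two consecutive elimination rounds $r_{j-1}$ and $r_j$ (conditionally on $\event_G$) gives
\[
|\smaarmset{r_{j-1}} \cap S_{r_j}| \,\le\, 0.1\,|S_{r_j}|
\quad\text{and}\quad
|\smaarmset{r_j} \cap S_{r_j}| \,>\, 0.3\,|S_{r_j}|.
\]
Since $|S_{r_j}| \ge 2$ (otherwise the algorithm would have halted at the top of round $r_j$), the difference of these two integer-valued cardinalities is at least $1$. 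Because $\smaarmset{r_j} \setminus \smaarmset{r_{j-1}} = \armset{r_{j-1}+1} \cup \cdots \cup \armset{r_j}$, this produces a non-empty bucket $\armset{s_j}$ with $s_j \in (r_{j-1}, r_j]$. The base case $j = 1$ is even easier: $|\smaarmset{r_1} \cap S_{r_1}| > 0$ already forces some non-empty $\armset{s_1}$ with $s_1 \in [1, r_1]$.

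Since the intervals $(r_{j-1}, r_j]$ are pairwise disjoint and all contained in $[1, \maxs]$, the chosen indices $s_1, \ldots, s_m$ are distinct non-empty bucket indices, so $m \le K$, finishing the proof.

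I do not anticipate any real obstacle: the argument is a short amortization that charges each elimination round to a distinct fresh non-empty gap bucket. The only subtlety is the integer-rounding step: the combination $0.3\,|S_{r_j}| - 0.1\,|S_{r_j}| = 0.2\,|S_{r_j}|$ together with $|S_{r_j}| \ge 2$ is precisely what forces the difference to be a positive integer, which is in turn why Lemma~\ref{lm:if-output-true} is tuned with the constants $0.3$ and $0.1$ rather than weaker ones.
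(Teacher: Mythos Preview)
Your proof is correct. Both your argument and the paper's hinge on the same two ingredients: Lemma~\ref{lm:if-output-true}, and the observation that $S_r$ is unchanged between consecutive elimination rounds. The organization differs, however. The paper partitions the rounds $[1,\maxs]$ into at most $2K$ groups (the $K$ rounds with non-empty $\armset{s}$, plus the at most $K$ ``gap'' intervals between them) and argues by contradiction that each gap can host at most one elimination: if two eliminations occurred at $r<r'$ inside a gap, then $\smaarmset{r}=\smaarmset{r'}$, so the post-elimination bound at $r$ directly contradicts the pre-elimination bound at $r'$. This yields $m\le 2K$. You instead run a direct injection: each elimination round $r_j$ is charged to a fresh non-empty bucket $\armset{s_j}$ with $s_j\in(r_{j-1},r_j]$, giving the tighter $m\le K$ and hence $\hI\le 1+K\le 2K$. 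Your charging argument is a little cleaner and sharpens the intermediate count by a factor of two, though both routes land on the same stated bound. (A minor remark: you do not actually need $|S_{r_j}|\ge 2$ for the integer-rounding step, since the difference of the two cardinalities is already a strictly positive integer whenever $|S_{r_j}|\ge 1$; but invoking $|S_{r_j}|\ge 2$ does no harm.)
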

\begin{proof}
	Let $s$ be an index such that $\armset{s}$ is not empty. Let $s'$ be the largest index $<s$ such that $\armset{s'}$ is not empty.
	If such index does not exist, let $s'=0$. We show that during rounds $s'+1,s'+2,\dotsc,s-1$, 
	we can only call \ELIMINATION\ (or equivalently, increase $h$) once.
	
	Suppose otherwise, we call \ELIMINATION\ in round $r$ and $r'$ such that $s'<r<r'<s$. We further assume that 
	there is no other call to \ELIMINATION\ between round $r$ and $r'$. Then clearly $S_{r'}=S_{r+1}$.
	
	Now by Lemma~\ref{lm:if-output-true}, we have $|\smaarmset{r} \cap S_{r+1}| \le 0.1 |S_{r+1}|$, but this means $|U_{r'} \cap S_{r'}| \le 0.1 |S_r'{}|$, as $U_{r'} = \smaarmset{r}$ and $S_{r'}=S_{r+1}$. Again by Lemma~\ref{lm:if-output-true}, this contradicts the fact that on round $r'$, \FRACTIONTEST\ outputs true.
	
	As $\armset{\maxs}$ is not empty, we can partition the rounds $1,2,\dotsc,\maxs$ into at most 
	$2\cdot|\{ \armset{i}\ne \emptyset \betw 1 \le i \le \maxs \}|$ groups. Each group is either a single round which corresponds to a non-empty set $\armset{s}$, 
	or the rounds between two rounds corresponding to two adjacent non-empty sets $\armset{s'}$ and $\armset{s}$. 
	Therefore, $h$ can increase at most by $1$ in each group, which concludes the proof.
\end{proof}

The following theorem is an immediate consequence of the above lemma and Corollary~\ref{cor:final-time-bound}.
\begin{theo}
	\label{thm:clustered}
	There is an \CORRECT\ algorithm for clustered instances, with 
	expected sample complexity 
	$$
	T(\delta, I)=O\left(\sum\nolimits_{i=2}^{n} \Gap{i}^{-2}\ln \delta^{-1} +\Gap{2}^{-2}\ln\ln\Gap{2}^{-1}\right).
	$$
\end{theo}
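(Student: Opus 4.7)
The plan is to invoke the existing algorithm \DELIMINATION\ without modification, and simply read off the desired bound from Corollary~\ref{cor:final-time-bound} once we have a sharper control on the quantity $\hI$ for clustered instances. Recall that Corollary~\ref{cor:final-time-bound} already gives expected sample complexity
\[
O\Bigl(\sum\nolimits_{i=2}^{n}\Gap{i}^{-2}\bigl(\ln\delta^{-1}+\ln\min(\hI,\ln\Gap{i}^{-1})\bigr)+\Gap{2}^{-2}\ln\ln\Gap{2}^{-1}\Bigr),
\]
so the only term that stands between us and the statement of Theorem~\ref{thm:clustered} is $\ln\min(\hI,\ln\Gap{i}^{-1})$. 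If we can guarantee $\hI=O(1)$ on clustered instances, then $\ln\min(\hI,\ln\Gap{i}^{-1})\le \ln \hI=O(1)$, which is absorbed into the constant factor (equivalently, into $\ln\delta^{-1}$ since $\delta<0.1$), and the bound collapses to exactly the form claimed.

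The clustered assumption, by definition, bounds $|\{\armset{i}\ne\emptyset : 1\le i\le \maxs\}|$ by an absolute constant. Lemma~\ref{lm:another-bound} directly translates this into $\hI=O(1)$, so the two ingredients fit together immediately. Concretely, the proof is just: (i) invoke \DELIMINATION$(S,\delta)$; (ii) by Lemma~\ref{lm:GOODEVENT} it is \CORRECT; (iii) by Corollary~\ref{cor:final-time-bound} its expected number of samples is at most a constant times the displayed quantity above; (iv) by Lemma~\ref{lm:another-bound} and the clustered hypothesis, $\hI=O(1)$, hence $\ln\min(\hI,\ln\Gap{i}^{-1})=O(1)$; (v) substitute to obtain the stated bound.

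There is essentially no obstacle here, since all the heavy lifting has already been done: the algorithm, the correctness proof, and the general running-time estimate are established earlier, and the key structural lemma bounding $\hI$ in terms of the number of nonempty level sets $\armset{s}$ is given by Lemma~\ref{lm:another-bound}. The only very mild subtlety worth flagging is the gap between conditional and unconditional expectation of the sample count (the bound in Corollary~\ref{cor:final-time-bound} is conditioned on $\event_G$); this is handled by the general transformation in Section~\ref{sec:simult}, so we can invoke it in the same way as for Theorem~\ref{theo:BESTARMPACD} without redoing any analysis.
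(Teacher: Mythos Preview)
Your proposal is correct and matches the paper's own argument essentially verbatim: the paper states that Theorem~\ref{thm:clustered} ``is an immediate consequence of the above lemma [Lemma~\ref{lm:another-bound}] and Corollary~\ref{cor:final-time-bound},'' which is exactly the two-step substitution you outline. Your remark about passing from the conditional bound to an unconditional one via Section~\ref{sec:simult} is also what the paper does for the general theorem.
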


\begin{example}
	Consider a very simple yet important instance where 
	there are $n-1$ arms with mean $0.5$, and a single arm with mean $0.5+\Delta$. 
	In fact, a careful examination of all previous algorithms 
	(including \cite{jamieson2014lil,karnin2013almost}) 
	shows that they all require $\Omega\left(n\Delta^{-2}(\ln \delta^{-1}+\ln\ln\Delta^{-1})\right)$ samples
	even in this particular instance.
	However, our algorithm only requires $O\left(n\Delta^{-2}\ln\delta^{-1} + \Delta^{-2}\ln\ln\Delta^{-1} \right)$
	samples. Our bound is almost instance optimal, since the first term matches 
	the instance-wise lower bound $n\Delta^{-2}\ln\delta^{-1}$. 
	This is the best bound for such instances we can hope for.
\end{example}

\subsection{An Improved PAC Algorithm}
\label{subsec:improvePAC}

\begin{table}[]
	\centering
	\label{table:pac}
	\small{
		\begin{tabular}{|l|l|}
			\hline
			Source & Sample Complexity                                                           \\ \hline
			Even-Dar et al.
			\cite{even2002pac}
			& $n\epsilon^{-2} \cdot \ln \delta^{-1}$                \\ \hline
			Gabillon et al. \cite{gabillon2012best} 
			& $ \sum\nolimits_{i=2}^{n} \Gapepsilon{i}^{-2}  \left( 
			\ln \delta^{-1}+\ln \sum\nolimits_{i=2}^{n} \Gapepsilon{i}^{-2}  \right) $            \\ \hline
			Kalyanakrishnan et al. \cite{kalyanakrishnan2012pac}
			& $ \sum\nolimits_{i=2}^{n} \Gapepsilon{i}^{-2}  \left( 
			\ln \delta^{-1}+ \ln \sum\nolimits_{i=2}^{n} \Gapepsilon{i}^{-2}  \right) $ 
			\\ \hline 
			Karnin et al. \cite{karnin2013almost}
			& 
			$\sum\nolimits_{i=2}^{n} \Gapepsilon{i}^{-2} \left(\ln\delta^{-1}+\ln\ln\Gapepsilon{i}^{-1}\right)$
			\\ \hline
			This paper (Thm~\ref{thm:pacimproved})
			& 
			$\sum\nolimits_{i=2}^{n}
			\Gapepsilon{i}^{-2} \left(\ln \delta^{-1} + \ln\ln\min(n,\Gapepsilon{i}^{-1})\right) + 
			\Gapepsilon{2}^{-2} \ln\ln \Gapepsilon{2}^{-1} 
			$
			\\
			\hline
		\end{tabular}
	}
	\vspace{0.1cm}
	\caption{Sample complexity upper bounds for $(\epsilon, \delta)$-PAC algorithms. 
		Here, $\Gapepsilon{i} = \max(\Gap{i},\epsilon)$}
\end{table}

Finally, we discuss how to convert our algorithm to an $(\epsilon,\delta)$-PAC algorithm for \bestarm.
Our result improves several previous PAC algorithm, which we summarize
in Table~2.

\begin{theo}
	\label{thm:pacimproved} 
	For any $\epsilon < 0.5$ and $\delta <0.1$, 	
	there exists an 
	$(\epsilon,\delta)$-PAC algorithm for \bestarm, 
	with expected sample complexity
	$$
	T(\delta, I)=O\left(\sum_{i=2}^{n}
	\Gapepsilon{i}^{-2} (\ln \delta^{-1} + \ln\ln\min(n,\Gapepsilon{i}^{-1})) + 
	\Gapepsilon{2}^{-2} \ln\ln \Gapepsilon{2}^{-1} 
	\right),
	$$ 
	where $\Gapepsilon{i} = \max(\Gap{i},\epsilon)$.
\end{theo}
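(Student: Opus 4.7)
The plan is to obtain Theorem~\ref{thm:pacimproved} by a simple truncation of \DELIMINATION. Let $R = \lceil \log_2(4/\epsilon) \rceil$, so that $\epsilon_R \le \epsilon/4$. Define an algorithm \DELIMINATION$'$ that runs \DELIMINATION\ exactly as in Algorithm~\ref{algo:BESTARM}, except that the outer \texttt{for} loop is replaced by \texttt{for $r = 1$ to $R$}, and if the loop completes without returning (i.e., $|S_{R+1}|>1$), we output the arm $a_R$ produced by \MEDIANELIMINATION$(S_R,\epsilon_R/4, 0.01)$ in line~\ref{line:ME1} of the last round (invoking one additional \MEDIANELIMINATION$(S_R,\epsilon_R/4,\delta/50R^2)$ with tighter confidence to boost the success probability, similarly to how $b_r$ is used inside the main algorithm).

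For correctness, let $\event_G$ be the analogue of the good event from Lemma~\ref{lm:GOODEVENT} restricted to rounds $1,\dots,R$, together with the success of the final \MEDIANELIMINATION. A union bound over $R$ rounds gives $\Pr[\event_G] \ge 1-\delta$. Conditioning on $\event_G$, the proof of Lemma~\ref{lm:GOODEVENT} shows that $A_{[1]} \in S_{R+1}$; if $|S_{R+1}| = 1$ we output $A_{[1]}$ itself, otherwise the final \MEDIANELIMINATION\ returns an arm with mean at least $\mu_{[1]} - \epsilon_R/4 \ge \mu_{[1]} - \epsilon$. So the output is $\epsilon$-optimal with probability at least $1-\delta$.

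For the sample complexity, the strategy is to rerun the potential-function analysis of Section~\ref{subsec:bestarmrunningtime} with $\maxs$ replaced by $\min(\maxs, R)$. Arms with $\Gap{i} \ge \epsilon$ lie in some $\armset{s}$ with $s \le R$ and are treated exactly as in the original analysis, contributing $\Gap{i}^{-2}(\ln\delta^{-1}+\ln\ln\min(n,\Gap{i}^{-1})) = \Gapepsilon{i}^{-2}(\ln\delta^{-1}+\ln\ln\min(n,\Gapepsilon{i}^{-1}))$ to the potential $\Pot{1}$. Arms with $\Gap{i} < \epsilon$ lie in $\armset{s}$ with $s > R$; since the algorithm no longer processes those rounds, each such arm is charged only to its contribution in some round $r \le R$, where the per-arm sample cost is $O(\epsilon_r^{-2})$, bounded by $O(\epsilon^{-2}) = O(\Gapepsilon{i}^{-2})$ at round $R$. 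Consequently the potential contribution of such arms collapses to $O(\Gapepsilon{i}^{-2}(\ln\delta^{-1}+\ln\ln\min(n,\epsilon^{-1})))$, which is $O(\Gapepsilon{i}^{-2}(\ln\delta^{-1}+\ln\ln\min(n,\Gapepsilon{i}^{-1})))$ since $\Gapepsilon{i} = \epsilon$ in this regime. The $\Gap{2}^{-2}\ln\ln\Gap{2}^{-1}$ term from Lemma~\ref{lm:PROCGROUP2}, produced by \FRACTIONTEST\ and \UNIFORMSAMPLING\ across $\maxs$ rounds, becomes $\Gapepsilon{2}^{-2}\ln\ln\Gapepsilon{2}^{-1}$ since the relevant sum now telescopes to the value at round $\min(\maxs,R) = O(\log \Gapepsilon{2}^{-1})$.

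The main obstacle is a careful bookkeeping of the induction \eqref{eq:recurrence} with the new termination: one must verify that the base case at $r = R+1$ (where the algorithm no longer enters the loop) still fits the form $T(R+1,\Nsmall) \le c_1\Nsmall\epsilon_{R}^{-2}(\ln\delta^{-1}+\ln\hI)$, with the understanding that no further samples are taken. Equivalently, one must check that \emph{all} small arms that survive to round $R$ can be paid for by a single $\epsilon_R^{-2}$ term, which works because the stopping time is deterministic rather than governed by Lemma~\ref{lm:big-r}. Once this is established, Lemma~\ref{lm:another-bound}-style considerations and the bound $\hI = O(\ln n)$ from Lemma~\ref{lm:GOOD-PROPERTYS2} give the stated PAC sample complexity, and the algorithm can be transformed into one with unconditional expected running time via the wrapper of Section~\ref{sec:simult}.
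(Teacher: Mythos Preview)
Your proposal is correct and follows essentially the same approach as the paper: truncate \DELIMINATION\ after $\Theta(\log_2 \epsilon^{-1})$ rounds and finish with a final high-confidence \MEDIANELIMINATION\ on the surviving set. The paper's own proof is terser (it simply runs \DELIMINATION\ with confidence $\delta/2$ for $\lceil \log_2 \epsilon^{-1}\rceil$ rounds and then calls \MEDIANELIMINATION\ with confidence $\delta/2$, declaring the sample-complexity analysis ``exactly the same''), while you spell out how the potential-function bookkeeping adapts when $\maxs$ is replaced by $\min(\maxs,R)$; the two arguments are the same in substance.
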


\begin{proof}
	Given parameters $\epsilon,\delta$, we run \DELIMINATION\ with confidence $\delta/2$ only 
	for the first $\lceil \ln \epsilon^{-1} \rceil$ rounds. 
After that, we invoke \MEDIANELIMINATION\ with confidence $\delta/2$ 
	to find an $\epsilon$-optimal arm among $S_{\lceil \log_2 \epsilon^{-1} \rceil}$. Clearly we are correct with probability at least $1-\delta$. 
	The analysis for the sample complexity is exactly the same as the original \DELIMINATION. 
\end{proof}

\section{A New Lower bound for \bestarm}
\label{sec:bestarmlowerbound}	

In this section,
we provide a new lower bound for \bestarm.
In particular,  we prove Theorem~\ref{thm:hard-case-exists}. 
From now on, $\delta$ is a fixed constant such that $0 < \delta < 0.005$. 
We use $[0,N]$ to denote the set of integers $\{0,1,\dotsc,N\}$. Throughout this section, we assume the distributions of all the arms are Gaussian with variance $1$.

Our proof for Theorem~\ref{thm:hard-case-exists} consists of two ingredients. The first one is a non-trivial lower bound for \sign, which we state here but defer its proof to the next section. 
The second one is a novel reduction from \sign\ to \bestarm, which turns the lower bound for \sign\ into the desired lower bound for \bestarm.

For stating the lower bound for \sign\, we introduce some notations first.
Let $\algp$ denote an algorithm for \sign, 
$A_\mu$ be an arm with mean $\mu$ (i.e., with distribution $\Normal(\mu,1)$), and we define
$T_{\algp}(\Delta) = \max(T_{\algp}(A_{\xi+\Delta}), T_{\algp}(A_{\xi-\Delta}))$. Then we have the following lower bound for \sign.
The proof is deferred to Section~\ref{sec:lbsign}.

\begin{lemma}~\label{lm:prev-result}
	For any $\delta'$-correct algorithm $\algp$ for \sign\ with $\delta' \le 0.01$, 
	there exist constants $N_0 \in \N$ and $c_1 > 0$ such that for all $N \ge N_0$:
	\[
	|\{ T_{\algp}(\Delta) < c_1 \cdot \Delta^{-2} \ln N \betw \Delta = 2^{-i}, i \in [0,N] \}|  \le 0.1 (N+1).
	\]
\end{lemma}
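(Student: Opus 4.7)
The plan is to argue by contradiction, combining Lemma~\ref{lm:CHANGEDIST} (change of distribution) with a multi-scale counting argument. Suppose for contradiction that for every constant $c_1 > 0$ and for arbitrarily large $N$, the set of fast indices $F = \{i \in [0,N] : T_{\algp}(2^{-i}) < c_1 \cdot 4^i \ln N\}$ satisfies $|F| > 0.1(N+1)$. I will show that this forces $c_1 = \Omega(1)$, so by picking $c_1$ small enough at the start we reach a contradiction.

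First I would promote the expectation bound to a high-probability tail bound. For $i \in F$, set $K_i := 10\,c_1 \cdot 4^i \ln N$. By Markov, $\algp$ halts within $K_i$ samples with probability at least $0.9$ on each of $A_{\xi \pm 2^{-i}}$; combined with $\delta'$-correctness ($\delta' \le 0.01$), the event
\[
\event_i^+ := \{\text{$\algp$ halts within $K_i$ samples and outputs ``$\mu > \xi$''}\}
\]
satisfies $\Pr_{A_{\xi + 2^{-i}}}[\event_i^+] \ge 0.89$ and $\Pr_{A_{\xi - 2^{-i}}}[\event_i^+] \le 0.01$.

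Next I would invoke Lemma~\ref{lm:CHANGEDIST} on the algorithm stopped at $\tau \wedge K_i$, comparing a fixed reference instance (e.g.\ $A_{\xi - 2^{-(N+1)}}$, which is a valid \sign\ instance on which $\algp$ must output ``$\mu < \xi$'' with probability $\ge 1 - \delta'$) with $A_{\xi + 2^{-i}}$. Using Lemma~\ref{lm:NORMALKL}, the per-sample KL is at most $(2^{-i} + 2^{-(N+1)})^2/2 \le 2 \cdot 4^{-i}$, so
\[
K_i \cdot 2 \cdot 4^{-i} \;\ge\; \ent\!\left(\Pr_{A_{\xi - 2^{-(N+1)}}}[\event_i^+],\; \Pr_{A_{\xi + 2^{-i}}}[\event_i^+]\right).
\]
Substituting $K_i = 10 c_1 \cdot 4^i \ln N$, the left-hand side is $O(c_1 \ln N)$ for every $i \in F$. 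Since the right probability is $\ge 0.89$, the relative entropy is at least $\Omega\bigl(\ln(1/p_i)\bigr)$ where $p_i := \Pr_{A_{\xi-2^{-(N+1)}}}[\event_i^+]$, forcing $p_i \ge \exp(-O(c_1 \ln N)) = N^{-O(c_1)}$ for every $i \in F$.

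The heart of the argument, and the main obstacle, is an averaging step that converts the $|F|$ simultaneous constraints into a quantitative contradiction. The events $\event_i^+$ for varying $i \in F$ are nested rather than disjoint (larger $K_i$ yields a larger event), so I would work instead with the ``shell'' events $\widetilde{\event}_i^+ := \event_i^+ \setminus \event_{i-1}^+$ (with $\event_{-1}^+ := \emptyset$), which form a disjoint family under the reference distribution. A parallel computation shows each shell still contributes $\Omega(0.88)$ of the probability under $A_{\xi+2^{-i}}$ (modulo a small loss coming from the tail at $K_{i-1}$) and at most $p_i$ under the reference distribution. Since the shells are disjoint, $\sum_{i \in F} p_i \le 1$, so by pigeonhole there exists $i^\star \in F$ with $p_{i^\star} \le 1/|F| \le 10/(N+1)$. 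Plugging this into the entropy inequality above yields $c_1 \ln N \ge \Omega(\ln N)$, which contradicts the choice of $c_1$ sufficiently small. The delicate technical point is ensuring that the disjointification preserves enough probability mass under the ``positive'' distribution $A_{\xi + 2^{-i}}$ so that the entropy term still grows like $\ln(1/p_i)$; handling this carefully (perhaps by slightly shrinking the geometric grid or using a two-stage pigeonhole on $F$) is where I expect the proof to require the most work.
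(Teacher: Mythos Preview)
Your overall architecture is right and matches the paper's: argue by contradiction, attach to each fast gap $\Delta_i$ an event that (i) has constant probability under $A_{\xi+\Delta_i}$ and (ii) is disjoint from the others, then use change of distribution to a single reference instance $A_{\xi-\alpha}$ and sum. The gap is precisely the step you flag as ``delicate'': you have not shown that the shell $\widetilde\event_i^+=\event_i^+\setminus\event_{i-1}^+$ carries $\Omega(1)$ mass under $A_{\xi+2^{-i}}$, and in fact it need not. The shell is exactly the event ``halt in $(K_{i-1},K_i]$ and output $\mu>\xi$'', so you are implicitly assuming $\Pr_{\xi+2^{-i}}[\tau\le K_{i-1}]$ is small. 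Nothing in Markov or $\delta'$-correctness prevents the algorithm from halting (correctly) long before $K_{i-1}$ on $A_{\xi+2^{-i}}$; your phrase ``small loss from the tail at $K_{i-1}$'' misdiagnoses the problem as an upper tail when it is a lower one. Neither shrinking the grid nor a second pigeonhole supplies this missing lower bound on stopping time.

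The paper fills this gap with an extra change-of-distribution step (Lemma~\ref{lm:EVENTEXT}): comparing $A_{\xi+\Delta}$ to $A_{\xi-\Delta}$ shows that any $\delta'$-correct algorithm must satisfy $\Pr_{\xi+\Delta}[\tau < d\Delta^{-2}]<0.25$ for a universal constant $d$, since otherwise it would commit to the wrong sign on $A_{\xi-\Delta}$ too often. With this in hand, the paper defines the event with a \emph{two-sided} window on $\tau$, namely $\event(\Delta)=\{\text{output }\mu>\xi\}\wedge\{d\Delta^{-2}\le\tau\le 5T_{\algp}(\Delta)\}$, and still gets $\Pr_{\xi+\Delta}[\event(\Delta)]\ge 1/2$. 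Disjointness is then obtained by thinning the fast set so that consecutive windows $[d\Delta_i^{-2},\,5T_{\algp}(\Delta_i)]$ do not overlap; since $T_{\algp}(\Delta_i)\le c_1\Delta_i^{-2}\ln N$, this requires a spacing of order $\ln\ln N$ in the index, not $O(1)$. After thinning, each surviving event contributes at least $\tfrac14\exp(-O(c_1\ln N))=N^{-O(c_1)}$ to $\Pr_{\xi-\alpha}[\event_U]\le\delta'$, and summing over $\Omega(N/\ln\ln N)$ disjoint events forces $c_1=\Omega(1)$, the contradiction you are after. If you add the halting-time lower bound and thin accordingly, your argument goes through and is essentially the paper's.
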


\eat{
\begin{proof}
	Let $\granuf(t) = t \cdot \ln 2$.
	So $e^{-\granuf(t)} = 2^{-t}$. 
	Clearly, $\granuf$ is a convex granularity function. 
	Note that $\delta' \le 0.01$.
	So by Theorem~\ref{thm:DENSITYSIGN-DIST-CONVEX}, there exists $c_1$ such that:
	\[
	\lim_{N \to +\infty} \discdistr_{N}(\granuf)(\{ \Delta \betw T_{\alg}(\Delta) < 2c_1 \cdot \Delta^{-2}\ln\ln\Delta^{-1} \}) = 0.
	\]
	This implies that there exists $N'\in \N$, such that for all $N \ge N'$:
	\[
	\discdistr_{N}(\granuf)(\{ \Delta \betw T_{\alg}(\Delta) < 2c_1 \cdot \Delta^{-2}\ln\ln\Delta^{-1} \}) \le 0.05.
	\]
	Therefore, we can pick $N_0 \ge \max(N',1000)$, such that for all $N \ge N_0$:
	\begin{align*}
	&\discdistr_{N}(\granuf)( \{ \Delta \betw T_{\alg}(\Delta) < c_1 \Delta^{-2} \ln N \} ) \\
	\le&\,\discdistr_{N}(\granuf)( \{ \Delta \betw T_{\alg}(\Delta) < c_1 \Delta^{-2} \ln N \ \wedge\ \ln \Delta^{-1} \ge \sqrt{N}  \} ) +\discdistr_{N}(\granuf)( \{ \Delta \betw \ln \Delta^{-1} < \sqrt{N}  \} ) \\
	\le&\,\discdistr_{N}(\granuf)( \{ \Delta \betw T_{\alg}(\Delta) < 2c_1 \Delta^{-2}\ln\ln\Delta^{-1} \} ) +\discdistr_{N}(\granuf)( \{ \Delta \betw \ln \Delta^{-1} < \sqrt{N}  \} ) \\
	\le&\, 0.05+ 0.05 \le 0.1
	\end{align*}	
	In fact, the second inequality follows from $2c_1 \Delta^{-2}\ln\ln\Delta^{-1} \ge 2 c_1 \cdot \Delta^{-2} \ln \sqrt{N} = c_1 \cdot \Delta^{-2} \ln N$. 
	The third inequality follows from the fact that $\Delta=e^{-\granuf(t)}$ and $\ln \Delta^{-1} < \sqrt{N}$
	corresponds to $\granuf(t) = t \cdot \ln 2 < \sqrt{N}$.
\end{proof}
}

For an algorithm $\alg$ for \bestarm, let $\palg$ be the algorithm which first randomly permutes the input arms, then runs $\alg$. 
More precisely, given an arm instance $I$ with $n$ arms, $\palg$ first chooses a random permutation $\pi$ on $n$ elements in $I$ uniformly, 
then simulates $\alg$ on the instance $\pi \circ I$
and returns what $\alg$ returns.
It is not difficult to see that the running time of $\palg$ only depends on the set $\{\distr_i\}$ of reward distributions  of the instance, 
not	their particular order.

Clearly, if $\alg$ is a $\delta$-correct algorithm for any instance of \bestarm, 
so is $\palg$. 
Furthermore, we have the following simple lemma, which says we only need to prove a lower bound for $\palg$.

\begin{lemma}\label{lm:random-permute}
	For any instance $I$, there exists a permutation $\pi$ such that $T_{\alg}(\pi \circ I) \ge T_{\palg}(I)$.
\end{lemma}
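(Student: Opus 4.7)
The plan is to prove this by a straightforward averaging (pigeonhole) argument on the uniform distribution over permutations that is built into the definition of $\palg$.

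First, I would unpack the definition of $\palg$. By construction, $\palg$ on input $I$ samples a permutation $\pi$ uniformly at random from the symmetric group on $n$ letters and then runs $\alg$ on the permuted instance $\pi\circ I$. The total number of samples taken by $\palg$ on $I$ is therefore a random variable whose randomness comes from both the choice of $\pi$ and the internal randomness of $\alg$ together with the stochastic rewards. Writing $T_\alg(J)$ for the expected sample complexity of $\alg$ on a fixed instance $J$ (expectation taken over $\alg$'s internal randomness and the reward samples), the tower property of expectation gives
\[
T_{\palg}(I) \;=\; \mathbb{E}_\pi\bigl[\,T_\alg(\pi \circ I)\,\bigr],
\]
where $\pi$ is uniform over the $n!$ permutations.

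Next, I would apply the elementary fact that the maximum of a finite collection of real numbers is at least their average. Since $T_{\palg}(I)$ equals the average of $\{T_\alg(\pi\circ I)\}_\pi$ over the uniform distribution on permutations, there must exist some particular $\pi^\ast$ with $T_\alg(\pi^\ast\circ I)\ge T_{\palg}(I)$. Choosing this $\pi^\ast$ yields the claimed permutation.

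There is no real obstacle here, since the statement is essentially a restatement of ``max $\ge$ average.'' The only thing one needs to verify carefully is that $T_{\palg}(I)$ really decomposes as the expectation over the uniform permutation of $T_\alg(\pi\circ I)$; this follows directly from the definition of $\palg$ given earlier in the section and the linearity/tower property of expectation (the sample complexity is a non-negative random variable, so no integrability issues arise beyond those already implicit in assuming $T_\alg$ is finite on the instances under consideration).
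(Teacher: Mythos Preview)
Your proposal is correct and follows exactly the same approach as the paper: the paper's proof simply writes $T_{\palg}(I) = \frac{1}{n!}\sum_{\pi\in\mathrm{Sym}(n)} T_{\alg}(\pi\circ I)$ and leaves the ``max $\ge$ average'' step implicit. Your write-up is slightly more detailed (spelling out the tower property and the pigeonhole step), but the argument is identical.
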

\begin{proof}
	By the definition of $\palg$, we can see that
	$$
	T_{\palg}(I) = \frac{1}{n!} \sum_{\pi \in \mathrm{Sym}(n)}{T_{\alg}(\pi \circ I)},
	$$ 
	where $\mathrm{Sym}(n)$ is the set of all $n!$ permutations of $\{1,\ldots, n\}$.
\end{proof}

Now we prove theorem~\ref{thm:hard-case-exists}. 
The high-level idea is to construct some ``balanced'' instances for \bestarm, 
and show that if an algorithm $\alg$ is ``fast'' on those instances, 
we can construct a fast algorithm for \sign, 
which leads to a contradiction to Lemma~\ref{lm:prev-result}.

\begin{proofof}{Theorem~\ref{thm:hard-case-exists}}	
	In this proof, we assume all distributions are Gaussian random variables with $\sigma = 1$. 
	Without loss of generality, we can assume $N_0$ in Lemma~\ref{lm:prev-result} is an even integer, 
	and $N_0 > 10$. So we have $2\cdot 4^{N_0} \ge \frac{4}{3} \cdot 4^{N_0} + N_0 + 2$. Let $N=2\cdot 4^{N_0}$.
	
	For every $n \ge N$, we pick the largest even integer $\nbag$ such that $2\cdot 4^{\nbag} \le n$. Clearly $\nbag \ge N_0 > 10$ and $\sum_{k=0}^{\nbag} 4^{k} + \nbag + 2 \le \frac{4}{3}\cdot 4^{\nbag} + \nbag + 2 \le 2 \cdot 4^{\nbag}$. Also, by the choice of $\nbag$, we have $2 \cdot 4^{\nbag+2} > n$,  hence $4^{\nbag} > \frac{n}{8}$.
	
	Consider the following \bestarm\ instance $\Iinit$ with $n$ arms:
	\begin{enumerate}[leftmargin=*]
		\item There is a single arm with mean $\xi$.
		
		\item For each integer $k \in [0,\nbag]$, there are $4^{\nbag-k}$ arms with mean $\xi - 2^{-k}$.
		
		\item There are $n - \sum_{k=0}^{\nbag} 4^k - 1$ arms with mean $\xi - 2$.
	\end{enumerate}

	For a \bestarm\ instance $I$, let $n(I)$ be the number of arms in $I$, and $\Gap{i}(I)$ be the gap $\Gap{i}$ according to $I$. 
	We denote $\armcomp(I) = \sum_{i=2}^{n(I)} \Gap{i}(I)^{-2}$. 
	
	Now we define a class of \bestarm\ instances $\{I_{S}\}$
	where each $S \subseteq \{0,1,\ldots, \nbag\}$.
	Each $I_{S}$ is formed as follows:
	for every $k \in S$, we add one more arm with mean $\xi - 2^{-k}$ to $\Iinit$; 
	finally we remove $|S|$ arms with mean $\xi - 2$
	(by our choice of $\nbag$ there are enough such arms to remove). 
	Obviously, there are still $n$ arms in every instance $I_{S}$.
	
	Let $c$ be a universal constant to be specified later (in particular $c$ does not dependent on $n$). 
	Now we claim that for any $\delta$-correct algorithm $\alg$ for \bestarm,
	there must exist an instance $I_{S}$ such that 
	$$
	T_\palg(I_S) > c \cdot \armcomp(I_S) \cdot \ln \nbag =\Omega(\armcomp(I_S) \ln\ln n).
	$$
	
	Suppose for contradiction that there exists a $\delta$-correct $\alg$ such that 
	$T_{\palg}(I_S) \le c \cdot \armcomp(I_S) \cdot \ln \nbag$ for all $S$.
	
	Let $U= \{ I_{S} \betw |S| = \nbag/2 \}$, $V = \{ I_S \betw |S| = \nbag/2 + 1 \}$ be two sets of \bestarm\ instances. 
	Notice that $|U| = |V| = \binom{\nbag+1}{\nbag/2}$ (since $\nbag$ is even). 
	
	Fix $S \in U$. 
	Consider the problem \sign,
	in which the given 
	instance is a single arm $A$ with unknown mean $\mu$, and we would like to decide whether 
	$\mu>\xi$ or $\mu<\xi$.
	Consider the following two algorithms
	for \sign, which call $\palg$ as a subprocedure.

	\begin{enumerate}[leftmargin=*]
		\item $\algone{S}$: 
		We first create a \bestarm\ instance 
		instance $\newI$ by
		replacing one arm with mean $\xi-2$ in $I_S$ with $A$. Then run $\palg$ on $\newI$. 
		We output $\mu > \xi$ if $\palg$ selects $A$ as the best arm. Otherwise, we output $\mu < \xi$.
		\item $\algtwo{S}$: We first construct an artificial arm $\newA$ with mean $2\xi - \mu$ from $A$
		\footnote{That is, whenever the algorithm pulls $\newA$, we pull $A$ to get a reward $r$, and return $2\xi-r$ as the reward for $\newA$. 
		Note although we do not know $\mu$, $\newA$ is clearly an arm with mean $2\xi - \mu$.
		} 
		, and create a \bestarm\ instance $\newI$
		by replacing one arm with mean $\xi-2$ in $I_S$ with $\newA$. Then run $\palg$ on $\newI$.
		We output $\mu < \xi$ if $\palg$ selects $\newA$ as the best arm. Otherwise,
		we output $\mu > \xi$.
	\end{enumerate}
	Since $\palg$ is $\delta$-correct for \bestarm, $\algone{S}$ and $\algtwo{S}$ are both $\delta$-correct for \sign. 
	
	Now, consider the algorithm $\alg_S$ for \sign\ which runs as follows: It simulates $\algone{S}$ and $\algtwo{S}$ simultaneously. 
	Each time it takes a sample from the input arm, and feeds it to both $\algone{S}$ and $\algtwo{S}$. If $\algone{S}$ ($\algtwo{S}$ resp.) terminates first, it returns the output of $\algone{S}$ ($\algtwo{S}$ resp.).
	In case of a tie, it
	returns the output of $\algone{S}$.
	
	First, we can see that 
	if both $\algone{S}$ and $\algtwo{S}$ are correct, then $\alg_S$ must be correct. Therefore, $\alg_S$ is $2\delta$-correct for \sign.
	
	Then we are going to show that there exists some particular $S$ such that the algorithm $\alg_S$ runs too fast for way too many points in $\{\Delta = 2^{-i}\}_{i \in [0,\nbag]}$ for \sign\, hence rendering a contradiction to Lemma~\ref{lm:prev-result}.
	
	For a \bestarm\ instance $I_S$ and an integer $k \in [0,m]$, 
	we use $N_S^k$ to denote the number of arms with gap $2^{-k}$. 
	Let $a_S^k \cdot 4^k$ be the expected number of samples taken from an arm with gap $2^{-k}$ by $\palg$.  
	Then we have that	
	\[
	\sum_{k=0}^{\nbag} N_S^k (4^k \cdot a_S^k) \le T_\palg(I_S) \le c \cdot \armcomp(I_S) \cdot \ln \nbag.
	\]
	Since $4^{\nbag-k} \le N_S^k \le 4^{\nbag-k} + 1$, we can see that 
	\[
	\armcomp(I_S) = \sum_{k=0}^{\nbag} N_S^k \cdot 4^k + (n-\sum_{k=0}^{\nbag} 4^k - 1 - |S|) \cdot 2^{-2} \le 2\sum_{k=0}^{\nbag} 4^{\nbag} + \frac{1}{4}\cdot n .
	\]	
	Thus we have that	
	\[
	\sum_{k=0}^{\nbag} 4^{\nbag-k} (4^k\cdot a_S^k) \le\sum_{k=0}^{\nbag} N_S^k (4^k\cdot a_S^k) \le c \cdot \armcomp(I_S)\cdot \ln \nbag \le c \left(2\sum_{k=0}^{\nbag} 4^{\nbag} + \frac{1}{4}n\right) \cdot \ln \nbag.
	\]
	Simplifying it a bit and noting that $\frac{n}{8} < 4^{\nbag}$, we get that
	\[
	\sum_{k=0}^{\nbag} 4^{\nbag} \cdot a_S^k \le c \cdot 2(\nbag+2) 4^{\nbag} \ln \nbag, 
	\]
	which is equivalent to
	\[
	\sum_{k=0}^{\nbag} a_S^k \le 2c \cdot (\nbag+2)\ln \nbag \le 3c \cdot (\nbag+1)\ln \nbag.
	\]
	The last inequality holds since $\nbag \ge N_0 > 10$.
	
	Now we set $c = \frac{c_1}{30}$, in which $c_1$ is the constant in Lemma~\ref{lm:prev-result}. 
	Since $\sum_{k=0}^{\nbag} a_S^k \le 3c \cdot (\nbag+1)\ln \nbag = \frac{c_1}{10} \cdot (\nbag+1)\cdot \ln \nbag$, we can see for any $S$, there are at most $0.1$ fraction of elements in $\{a_S^k\}_{k=0}^{\nbag}$ satisfying $a_S^k \ge c_1 \cdot \ln \nbag$.
	
	Then for $S \in U$, let $\bad_{S} = \{ k\not\in S \ \wedge\ a_{S \cup \{ k \}}^{k} \ge c_1 \ln \nbag \betw k \in [0,\nbag] \}$. 
	We have that
	\[
	\sum_{S \in U} | \bad_S | \le \sum_{S \in V} \sum_{k=0}^{\nbag} \indicator\{ a_{S}^k \ge c_1 \ln \nbag \} \le \frac{\nbag+1}{10} |V| = \frac{\nbag+1}{10}|U|.
	\]
	
	\newcommand{\goodS}{S}
	
	By an averaging argument, there exists $\goodS \in U$ such that $|\bad_{\goodS}| \le \frac{\nbag+1}{10}$. We will show for that particular $S$, the algorithm $\alg_S$ for \sign\ contradicts Lemma~\ref{lm:prev-result}.
	
	Now we analyze the expected total number of
	samples taken by $\alg_S$ on arm $A$ with mean $\mu$ and gap $\Delta=|\xi-\mu|=2^{-k}$. 
	Suppose $k \notin \goodS$. A key observation is the following:
	if $\mu < \xi$, then the instance constructed in $\algone{\goodS}$ is exactly $I_{\goodS \cup \{k\}}$; otherwise $\mu > \xi$, since $2\xi - (\xi+\Delta) = \xi - \Delta = \xi - 2^{-k}$, the instance constructed in $\algtwo{\goodS}$ is exactly $I_{\goodS \cup \{k\}}$
	(the order of arms in the constructed instance and $I_{\goodS \cup \{k\}}$ may differ, but as $\palg$ randomly permutes the arms beforehand, it does not matter). 
	Hence, either $T_{\algone{\goodS}}(A)=a_{\goodS \cup \{k\}}^k \cdot 4^k$ or $T_{\algtwo{\goodS}}(A)=a_{\goodS \cup \{k\}}^k \cdot 4^k$. Since $\alg_{\goodS}$ terminates as soon as either one of them terminate, we clearly have $T_{\alg_{\goodS}}(A) \le \min(T_{\algone{\goodS}}(A),T_{\algtwo{\goodS}}(A))\le a_{\goodS \cup \{k\}}^k \cdot 4^k$ for arm $A$ with gap $2^{-k}$ when $k \not\in \goodS$.
	
	So for all $k \in [0,\nbag] \setminus (\goodS \cup \bad_{\goodS})$,  we can see 
	$T_{\alg_{\goodS}}(2^{-k})  \le a_{\goodS \cap \{ k \}}^{k} \cdot 4^k < c_1 \cdot 4^k \ln \nbag$. But this implies that	
	\[
	\frac{ \{ T_{\alg_{\goodS}}(\Delta) < c_1 \cdot \Delta^{-2} \ln \nbag \betw \Delta = 2^{-i}, i \in [0,\nbag] \} }{\nbag+1} 
	\ge \frac{|[0,\nbag] \setminus (\goodS \cup \bad_{\goodS})|  }{\nbag+1} \ge 0.4,
	\]
	which contradicts Lemma~\ref{lm:prev-result}. 
	So there must exist $I_S$ such that $T_{\palg}(I_{S}) > c\cdot \armcomp(I_{S}) \cdot \ln \nbag$.
	
	\eat{
		Since $n \ge 4^{\nbag} > 4^{10}$ we can see $4^{\nbag} > \frac{n}{8} \ge \sqrt{n}$. Then $\ln 4 \cdot \nbag > \frac{1}{2} \ln n$, taking logarithm of each side, we have $\ln\ln 4 + \ln \nbag > \ln\ln n - \ln 2$. Which means $\ln \nbag > \frac{1}{2} \ln\ln n$.
		
		So we have:
		\[
		T_{\alg_p}(I_S) \ge \frac{c}{2} \sum_{i=2}^{n} \Delta_{i}^{-2} \ln\ln n.
		\]
	}
	By Lemma~\ref{lm:random-permute}, there exists a permutation $\pi$ on $I_{S}$ such that 
	$T_{\alg}(\pi \circ I_{S}) \ge \frac{c}{2} \sum_{i=2}^{n} \Delta_{i}^{-2} \ln\ln n $.
	This finishes the first part of the theorem.
	
	To see that $\Delta_2^{-2} \ln\ln \Delta_2^{-1}$ is not the dominating term,
	simply notice that 
	$$
	\Delta_2^{-2} \ln\ln \Delta_2^{-1}  = 4^{\nbag} \ln (\nbag\cdot \ln 2) \le  4^{\nbag} \ln \nbag \le \frac{1}{\nbag}\sum_{k=0}^{\nbag} N_{\goodS}^k \cdot 4^k \ln \nbag \le \frac{2\cdot \ln 4}{\ln n} \sum_{i=2}^{n} \Delta_{i}^{-2} \ln\ln n.
	$$
	This proves the second statement of the theorem.
\end{proofof}


\section{A New Lower Bound for \sign}
\label{sec:lbsign}

\newcommand{\powconst}{\gamma}

In this section, we prove
a new lower bound of \sign\ (Theorem~\ref{lm:FRACTION-BOUND-DISC}), from which
Lemma~\ref{lm:prev-result} follows easily.

We introduce some notations first. Recall that the distributions of all the arms are Gaussian with variance $1$. Fix an algorithm $\alg$ for \sign, for a random event $\event$, let
$\Pr_{\alg,A_\mu}[\event]$ (recall that $A_{\mu}$ denotes an arm with mean $\mu$) denote the probability that $\event$ happens if we run $\alg$ on arm $A_\mu$.
For notational simplicity, when $\alg$ is clear from the context, 
we abbreviate it as $\Pr_\mu[\event]$.
Similarly, we write $\Ex_\mu[X]$ as a short hand notation for $\Ex_{\alg,A_\mu}[X]$, which denotes the expectation of random variable $X$ when running $\alg$ on arm $A_{\mu}$.

We use $\indicator\{\text{expr}\}$ to denote the indicator function 
which equals 1 when $\text{expr}$ is true and 0 otherwise, and we define
$\lowb(\Delta) = \Delta^{-2} \cdot \ln\ln \Delta^{-1}$, which is the wanted lower bound for \sign.
Finally, for an integer $N \in \mathbb{N}$,
a $\delta$-correct algorithm $\alg$ for \sign,
and a function $g : \R \to \R$, 
define
$$
\distportion(\alg,g,N) = \sum_{i=1}^{N} \indicator\left\{ \text{There exists some } \Delta \in [e^{-i},e^{-i+1}) \text{ such that: } T_{\alg}(\Delta) < g(\Delta) \right\}.
$$ 
Intuitively, it is the number of intervals $[e^{-i},e^{-i+1})$ among the first $N$ intervals that contains a fast point with respect to $g$.

\begin{theo}\label{lm:FRACTION-BOUND-DISC}
	For any $\powconst>0$, we have a constant $c_1>0$ (which depends on $\powconst$) such that for any $0 < \delta <0.01$,
	\[
	\lim_{N \to +\infty} \sup_{\alg' \text{ is } \delta\text{-correct} } \frac{\distportion(\alg',c_1 \lowb, N)}{N^\powconst} = 0.
	\]
	In other words, the fraction of the intervals containing fast points with respect to $\Omega(\lowb)$ can be smaller than any inverse polynomial. 
\end{theo}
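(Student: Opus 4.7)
The plan is to prove the theorem by contradiction, using the change-of-distribution lemma (Lemma~\ref{lm:CHANGEDIST}) to convert fastness at many scales into an impossible constraint on the algorithm's expected sample complexity on a suitably chosen reference arm. Suppose for contradiction that for some $\powconst > 0$, no matter how small $c_1 > 0$ is chosen, there exist $\epsilon > 0$, arbitrarily large $N$, and $\delta$-correct algorithms $\alg'$ with $\distportion(\alg', c_1 \lowb, N) \ge \epsilon N^{\powconst}$. For each such $N$, extract $M := \lceil \epsilon N^{\powconst} \rceil$ representative fast gaps $\Delta_1 > \Delta_2 > \cdots > \Delta_M$, one per fast interval, satisfying $T_{\alg'}(\Delta_j) < c_1 \lowb(\Delta_j) = c_1 \Delta_j^{-2}\ln\ln\Delta_j^{-1}$. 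Markov's inequality, combined with $\delta$-correctness, implies that for each $j$ the event
\[
\event_j := \bigl\{\tau \le 10\,c_1\lowb(\Delta_j)\bigr\}\cap\bigl\{\alg'\text{ outputs }\mu>\xi\bigr\}
\]
satisfies $\Pr_{A_{\xi+\Delta_j}}[\event_j] \ge 1-\tfrac{1}{10}-\delta \ge 0.8$ while $\Pr_{A_{\xi-\Delta_j}}[\event_j] \le \delta$.

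For a single scale $j$, applying Lemma~\ref{lm:CHANGEDIST} between $A_{\xi+\Delta_j}$ and $A_{\xi-\Delta_j}$ with event $\event_j$, using Lemma~\ref{lm:NORMALKL} to compute $\KL = 2\Delta_j^2$, yields only the Mannor--Tsitsiklis-type inequality $20\, c_1 \ln\ln \Delta_j^{-1} \ge \ent(0.8,\delta) = \Omega(\ln(1/\delta))$, which is a nontrivial constraint on each individual scale but does not yet preclude many fast intervals. To amplify this into the desired $o(N^{\powconst})$ bound, the plan is to couple the events $\{\event_j\}$ across scales by running $\alg'$ on a single common reference arm, for instance $A_{\xi - \Delta_1}$ (the largest-gap negative arm in the chain), and applying Lemma~\ref{lm:CHANGEDIST} between this reference arm and each positive arm $A_{\xi+\Delta_j}$. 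Each such invocation uses $\KL(\Normal(\xi-\Delta_1,1),\Normal(\xi+\Delta_j,1)) = (\Delta_1+\Delta_j)^2/2 \le 2\Delta_1^2$ and the fact that $\Ex_{A_{\xi-\Delta_1}}[\tau] \le c_1 \lowb(\Delta_1)$ (since $\Delta_1$ is itself a fast gap), producing an upper bound on $\Pr_{A_{\xi-\Delta_1}}[\event_j]$ via the entropy function $\ent$.

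The main obstacle is the aggregation of these per-scale bounds into a single polynomial-in-$N$ constraint on $M$. A naive union bound across $M$ events gives only a $\ln M$ penalty, far too weak to yield $M \le o(N^{\powconst})$. The approach I would pursue is to construct an aggregate event such as
\[
\mathcal{F} := \bigl\{|\{j \in [M] : \event_j \text{ occurs}\}| \ge M/2\bigr\}
\]
and argue, via a suitable coupling or chaining argument across the exponentially spaced scales $\Delta_j \asymp e^{-i_j}$, that $\Pr_{A_{\xi+\Delta_M}}[\mathcal{F}]$ is close to $1$ while $\Pr_{A_{\xi-\Delta_1}}[\mathcal{F}]$ decays super-polynomially in $M$. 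A single application of Lemma~\ref{lm:CHANGEDIST} to $\mathcal{F}$ with the bounded expected stopping time on the reference arm should then force an inequality of the form $M \le (c_1 \ln\ln N)^{O(1)}$, contradicting $M \ge \epsilon N^{\powconst}$ once $c_1$ is chosen small enough in terms of $\powconst$. The delicate point is to identify an aggregate event whose probabilities factor sufficiently across scales to beat the logarithmic penalty; this is the analogue of Farrell's use of the law of iterated logarithm, recast purely in terms of KL divergence as the authors emphasize.
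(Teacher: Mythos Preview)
Your proposal has the right ingredients (Markov on $\tau$, change of measure, summing over scales) but misses the key structural idea that makes the paper's argument go through, and the substitute you sketch does not work.

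\textbf{The events must be disjoint, not nested.} Your events $\event_j=\{\tau\le 10c_1\lowb(\Delta_j)\}\cap\{\text{output }\mu>\xi\}$ are nested: since $\lowb(\Delta_1)<\lowb(\Delta_2)<\cdots$, we have $\event_1\subset\event_2\subset\cdots\subset\event_M$. Your aggregate event $\mathcal F=\{\text{at least }M/2\text{ of the }\event_j\text{ occur}\}$ is therefore simply $\event_{\lceil M/2\rceil}$, and you are back to a single-scale bound. The paper instead defines $\event(\Delta_j)=\{\text{output }\mu>\xi\}\cap\{d\Delta_j^{-2}\le\tau\le 5T_\alg(\Delta_j)\}$ with a \emph{lower} constraint on $\tau$ as well. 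After thinning the fast gaps by a factor $\asymp\ln\ln N$ so that the stopping-time windows $[d\Delta_j^{-2},\,5c_1\lowb(\Delta_j)]$ become pairwise disjoint, these events are genuinely disjoint. The lower bound $d\Delta_j^{-2}$ is harmless because any $\delta$-correct algorithm must take $\Omega(\Delta_j^{-2})$ samples on $A_{\xi+\Delta_j}$ (a one-line change-of-measure against $A_{\xi-\Delta_j}$), so $\Pr_{\xi+\Delta_j}[\event(\Delta_j)]\ge 1/2$ still holds.

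\textbf{Wrong reference arm and wrong direction of transport.} You run the change of measure \emph{from} the reference arm $A_{\xi-\Delta_1}$ (largest gap) \emph{to} each $A_{\xi+\Delta_j}$; this yields the same bound $\asymp c_1\ln\ln\Delta_1^{-1}$ for every $j$ and cannot be summed. The paper does the opposite: take $\alpha=\Delta_M/5$, so $\KL(\Normal(\xi+\Delta_j,1),\Normal(\xi-\alpha,1))\le\Delta_j^2$ for every $j$, and transport each $\event(\Delta_j)$ \emph{from} $A_{\xi+\Delta_j}$ (where it has probability $\ge 1/2$ and expected time $\le c_1\lowb(\Delta_j)$) \emph{to} the common arm $A_{\xi-\alpha}$. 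This gives $\Pr_{\xi-\alpha}[\event(\Delta_j)]\ge\tfrac14\exp(-2c_1\lowb(\Delta_j)\Delta_j^2)=\tfrac14(\ln\Delta_j^{-1})^{-2c_1}$. Now disjointness pays off:
\[
\delta\ge\Pr_{\xi-\alpha}[\text{output }\mu>\xi]\ge\sum_j\Pr_{\xi-\alpha}[\event(\Delta_j)]\ge\tfrac14\sum_j(\ln\Delta_j^{-1})^{-2c_1}\ge\tfrac14\,|S|\cdot N^{-2c_1},
\]
where $|S|\gtrsim \epsilon N^{\powconst}/\ln\ln N$ after thinning. Choosing $c_1=\powconst/4$ makes the right side $\gtrsim N^{\powconst/2}/\ln\ln N\to\infty$, the contradiction. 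The ``factoring'' you were looking for is exactly additivity over disjoint events on a single reference arm; no aggregate event or LIL-type argument is needed.
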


Before proving Lemma~\ref{lm:FRACTION-BOUND-DISC}, we show it implies Lemma~\ref{lm:prev-result} as desired. We restate Lemma~\ref{lm:prev-result} here for convenience.

\vspace{0.2cm}
\noindent
{\bf Lemma~\ref{lm:prev-result}	}
(restated)
{\em	
For any $\delta'$-correct algorithm $\algp$ for \sign\ with $\delta' \le 0.01$, 
there exist constants $N_0 \in \N$ and $c_1 > 0$ such that for all $N \ge N_0$:
\[
 |\{ T_{\algp}(\Delta) < c_1 \cdot \Delta^{-2} \ln N \betw \Delta = 2^{-i}, i \in [0,N] \}|  \le 0.1 (N+1).
\]
}

\begin{proofof}{Lemma~\ref{lm:prev-result}}
Let $\gamma = 1/2$. Applying Lemma~\ref{lm:FRACTION-BOUND-DISC}, we can see that
there exist an integer $M_0$ and a constant $c_2$ such that, for any integer $M \ge M_0$, it holds that
$$
\distportion(\algp,c_2 F,M) \le 0.05 \cdot \sqrt{M},
$$
for any $\delta'$-correct algorithm $\algp$ for \sign.

Then for any $N \ge M_0/\ln 2$, we can see that
$$
 |\{ T_{\algp}(\Delta) < c_2 \cdot F(\Delta) \betw \Delta = 2^{-i}, i \in [0,N] \}| \le \distportion(\algp,c_2\cdot F,\lceil \ln 2 \cdot N \rceil) \cdot 2 \le 0.1 \cdot \sqrt{N},
$$
since $e^{\lceil \ln 2 \cdot N\rceil} \ge 2^{N}$, and each interval $[e^{-i},e^{-i+1})$ can contain at most $2$ values of the form $\Delta = 2^{-k}$.

For $\Delta = 2^{-i}$ ($i\ge \sqrt{N}$), we have that
$F(\Delta) = \Delta^{-2} \ln\ln \Delta^{-1} = \Delta^{-2} \ln i \ge \Delta^{-2} \ln N/2$. Therefore, by letting $c_1 = c_2/2$, 
we can bound the cardinality of the set as follows
\begin{align*}
&|\{ T_{\algp}(\Delta) < c_1 \cdot \Delta^{-2} \ln N \betw \Delta = 2^{-i}, i \in [0,N] \}|\\
\le& \sqrt{N} + |\{ T_{\algp}(\Delta) < c_1 \cdot \Delta^{-2} \ln N \betw \Delta = 2^{-i}, i \in [\sqrt{N},N] \}|\\
\le& \sqrt{N} + |\{ T_{\algp}(\Delta) < c_2 \cdot F(\Delta) \betw \Delta = 2^{-i}, i \in [\sqrt{N},N] \}|\\
\le& 1.1 \cdot \sqrt{N}.
\end{align*}
This completes the proof of the lemma.
\end{proofof}

\subsection{Proof for Theorem~\ref{lm:FRACTION-BOUND-DISC}}

The rest of this section is devoted to prove Theorem~\ref{lm:FRACTION-BOUND-DISC}. 
From now on, $0<\delta<0.01$ is a fixed constant.
We first show a simple but convenient lemma based on Lemma~\ref{lm:CHANGEDIST}.

\begin{lemma}\label{lm:BIGHURTSMALL}
	Let $I_1$ (with reward distribution $\distr_1$) and $I_2$ (with reward distribution $\distr_2$)
	be two instances of \sign.  
	Let $\event$ be a random event and 
	$\tau$ be the total number of samples taken by $\alg$.
	Suppose $\Pr_{\distr_1}[\event] \ge \frac{1}{2}$.
	Then, we have	
	\[
	\Pr_{\distr_2}[\event] \ge \frac{1}{4}\exp\bigl(-2\Ex_{\distr_1}[\tau]\KL(\distr_1,\distr_2)\bigr).
	\]
\end{lemma}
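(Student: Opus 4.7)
The plan is to reduce the bound directly to the change-of-distribution inequality (Lemma~\ref{lm:CHANGEDIST}) and then extract the desired form of the conclusion with a short entropy manipulation. Setting $p := \Pr_{\distr_1}[\event]$ and $q := \Pr_{\distr_2}[\event]$, and noting that for \sign\ there is a single arm (so $\tau = \tau_1$), Lemma~\ref{lm:CHANGEDIST} gives
\[
\ent(p, q) \le \Ex_{\distr_1}[\tau]\, \KL(\distr_1, \distr_2) =: D.
\]
The boundary cases are benign: if $q \ge 1/4$ the claimed inequality is automatic (since $e^{-2D} \le 1$), and the case $p = 1$ can be recovered by a standard limiting argument applied to Lemma~\ref{lm:CHANGEDIST}. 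So I can assume $p \in [1/2, 1)$ and $q \in (0, 1/4)$.

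The key step is the entropy inequality
\[
\ent(p, q) \ge p \ln(1/q) - \ln 2,
\]
which I would prove by writing $\ent(p,q) = -p \ln q - (1-p) \ln(1-q) - H(p)$, where $H(p) = -p\ln p - (1-p)\ln(1-p)$ denotes the binary entropy in nats. The term $-(1-p)\ln(1-q)$ is non-negative, and $H(p) \le \ln 2$, which yields the stated bound immediately. (Using natural logarithms throughout is important here so that the constants line up with Lemma~\ref{lm:CHANGEDIST}.)

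Combining the two inequalities and using $p \ge 1/2$:
\[
\ln(1/q) \le \frac{\ent(p,q) + \ln 2}{p} \le 2\ent(p,q) + 2\ln 2 \le 2D + 2\ln 2,
\]
so $q \ge \tfrac{1}{4} \exp(-2D)$, which is exactly the target bound. There is no substantial obstacle; the only subtlety is choosing a slack-friendly form of the entropy bound (rather than, say, $\ent(p,q) \ge p\ln(p/q) + q - p$, which loses constants and makes the final constant come out smaller than $1/4$).
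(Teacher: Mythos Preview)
Your proof is correct and follows essentially the same approach as the paper: both apply Lemma~\ref{lm:CHANGEDIST} and then extract the bound via an elementary lower bound on $\ent(p,q)$. The only cosmetic difference is that the paper reduces to $\ent(1/2,q)=\tfrac{1}{2}\ln\!\bigl(1/(4q(1-q))\bigr)$ via monotonicity of $\ent(\cdot,q)$, whereas you use the decomposition $\ent(p,q)=-H(p)-p\ln q-(1-p)\ln(1-q)$; both routes yield $\ln(1/q)\le 2D+2\ln 2$ and hence the stated constant $1/4$.
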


\begin{proof}
	Applying Lemma~\ref{lm:CHANGEDIST}, we have that
	\[
	\Ex_{\distr_1}[\tau] \cdot \KL(\distr_1,\distr_2) \ge \ent(\Pr_{\distr_1}[\event],\Pr_{\distr_2}[\event]) 
	\ge 
	\ent\left(\frac{1}{2},\Pr_{\distr_2}[\event]\right) 
	\ge \frac{1}{2} \cdot \ln\left(\frac{1}{4\Pr_{\distr_2}[\event](1-\Pr_{\distr_2}[\event])}\right).
	\]
	Hence, we can see that 
	$
	4\Pr_{\distr_2}[\event](1-\Pr_{\distr_2}[\event]) \ge \exp(-2\Ex_{\distr_1}[\tau]\cdot \KL(\distr_1,\distr_2)),
	$
	from which the lemma follows easily.
\end{proof}

From now on, suppose $\alg$ is a $\delta$-correct algorithm for \sign.
We define two events:
$$
\event_U = [\alg \text{ outputs }``\mu>\xi"],
$$
$$
\event(\Delta) = \event_U \wedge [ d\Delta^{-2} \le \tau \le 5 T_{\alg}(\Delta) ],
$$
where 
$\tau$ is the number of samples taken by $\alg$ and
$d$ is a universal constant to be specified later. The following lemma is a key tool, which can be used to partition the event $\event_U$ into several disjoint parts. 

\begin{lemma}
	\label{lm:EVENTEXT}
	For any $\Delta > 0$ and $d < \ent(0.2,0.01)/2$, we have that
	$$
	\Pr_{\xi+\Delta}[\event(\Delta)]=\Pr_{\alg, A_{\xi+\Delta}}[\event(\Delta)] \ge \frac{1}{2}.$$
\end{lemma}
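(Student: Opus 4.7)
The plan is to bound the probability that the event $\event(\Delta)$ fails under $A_{\xi+\Delta}$ by a union bound over the three ways this can happen: $\alg$ answers incorrectly ($\event_U^c$); $\alg$ is too slow ($\tau > 5T_{\alg}(\Delta)$); or $\alg$ is too fast ($\tau < d\Delta^{-2}$). Two of these are immediate. Since $\alg$ is $\delta$-correct and the true sign on $A_{\xi+\Delta}$ is $\mu > \xi$, $\Pr_{\xi+\Delta}[\event_U^c] \le \delta < 0.01$. Since $\Ex_{\xi+\Delta}[\tau] = T_{\alg}(A_{\xi+\Delta}) \le T_{\alg}(\Delta)$, Markov's inequality gives $\Pr_{\xi+\Delta}[\tau > 5T_{\alg}(\Delta)] \le 1/5$.

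The real work lies in bounding $\Pr_{\xi+\Delta}[\event']$ for $\event' := \event_U \wedge \{\tau < d\Delta^{-2}\}$; I would aim for $\Pr_{\xi+\Delta}[\event'] < 1/5$. The key point is that, because $\tau$ is integer-valued, the event $\{\tau < d\Delta^{-2}\}$ forces $\tau \le M$ where $M := \lfloor d\Delta^{-2}\rfloor$, so $\event'$ is entirely measurable with respect to the filtration stopped at the truncated time $\tau' := \min(\tau, M)$. In particular, its probability is unchanged if we halt $\alg$ forcibly after $M$ pulls. Applying Lemma~\ref{lm:CHANGEDIST} to this truncated algorithm with the two hypotheses $A_{\xi+\Delta}$ and $A_{\xi-\Delta}$, together with $\KL\bigl(\Normal(\xi+\Delta,1),\Normal(\xi-\Delta,1)\bigr)=2\Delta^2$ from Lemma~\ref{lm:NORMALKL}, yields
\[
2d \;\ge\; \Ex_{\xi+\Delta}[\tau']\cdot 2\Delta^2 \;\ge\; \ent\bigl(\Pr_{\xi+\Delta}[\event'],\ \Pr_{\xi-\Delta}[\event']\bigr).
\]
On $A_{\xi-\Delta}$ the correct answer is $\mu<\xi$, so $\event'\subseteq \event_U$ and $\delta$-correctness together give $\Pr_{\xi-\Delta}[\event'] \le \delta < 0.01$. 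If one had $\Pr_{\xi+\Delta}[\event'] \ge 0.2$, then by monotonicity of $\ent(x,y)$ (increasing in $x$ when $x>y$, decreasing in $y$ when $y<x$) we would obtain $\ent(\Pr_{\xi+\Delta}[\event'],\Pr_{\xi-\Delta}[\event']) \ge \ent(0.2,0.01)$, contradicting the hypothesis $2d<\ent(0.2,0.01)$. Hence $\Pr_{\xi+\Delta}[\event'] < 1/5$.

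Putting the three bounds together,
\[
\Pr_{\xi+\Delta}[\event(\Delta)] \;\ge\; \Pr_{\xi+\Delta}[\event_U] - \Pr_{\xi+\Delta}[\event'] - \Pr_{\xi+\Delta}\bigl[\tau > 5T_{\alg}(\Delta)\bigr] \;\ge\; (1-\delta) - \tfrac{1}{5} - \tfrac{1}{5} \;>\; \tfrac{1}{2}.
\]
The main obstacle I anticipate is justifying the change-of-distribution step with the \emph{truncated} expected sample count rather than the unconditional $\Ex_{\xi+\Delta}[\tau]$ (which could be enormous); this requires the stopping-time formulation of Lemma~\ref{lm:CHANGEDIST} applied to the algorithm halted at time $\tau'$. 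Everything else is a routine combination of correctness, Markov, and the monotonicity of $\ent$.
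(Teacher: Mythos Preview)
Your proposal is correct and follows essentially the same approach as the paper: a union bound over the three failure modes, Markov's inequality for the upper tail of $\tau$, and a change-of-distribution argument applied to the algorithm truncated at $d\Delta^{-2}$ steps to control the lower tail. The paper phrases the truncation by explicitly defining an auxiliary algorithm $\alg'$ that simulates $\alg$ for at most $d\Delta^{-2}$ steps (and bounds $\Pr_{\xi+\Delta}[\tau<d\Delta^{-2}]<0.25$ rather than your $\Pr_{\xi+\Delta}[\event']<0.2$), but this is only a cosmetic difference from your stopping-time formulation with $\tau'=\min(\tau,M)$.
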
 

\begin{proof} 	
	First, we can see that $\Pr_{\xi+\Delta}[\event_U] \ge 1-\delta \ge 0.99$
	since $\alg$ is $\delta$-correct and ``$\mu>\xi$" is the right answer.
	
	Now, we claim $\Pr_{\xi+\Delta}[\tau < d\Delta^{-2}] < 0.25$.
	Suppose to the contrary that
	$\Pr_{\xi+\Delta}[\tau < d\Delta^{-2}] \ge 0.25$. 
	We can see that $\Pr_{\xi+\Delta}[\event_U \wedge \tau < d\Delta^{-2}] \ge 0.25 - \delta  > 0.2$.
	
	Consider the following algorithm $\alg'$:
	$\alg'$ simulates $\alg$ for $d \Delta^{-2}$ steps. 
	If $\alg$ halts, $\alg'$ outputs what $\alg$ outputs, 
	otherwise $\alg'$ outputs nothing. 
	
	Let $\event_V$ be the event that $\alg'$ outputs $\mu > \xi$.
	Clearly, we have $\Pr_{\alg',\xi+\Delta}[\event_V] > 0.2$. 
	On the other hand, $\Pr_{\alg',\xi-\Delta}[\event_V] < \delta$,
	since $\alg$ is a $\delta$-correct algorithm. 
	So by Lemma~\ref{lm:CHANGEDIST}, we have that
	\[
	\Ex_{\alg',\xi+\Delta}[\tau] \KL(N(\xi+\Delta,\sigma),N(\xi-\Delta,\sigma)) = 
	\Ex_{A',\xi+\Delta}[\tau] 2\Delta^{2} \ge \ent(0.2,\delta) \ge \ent(0.2,0.01).
	\]

	Since $d \Delta^{-2} \ge \Ex_{\alg',\xi+\Delta}[\tau]$, we have $d \ge \ent(0.2,0.01)/2$.
	But this contradicts the condition of the lemma. 
	Hence, we must have $\Pr_{\xi+\Delta}[\tau < d\Delta^{-2}] < 0.25$.
	
	Finally, we can see that
	\begin{align*}
	\Pr_{\xi+\Delta}[\event(\Delta)] &\ge \Pr_{\xi+\Delta}[\event_U] - \Pr_{\xi+\Delta}[\tau < d\Delta^{-2}] - \Pr_{\xi+\Delta}[\tau > 5 T_\alg(\Delta)] \\
	&\ge 1-0.01-0.25 - \Pr_{\xi+\Delta}[\tau > 5 \Ex_{\alg,\xi+\Delta}[\tau]]\\
	&\ge 1-0.01-0.25-0.2 \ge 0.5
	\end{align*}
	where the first inequality follows from the union bound and the second from Markov inequality.
\end{proof}


\begin{lemma}
	\label{lm:sequence-bound}
	For any $\delta$-correct algorithm $\alg$, and any finite sequence $\{\Delta_i\}_{i=1}^{n}$ 
	such that 
	\begin{enumerate}
		\item the events $\{\event(\Delta_i)\}$ are disjoint, 
		\footnote{
			More concretely, the intervals $[d\Delta_i^{-2}, 5T_{\alg}(\Delta_i)]$ are disjoint.
		}
		and $0 < \Delta_{i+1} < \Delta_i$ for all $1 \le i \le n-1$;
		\item there exists a constant $c>0$ such that $T_{\alg}(\Delta_i)\leq c \cdot \lowb(\Delta_i) $ for all $1\le i \le n$,
	\end{enumerate}
	it must hold that:
	\[
	\sum_{i=1}^{n} \exp\{-2c\cdot \lowb(\Delta_i)\cdot \Delta_i^2\} \le 4\delta.
	\]
\end{lemma}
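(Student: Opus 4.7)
The approach is to apply the change-of-distribution inequality (Lemma~\ref{lm:BIGHURTSMALL}) for each $i$ against a single, carefully chosen test instance $\distr_2 = \Normal(\xi - \Delta^*, 1)$ with $\Delta^* > 0$, and then exploit the disjointness of the events $\event(\Delta_i)$ under this common measure to sum the resulting lower bounds.

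Concretely, set $\Delta^* = \Delta_n/3$ (so $\Delta^* > 0$ and $\Delta^* \le \Delta_i/3$ for every $i$). Since $\alg$ is $\delta$-correct on $A_{\xi-\Delta^*}$ (whose true answer is ``$\mu<\xi$''), $\Pr_{\xi-\Delta^*}[\event_U] \le \delta$. The hypothesis that the intervals $[d\Delta_j^{-2}, 5T_\alg(\Delta_j)]$ are disjoint implies that the events $\{\event(\Delta_i)\}$ are pairwise disjoint (they force $\tau$ into disjoint ranges), so together with $\event(\Delta_i) \subseteq \event_U$ we obtain
\[
\sum_{i=1}^{n} \Pr_{\xi-\Delta^*}[\event(\Delta_i)] \;\le\; \Pr_{\xi-\Delta^*}[\event_U] \;\le\; \delta.
\]
For the matching lower bound on each term, apply Lemma~\ref{lm:BIGHURTSMALL} with $\distr_1 = \Normal(\xi+\Delta_i, 1)$, noting that Lemma~\ref{lm:EVENTEXT} guarantees $\Pr_{\xi+\Delta_i}[\event(\Delta_i)] \ge 1/2$ once the constant $d$ in the definition of $\event(\Delta)$ is set below $\ent(0.2,0.01)/2$. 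By Lemma~\ref{lm:NORMALKL},
\[
\KL(\distr_1,\distr_2) \;=\; \frac{(\Delta_i+\Delta^*)^2}{2} \;\le\; \frac{(4\Delta_i/3)^2}{2} \;<\; \Delta_i^2,
\]
and by hypothesis $\Ex_{\xi+\Delta_i}[\tau] \le T_\alg(\Delta_i) \le cF(\Delta_i)$. Lemma~\ref{lm:BIGHURTSMALL} then yields
\[
\Pr_{\xi-\Delta^*}[\event(\Delta_i)] \;\ge\; \tfrac{1}{4}\exp\bigl(-2\,cF(\Delta_i)\,\Delta_i^2\bigr).
\]
Summing and combining with the previous display gives $\tfrac{1}{4}\sum_i \exp(-2cF(\Delta_i)\Delta_i^2) \le \delta$, which rearranges to the claim.

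The main obstacle is the choice of test instance $\distr_2$: we need a single distribution (so that the disjoint-event summation is valid under a common measure) whose KL distance to every $\distr_1 = \Normal(\xi+\Delta_i, 1)$ is at most $\Delta_i^2$, so that the constant $2c$ survives in the exponent. A naive per-$i$ choice $\Normal(\xi-\Delta_i,1)$ would give the cleanest symmetric KL of $2\Delta_i^2$ but would destroy the summation, since the disjointness of events only buys something under a common probability measure. Fixing $\Delta^*$ strictly smaller than every $\Delta_i$ in the (finite) sequence resolves the tension while keeping $\distr_2$ a valid \sign\ instance, and a routine calculation shows the resulting KL is bounded by $\Delta_i^2$ uniformly in $i$, as needed.
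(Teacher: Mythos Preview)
Your proof is correct and essentially identical to the paper's: both pick a single test point $\xi-\alpha$ with $\alpha$ a small constant fraction of $\Delta_n$ (the paper uses $\alpha=\Delta_n/5$, you use $\Delta^*=\Delta_n/3$), bound the KL by $\Delta_i^2$ uniformly in $i$, apply Lemma~\ref{lm:BIGHURTSMALL} together with Lemma~\ref{lm:EVENTEXT}, and sum over the disjoint events to contradict $\delta$-correctness at $\xi-\alpha$. The only cosmetic difference is that the paper phrases it as a proof by contradiction while you argue directly.
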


\begin{proof}	
	Suppose for contradiction that 
	$\sum_{i=1}^{n} \exp\{-2c\cdot \lowb(\Delta_i)\cdot \Delta_i^2\} 
	> 4 \delta$.
	
	Let $\alpha = \frac{1}{5}\Delta_n$. 
	By Lemma~\ref{lm:NORMALKL}, 
	we can see that $\KL(\Normal(\xi+\Delta_i,\sigma),\Normal(\xi-\alpha,\sigma)) 
	= \frac{1}{2\sigma^2} (\Delta_i+\alpha)^2 \le \frac{1}{2}(1.2\Delta_i)^2 \le  \Delta_i^2$.
	By Lemma~\ref{lm:BIGHURTSMALL}, we have:
	\[
	\Pr_{\xi-\alpha}[\event_U] \ge \sum_{i=1}^{n} \Pr_{\xi-\alpha}[\event(\Delta_i)] 
	\ge \frac{1}{4} \sum_{i=1}^{n} \exp\{-2 \Ex_{\xi+\Delta_i}[\tau]\, \Delta_i^2\}
	\ge \frac{1}{4} \sum_{i=1}^{n} \exp\{-2 c\cdot \lowb(\Delta_i)\cdot \Delta_i^2\}>\delta.
	\]
	Note that we need Lemma~\ref{lm:EVENTEXT}(1) (i.e., $\Pr_{\xi+\Delta_i}[\event(\Delta_i)]\geq 1/2$ ) 
	in order to apply Lemma~\ref{lm:BIGHURTSMALL} for the second inequality.
	The above inequality means that    
	$\alg$ outputs a wrong answer for instance $\xi-\alpha$ with probability $>\delta$,
	which contradicts that $\alg$ is $\delta$-correct.
\end{proof} 



Now, we try to utilize Lemma~\ref{lm:sequence-bound} on a carefully constructed sequence $\{\Delta_i\}$.
The construction of the sequence $\{\Delta_i\}$ requires quite a bit calculation.
To facilitate the calculation, we provide a sufficient condition for the disjointness of the sequence, as follows.

\begin{lemma}\label{lm:DISJOINT-EVENTS}
	$\alg$ is any $\delta$-correct algorithm for \sign.
	$c>0$ is a universal constant.
	The sequence $\{\Delta_i\}_{i=0}^N$ satisfies the following properties: 
	\begin{enumerate}
		\item $1/e>\Delta_1>\Delta_2>\dotsc>\Delta_N\ge\alpha>0$.
		\item For all $i\in [N]$, we have that $T_{\alg}(\Delta_i) \le c\cdot \lowb(\Delta_i)$.
		\item Let $L_i = \ln \Delta_i^{-1}$.
		We have $L_{i+1} - L_{i} > \frac{1}{2}\ln\ln\ln\alpha^{-1} + c_1$, in which $c_1 = \frac{\ln c + \ln 5 - \ln d}{2}$.
	\end{enumerate}
	Then, the events $\{\event(\Delta_1), \event(\Delta_2), \ldots, \event(\Delta_N)\}$  are disjoint. 
\end{lemma}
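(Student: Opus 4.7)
The plan is to observe that the event $\event(\Delta_i)$ forces $\tau$ to lie in the interval $J_i := [d\Delta_i^{-2},\, 5T_{\alg}(\Delta_i)]$, so it suffices to show that the intervals $\{J_i\}_{i=1}^{N}$ on the non-negative real line are pairwise disjoint. Since $\Delta_1>\Delta_2>\dotsb>\Delta_N$ implies that the left endpoints $d\Delta_i^{-2}$ are strictly increasing in $i$, we only need to check that consecutive intervals do not overlap, i.e., the right endpoint of $J_i$ strictly precedes the left endpoint of $J_{i+1}$:
\[
5\,T_{\alg}(\Delta_i) \;<\; d\,\Delta_{i+1}^{-2}, \qquad \text{for all } i=1,\dotsc,N-1.
\]
Once this is established for consecutive pairs, every $J_j$ with $j>i$ lies strictly to the right of $J_i$ by monotonicity, giving full pairwise disjointness.

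Next, I would bound $T_{\alg}(\Delta_i)$ using Condition 2: $T_{\alg}(\Delta_i) \le c\cdot \lowb(\Delta_i) = c\,\Delta_i^{-2}\,\ln L_i$ where $L_i = \ln \Delta_i^{-1}$. Substituting into the target inequality and taking logarithms, it becomes
\[
2(L_{i+1}-L_i) \;>\; \ln(5c/d) + \ln\ln L_i \;=\; 2c_1 + \ln\ln L_i.
\]
Note $L_i>1$ is guaranteed by Condition 1 (since $\Delta_1<1/e$), so $\ln L_i>0$ and the double log on the right is well-defined (possibly negative, which only helps us).

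Finally, I would invoke Condition 1 one more time in the form $\Delta_i\ge\alpha$, which yields $L_i\le\ln\alpha^{-1}$, and therefore
\[
\ln\ln L_i \;\le\; \ln\ln\ln\alpha^{-1}.
\]
Plugging this into the previous display, it suffices to have $2(L_{i+1}-L_i) > 2c_1 + \ln\ln\ln\alpha^{-1}$, which is precisely Condition 3. This chain of inequalities closes the argument.

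The only real obstacle is bookkeeping: making sure the monotonicity argument in the first step is done cleanly (so that pairwise disjointness really does follow from consecutive disjointness) and that the logarithms are all positive where needed. Everything else is a direct substitution, so I do not foresee any hidden difficulty.
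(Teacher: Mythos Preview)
Your proposal is correct and follows essentially the same approach as the paper's proof: reduce disjointness of the events to disjointness of the $\tau$-intervals $[d\Delta_i^{-2},\,5T_{\alg}(\Delta_i)]$, check only consecutive pairs, bound $5T_{\alg}(\Delta_i)$ by $5c\,\lowb(\Delta_i)$, take logs, and finish with $L_i\le\ln\alpha^{-1}$. You even include two minor clarifications the paper leaves implicit (why consecutive disjointness plus monotone left endpoints gives full pairwise disjointness, and why $L_i>1$ so $\ln\ln L_i$ is well-defined).
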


\begin{proof}
	We only need to show the intervals for each $\event(\Delta_i)$ are disjoint. 
	In fact, it suffices to show it holds for two adjacent events $\event(\Delta_i)$ and $\event(\Delta_{i+1})$. 
	Since $5T_{\alg}(\Delta_i) \le 5c\cdot\lowb(\Delta_i)$, we only need to show	
	$
	5c\cdot\lowb(\Delta_i) < d \Delta_{i+1}^{-2},
	$
	which is equivalent to
	\[
	\ln c + \ln 5 + 2 L_{i} + \ln\ln L_{i} < \ln d + 2 L_{i+1}.
	\]
	By simple manipulation, this is further equivalent to 
	$L_{i+1} - L_{i} > (\ln c + \ln 5 - \ln d + \ln\ln L_{i})/2$.	
	Since $\alpha \le \Delta_i$, we have $\ln\ln\ln \alpha^{-1} \ge \ln\ln L_i$, which concludes the proof. 
\end{proof}

Now, everything is ready to prove Theorem~\ref{lm:FRACTION-BOUND-DISC}.

\begin{proofof}{Theorem~\ref{lm:FRACTION-BOUND-DISC}}
	Suppose for contradiction, for any $c_1 > 0$, the limit is not zero.
	This is equivalent to
	\begin{align}
	\label{eq:limsup1}
	\limsup_{N \to +\infty} \sup_{\alg' \text{ is } \delta-\text{correct} } \frac{\distportion(\alg',c_1 \lowb, N)}{N^\powconst} > 0.
	\end{align}
	We claim that for $c_1 = \frac{\powconst}{4}$, the above can lead to a contradiction.
	
	
	
	First, we can see that \eqref{eq:limsup1} is equivalent to 
	the existence of an infinite increasing sequence $\{N_i\}_i$ and a positive number $\beta>0$
	such that
	\[
	\sup_{\alg' \text{ is } \delta-\text{correct} } \frac{\distportion(\alg',c_1 \lowb, N_i)}{N_i^\powconst} > \beta, \quad \text{ for all }i.
	\]
	
	Consider some large enough $N_i$ in the above sequence.
	The above formula implies that there exists a $\delta$-correct algorithm $\alg$ 
	such that $\distportion(\alg,c_1\lowb, N_i) \ge \beta N_i^{\powconst}$.
	
	We maintain a set $S$, which is initially empty.
	For each $2\le j\le N_i$, if there exists $\Delta \in [e^{-j},e^{-j+1})$ such that
	$T_{\alg}(\Delta) \le c_1 F(\Delta)$, 
	then we add one such $\Delta$ into the set $S$. 
	We have $|S| \ge \distportion(\alg,c_1\lowb,N_i) -1 \ge \beta N_i^\powconst - 1$
	($-1$ comes from that $j$ starts from 2).
	Let 
	$$
	b = \left\lceil \frac{\ln c_1 + \ln 5 - \ln d +\ln\ln N_i}{2} + 1 \right\rceil.
	$$ 
	We keep only the $1$st, $(1+b)$th, $(1+2b)$th, $\dotsc$ elements in $S$, and remove the rest.
	With a slight abuse of notation, rename the elements in $S$ by
	$\{ \Delta_{i} \}_{i=1}^{|S|}$, sorted in decreasing order. 
	
	It is not difficult to see that $\frac{1}{e}>\Delta_1 >\Delta_2>\dotsc>\Delta_{|S|} \ge e^{-N_i}>0$.
	By the way we choose the elements, for $1 \le i < |S|$, 
	we have 
	$$
	\ln \Delta_{i+1}^{-1} - \ln\Delta_{i}^{-1} > \frac{\ln c_1 + \ln 5 - \ln d}{2} +  \frac{1}{2}\ln\ln N.
	$$ 
	Recall that we also have $T_{\alg}(\Delta_i) \le c_1 \lowb(\Delta_i)$ for all $i$. 
	Hence, we can apply Lemma~\ref{lm:DISJOINT-EVENTS} and conclude that all events $\{\event(\Delta_i)\}$ are disjoint.
	
	We have $|S|\ge (\beta N_i^\powconst-1)/b$, for large enough $N_i$ (we can choose such $N_i$ since $\{N_i\}$ approaches to
	infinity), it implies $|S| \ge \beta N_i^\powconst/\ln\ln N_i$. Then, we can get
	\begin{align*}
	\sum_{j=1}^{|S|} \exp\{-2c_1\cdot \lowb(\Delta_j)\cdot\Delta_j^{-2}\} 
	=&\sum_{j=1}^{|S|} \exp\{-\powconst\cdot \ln\ln \Delta_j^{-1}/2\}\\
	=&\sum_{j=1}^{|S|} (\ln \Delta_j^{-1})^{-\powconst/2}
	\ge |S| \cdot N_i^{-\powconst/2}\\
	\ge& \beta N_i^\powconst/\ln\ln N_i \cdot N_i^{-\powconst/2} = \beta N_i^{\powconst/2}/\ln\ln N_i.
	\end{align*}
	The inequality in the second line holds since $\Delta_j\geq e^{-N_i}$ for all $j\in [|S|]$.
	Since $\powconst>0$, we can choose $N_i$ large enough such that $\beta N_i^{\powconst/2}/\ln\ln N_i > 4\delta$, which renders a contradiction to Lemma~\ref{lm:sequence-bound}.
\end{proofof}

\begin{rem}
	It would be transparent from our proof why the $\ln\ln \Delta^{-1}$ term is essential: 
	The main reason is that $\Delta$ is not known beforehand (if $\Delta$  is known, \sign\ can be solved in $O(\Delta^{-2}\ln \delta^{-1})$ time). 
	Intuitively, an algorithm has to ``guess'' and ``verify'' (in some sense) the true $\Delta$ value.
	As a result, if the algorithm is ``lucky'' in allocating the time for verifying the right guess of $\Delta$,  
	it may stop earlier and thus be faster than $\lowb(\Delta)$ for some $\Delta$s. 
	But as we will demonstrate, if an algorithm stops earlier on 
	larger $\Delta$, it would hurt the accuracy for smaller $\Delta$, and there is no way to be always lucky. 
	This is the only factor accounting for the $\ln\ln \Delta^{-1}$. 
	While Farrell's proof attributes the $\ln\ln \Delta^{-1}$ factor to the Law of Iterative Logarithm (see also \cite{jamieson2014lil}), 
	our proof shows that the $\ln\ln \Delta^{-1}$ factor exists due to algorithmic reasons, 
	which is a new perspective.
\end{rem}
		

\newcommand{\ordlow} {\mathcal{L}}
\newcommand{\arment} {\mathsf{Ent}}

\section{On Almost Instance Optimality}
\label{sec:instanceopt}

Instance optimality (\cite{fagin2003optimal,afshani2009instance}) is the strongest possible notion of optimality in the theoretical computer science literature. Loosely speaking,
an algorithm $\alg$ is instance optimal if the running time 
of $\alg$ on instance $I$ is at most $O(L(I))$, where $L(I)$
is the lower bound required to solve the instance for any 
algorithm.

Let us first consider \sign\, (the two arms case).
As we mentioned, Farrell's lower bound \eqref{eq:2armlowerbound}
is not an instance-wise lower bound.	
On the other hand, 
it is impossible to obtain an $\Omega(\Delta^{-2} \ln\ln \Delta^{-2})$
lower bound for every instance, since we can design an algorithm 
that uses $o(\Delta^{-2} \ln\ln \Delta^{-2})$ samples for infinite number of 
instances. We provide a detailed discussion in Section~\ref{app:sign}. 
Combining this two fact, we can see
that it is impossible to obtain an instance optimal algorithm even for \sign.
Hence, it appears to be more hopeless to consider instance optimality for \bestarm.
However, based on our current understanding, 
we suspect that the two arms case is the only obstruction for an instance optimal 
algorithm, and modulo a $\Gap{2}^{-2}\ln\ln \Gap{2}$ additive term, 
we may be able to achieve instance optimality for \bestarm.

We propose an intriguing conjecture
concerning the instance optimality of \bestarm.
The conjecture provides an explicit formula for 
the sample complexity.
Interestingly, the formula involves 
an entropy term, which we call {\em the gap entropy},
which has not appeared in the bandit literature, 
to the best of our knowledge.
We assume that all reward distributions are Gaussian with variance 1.
In order to state the conjecture
formally, we need to define what is 
an instance-wise lower bound.
Our definition is inspired by that in \cite{afshani2009instance}.

\begin{defi}(Order-Oblivious Instance-wise Lower Bound)
	Suppose $\ordlow(I,\delta)$
	is a function which  
	maps a \bestarm\ instance $I$ with $n$ arms, and confidence parameter $\delta$ to a number.
	We say  $\ordlow(I,\delta)$ is 
	an instance-wise lower bound for $I$ if
	$$
	\ordlow(I,\delta) \le \inf_{\alg: \alg \text{ is } 
	\delta\text{-correct}} \,\,\frac{1}{n!} \cdot \sum_{\pi \in \mathrm{Sym}(n)} T_{\alg}(\pi \circ I).
	$$
\end{defi}

Now, we define the entropy term.

\begin{defi}\label{defi:arment} (Gap Entropy)
	Given a \bestarm\ instance $I$, 
	let 
	$$
	G_i = \{u \in [2,n] \mid  2^{-i} \le \Gap{u} < 2^{-i+1} \},\quad
	H_i = \sum_{u \in G_i} \Gap{u}^{-2},
	\quad\text{ and }\quad
	p_i = H_i/\sum_j H_j.
	$$
	 We can view $\{p_i\}$
	 as a discrete probability distribution.
	We define the following quantity as the {\em gap entropy} for the instance $I$
	$$
	\arment(I) = \sum_{G_i \ne \emptyset} p_i \log p_i^{-1}.
	$$
	Note that it is exactly the Shannon 
	entropy for the distribution defined by $\{p_i\}$.
\end{defi}

\begin{rem}
	We choose to partition the arms based on the powers of $2$. 
	There is nothing special about 2 and replacing it
	by any other constant only changes $\arment(I)$
	by a constant factor.	
\end{rem}

Now, we formally state our conjecture.
Let $H(I)=\sum_{i=2}^{n} \Gap{i}^{-2}$.

\begin{conj}
	\label{conj:optimal}
	For any \bestarm\ instance $I$
	and confidence $\delta\in (0,c)$ ($c$
	is a universal small constant), 
	let $$
	\ordlow(I,\delta) = \Theta\left(H(I)(\ln\delta^{-1} + \arment(I))\right).
	$$
	$\ordlow(I,\delta)$ is an instance-wise lower bound for $I$.
		
	Moreover, there is a \CORRECT\ algorithm for \bestarm\ with sample complexity
	$$
	O\left(\ordlow(I,\delta) +  \Gap{2}^{-2} \ln\ln\Gap{2}^{-1}\right).
	$$ 
	In other words, modulo the 
	$\Gap{2}^{-2}\ln\ln\Gap{2}^{-1}$ additive term, 
	the algorithm is instance optimal.
	We call such an algorithm an {\em almost instance optimal}
	algorithm.
\end{conj}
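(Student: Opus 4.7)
The plan is to split Conjecture~\ref{conj:optimal} into an instance-wise lower bound and a matching algorithm. The Mannor--Tsitsiklis bound already delivers the $H(I)\ln\delta^{-1}$ piece of $\ordlow$, and the two-arm result handles the $\Gap{2}^{-2}\ln\ln\Gap{2}^{-1}$ additive correction on the algorithmic side. What must be isolated on both sides is the gap-entropy factor $H(I)\arment(I)$.

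For the lower bound, I would generalize the construction from the proof of Theorem~\ref{thm:hard-case-exists}. That construction is in fact the uniform case $p_k = 1/(\nbag+1)$, where $\arment(I) = \Theta(\log\nbag) = \Theta(\log\log n)$ and the reduction gives exactly $\Omega(H(I)\arment(I))$. For an arbitrary gap-entropy profile $\{p_i\}$, I would build a base instance $\Iinit$ whose scale $i$ carries $|G_i|$ arms at gap $\approx 2^{-i}$ (so that $H_i \approx |G_i|\cdot 4^i$) and parameterize a family $\{I_S\}$ by which scales receive an additional near-optimal arm. Simulating $\algone{S}$ and $\algtwo{S}$ in parallel (the two-sided trick of Theorem~\ref{thm:hard-case-exists}) would reduce solving $I_S$ quickly to solving \sign\ quickly at many scales at once. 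An averaging argument over $S$ should then extract a fixed $S$ for which a scale-weighted fraction of the \sign\ instances are \emph{fast}, contradicting an appropriately sharpened version of Lemma~\ref{lm:prev-result-sk}.

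For the upper bound, I would retrofit \DELIMINATION. Lemma~\ref{lm:another-bound} already shows $\hI$ is at most twice the number of nonempty scales, so clustered instances hit the conjectured complexity; the gap is in bounding the per-arm overhead by $\log(1/p_s)$ rather than by the global $\log\hI$ or $\log\log n$. My plan is to replace the potential $\Pot{r}$ used in the proof of Lemma~\ref{lm:PROCGROUP1} by an entropy-weighted analogue in which each surviving arm in $\armset{s}$ is charged $\Gap{}^{-2}(\ln\delta^{-1} + \log(H(I)/H_s))$. The per-round amortization would then need a refined \FRACTIONTEST\ that can identify the dominant scale in the current mass of remaining arms, so that each execution of \ELIMINATION\ discharges a constant fraction of the potential of some specific scale, letting the total telescope to $H(I)\arment(I)$.

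The hardest step will likely be the lower bound for highly skewed $\{p_i\}$, where the uniform averaging of Theorem~\ref{thm:hard-case-exists} breaks down. A skewed profile concentrates the time budget on a few scales and leaves others with exponentially less slack; the reduction must simultaneously exploit fast points at every scale, weighted by $p_i$. I expect that this will force a strengthening of Lemma~\ref{lm:prev-result-sk} from a uniform-counting statement to one that bounds the measure of fast points under an arbitrary prior over $\Delta$, controlled by the KL divergences appearing in the change-of-distribution arguments of Section~\ref{sec:lbsign}. Proving this sharpened \sign\ lower bound, rather than adapting the \bestarm\ reduction on top of it, appears to be the true technical bottleneck.
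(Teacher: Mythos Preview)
The statement you are addressing is labeled a \emph{Conjecture} in the paper, not a theorem: the paper does not prove it and explicitly lists ``to prove (or disprove) Conjecture~\ref{conj:optimal}'' as the main open problem in the concluding remarks. There is therefore no proof in the paper to compare your proposal against.

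That said, your plan is well aligned with the heuristics the paper itself offers as evidence. On the algorithmic side, the paper observes (Section~\ref{sec:instanceopt}) that optimally allocating the per-round failure probabilities as $\delta_r = \delta H_r/\sum_i H_i$ would yield the $H(I)\,\arment(I)$ cost, but stops short of an algorithm because the $H_r$ are unknown; your idea of an entropy-weighted potential together with a refined \FRACTIONTEST\ is exactly the kind of mechanism one would need to learn the profile $\{p_i\}$ on the fly. On the lower-bound side, the paper already remarks (in a footnote in Section~\ref{sec:instanceopt}) that varying the arm counts in $\Iinit$ yields $\Omega(H(I_S)\,\arment(I_S))$ for \emph{some} nearby $I_S$, which is your first step; but it also notes explicitly that ``this does not prove a lower bound for every instance.'' Your identification of the skewed-profile case, and of the need for a prior-weighted strengthening of the \sign\ lower bound beyond the uniform counting in Lemma~\ref{lm:prev-result-sk}, is precisely the obstacle the paper leaves unresolved --- so your proposal is a reasonable research program, but it is not (and cannot be compared to) a proof the paper gives.
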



The conjectured sample complexity
consists of two terms, one matching an instance-wise
lower bound $\ordlow(I, \delta)$ and the other matching the 
optimal bound $\Gap{2}^{-2} \ln\ln\Gap{2}^{-1}$ for \sign.
	\footnote{
		From our previous discussion, we know it is impossible to obtain an instance optimal 
		algorithm for 2-arm instances, and the bound $\Gap{2}^{-2} \ln\ln\Gap{2}^{-1}$ is not improvable.
	}
Hence, a resolution of the conjecture
would provide a complete understanding 
of the sample complexity of \bestarm.

In fact, our proofs
of Theorem~\ref{thm:hard-case-exists}
and Theorem~\ref{theo:BESTARMPACD}
provide strong evidences for
Conjecture~\ref{conj:optimal} and we briefly discuss
the connections below.

First, the third additive term 
$\sum_{i=2}^{n} \Gap{i}^{-2}\ln\ln\min(n,\Gap{i}^{-1})$
in Theorem~\ref{theo:BESTARMPACD}
might appear to be an artifact of the algorithm or 
the analysis at first glance.
However, in light of Conjecture~\ref{conj:optimal},
it is a natural upper bound of $H(I)\arment(I)$,
as shown in the following proposition.
On one extreme, the maximum value $\arment(I)$ can get is 
$O(\ln\ln n)$.
This can be achieved by instances in which there are $\log n$ nonempty groups $G_i$ and
they have almost the same weight $H_i$.
On the other extreme where
there is only a constant number of nonempty groups
(i.e., the instance is clustered),
$\arment(I)=O(1)$, and our algorithm can achieve almost instance optimality in this case.
The proof of the proposition is somewhat tedious and we defer it to the end of this section.

\begin{prop}\label{prop:arment-bound}
	For any instance $I$, 
	$
	H(I)\arment(I)$ is upper bounded by  $$O\left(\sum_{i=2}^{n} \Gap{i}^{-2}(1+\ln\ln\min(n,\Gap{i}^{-1})) \right).
	$$
	In particular, $\arment(I) = O(\ln\ln n)$. Moreover, for any clustered instance $I$, we have $\arment(I) = O(1)$. 
\end{prop}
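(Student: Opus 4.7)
The clustered case is immediate: by definition $|S| := |\{j : G_j \neq \emptyset\}|$ is $O(1)$, and Shannon's bound gives $\arment(I) \leq \log_2 |S| = O(1)$. The second assertion $\arment(I) = O(\ln\ln n)$ is also a direct consequence of the main inequality once it is proved: since $\ln\ln\min(n,\Gap{u}^{-1}) \leq \ln\ln n$ for every arm $u$, the right-hand side is bounded by $O(H(I)(1+\ln\ln n))$, and dividing by $H(I)$ completes the deduction.

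The main work is the displayed bound. Write $H(I)\arment(I) = \sum_{j \in S} H_j \log_2(H(I)/H_j)$ using $p_j = H_j/H(I)$, and note the right-hand side equals $\Theta\!\left(\sum_{j \in S} H_j (1 + \ln\ln\min(n,2^j))\right)$ since $\Gap{u}^{-1} \in (2^{j-1}, 2^j]$ for $u \in G_j$. The critical structural input is that each $u \in G_j$ satisfies $\Gap{u}^{-2} \geq 4^{j-1}$, giving $H_j \geq 4^{j-1}$ and therefore $\sum_{j \in S} 4^{j-1} \leq H(I)$, from which $|S| \leq 1 + \log_4 H(I)$. The plan is to split $S = S_L \sqcup S_H$ with $S_L := \{j \in S : j \leq \log_2 n\}$ and $S_H := \{j \in S : j > \log_2 n\}$, and invoke the entropy chain rule
$$\arment(I) = H_2(p(S_L)) + p(S_L)\,\arment_L + p(S_H)\,\arment_H,$$
where $\arment_L, \arment_H$ denote the Shannon entropies of the conditional distributions $p|_{S_L}, p|_{S_H}$ and $H_2(\cdot)$ is the binary entropy (so $H_2(p(S_L)) \leq 1$ is absorbed by the $H(I)$ term on the right).

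For the low-index piece, Gibbs' inequality with the reference distribution $q_j := 1/(j(j+1))$ on $\mathbb{Z}_{>0}$ (which sums to $1$ by telescoping) gives $\arment_L \leq \mathbb{E}_{p|_{S_L}}[\log_2(j(j+1))] \leq 2\,\mathbb{E}_{p|_{S_L}}[\log_2 j] + 1$; since $\ln\ln 2^j = \ln(j\ln 2) = \Theta(\ln j)$ for $j \leq \log_2 n$, this matches the target contribution $\Theta\!\left(\sum_{j \in S_L} H_j(1 + \ln j)\right)$ termwise after multiplying by $H(I)\,p(S_L) = \sum_{j \in S_L} H_j$. For the high-index piece, the target is $\arment_H = O(\ln\ln n)$, since $\ln\ln\min(n,2^j) = \ln\ln n$ for all $j \in S_H$. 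I plan to prove this via a ``balanced versus concentrated'' dichotomy on $p|_{S_H}$: if $p|_{S_H}$ is near-uniform, then the $H_j$'s for $j \in S_H$ are comparable, forcing $N_j \propto 4^{-j}$; substituting into $\sum_j N_j \leq n-1$ and using $j > \log_2 n$ on $S_H$ yields $|S_H| = O(\log n)$, hence $\arment_H \leq \log_2 |S_H| = O(\ln\ln n)$. Otherwise $p|_{S_H}$ is concentrated on a small effective support, again bounded via a dyadic decomposition of $S_H$ by $H_j$-level combined with $H_j \geq 4^{j-1}$.

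The main obstacle is formalizing the concentrated case: in particular, $p|_{S_H}$ may exhibit a heavy head together with a long tail of small probabilities. Here the tail indices cannot span too many values, because $H_j \geq 4^{j-1}$ forces any tail $j$ with $p_j = H_j/H(I)_H$ small to satisfy $4^{j-1} \leq H(I)_H/m$ where $m$ is the tail size, confining the tail to a window of width $O(\log_4(H(I)_H/m))$; a second application of the chain rule to the tail together with the identity $\ln\ln\min(n,2^j) = \ln\ln n$ on $S_H$ then controls the residual entropy. Combining the bounds on $\arment_L$ and $\arment_H$ in the chain-rule decomposition, and using $H_2(p(S_L)) \leq 1$, delivers the claimed inequality.
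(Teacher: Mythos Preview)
Your low-index piece ($j \le \log_2 n$) is clean and correct: Gibbs' inequality with $q_j = 1/(j(j+1))$ immediately gives the desired $\sum_{j\in S_L} H_j(1+\ln j)$ bound. This is in fact more transparent than the paper's treatment of those indices.

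The genuine gap is in your high-index piece. The ``balanced versus concentrated'' dichotomy is not a rigorous case split, and your sketch of the concentrated case does not go through as written. You argue that if $p'_j$ is small then $4^{j-1}\le H_j\le M_H/m$, ``confining the tail to a window of width $O(\log_4(M_H/m))$''. But $M_H$ is unconstrained (it can be $4^{j_{\max}}$ with $j_{\max}$ arbitrarily large), so this window width is not $O(\log\log n)$; the bound you wrote controls the \emph{location} of tail indices, not their number, and gives no useful entropy bound. Likewise, in the balanced case you correctly get $|S_H|=O(\log n)$ from $N_j\propto 4^{-j}$ and $\sum N_j\le n$ (the hypothesis $j>\log_2 n$ is actually not used there), but the dichotomy does not cover intermediate profiles, and ``a second application of the chain rule to the tail'' does not close the loop.

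What is missing is a structural mechanism that simultaneously controls the index spread and the number of comparable-mass groups. The paper does this with a single blocking scheme: it partitions the indices into blocks $B_1,\dots,B_h$ so that each block captures at least half the remaining \emph{arm count} $\sum S_i$, forcing $h=O(\log n)$, and then takes $q_i\propto t^{-2}\,2^{\,i-r_t}$ for $i\in B_t$. The factor $t^{-2}$ contributes $2\log t\le O(\log\log n)$, and the geometric factor $2^{\,i-r_t}$ contributes $(r_t-i)$, which is then charged to the cross terms $\sum_{k>i} H_k\,4^{i-k}(k-i+1)$ using the block property $S_i\le\sum_{k\ge r_t}S_k$ for $i<r_t$. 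Your level-decomposition idea can be salvaged, but only by proving that each dyadic level $L_\ell=\{j: H_j\in(2^{-\ell-1}M_H,\,2^{-\ell}M_H]\}$ has $|L_\ell|=O(\log n)$ (which does follow from $N_j\asymp H_j/4^j$ and $N_a\le n$, $4^{b-1}\le H_b$) and then running Gibbs with a two-part reference $q_j\propto(\ell(j)+1)^{-2}/|L_{\ell(j)}|$; this yields $\arment_H\le O(\log\log n)+2\log(\arment_H+2)$ and hence $\arment_H=O(\log\log n)$. That is essentially a reinvention of the paper's mechanism, and it is not what your proposal currently contains.
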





\eat{
Our previous results already showed that, even in the case for only two arms, we can not hope for an algorithm with sample complexity $O(\ordlow(I,\delta))$. Hence, there must be a gap between the achievable sample complexity and the instance-wise lower bound. 
And our conjecture consists of two part, the first part is that this gap is only an {\em additive} term of $O(\Gap{i}\ln\ln\Gap{i}^{-2})$. The second part is the lower bound is given by a simple entropy expression $\arment(I)$ just like in \cite{afshani2009instance}.
}

Besides the fact that our algorithm can achieve
optimal results for both extreme cases,
we have more reasons to believe why 
$\arment(I)$ should enter the picture.

\topic{Gap Entropy $\arment(I)$}
First, we motivate $\arment$ from the algorithmic side.
Consider an elimination-based algorithms 
(such as \cite{karnin2013almost} or our algorithm).
We must ensure that the best arm is not eliminated 
in any round. 
Recall that in the $r$-th round, we want to eliminate arms with gap $\Delta_r = \Theta(2^{-r})$, which is done by obtaining an approximation of the best arm, then take $O(\Delta_r^{-2} \ln \delta_r^{-1})$ samples from each arm and eliminate the arms with smaller empirical means. 
Roughly speaking, we need to assign the failure probability $\delta_r$ carefully to each round
(by union bound, we need $\sum_r\delta_r\leq \delta$).
The algorithm in \cite{karnin2013almost}
use $\delta_r = O(\delta \cdot r^{-2})$. and our algorithm uses a better way to assign $\delta_r$, based on the information we collected using \FRACTIONTEST. 
However, if one can assign
$\delta_r$s optimally (i.e., minimize 
$\sum_{r} H_r \ln \delta_r^{-1}$ subject to $\sum_{r} \delta_r \le \delta$),
one could achieve the entropy bound 
$\sum_r H_r \cdot (\ln\delta^{-1} + \arment(I))$
(by letting $\delta_r = \delta H_r/\sum_i H_i $).
Of course, this does not lead to
an algorithm directly,
as we do not know $H_i$s in advance.



We also have strong evidence from our lower bound result. 
In fact, it is possible to extend 
Theorem~\ref{thm:hard-case-exists} in the following way. 
\footnote{We omit the details in this version.
}
We can use different $I_{init}$ (e.g. choosing a different number of arms with gap $2^{-k}$ for each $k$) and show there is a similar instance $I_S$ such that $\alg$ requires at least $\Omega( H(I_S) \cdot \arment(I_S))$ samples. 
Even this does not prove an lower bound for every instance, it strongly suggests $\Omega( H(I) \cdot \arment(I))$ is the right lower bound. 


\eat{
In order to state it, we need to discuss some details of our proof for Theorem~\ref{thm:hard-case-exists}. 
In the proof, we first construct a \bestarm\ instance $I_{init}$, and a class of instances $\{I_S\}$ which are very similar to $I_{init}$ (e.g. replacing a few arms in $I_{init}$, see the Proof for Theorem~\ref{thm:hard-case-exists} for the details). Afterwards we provide a reduction from \sign\ to \bestarm, such that if an algorithm for \bestarm\ is faster enough for all instances in that class, then there is a fast algorithm for \sign, which contracts our proposed lower bound for \sign.

It turns out that\footnote{We omit the details in this paper as it is tedius.}, if we use different $I_{init}$ (e.g. adjusting the number of arms with gap $2^{-k}$ for each $k$), and do a more carefull analysis for the transformed \sign\ problem, we can prove a result similiar to Theorem~\ref{thm:hard-case-exists}, stating that for such an instance $I_{init}$, and any \CORRECT\ algorithm $\alg$, there is a similiar instance $I_S$ such that $\alg$ needs at least $\Omega( \sum_{i=2}^{n} \Gap{i}^{-2}(I_S) \cdot \arment(I_S))$ samples. This strongly suggests that $\sum_{i=2}^{n} \Gap{i}^{-2} \arment(I)$ is an instance-wise lower bound (But what we discuss here is not an instance-wise lower bound, as it does not show any algorithm should be slow at exactly $I_{init}$, and the technique only works for the case that all gaps are powers of 2). 
}

\eat{
The main ingredient in our algorithm for \bestarm, is the procedure \FRACTIONTEST. It only incurs $O(\Gap{2}^{-2}\ln\ln \Gap{2}^{-1})$ total samples, but brings valuable information about the instance $I$, which saves the total amount of taken samples. Despite our algorithm does not achieve our conjectured bound, we are optimisitic that certain more clever construction will suffices. 
}

Now, we provide a proof of Proposition~\ref{prop:arment-bound}.
\vspace{-0.6cm}
	\begin{proofof}{Proposition~\ref{prop:arment-bound}}
		\newcommand{\mk}{m}
		In the following the base of $\log$ is $2$.
		Let $\mk$ denote the maximum index $i$ such that $G_{i}$ is non-empty, and $S_i = |G_i|$. 
		Clearly, $4^{i-1} S_i \le H_i \le 4^i S_i$. 
		
		We first prove the second claim, which is straightforward. 
		By definition, in a clustered instance,  
		the number of non-empty $G_i$ is bounded by a constant $C$, hence the corresponding entropy is bounded by a constant.
		
		For the first claim, we make use of 
		the non-negativity of KL divergence .
		We construct another probability distribution $q$. Recall that
		$\KL(p,q) = \sum (p_i \log q_i^{-1} - p_i \log p_{i}^{-1}) \ge 0$, which implies that $\arment(I) \le \sum p_i \log q_i^{-1} $.
		
		Now, we partition all the arms into blocks. The $t$-th block $B_t$ is the union of a consecutive segment of $G_{l_t},G_{l_t+1},\dots,G_{r_t}$. 
		The blocks are constructed one by one starting from the first block $B_1$.
		The $t$-th block $B_t$ is constructed as follows: let $l_t = r_{t-1}+1$ (if $t=1$, then $l_t = 1$), and $r_t$ be the first $k$ such that $\sum_{i=l_t}^{k} S_i \ge \frac{1}{2} \cdot \sum_{i=l_t}^{\mk} S_i$. We terminate w
		hen $r_t = \mk$. 
		Suppose there are $h$ blocks in total.
		Clearly $h = O(\log n)$.
		
		Now, we define the probability distribution $q$. For each $i$ such that $G_i$ belongs to $B_t$, we let $q_i = \frac{6}{\pi^2} \cdot t^{-2}\cdot 2^{i-r_t-1}$. Note that $\sum_{i=1}^{\mk} q_i = \frac{6}{\pi^2} \cdot \sum_{t=1}^{h} t^{-2} \sum_{i=l_t}^{r_t} 2^{i-r_t-1} \le \frac{6}{\pi^2} \cdot \sum_{t=1}^{h} t^{-2} \le 1$. In addition, we let $q_{\mk + 1} = 1 - \sum_{i=1}^{\mk} q_i$. 
		Hence, $q$ is a well defined distribution over
		$[1, \mk + 1]$.
		
		\eat{
			Then we have (we adopt the convention $0\log 0 = 0$):
			
			\begin{align*}
			\KL(p,q) &\ge 0 \\
			\sum_{i=1}^{\mk+1} p_i \log p_i^{-1} &\le \sum_{i=1}^{\mk+1} p_i \log q_i^{-1} \\
			\sum_{i=2}^{n} \Gap{i}^{-2} \arment(I) &\le \sum_{i=1}^{\mk} H_i \log q_i^{-1} \\
			\end{align*}
		}
		
		Let $C = \log \frac{\pi^2}{6}$.
		So now we only need provide an upper bound for $\sum_{i=1}^{\mk} H_i \log q_i^{-1}$.
		\footnote{
			For empty groups, we adopt the convention $0\log 0 = 0$.
		}
		\begin{equation}\label{eq:newbd}
		\sum_{i=1}^{\mk} H_i \log q_i^{-1} = \sum_{t=1}^{h} \sum_{i=l_t}^{r_t} H_i (2\log t + (r_t-i+1) + C). 
		\end{equation}
		Now, consider the following quantity $U$.	\begin{equation*}\label{eq:upbd}
		U=\sum_{t=1}^{h} \sum_{i=l_t}^{r_t} H_i(C + 1 + \sum_{j=1}^{i-1} 4^{j-i+1}(i-j+1) + 2\log t). 
		\end{equation*}
		
		We claim that 
		$\sum_{i=1}^{\mk} H_i \log q_i^{-1}\leq U$.
		We first see that the proposition is an 
		easy consequence of the claim. 
		Since $\sum_{j=1}^{i-1} 4^{j-i+1}(i-j+1)=O(1)$,
		we have  
		$U=O\left(\sum_{t=1}^{h} \sum_{i=l_t}^{r_t} H_i(1 + \log t))\right)$. 	
		Note that $t \le \min(h,i)$. So $U$ 
		can be further bounded by 
		$O(\sum_{i=1}^{\mk} H_i (1+\log\min(\log n,i)))$, which is exactly
		$O\left(\sum_{i=2}^{n} \Gap{i}^{-2}(1+\ln\ln\min(n,\Gap{i}^{-1})) \right) $. 
		Now, the only remaining task is to prove the 
		claim. We first see that
		\begin{align}
		&\sum_{t=1}^{h} \sum_{i=l_t}^{r_t}  H_i \left(C + 1  + \sum_{j=1}^{i-1} 4^{j-i+1}(i-j+1) + 2\log t \right) \notag\\
		\ge&\sum_{t=1}^{h} \sum_{i=l_t}^{r_t} \left( H_i(C + 1 + 2\log t) + \sum_{k=i+1}^{\mk} H_k \cdot 4^{i-k+1}(k-i+1) \right)\notag\\
		\ge&\sum_{t=1}^{h} \sum_{i=l_t}^{r_t} \left(H_i(C + 1  + 2\log t) + \sum_{k=i+1}^{\mk} S_k 4^{k-1}\cdot 4^{i-k+1}(k-i+1)\right)\notag\\
		\ge&\sum_{t=1}^{h} \sum_{i=l_t}^{r_t} \left(H_i(C + 1  + 2\log t) + 4^i\cdot\sum_{k=i+1}^{\mk} S_k(k-i+1) \right) \label{eq:last-line}
		\end{align}
		
		For each $i$ such that $l_t \le i < r_t$,
		by the construction of the blocks, we have $S_i \le \sum_{j=r_t}^{\mk} S_j$. 
		So we have $4^i\sum_{k=r_t}^{\mk} S_k(k-i+1) \ge 4^i \sum_{k=r_t}^{\mk} S_k(r_t-i+1) \ge  4^i S_i(r_t-i+1) \ge H_i(r_t-i+1) $. Hence,
		each term in \eqref{eq:last-line} is no less
		than the corresponding term in \eqref{eq:newbd}. (It is trivially true for $i = r_t$).
		This concludes the proof.
	\end{proofof}


\section{Missing Proofs in Section~\ref{sec:bestarmupperbound}}
\label{app:missingpf-upper}

\subsection{Proof for Lemma~\ref{lm:ESTIMATE-PROCEDURE}}

\vspace{0.2cm}
\noindent
{\bf Lemma~\ref{lm:ESTIMATE-PROCEDURE} }
(restated)
{\em
Suppose $\epsilon < 0.1$ and $t \in (\epsilon,1-\epsilon)$. With probability $1-\delta$,
the following hold:
\begin{itemize} [leftmargin=*]
	\item If {\em\FRACTIONTEST} outputs $\True$, then $|S^{>c_r}| < (1-t+\epsilon) |S|$
	(or equivalently $|S^{\le c_r}| > (t-\epsilon)|S|$).
	\item If {\em\FRACTIONTEST} outputs $\False$, then $|S^{<c_l}| < (t+\epsilon) |S|$
	(or equivalently $ |S^{\ge c_l}| > (1-t-\epsilon)|S|$).
\end{itemize}
Moreover the number of samples taken by the algorithm is $O(\ln\delta^{-1}\epsilon^{-2}\Delta^{-2} \ln \epsilon^{-1})$, in which $\Delta = c_r-c_l$.
}

\begin{proofof}{Lemma~\ref{lm:ESTIMATE-PROCEDURE}}
	
	Let $S_a = S^{<c_l}$, $ S_b = S^{>c_r}$, $N_a = |S_a|$, $N_b = |S_b|$, $N = |S|$.
	For each iteration $i$ and the arm $a_i$ in line 4, by Lemma~\ref{lm:UNIFORM-SAMPLE-PROCEDURE}, 
	we have $\Pr[|\hamean{a_i}-\amean{a_i}| \ge  (c_r-c_l)/2] \le \epsilon/3$. 
	Hence, if $\amean{a_i} < c_l$, then $\Pr[\hamean{a_i} < \frac{c_l+c_r}{2}] \ge 1-\epsilon/3$. 
	Similarly, if $\amean{a_i} > c_r$, then $\Pr[\hamean{a_i} < \frac{c_l+c_r}{2}] \le \epsilon/3$.
	
	Let $X_i$ be the indicator Boolean variable $\indicator\left\{\hamean{a_i} < \frac{c_l+c_r}{2} \right\}$. 
	Clearly $X_i$s are i.i.d.
	From the algorithm, we can see that $\cnt = \sum_{i=1}^{\tot} X_i$.
	Let $E=\Ex[X_i]$. Let $\hat{E} = \cnt/\tot$, which is the empirical value of $E$.
	By Chernoff bound, we can easily get that	
	$
	\Pr[|E-\hat{E}| \ge \epsilon/3] \le \delta.
	$
	
	In the rest of the proof, 
	we condition on the event that $|E-\hat{E}| < \epsilon/3$, which happens with probability at least $1-\delta$.
	Suppose $\hat{E} > t$. Then we have
	$E > t - \epsilon/3$.
	It is also easy to see that:
	\begin{align*}
	E\cdot N &= \sum_{a \in S} \Pr\left[\hamean{a} < \frac{c_l+c_r}{2}\right] 
	\le (N-N_b)\cdot 1 + (\epsilon/3)N_b = N - (1-\epsilon/3)N_b, 
	\end{align*}
	as $\epsilon < 0.1$, $\frac{1}{1-\epsilon/3} < (1+2\epsilon/3)$. So, we have proved the first claim:
	\[
	N_b \le \frac{(1-E)N}{1-\epsilon/3} \le \frac{(1-t+\epsilon/3)N}{1-\epsilon/3} < (1-t+\epsilon/3)(1+2\epsilon/3)N \le (1-t+\epsilon)N.
	\]
	
	The second claim is completely symmetric. Suppose $\hat{E} \le t$. Then we have $E \le t+\epsilon/3$.
	We also have $E\cdot N \ge N_a(1-\epsilon/3)$. So,
	\[
	N_a \le \frac{E \cdot N}{1-\epsilon/3}\le \frac{(t+\epsilon/3)N}{1-\epsilon/3} < (t+\epsilon/3)(1+2\epsilon/3)N \le (t+\epsilon) N.
	\]
	Finally, the upper bound for the number of samples can be verified by a direct calculation.
\end{proofof}

\subsection{Proof for Lemma~\ref{lm:ELIMINATION-PROCEDURE}}
\vspace{0.2cm}
\noindent
{\bf Lemma~\ref{lm:ELIMINATION-PROCEDURE} }
(restated)
{\em
	Suppose $\delta < 0.1$.
	Let $S' = \ELIMINATION(S,c_l,c_r,\delta)$. 
	Let $A_1$ be the best arm among $S$, with mean $\amean{A_1} \ge c_r$.
	Then with probability at least $1-\delta$, the following statements hold
	\begin{enumerate} [leftmargin=*]
		\item $A_1 \in S'$ \,\, (the best arm survives);
		\item $|S'^{\le c_l}| < 0.1 |S'|$ \,\, (only a small fraction of arms have means less than $c_l$);
		\item The number of samples is $O(|S| \ln \delta^{-1} \Delta^{-2})$, in which $\Delta = c_r-c_l$.
	\end{enumerate}
	Note that with probability at most $\delta$, there is no guarantee for any of the above statements.
}

Before proving Lemma~\ref{lm:ELIMINATION-PROCEDURE}, we first describe two events (which happen with high probability) 
that we condition on, in order to simplify the argument.

\begin{lemma}\label{GOOD1}
	With probability at least $1-\delta/2$, it holds that
	in all round r, \FRACTIONTEST\ outputs correctly, and $|\hamean{A_1}-\amean{A_1}| < \frac{c_r-c_m}{2}$.
\end{lemma}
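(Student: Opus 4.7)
The plan is a direct union bound over all iterations of \ELIMINATION, using the per-call error guarantees already established for the two subroutines invoked inside the loop. In iteration $r$ the algorithm invokes at most two randomized components whose correctness matters for the event in the lemma: (i) the call $\FRACTIONTEST(S_r,c_l,c_m,\delta_r,0.075,0.025)$, and (ii) when this test returns $\True$, the call $\UNIFORMSAMPLING(S_r,(c_r-c_m)/2,\delta_r)$, from which we extract the empirical mean $\hamean{A_1}$ of the best arm $A_1\in S_r\subseteq S$. Here $\delta_r=\delta/(10\cdot 2^r)$.

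I would first apply Lemma~\ref{lm:ESTIMATE-PROCEDURE} to bound the failure probability of the $r$-th \FRACTIONTEST\ call by $\delta_r$, where a ``failure'' means the guarantee on $|S^{>c_r}|$ or $|S^{<c_l}|$ (whichever branch is taken) does not hold. Next I would apply Lemma~\ref{lm:UNIFORM-SAMPLE-PROCEDURE} to the single arm $A_1$ in the $r$-th call to \UNIFORMSAMPLING\ (only the estimate of $A_1$ is needed for this lemma, not of all arms in $S_r$): with probability at least $1-\delta_r$, the empirical mean satisfies $|\hamean{A_1}-\amean{A_1}|<(c_r-c_m)/2$. If \FRACTIONTEST\ returns $\False$ in round $r$, \UNIFORMSAMPLING\ is not called and only the first bound is needed for that round.

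Finally, I would take a union bound over all rounds $r\ge 1$:
\[
\sum_{r=1}^{\infty} 2\delta_r \;=\; \frac{2\delta}{10}\sum_{r=1}^{\infty} 2^{-r} \;=\; \frac{\delta}{5} \;\le\; \frac{\delta}{2},
\]
which yields the claimed probability bound. There is no substantive obstacle here: the geometric decay of $\delta_r=\delta/(10\cdot 2^r)$ is chosen precisely so that the union bound converges well below $\delta/2$, and the per-round guarantees are direct invocations of Lemmas~\ref{lm:UNIFORM-SAMPLE-PROCEDURE} and~\ref{lm:ESTIMATE-PROCEDURE}. The only mild care needed is to note that we only require the concentration bound on the single arm $A_1$ (not on every arm in $S_r$), so a single $\delta_r$ suffices for the \UNIFORMSAMPLING\ step in round $r$.
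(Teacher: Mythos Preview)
Your proposal is correct and follows essentially the same approach as the paper: per-round error bounds of $\delta_r$ for each of \FRACTIONTEST\ and the concentration of $\hamean{A_1}$ from \UNIFORMSAMPLING, followed by the union bound $\sum_{r\ge 1} 2\delta_r \le \delta/5 \le \delta/2$. Your additional remarks (only $A_1$'s estimate matters; the \UNIFORMSAMPLING\ bound is vacuous in rounds where \FRACTIONTEST\ returns $\False$) are correct but not needed for the argument.
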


\begin{proof}
	Fix a round $r$. 
	\FRACTIONTEST\ outputs incorrectly with probability at most $\delta_r$.
	By Theorem~\ref{lm:UNIFORM-SAMPLE-PROCEDURE}, $\Pr(|\hamean{A_1}-\amean{A_1}| \ge \frac{c_r-c_m}{2}) \le \delta_r$.
	
	The lemma follows from a simple union bound over all rounds: 
	$
	2\sum_{r=1}^{+\infty} \delta_r \le 2\delta\sum_{r=1}^{+\infty}0.1/2^r \le \delta/2.
	$
\end{proof}

\begin{lemma}\label{GOOD2}
	Let $N_r = |S_{r}^{\le c_m}|$. 
	Then with probability at least $1-\delta/2$, 
	for all rounds $r$ in which Algorithm~\ref{algo:ELIMINATION-PROCEDURE} does not terminate, $N_{r+1} \le \frac{1}{4} N_r$.
\end{lemma}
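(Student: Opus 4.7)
The plan is to bound, for each round $r$ in which \ELIMINATION\ does not terminate, the probability of the bad event $\{N_{r+1} > N_r/4\}$ by $4\delta_r$, and then apply a union bound. The heart of the argument is a single application of Markov's inequality, piggybacking on the expectation bound one gets from Hoeffding (Lemma~\ref{lm:UNIFORM-SAMPLE-PROCEDURE}).

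Fix a round $r$, and condition on the set $S_r$ reaching round $r$ together with the event that \FRACTIONTEST\ returns $\True$ (so \UNIFORMSAMPLING\ is actually invoked with approximation parameter $(c_r-c_m)/2$ and confidence $\delta_r$). The key observation is directional: if an arm $a \in S_r$ with $\amean{a} \le c_m$ survives to $S_{r+1}$, i.e., $\hamean{a} > (c_m+c_r)/2$, then necessarily
\[
\hamean{a} - \amean{a} > (c_m+c_r)/2 - c_m = (c_r-c_m)/2,
\]
so in particular $|\hamean{a}-\amean{a}| \ge (c_r-c_m)/2$. By Lemma~\ref{lm:UNIFORM-SAMPLE-PROCEDURE}, this occurs with probability at most $\delta_r$. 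Summing over the $N_r$ arms in $S_r^{\le c_m}$, linearity of expectation yields
\[
\Ex[N_{r+1} \mid S_r,\ \text{not terminated at }r] \le \delta_r \cdot N_r,
\]
and Markov's inequality then gives $\Pr[N_{r+1} > N_r/4 \mid S_r,\ \text{not terminated at }r] \le 4\delta_r$ (trivially true when $N_r=0$). Since this bound holds uniformly in $S_r$, the unconditional probability that round $r$ is non-terminating and $N_{r+1} > N_r/4$ is also at most $4\delta_r$.

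To finish, union bound over all rounds:
\[
\sum_{r \ge 1} 4\delta_r \;=\; \sum_{r \ge 1} \frac{4\delta}{10 \cdot 2^r} \;=\; \frac{2\delta}{5} \;\le\; \frac{\delta}{2}.
\]
Thus with probability at least $1-\delta/2$, every non-terminating round satisfies $N_{r+1} \le N_r/4$, as claimed. There is no real obstacle here; the only mild subtlety is the conditioning (separating the randomness of \FRACTIONTEST\ from that of \UNIFORMSAMPLING), which is handled by first fixing $S_r$ and the termination decision, then invoking the fresh randomness used inside \UNIFORMSAMPLING.
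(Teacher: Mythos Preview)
Your proof is correct and follows essentially the same approach as the paper: bound the survival probability of each arm in $S_r^{\le c_m}$ by $\delta_r$ via Lemma~\ref{lm:UNIFORM-SAMPLE-PROCEDURE}, use linearity of expectation and Markov's inequality to get $\Pr[N_{r+1} > N_r/4] \le 4\delta_r$, then take a union bound over rounds. Your explicit handling of the conditioning on $S_r$ and the termination decision is a nice touch that the paper leaves implicit.
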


\begin{proof}
	Suppose $a \in S_r^{\le c_m}$. 
	By Theorem~\ref{lm:UNIFORM-SAMPLE-PROCEDURE}, we have that 
	$\Pr[|\hamean{a} - \amean{a}| \ge \frac{c_r-c_m}{2}] \le \delta_r$. 
	So $\Pr[\hamean{a} > \frac{c_m+c_r}{2}] \le \delta_r$.
	Then, we can see that $\Ex[N_{r+1}] \le \delta_rN_{r}$. 
	By Markov inequality, we can see that $\Pr(N_{r+1} > \frac{1}{4} N_r) \le \frac{\delta_r N_r}{\frac{1}{4}N_r} = 4\delta_r$.
	
	Again, the lemma follows by a simple union bound: 
	$
	\sum_{r=1}^{+\infty} 4\delta_r \le 4\delta\sum_{r=1}^{+\infty} 0.1/2^r \le \delta/2.
	$
\end{proof}

\begin{proofof}{Lemma~\ref{lm:ELIMINATION-PROCEDURE}}
	With probability at least $1-\delta$, both statements in Lemma~\ref{GOOD1} and Lemma~\ref{GOOD2} hold.
	Let that event be $\event_G$. Now we prove Lemma~\ref{lm:ELIMINATION-PROCEDURE} under the condition that $\event_G$ holds.
	Now we prove all the claims one by one.
	
	Consider the first claim.
	Note that for all $r$, conditioning on $\event_G$, 
	we have that $|\hamean{A_1} - \amean{A_1}| < \frac{c_r-c_m}{2}$, 
	or equivalently $\hamean{A_1} > c_r - \frac{c_r-c_m}{2} = \frac{c_m+c_r}{2}$. 
	Hence, $A_1$ survives all rounds and $A_1 \in S'$.
	
	For the second claim, we note that, conditioning on event $\event_G$, $\textrm{\FRACTIONTEST}$ always outputs correctly. 
	Suppose the algorithm terminates at round $r$, which means $\FRACTIONTEST(S_r,c_l,c_m,\delta_r,0.075,0.025)$ outputs $\False$. 
	By Lemma~\ref{lm:ESTIMATE-PROCEDURE}, we have $|S_{r}^{<c_l}| < (0.075+0.025) |S_r| = 0.1 |S_r|$. 
	Since $S'=S_r$, the claim clearly follows.
	
	Now, we prove the last claim. Again we condition on $\event_G$.
	Suppose the algorithm does not terminate at round $r$, 
	which means $\FRACTIONTEST(S_r,c_l,c_m,\delta_r,0.075,0.025)$ outputs $\True$. 
	By Lemma~\ref{lm:ESTIMATE-PROCEDURE}, we know $N_r = |S_r^{\le c_m}| > (0.075-0.025)|S_r| = 0.05|S_r|$. 
	Then, we have that
	$$
	|S_{r+1}| \le |S_r| - (|N_r| - |N_{r+1}|) \le |S_r| - \frac{3}{4} |N_r| \le 0.99 |S_r|.
	$$
	
	Suppose the algorithm terminates at round $r'$. 
	Let $c_1$ be a large enough constant (so that $c_1 \ln \delta_{r} \Delta^{-2} |S_r|$ is an upper bound for the samples taken by \UNIFORMSAMPLING \ in round $r$
	and $c_1 \ln \delta_{r} \Delta^{-2}$ is an upper bound for the samples taken by \FRACTIONTEST \ in round $r$).
	Then, the number of samples is bounded by:
	
	\begin{align*}
	\sum_{r=1}^{r'} c_1 ( \Delta^{-2} \ln \delta_{r} |S_r| +  \Delta^{-2} \ln\delta_{r})
	&\le 2c_1|S|\Delta^{-2}\sum_{r=1}^{r'} (\ln\delta^{-1} + r\ln2 + \ln 10) \cdot 0.99^{r-1}\\
	&\le 2c_1|S|\Delta^{-2}\sum_{r=1}^{r'} (\ln\delta^{-1}(r+1) + \ln 10) \cdot 0.99^{r-1}\\
	&\le 2c_1|S|\Delta^{-2}\left(\ln \delta^{-1}\sum_{r=1}^{+\infty}(r+1) \cdot 0.99^{r-1} + \sum_{r=1}^{+\infty} \ln 10 \cdot 0.99^{r-1} \right)\\
	&\le 2c_1|S| \Delta^{-2} \left(\ln \delta^{-1} \cdot 10100 + 100 \cdot \ln 10\right)
	\end{align*}    
	So the number of samples is $O(|S|\ln \delta^{-1} \Delta^{-2})$, which concludes the proof of the lemma.
	
\end{proofof}

\subsection{Proofs for Lemma~\ref{lm:case-1}, Lemma~\ref{lm:case-2} and Lemma~\ref{lm:case-3}}

For convenience, we let 
$$\curarm = \armset{r} \cap S_r, \curbigarm = \bigarmset{r+1} \cap S_r,\Ncur = |\curarm|,\Nbig = |\curbigarm|.$$

Then we have $|S_r| = \Ncur + \Nbig + \Nsmall + 1$. Also, recall that $\lx{s} = \ln(\min(\hI,s))$.

In order to prove these three lemmas, we need an important inequality for $\Pot{r}$. If $r \le \maxs$, we have:
\begin{align}
\Pot{r} - \Pot{r+1} &= c_2 \cdot \left(\sum_{s=r}^{+\infty} (\ln \delta^{-1} + \lx{s})\cdot \epsilon_{r}^{-2} |\armset{s}| 
+ (\ln\delta^{-1} + \lx{\maxs})\cdot \epsilon_{r}^{-2}\right) \notag \\
&\ge c_2 \cdot \epsilon_{r}^{-2}(\ln \delta^{-1} + \lx{r})(|\bigarmset{r}| + 1)\notag\\
&\ge c_2 \cdot \epsilon_{r}^{-2}(\ln \delta^{-1} + \lx{r})(\Ncur +\Nbig + 1). \label{eq:SUMCRS}
\end{align}

\vspace{0.2cm}
\noindent
{\bf Lemma~\ref{lm:case-1} }
(restated)
{\em
	When \MEDIANELIMINATION\ (line~\ref{line:ME1}) returns an $\epsilon_r/4$-approximation of the best arm $A_1$, and \FRACTIONTEST\ outputs $\True$. The expected number of samples taken at and after round $r$ is bounded by
	$$
	\epsilon_r^{-2}(\ln\delta^{-1} +  \lx{r})c_3 \Nsmall + \Pot{r}.
	$$
}
\begin{proofof}{Lemma~\ref{lm:case-1}}

Since by Lemma~\ref{lm:CORRECT-MEDELIM}, we have $|\smaarmset{r} \cap S_{r+1}| \le 0.1 |S_{r+1}|$, 
which means 
$
|\bigarmset{r+1} \cap S_{r+1}| + 1 \ge 0.9 |S_{r+1}|
$. So, we have that
$$
|\smaarmset{r} \cap S_{r+1}| \le \frac{1}{9} (|\bigarmset{r+1} \cap S_{r+1}|+1) \le \frac{1}{9}(\Nbig+1).
$$ 
Therefore, we can see the number of samples is bounded by:
\[
c_3\cdot|S_r|\epsilon_r^{-2}(\ln\delta^{-1} +  \lx{r}) + T\left(r+1,\frac{1}{9}(\Nbig+1)\right),
\]	
where the first additive term is the number of samples in this round, and is bounded by Lemma~\ref{lm:PULLING-BOUND}.

By the induction hypothesis, we have:
\begin{align*}
&T\left(r+1,\frac{1}{9}(\Nbig+1)\right) \\
\le&(\ln\delta^{-1}+\lx{r+1})\cdot c_1 \cdot \frac{1}{9}(\Nbig+1)\cdot\epsilon_{r+1}^{-2} +\Pot{r+1}\\ 
\le&(\ln\delta^{-1}+\lx{r+1})\cdot c_1 \cdot \frac{1}{9}(\Nbig+1)\cdot\epsilon_{r+1}^{-2}+\Pot{r} - c_2\cdot(\ln \delta^{-1}+\lx{r})(\Nbig+\Ncur+1)\epsilon_r^{-2} 
\tag{By \eqref{eq:SUMCRS}} \\ 
\le&(\ln\delta^{-1}+\lx{r})\cdot \epsilon_r^{-2}\left(c_1 \frac{5}{9}\Nbig + \frac{5}{9}c_1- c_2\Ncur - c_2\Nbig-c_2\right)+ \Pot{r} 
\end{align*}
Therefore, we can bound the expected number of samples by:
\begin{align*}
 &\epsilon_r^{-2}(\ln\delta^{-1} +  \lx{r})\left(c_3\cdot|S_r| + c_1 \frac{5}{9}\Nbig + \frac{5}{9}c_1- c_2\Ncur - c_2\Nbig -c_2\right) + \Pot{r} \\
\le&\epsilon_r^{-2}(\ln\delta^{-1} +  \lx{r})\left((c_3-c_2)\Ncur+(c_3+\frac{5}{9}c_1-c_2)\Nbig+c_3 \Nsmall +\frac{5}{9}c_1+ c_3-c_2\right)+ \Pot{r}  \\
\le&\epsilon_r^{-2}(\ln\delta^{-1} +  \lx{r})c_3 \Nsmall + \Pot{r} \tag{$\frac{5}{9}c_1+c_3 - c_2 < 0$ , $c_3+\frac{5}{9}c_1-c_2 < 0$} 
\end{align*}
In the first inequality, we use the fact that $|S_r| = \Ncur + \Nbig + \Nsmall + 1$.

\end{proofof}

\vspace{0.2cm}
\noindent
{\bf Lemma~\ref{lm:case-2} }
(restated)
{\em
	When 
	\MEDIANELIMINATION\ (line~\ref{line:ME1}) returns an $\epsilon_r/4$-approximation of the best arm $A_1$, and \FRACTIONTEST\ outputs $\False$. The expected number of samples taken at and after round $r$ is bounded by $\Pot{r}$.
}
\begin{proofof}{Lemma~\ref{lm:case-2}}
By Lemma~\ref{lm:CORRECT-MEDELIM}, we can see $|\bigarmset{r} \cap S_r| + 1 = \Ncur+\Nbig + 1 > 0.5 |S_r|$. So $\Nsmall < 0.5 |S_r|$, 
thus 
\begin{align}
\label{eq:DEM}
\Ncur+\Nbig +1 \ge \Nsmall.
\end{align} 
We can see that the total number of samples is bounded by:
$$
c_3 \cdot |S_r|\epsilon_{r}^{-2} + T(r+1,\Nsmall+\Ncur) \le 
(\ln \delta^{-1} + \lx{r})c_3 \cdot |S_r|\epsilon_{r}^{-2} + T(r+1,\Nsmall+\Ncur).
$$	
In the above, the first term is due to the number of samples in this round by Lemma~\ref{lm:PULLING-BOUND}. 
The second term is an upper bound for the number of samples 
starting at round $r+1$.
Since we do not eliminate any arm in this case, we have $|\smaarmset{r} \cap S_{r+1}|\le \Nsmall + \Ncur$.

\noindent
From the induction hypothesis, we have:
\begin{align*}
\label{eq:Trplus1}
& T(r+1,\Nsmall+\Ncur)\\
\le&(\ln\delta^{-1}+\lx{r+1})\cdot c_1 \cdot (\Nsmall+\Ncur)\cdot\epsilon_{r+1}^{-2} + \Pot{r+1}\notag\\
\le&(\ln\delta^{-1}+\lx{r})\cdot c_1 \cdot 5(\Nsmall+\Ncur)\cdot\epsilon_{r}^{-2}+\Pot{r}- c_2\cdot(\ln \delta^{-1}+\lx{r})(\Nbig+\Ncur+1)\epsilon_r^{-2} \tag{By (\ref{eq:SUMCRS})}\\
\le&(\ln\delta^{-1}+\lx{r})(5c_1 \Nsmall + 5 c_1 \Ncur - c_2\Ncur - c_2\Nbig-c_2)\epsilon_{r}^{-2} + \Pot{r}
\end{align*}

\noindent
Plugging it into the bound, we have the following bound for the expected number of samples:
\begin{align*}
&(\ln \delta^{-1} + \lx{r})c_3 \cdot |S_r|\epsilon_{r}^{-2} + (\ln\delta^{-1}+\lx{r})(5c_1 \Nsmall + 5 c_1 \Ncur - c_2\Ncur - c_2\Nbig-c_2)\cdot\epsilon_{r}^{-2}  + \Pot{r} \\
\le&(\ln \delta^{-1} + \lx{r})((5c_1 +c_3)\Ncur+c_3\Nbig+c_3+(5c_1+c_3) \Nsmall - c_2(\Ncur + \Nbig+1))\cdot\epsilon_{r}^{-2} + \Pot{r}\\
\le&(\ln \delta^{-1} + \lx{r})(c_3\Nbig+c_3+ (5 c_1+c_3) \Ncur - (c_2-5c_1-c_3)(\Ncur + \Nbig + 1))\cdot\epsilon_{r}^{-2} + \Pot{r}\\
\le& \Pot{r} \tag{$c_2-5c_1-c_3 > 5c_1+c_3 > c_3$}
\end{align*}
In the second inequality, we use the fact that $(5c_1+c_3) \Nsmall \le (5c_1+c_3)(\Ncur+\Nbig+1)$ due to \eqref{eq:DEM}.
\end{proofof}

\vspace{0.2cm}
\noindent
{\bf Lemma~\ref{lm:case-3} }
(restated)
{\em
	When 
	\MEDIANELIMINATION\ (line~\ref{line:ME1}) returns an arm which is not an $\epsilon_r/4$-approximation of the best arm $A_1$. The expected number of samples taken at and after round $r$ is bounded by
	$$
	\le(\ln\delta^{-1}+\lx{r})(c_3+5c_1) \cdot \Nsmall\epsilon_{r}^{-2} + \Pot{r}.
	$$
}
\begin{proofof}{Lemma~\ref{lm:case-3}}
In this case, we can simply bound it by:
\[
c_3\cdot|S_r|\epsilon_r^{-2}(\ln\delta^{-1}+\lx{r}) + T(r+1,\Nsmall+\Ncur).
\]
The first term is still due to the number of samples in this round by Lemma~\ref{lm:PULLING-BOUND}. The second term is an upper bound for the samples taken starting at round $r+1$, since $|\smaarmset{r} \cap S_{r+1}|\le \Nsmall + \Ncur$ in any case.
Then, we have that
\begin{align*}
& c_3\cdot|S_r|\epsilon_r^{-2}(\ln\delta^{-1}+\lx{r}) + T(r+1,\Nsmall+\Ncur) \notag\\
\le & (\ln\delta^{-1}+\lx{r})(c_3(\Nsmall+\Ncur+\Nbig+1)\epsilon_r^{-2} + 5c_1(\Nsmall+\Ncur) \epsilon_{r}^{-2} -c_2 \epsilon_{r}^{-2}(\Nbig+\Ncur+1)) + \Pot{r} 
\tag{By (\ref{eq:SUMCRS})}\\
\le & (\ln\delta^{-1} + \lx{r})((c_3+5c_1-c_2)\Ncur + (c_3-c_2)\Nbig +(c_3+5c_1)  \Nsmall + c_3 -c_2)\epsilon_{r}^{-2} + \Pot{r} 
\end{align*}

Since $c_3+5c_1-c_2 \le 0$ and $c_3-c_2 \le 0$, we have the following bound for the expected number of samples:
$$
(\ln\delta^{-1}+\lx{r})(c_3+5c_1) \cdot \Nsmall\epsilon_{r}^{-2} + \Pot{r}
$$
\end{proofof}

\section{More About \sign}
\label{app:sign}

In this section, we present a class of 
\CORRECT\ algorithms for \sign\ which 
needs  $o(\Delta^{-2}\ln\ln \Delta^{-1})$ samples for 
infinite number of instances.
In particular, we show the following stronger result.

\begin{theo}\label{thm:FAST-ALGO}
	For any function $T$ on $(0,1]$ such that $\limsup_{\Delta \to +0} T(\Delta)\Delta^{2} = +\infty$ and 
	for any fixed constant $\delta >0$, there exists a \CORRECT\ algorithm $\alg$ for \sign,       
	such that 
	$$
	\liminf_{\Delta \to +0} \frac{T_{\alg}(\Delta)}{T(\Delta)} = 0.
	$$ 
\end{theo}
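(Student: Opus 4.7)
The plan is to construct an algorithm $\alg$ that is $\delta$-correct on every instance, yet halts unusually quickly on a carefully chosen sparse sequence of gaps $\Delta_1>\Delta_2>\dotsb\to 0$. Since the hypothesis $\limsup_{\Delta\to 0}T(\Delta)\Delta^{2}=+\infty$ lets us extract a sequence along which $T(\Delta_k)\Delta_k^{2}$ grows as fast as desired, I would first pick such a sequence with $T(\Delta_k)\ge k^{2}\Delta_k^{-2}$, thinning further if necessary so that the check-point counts $n_k:=\lceil 8\Delta_k^{-2}\ln(2^{k+2}/\delta)\rceil$ are strictly increasing in $k$.

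The algorithm $\alg$ draws samples one at a time and runs two stopping rules in parallel on the \emph{same} stream, halting at whichever fires first. The first rule $\tau_{1}$ is a sequence of discrete checks: at step $n_k$, test whether $|\hat{\mu}_{n_k}-\xi|\ge\Delta_k/2$, and if so return $\mathrm{sign}(\hat{\mu}_{n_k}-\xi)$. The second rule $\tau_{2}$ is any off-the-shelf $(\delta/2)$-correct sequential test for \sign\ (such as the LIL-based rule underlying the KKS bound), whose expected running time at gap $\Delta$ is $O(\Delta^{-2}\ln\ln\Delta^{-1})$; it plays the role of a safety net in the unlikely event that every check fails.

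For correctness, Hoeffding's inequality bounds the probability of a wrong output at check $k$ by $2\exp(-n_k\Delta_k^{2}/8)\le\delta/2^{k+1}$, which sums over $k$ to at most $\delta/2$; combined with the $\delta/2$ budget for $\tau_{2}$, the total error is at most $\delta$. For the complexity at gap $\Delta=\Delta_k$, the triangle inequality yields $|\hat{\mu}_{n_k}-\xi|\ge\Delta_k/2$ whenever $|\hat{\mu}_{n_k}-\mu|<\Delta_k/2$, which has probability at least $1-\delta/2^{k+1}$, so $\tau_{1}$ halts at step $n_k$ on this event; on the complementary event the algorithm costs at most $\tau_{2}$, whose contribution to the expectation (after multiplying by the tiny probability of the bad event) is negligible compared to $n_k$. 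Hence $T_{\alg}(\Delta_k)=O(n_k)=O(\Delta_k^{-2}(k+\ln\delta^{-1}))$, and $T_{\alg}(\Delta_k)/T(\Delta_k)=O((k+\ln\delta^{-1})/k^{2})\to 0$ as $k\to\infty$, which forces the $\liminf$ to vanish.

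The main obstacle, largely hidden in the analysis above, is the interaction between the target rule $\tau_{1}$ and the safety net $\tau_{2}$: one must ensure that their error budgets add cleanly and that $\tau_{2}$ does not inflate the expected running time at $\Delta_k$ on the good event. Both issues are handled by running $\tau_{1}$ and $\tau_{2}$ on the shared sample stream and halting at $\min(\tau_{1},\tau_{2})$, together with a standard tail bound on $\tau_{2}$ (which has exponentially decaying tails for the LIL-based rule) to control the bad-event contribution. A secondary technical point is that the sparsification of $\{\Delta_k\}$ must be aggressive enough that $\ln\ln\Delta_k^{-1}$ never overtakes $k^{2}$ inside the safety-net term; this is easily ensured by passing to a further subsequence.
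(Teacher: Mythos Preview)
Your core construction --- choose $\{\Delta_k\}$ along which $T(\Delta_k)\Delta_k^{2}\to\infty$ and perform a Hoeffding check at sample count $n_k\asymp\Delta_k^{-2}(k+\ln\delta^{-1})$ --- is exactly the paper's \TESTSIGN\ run with a custom reference sequence, and your correctness analysis of $\tau_1$ is sound. The gap is in controlling the expectation on the bad event.

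Take $T(\Delta)=\Delta^{-2}\ln\ln\ln\Delta^{-1}$, which satisfies the hypothesis. The requirement $T(\Delta_k)\Delta_k^{2}\ge k^{2}$ forces $\ln\ln\Delta_k^{-1}\ge e^{k^{2}}$. By Farrell's lower bound, \emph{any} $\delta/2$-correct safety net has $\Ex[\tau_2]\ge c\,\Delta_k^{-2}\ln\ln\Delta_k^{-1}$ at gap $\Delta_k$, so the quantity you must control, $\Ex[\tau_2\cdot\indicator\{B_k\}]$, is bounded (whether via independence, Cauchy--Schwarz, or exponential tails) only by something of order $\Delta_k^{-2}\ln\ln\Delta_k^{-1}\cdot(\delta\,2^{-k})^{\alpha}$ for some $\alpha\in(0,1]$; for this $T$ that is at least $\Delta_k^{-2}e^{k^{2}}(\delta\,2^{-k})^{\alpha}$, which is not $o(T(\Delta_k))=o(k^{2}\Delta_k^{-2})$. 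Since $n_{k+1}$ is astronomically larger than $\Ex[\tau_2]$ here, on $B_k$ your algorithm effectively just waits for $\tau_2$, so the bound is not merely loose. Your proposed fix --- pass to a subsequence so that $\ln\ln\Delta_k^{-1}$ never overtakes $k^{2}$ --- is impossible: thinning only makes the $k$-th retained gap smaller, while the relation $\ln\ln\ln\Delta_k^{-1}\ge k^{2}$ (hence $\ln\ln\Delta_k^{-1}\ge e^{k^{2}}$) is dictated by $T$ itself.

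The paper's remedy is to make the safety net self-similar rather than generic. It runs independent copies of \TESTSIGN\ with the \emph{same} reference sequence at confidences $\delta/2,\delta/4,\ldots$ and geometrically decreasing rates --- the parallel-simulation transformation of Theorem~\ref{theo:TRANSFORM1}. Each copy has conditional expected time $O\bigl((\ln\delta^{-1}+\ln\kappa)\,\refseq_\kappa^{-2}\bigr)$, crucially with no $\ln\ln\Delta^{-1}$ factor, and the transformation converts this into an unconditional expectation of the same order. Hence $T_{\alg}(\refseq_i)/T(\refseq_i)=O((\ln\delta^{-1}+\ln i)/i)\to 0$ regardless of how small $\refseq_i$ is. The idea you are missing is that a generic $O(\Delta^{-2}\ln\ln\Delta^{-1})$ safety net can be far slower than $T$ at precisely the gaps you selected; the safety net must be built from the same reference sequence as $\tau_1$.
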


Now, we begin our description of the algorithm, 
which is in fact a simple variant of the \EXPGAPELIMINATION\ algorithm in \cite{karnin2013almost}. 
Our algorithm takes an infinite sequence $\seqn=\{\refseq_i\}_{i=1}^{+\infty}$ as input, which we call
the {\em reference sequence}.

\begin{defi}
	We say an infinite sequence $\seqn=\{\refseq_i\}_{i=1}^{+\infty}$ is a {\em reference sequence} if 
	the following statements hold:    
	\begin{enumerate}
		\item $0<\refseq_i<1$, for all $i$.
		\item There exists a constant $0<c<1$ such that for all $i$, $\refseq_{i+1} \le c \cdot \refseq_{i}$.
	\end{enumerate}
\end{defi}

Our algorithm \TESTSIGN\ takes a confidence level $\delta$ and the reference 
sequence $\{ \refseq_i \}$ as input. It runs in rounds.
In the $r$th round, 
the algorithm takes a number of samples (the actual number depends on $r$, and can be found in 
Algorithm~\ref{algo:FAST-ALGO-SIGN}) from the arm and let $\widehat{\mu}^r$ be the empirical mean.
If $\widehat{\mu}^r\in \xi\pm \refseq_r/2$, we decide that the gap $\Delta$ is smaller than the reference gap $\refseq_r$
and we should proceed to the next round with a smaller reference gap.
If $\widehat{\mu}^r$ is larger than $\xi + \refseq_r/2$,
we decide $\mu>\xi$.
If $\widehat{\mu}^r$ is smaller than $\xi - \refseq_r/2$,
we decide $\mu<\xi$.
The pseudocode can be found in algorithm~\ref{algo:FAST-ALGO-SIGN}.

\begin{algorithm}[H]
	\LinesNumbered
	\setcounter{AlgoLine}{0}
	\caption{\TESTSIGN($A,\delta,\{ \refseq_i \}$)}
	\label{algo:FAST-ALGO-SIGN}
	\KwData{The single arm $A$ with unknown mean $\mu \ne \xi$, confidence level $\delta$, the reference 
		sequence $\{ \refseq_i \}$.}
	\KwResult{Whether $\mu > \xi$ or $\mu < \xi$.}
	\smallskip
	\For{r = 1 to $+\infty$}{
		$\epsilon_r = \refseq_r/2$
		
		$\delta_{r} = \delta/10r^2$
		
		Pull $A$ for $t_r = 2\ln(2/\delta_r)/\epsilon_r^2$ times. Let $\widehat{\mu}^{r}$ denote its average reward.
		
		\uIf{$\widehat{\mu}^r > \xi + \epsilon_r$}{
			{\bf Return} $\mu > \xi$
		}
		
		\uIf{$\widehat{\mu}^r < \xi - \epsilon_r$}{
			{\bf Return} $\mu < \xi$
		}
	}
\end{algorithm}

The algorithm can achieve the following guarantee.
The proof is somewhat similar to the analysis of the \EXPGAPELIMINATION\ algorithm in \cite{karnin2013almost}
(in fact, simpler since there is only one arm).

\begin{lemma}\label{thm:behavior-fast-algo}
	Fix a confidence level $\delta > 0$ and an arbitrary reference sequence $\seqn=\{\refseq_i\}_{i=1}^{\infty}$. Suppose that the given instance has a gap $\Delta$.
	Let $\terround$ be the smallest $i$ such that $\refseq_i\le\Delta$.
	With probability at least $1-\delta$,
	\TESTSIGN\ determines whether 
	$\mu > \xi$ or $\mu < \xi$ correctly and uses
	$O((\ln\delta^{-1}+\ln \terround)\refseq_{\terround}^{-2})$ samples in total.
\end{lemma}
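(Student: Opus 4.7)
The plan is to prove correctness and the sample complexity bound together via a single high-probability good event, using Hoeffding's inequality (Lemma~\ref{lm:hoeff}) followed by a union bound over all rounds.

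First, I would define the good event $\event_G$ that $|\widehat{\mu}^r - \mu| < \epsilon_r$ for every round $r \ge 1$ that the algorithm reaches. Since $\widehat{\mu}^r$ averages $t_r = 2\ln(2/\delta_r)/\epsilon_r^2$ samples from a $1$-sub-Gaussian distribution, Hoeffding's inequality gives a failure probability of at most $\delta_r = \delta/(10 r^2)$ for each round, and a union bound yields $\Pr[\event_G^c] \le \sum_{r=1}^\infty \delta/(10 r^2) \le \delta\pi^2/60 < \delta$. All remaining claims are conditioned on $\event_G$.

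Next, correctness: on $\event_G$, if round $r$ returns ``$\mu > \xi$'' because $\widehat{\mu}^r > \xi + \epsilon_r$, then $\mu > \widehat{\mu}^r - \epsilon_r > \xi$, so the output is correct; the other direction is symmetric. Thus every output on $\event_G$ is correct.

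Then, termination by round $\terround$: without loss of generality $\mu = \xi + \Delta$. At round $\terround$, by definition $\refseq_\terround \le \Delta$, so $\epsilon_\terround = \refseq_\terround/2 \le \Delta/2$. On $\event_G$,
\[
\widehat{\mu}^\terround > \mu - \epsilon_\terround = \xi + \Delta - \epsilon_\terround \ge \xi + 2\epsilon_\terround - \epsilon_\terround = \xi + \epsilon_\terround,
\]
so the algorithm terminates no later than round $\terround$ (and has terminated correctly by the previous step, either at round $\terround$ or earlier).

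Finally, the sample complexity: on $\event_G$ the total pulls are bounded by
\[
\sum_{r=1}^{\terround} t_r = \sum_{r=1}^{\terround} \frac{8\ln(20 r^2/\delta)}{\refseq_r^2}.
\]
The key observation is that the reference sequence is geometrically decreasing: $\refseq_r \le c^{\terround - r}\refseq_\terround$ for some $c < 1$, so $\refseq_r^{-2} \le c^{2(\terround - r)}\refseq_\terround^{-2}$ and the sum is dominated by its last term up to a $(1-c^2)^{-1} = O(1)$ factor. Using $\ln(20 r^2/\delta) = O(\ln\delta^{-1} + \ln \terround)$ for $r \le \terround$, the total becomes $O((\ln\delta^{-1} + \ln\terround)\refseq_\terround^{-2})$, as desired.

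The main obstacle, if any, is purely bookkeeping: making sure the strict inequality in the termination step survives (which it does, because Hoeffding gives strict concentration on $\event_G$), and that the geometric summation constant coming from $c$ is absorbed cleanly into the big-$O$. There are no conceptual hurdles beyond a careful union bound and one geometric series.
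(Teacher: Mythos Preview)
Your proof is essentially identical to the paper's: Hoeffding per round, union bound to get the good event, correctness from $|\widehat{\mu}^r-\mu|<\epsilon_r$, termination at round $\terround$ since $\epsilon_\terround\le\Delta/2$, and a geometric sum for the sample count. The only slip is in the geometric step---you wrote $\refseq_r \le c^{\terround-r}\refseq_\terround$, but the correct inequality (used in the paper) is $\refseq_r \ge c^{r-\terround}\refseq_\terround$, which is what actually yields your (correct) conclusion $\refseq_r^{-2}\le c^{2(\terround-r)}\refseq_\terround^{-2}$.
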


\begin{proof}
	For any round $r$, by Hoeffding's inequality (Lemma~\ref{lm:hoeff}), we have that
	\begin{equation}\label{HOFFEFFDING-1}
	\Pr\left(|\widehat{\mu}^{r} - \mu| \ge \epsilon_{r}\right) \le 2\exp(-\epsilon_r^2/2\cdot t_r) = \delta_r.
	\end{equation}
	Then, by a union bound, with probability 
	$1-\sum_{i=1}^{+\infty} \delta_r = 1 - \delta \cdot \sum_{i=1}^{+\infty} 1/10r^2\ge 1-\delta$, 
	we have $|\widehat{\mu}^{r} - \mu| < \epsilon_{r}$ for all $r$. 
	Denote this event by $\event$.
	Then we prove that conditioning on $\event$, Algorithm~\ref{algo:FAST-ALGO-SIGN} is correct.
	
	Let $k$ be the round the algorithm returns the answer.
	By the definition of $\terround$, we know that $\refseq_{\terround} \le \Delta$. 
	Then on round $\terround$, we have $|\widehat{\mu}^{\terround} - \mu| < \refseq_{\terround}/2 \le \Delta/2$.
	Thus, $|\widehat{\mu}^{\terround} - \xi| \ge |\mu - \xi| - |\widehat{\mu}^{\terround} - \mu| > \Delta/2 \ge \epsilon_{r}$.
	Therefore, we can see that $k \le \terround$, which shows that 
	the algorithm terminates on or before round $\terround$.
	On round $k$, if we have $\widehat{\mu}^{k} > \xi + \epsilon_r$, 
	we must have $\mu > \xi$ since $|\widehat{\mu}^{k} -\mu|<\epsilon_r$. 
	The case $\widehat{\mu}^{k} < \xi - \epsilon_r$ is completely symmetric,
	which proves the correctness.
	
	Now, we analyze the number of samples.
	It is easy to see that the total number of samples is at most:
	\[
	\sum_{r=1}^{\terround} t_r = \sum_{r=1}^{\terround} 8\ln(2/\delta_r)/\refseq_{r}^2.
	\]
	By the definition of the reference sequence,
	$\refseq_{r} \ge c^{r-\terround} \refseq_{\terround}$ for $1 \le r \le \terround$.
	Hence, we have that
	\begin{align*}
	\sum_{r=1}^{\terround} 8(\ln \delta^{-1} + \ln 20 +\ln r)/\refseq_{r}^2 \, 
	&\le\,\refseq_{\terround}^{-2}\sum_{r=1}^{\terround} c^{2(\terround-r)} \cdot 8(\ln \delta^{-1} + \ln 20 +\ln r) \\
	&=\, O\left((\ln \delta^{-1} + \ln \terround) \refseq_{\terround}^{-2}\right)
	\end{align*}
	This finishes the proof of the theorem.
\end{proof}

Finally, we prove Theorem~\ref{thm:FAST-ALGO}.

\begin{proofof}{Theorem~\ref{thm:FAST-ALGO}}
	First, we can easily construct a reference sequence $\{ \refseq_i \}$ such that
	\begin{enumerate}
		\item $0 < \refseq_i < 1$ and $\refseq_{i+1} \le \refseq_i/2$ for all $i$.
		\item $T(\refseq_i) \ge i\cdot \refseq_i^{-2}$ for all $i$
		(this is possible since $\limsup_{\Delta \to +0} T(\Delta)/\Delta^{-2} = +\infty$).
	\end{enumerate}
	With this reference sequence, we can see 
	that with probability $1-\delta$,
	the algorithm outputs the correct answer and 
	runs in $O(\ln\delta^{-1}+\ln \kappa)\refseq_\kappa^{-2}$ time.
	However, there is a subtlety here:
	the expected running time is not bounded since we do not have 
	a bound with probability $\delta$ (when the good event $\event$ in Lemma~\ref{thm:behavior-fast-algo} does not happen).
	In Theorem~\ref{theo:TRANSFORM1}, we provide a general transformation that can produce a 
	\CORRECT\ algorithm whose expected running time is
	$O(\ln\delta^{-1}+\ln \kappa)\refseq_\kappa^{-2}$.
	Let the algorithm be $\alg$. 
	
	For any fixed $\delta$, we can see that
	\[
	\lim_{i \to +\infty} \frac{T_\alg(\refseq_i)}{T(\refseq_i)} \le 
	\lim_{i \to +\infty} \frac{C(\ln \delta^{-1} + \ln i) \refseq_i^{-2}}{i \refseq_i^{-2}} = 0,
	\]
	where $C$ is some large constant. 
	This implies that
	$
	\liminf_{\Delta \to +0}  T_\alg(\Delta)/T(\Delta) = 0.
	$
\end{proofof}

\begin{rem}
	If we use the reference sequence $\{ \refseq_i = e^{-i} \}$, we have an \CORRECT\ algorithm 
	for \sign, and it takes $O(\Delta^{-2}(\ln\delta^{-1} +  \ln\ln\Delta^{-1}))$ samples in expectation on instance with gap $\Delta$.
\end{rem}

\begin{rem}
	Recall Farrell's (worse case) lower bound 
	\eqref{eq:2armlowerbound}
	is $\Omega(\Delta^{-2}\ln\ln \Delta^{-1})$.
	Together with
	Theorem~\ref{thm:FAST-ALGO}, they
	imply that 
	it is impossible to obtain an instance optimal algorithm for \sign. 
\end{rem}

\newcommand{\eventf}{\mathcal{F}}

\section{\CORRECT\ Algorithms and Parallel Simulation}
\label{sec:simult}

In Corollary~\ref{cor:final-time-bound},
we show that our algorithm can output
the correct answer with probability at least $1-\delta$, 
and the conditional expected running time is upper bounded.
However, with probability $\delta$, there is no guarantee on its behavior (e.g., it could potentially run forever). 
\footnote{
	Many previous algorithms for pure exploration multi-armed bandits 
	belong to this kind, such as \cite{even2006action,karnin2013almost,gabillon2012best}.
	Some previous work has noticed the issue as well, but we are not aware of a systematic way to deal with it.
	For example, \cite{kalyanakrishnan2012pac} stated that 
	{\em ``However, their (i.e., \cite{even2006action}) elimination algorithm could incur high
	sample complexity on the $\delta$-fraction of the runs on
	which mistakes are made---we think it unlikely that
	elimination algorithms can yield an expected sample
	complexity bound smaller than $...$ ".} 
	}
Hence, the expected running time may be unbounded.
It is preferable to have an algorithm with a bounded expected running time,
and being correct with probability at least $1-\delta$.
In this section, we provide such a transformation that given an algorithm for the former
kind, produces one of the later (the time bound only increases by a constant factor).

Now, we formally define the two kinds of algorithms.

\begin{defi}\label{PRE-PAC}
	Let $\alg$ be an algorithm for some problem $\Problem$.
	$\alg$ takes an additional input $\delta$ as the confidence level. 
	Let $\mathcal{I}$ be the set of all valid instances for $\Problem$.
	We write $\alg_\delta$ to denote the algorithm $\alg$ with a fixed confidence level $\delta$.
	
	\begin{enumerate}
		
		\item
		We call $\alg$ an \EXPCORRECTS{T} algorithm iff there exists $\delta_0 \in (0,1)$ such that for any $\delta \in (0,\delta_0)$ and instance $I \in \mathcal{I}$:
		\[
		T_{\alg_\delta}[I] \le T(\delta,I) \quad \text{ and } \quad
		\Pr[\alg_\delta \text{ returns the correct answer on } I] \ge 1-\delta.
		\]
		
		\item
		We call $\alg$ a \WEAKEXPCORRECTS{T} algorithm iff there exists $\delta_0 \in (0,1)$ such that for any $\delta \in (0,\delta_0)$ and instance $I \in \mathcal{I}$, there exists an event $\event$ that		
		\[
		\Pr_{\alg_\delta,I}[\event] \ge 1-\delta \wedge \Ex_{\alg_\delta,I}[\tau\ |\  \event] \le T(\delta,I) \,\,
		\text{ and }
		\,\,
		\Pr[\alg_\delta \text{ returns the correct answer on } I \ |\ \event]=1.
		\]
		We call the above event $\event$ a {\em good event}.
	\end{enumerate}
\end{defi}

We need a mild assumption on the running times for our general transformation.

\begin{defi}\label{defi:GOOD-FUNCTION}
	We say a function $T:(0,1) \times \mathcal{I} \to \R$ is a {\em reasonable time bound}, 
	if there exists $0<\delta_0<1$ such that for all $0<\delta' < \delta < \delta_0$ and $I \in \mathcal{I}$ we have that
	\[
	T(\delta',I) \le \frac{\ln \delta'^{-1}}{\ln \delta^{-1}}T(\delta,I).
	\]
\end{defi}

\begin{rem}
	The running times of most previous algorithms are of the form $\alpha(I)+\beta(I)\ln \delta^{-1}$,
	where $\alpha$ and $\beta$ only depend on $I$ (e.g., see Table 1).
	Such running time bounds are obviously reasonable.
\end{rem}

Suppose $\alg$ is a \WEAKEXPCORRECTS{T} algorithm.
Our strategy to produce a \EXPCORRECTS{O(T)} algorithm
is to simulate many copies of $\alg$ with different 
confidence levels.
Now we formally define the parallel simulation method.
Suppose we have a class of algorithms $\{\alg_i\}$ (possibly infinite) 
for the same problem $\Problem$. 
We want to construct a new algorithm $\newalg$ for problem $\Problem$
which simulates  $\{\alg_i\}$ in a parallel fashion.
The details of the construction are specified as follows. 

\begin{defi}(Parallel Simulation)\label{defi:sim-independent}
	The new algorithm $\newalg$ 
	simulates $\alg_i$ with rate $r_i \in \mathbb{N}^{+}$, for all $i$.
	More specifically, $\newalg$ runs in rounds.
	In the $r$-th round, $\newalg$ simulates each algorithm $\alg_i$ such that $r_i$ divides $r$ (i.e., $r_i|r$) for one step.
	If there are more than one such algorithms, $\newalg$ simulates them in the increasing order of their indices.
	If any such $\alg_i$ requires a sample, $\newalg$ takes a fresh sample and feeds it to $\alg_i$.

	$\newalg$ terminates whenever any $\alg_i$ terminates and $\newalg$ outputs what $\alg_i$ outputs.	
	We denote this new algorithm as $\newalg=\SIM(\{\alg_i\},\{r_i\})$.
\end{defi}

Now we prove the main result of this section.

\begin{theo}\label{theo:TRANSFORM1}
	Suppose $T$ is a reasonable time bound.
	If $\alg$ is a \WEAKEXPCORRECTS{T} algorithm
	for the problem $\Problem$, then there exists an algorithm $\newalg$ which is \EXPCORRECTS{O(T)}.
	Moreover, $\newalg$ can be constructed explicitly from $\alg$.
\end{theo}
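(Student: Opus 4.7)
The plan is to invoke the parallel-simulation primitive of Definition~\ref{defi:sim-independent}, setting $\newalg := \SIM(\{\alg_i\}_{i\ge 1}, \{r_i\}_{i\ge 1})$ with a countable family of copies of $\alg$ run at decaying confidences: $\alg_i = \alg_{\delta_i}$ where $\delta_i = (6/\pi^2)\cdot \delta / i^2$ (so that $\sum_i \delta_i = \delta$), and rates $r_i = 2^{i-1}$. Because each copy is fed its own fresh samples, the good events $\event_i$ guaranteed by the second part of Definition~\ref{PRE-PAC} are mutually independent; $\newalg$ simply returns the output of the first copy to terminate.

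Correctness is then a one-line union bound: on $\bigcap_i \event_i$, whose probability is at least $1 - \sum_i \delta_i = 1 - \delta$, every copy outputs the correct answer whenever it terminates, and therefore so does the first one to halt.

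The heart of the argument is bounding the unconditional expected sample complexity, for which I would boost the conditional-expectation guarantee via Markov. Fix a constant $M > 2$ (say $M = 5$) and set $A_i := \event_i \cap \{\tau_i \le M \cdot T(\delta_i, I)\}$. Conditional Markov gives $\Pr[A_i^c\mid \event_i] \le 1/M$, so $\Pr[A_i] \ge (1-\delta_i)(1-1/M) \ge 3/5$ once $\delta$ is small. Independence of the copies makes $i^* := \min\{i : A_i\}$ almost surely finite with $\Pr[i^* = k] \le (2/5)^{k-1}$. Because $\newalg$ halts by round $\sigma \le r_{i^*}\tau_{i^*}$, and because $r_i = 2^{i-1}$ forces the total sample count to be at most $\sum_i \lfloor \sigma/r_i\rfloor \le 2\sigma$, the plan is to conclude
\[
\Ex[\tau] \;\le\; 2M \sum_{k\ge 1} (2/5)^{k-1} \cdot 2^{k-1} \cdot T(\delta_k, I) \;=\; 2M \sum_{k\ge 1} (4/5)^{k-1}\, T(\delta_k, I).
\]
Reasonableness of $T$ (Definition~\ref{defi:GOOD-FUNCTION}) then yields $T(\delta_k, I) = O((1 + \ln k)\, T(\delta, I))$, so the series converges and delivers $\Ex[\tau] = O(T(\delta, I))$, as required.

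The main obstacle is the three-way tension between the rate growth $r_i$, the Markov failure probability $1/M$, and the confidence decay $\delta_i$. The rates must decay fast enough that $\sum_i 1/r_i < \infty$ keeps per-round overhead bounded, yet the product $\Pr[A_i^c]\cdot r_i \le r_i/M$ must remain summable, which is exactly what forces $M > 2$ when $r_i = 2^{i-1}$. Meanwhile $\{\delta_i\}$ must sum to at most $\delta$ for correctness while decaying only polynomially in $i$ so that reasonableness of $T$ keeps $T(\delta_i, I)/T(\delta, I)$ merely logarithmic in $i$. Choosing $\delta_i \propto 1/i^2$ and $r_i = 2^{i-1}$ with any fixed $M > 2$ exactly threads these constraints and produces an explicit construction of $\newalg$ from $\alg$.
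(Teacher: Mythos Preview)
Your construction is correct and the argument goes through, but it takes a different route from the paper's. Both proofs run $\SIM$ on copies $\alg_i = \alg_{\delta_i}$ with geometric rates $r_i \asymp 2^i$, and both partition the sample space according to the first index $i$ at which some ``good'' event occurs. The difference lies in how the tail is generated.

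The paper sets $\delta_i = \delta/2^i$ and takes the good event to be $\event_i$ itself. Because $\Pr[\event_i^c] \le \delta/2^i \le \delta$, independence gives $\Pr[\text{first success at } i] \le \delta^{\,i-1}$, a super-geometric tail that easily absorbs both the rate blow-up $2^i$ and the factor $T(\delta/2^i,I)/T(\delta,I) = O(1+i/\ln\delta^{-1})$ coming from reasonableness. No Markov step is needed: the paper directly uses the conditional expectation $\Ex[\tau_i \mid \event_i] \le T(\delta_i,I)$ and bounds $\Ex[\tau \mid \eventf_i] \le 2^i\, T(\delta_i,I)$. The trade-off is that convergence of the final series $\sum_i (2\delta)^{i-1}(1+i)$ requires $\delta$ to be bounded away from $1/2$ (the paper takes $\delta < 0.1$).

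You instead take polynomial confidence decay $\delta_i \propto i^{-2}$ and manufacture a constant-probability good event $A_i = \event_i \cap \{\tau_i \le M\,T(\delta_i,I)\}$ via conditional Markov. This yields a fixed geometric tail $(2/5)^{k-1}$ independent of $\delta$, but forces the extra constraint $M > 2$ so that $(2/M)\cdot 2 < 1$ after multiplying by $r_k = 2^{k-1}$. What your approach buys is that the ratio $4/5$ does not degrade as $\delta \to 1$, and the confidence decay is milder; what the paper's approach buys is one fewer moving part and a cleaner constant.
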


\begin{proof}
	The construction of $\newalg$ is very simple:
	Let $\newalg_{\delta} = \SIM(\{\alg_i\},\{r_i\})$, in which $\alg_i = \alg_{\delta/2^i}$ 
	(that is algorithm $\alg$ with confidence parameter $\delta/2^i$)
	and $r_i = 2^i$. 
	
	Now we prove $\newalg$ is an \EXPCORRECTS{O(T)} algorithm for problem $\Problem$. Suppose the given instance is $I$. 
	Let $\event_i$ be a good event for $\alg_i$ on instance $I$.
	
	First, by a simple union bound, we can see that
	the probability that $\newalg_{\delta}$ outputs the correct answer is at least $1-\sum_{i=1}^{+\infty}\delta/2^i = 1-\delta$
	($\newalg_{\delta}$ returns the correct answer if all $\alg_i$ returns the correct answer).
	
	Now, assume $\delta < \delta_0'= \min(0.1,\delta_0)$. 
	Let us analyze the running time of $\newalg_{\delta}$ on instance $I$.
	
	For ease of argument, we can think that $\newalg$ executes
	in a slight different way.
	$\newalg$ does not terminate the same way as before, but keeps simulating all algorithms until all of them terminate 
	(or run forever). The output of $\newalg$ is still the same as the $\alg_i$ that terminates first, and the running time of $\newalg$ is determined by the first terminated $\alg_i$.
	
	Partition this probability space into disjoint events $\{ \eventf_i \}$, 
	in which $\eventf_i$ is the event that all events $\event_j$ with $j<i$ do not happen, and $\event_{i}$ happens.
	Note that $\{ \eventf_i \}$ is indeed a partition of the probability space ($\Pr[\cup_i \eventf_i]=1$).
	This is simply because $\lim_{i\rightarrow+\infty} \Pr[\event_i]\geq \lim_{i\rightarrow+\infty} 1-\delta/2^i=1$.
	
	Let $\tau$ be the running time of $\newalg$.
	Since each $\alg_i$ uses its own independent samples, we have that
	$$
	\Ex_{\alg_{i},I}[\tau \mid \eventf_i]=\Ex_{\alg_{i},I}[\tau \mid \event_i]  \le T(\delta/2^i,I).
	$$
	Moreover, for $\alg_i$ to run one step, for any $j \ne i$, $\alg_j$ runs at most $2^{i-j}$ steps. 
	Thus, the running time for $\newalg_{\delta}$ conditioning on $\eventf_{i}$ is bounded by:
	\[
	\Ex_{\newalg_{\delta},I}[\tau\ |\ \eventf_i] \le 
	T(\delta/2^i,I) \cdot \sum_{j=1}^{+\infty} 2^{i-j} \le 
	T(\delta/2^i,I) \cdot 2^{i}.
	\]
	Furthermore, by the independence of different $\alg_i$s, we note that 
	$$
	\Pr[\eventf_i] \le \prod_{k=1}^{i-1} \delta/2^k \le \delta^{i-1}.
	$$
	Now, we can bound the expected running time of $\newalg$ as follows:
	\begin{align*}
	\Ex_{\newalg_{\delta},I}[\tau] &= \sum_{i=1}^{+\infty} \Pr[\eventf_i] \cdot \Ex_{\alg_{\delta}',I}[\tau\ |\ \eventf_i] \\
	&\le \sum_{i=1}^{+\infty} \delta^{i-1} \cdot T(\delta/2^i,I)\cdot 2^i.\\
	&\le 2\sum_{i=1}^{+\infty} (2\delta)^{i-1} \cdot T(\delta,I)\left(\frac{\ln \delta^{-1} + i\ln2 }{\ln \delta^{-1}}\right).\\
	&\le 2\sum_{i=1}^{+\infty} (2\delta)^{i-1}(1+i) \cdot T(\delta,I).\\
	&\le 2\sum_{i=1}^{+\infty} (0.2)^{i-1}(1+i) \cdot T(\delta,I).\\	&\le 6 T(\delta,I).
	\end{align*}	
	In the second inequality, we use the fact that $T$ is a reasonable time bound.
	To summarize, we can see that $\newalg$ is an \EXPCORRECTS{O(T)} algorithm.
\end{proof}

\eat{
	\begin{cor}	
		Suppose $T$ is a reasonable time bound.
		If there is a \WEAKCORRECTS{T} algorithm $\alg$ for problem $\Problem$, there exists an \EXPCORRECTS{O(T)} algorithm $\newalg$ for $\Problem$.
		Moreover, $\newalg$ can be constructed explicitly from $\alg$.
	\end{cor}
}

\end{document}